\documentclass[11pt]{article}

\usepackage[round]{natbib}
\setcitestyle{authoryear} 
\bibliographystyle{plainnat}
\usepackage{authblk}


\usepackage{amsmath}
\usepackage{amssymb}
\usepackage{amsthm}
\usepackage{mathtools}

\usepackage[utf8]{inputenc}
\usepackage[margin=1.3in]{geometry}
\usepackage{times}
\usepackage{hyperref}
\usepackage{url}
\usepackage{multirow}
\usepackage{xr}
\usepackage{fontawesome}
\usepackage{bbm}
\usepackage{todonotes}
\usepackage{listings}
\usepackage[capitalise]{cleveref}
\usepackage{stfloats} 
\usepackage{float}
\usepackage{placeins} 
\usepackage{xcolor}
\usepackage{changepage}
\usepackage{bbold}

\usepackage{algorithm}
\usepackage{algorithmicx}
\usepackage[noend]{algpseudocode}

\usepackage{tablefootnote} 
\usepackage[roman]{parnotes} 
\usepackage{tabulary}
\usepackage{booktabs}
\usepackage{tabularx}

\usepackage{tikz}
\usepackage{tikzsymbols}
\usetikzlibrary{shapes,decorations,arrows,calc,arrows.meta,fit,positioning}
\tikzset{
    -Latex,auto,node distance =1 cm and 1 cm,semithick,
    state/.style ={ellipse, draw, minimum width = 0.7 cm},
    point/.style = {circle, draw, inner sep=0.04cm,fill,node contents={}},
    bidirected/.style={Latex-Latex,dashed},
    el/.style = {inner sep=2pt, align=left, sloped}
}

\newtheorem{theorem}{Theorem}[section]
\newtheorem{lemma}[theorem]{Lemma}

\newtheorem{proposition}[theorem]{Proposition}

\newtheorem{assumption}[theorem]{Assumption}
\newtheorem{remark}[theorem]{Remark}
\newtheorem{example}[theorem]{Example}



\newcommand{\bx}{\mathbf{x}}
\newcommand{\by}{\mathbf{y}}
\newcommand{\bz}{\mathbf{z}}

\newcommand{\bM}{\mathbf{M}}

\newcommand{\bW}{\mathbf{W}}
\newcommand{\bX}{\mathbf{X}}

\newcommand{\balpha}{\boldsymbol{\alpha}}

\newcommand{\beps}{\boldsymbol{\epsilon}}

\newcommand{\bmu}{\boldsymbol{\mu}}

\newcommand{\bPhi}{\boldsymbol{\Phi}}

\DeclareMathOperator*{\argmax}{arg\,max}
\DeclareMathOperator*{\argmin}{arg\,min}

\DeclarePairedDelimiter{\braces}{\lbrace}{\rbrace}
\DeclarePairedDelimiter{\paren}{(}{)}

\DeclarePairedDelimiter{\abs}{|}{|}

\DeclarePairedDelimiter{\norm}{\|}{\|}

\newcommand{\R}{\mathbb{R}}
\renewcommand{\P}{\mathbb{P}}
\newcommand{\E}{\mathbb{E}}
\newcommand{\indicator}{\mathbb{1}}
\newcommand{\Var}{\operatorname{Var}}
\newcommand{\Cov}{\operatorname{Cov}}
\newcommand{\KL}{\operatorname{KL}}



\newcommand{\tree}{\mathfrak{T}}

\newcommand{\cell}{\mathcal{C}}

\newcommand{\leaf}{\mathcal{L}}

\newcommand{\depth}{l}
\newcommand{\idx}{m}

\newcommand{\leaves}{\mathfrak{L}}


\newcommand{\partition}{\mathcal{P}}

\newcommand{\partitionspace}[1][\sparsity]{\mathfrak{P}_{#1}}

\newcommand{\indep}{\perp\!\!\!\perp}

\renewcommand{\xspace}{\mathcal{X}}
\newcommand{\binspace}{\braces{\pm 1}^\nfeats}
\newcommand{\ctsspace}{[0,1]^\nfeats}
\newcommand{\xmeasure}{\nu}
\newcommand{\sparsity}{s}
\newcommand{\nfeats}{d}
\newcommand{\nsamples}{n}

\newcommand{\coordindices}{[\nfeats]}

\newcommand{\data}[1][n]{\mathcal{D}_{#1}}


\newcommand{\funcclass}{\mathfrak{F}}

\newcommand{\featindices}{J}
\newcommand{\cellfeatindices}{J}
\renewcommand{\path}{\mathcal{Q}}

\newcommand{\MSP}{\textsf{MSP}}
\newcommand{\SID}{\textsf{SID}}

\newcommand{\suppset}{S^*}

\newcommand{\splitoperator}{\mathcal{T}}
\newcommand{\cellsampindices}{I}

\newcommand{\wavelet}{\psi}
\newcommand{\cutset}{\mathcal{B}}

\newcommand{\event}{\Omega}
\newcommand{\fgreedy}{\hat{f}_{\operatorname{G}}}
\newcommand{\frf}{\hat{f}_{\operatorname{RF}}}
\newcommand{\fcart}{\hat{f}_{\operatorname{CART}}}
\newcommand{\ferm}{\hat{f}_{\operatorname{ERM}}}
\newcommand{\fna}{\hat{f}_{\operatorname{NA}}}
\newcommand{\rfparam}{\Theta}
\newcommand{\monomial}{\chi}
\newcommand{\modifieddata}{\check{\mathcal{D}}_\nsamples}
\newcommand{\TV}{\operatorname{TV}}
\newcommand{\traversal}{T}
\newcommand{\fcoefs}{\mathcal{S}}

\newcommand{\stabmsp}{\textsf{SMSP}}

\newcommand{\rid}{\Delta^{1/2}}
\newcommand{\ridhat}{\hat\Delta^{1/2}}
\newcommand{\cellcollection}{{\mathfrak{C}^*}}
\newcommand{\Corr}{\operatorname{Corr}}

\newcommand{\piecewise}{g}
\newcommand{\alg}{\mathcal{A}}
\newcommand{\splitfeature}{k}
\newcommand{\splitcriterion}{\mathcal{O}}

\newcommand{\mtryset}{\mathcal{M}}
\newcommand{\Imp}{\mathcal{I}}
\newcommand{\objective}{\mathfrak{R}}
\newcommand{\constraint}{\Gamma}
\newcommand{\sparsefuncs}{\funcclass_\sparsity}
\newcommand{\fnae}{\hat f_{\operatorname{NA-E}}}
\newcommand{\fgre}{\hat f_{\operatorname{G-E}}}

\newcommand{\querypath}{\mathcal{Q}}
\newcommand{\ybound}{M}
\newcommand{\graph}{\mathcal{G}}
\newcommand{\traversalsize}{\sparsity_{\operatorname{T}}}
\newcommand{\secondcellcollection}{\mathfrak{C}}

\newcommand{\comment}[2]{{#2}}

\title{Statistical-Computational Trade-offs for Recursive Adaptive Partitioning Estimators}

\author[1]{Yan Shuo Tan\thanks{yanshuo@nus.edu.sg}}
\author[2]{Jason M. Klusowski\thanks{jason.klusowski@princeton.edu}}
\author[3]{Krishnakumar Balasubramanian\thanks{kbala@ucdavis.edu}}
\affil[1]{\small Department of Statistics and Data Science, National University of Singapore}
\affil[2]{\small Department of Operations Research and Financial Engineering, Princeton University}
\affil[3]{\small Department of Statistics, University of California, Davis}

\date{}

\begin{document}

\maketitle

\begin{abstract}
    Models based on recursive adaptive partitioning such as decision trees and their ensembles are popular for high-dimensional regression as they can potentially avoid the curse of dimensionality. Because empirical risk minimization (ERM) is computationally infeasible, these models are typically trained using greedy algorithms. 
Although effective in many cases, these algorithms have been empirically observed to get stuck at local optima. 
We explore this phenomenon in the context of learning sparse regression functions over $\nfeats$ binary features, showing that when the true regression function $f^*$ does not satisfy \cite{abbe2022merged}'s Merged Staircase Property (MSP), \comment{red}{a form of heredity restriction similar to that used in classical ANOVA modeling}, greedy training requires $\exp(\Omega(\nfeats))$ samples to achieve low estimation error. 
Conversely, when $f^*$ does satisfy MSP, greedy training can attain small estimation error with only $O(\log \nfeats)$ samples. 
This dichotomy mirrors that of two-layer neural networks trained with stochastic gradient descent (SGD) in the mean-field regime, thereby establishing a head-to-head comparison between SGD-trained neural networks and greedy recursive partitioning estimators. 
Furthermore, ERM-trained recursive partitioning estimators achieve low estimation error with $O(\log \nfeats)$ samples irrespective of whether $f^*$ satisfies MSP, thereby demonstrating a statistical-computational trade-off for greedy training. Our proofs are based on a novel interpretation of greedy recursive partitioning using stochastic process theory and a coupling technique that may be of independent interest.
\end{abstract}

\section{Introduction}

Decision tree models are piecewise constant supervised learning models obtained by recursive adaptive partitioning of the covariate space.
Although classical, they remain among the most important supervised learning models because they are highly interpretable \citep{rudin2021interpretable,murdoch2019definitions} and yet are flexible enough to afford the potential for high prediction accuracy. 
This potential is maximized when decision trees are combined in ensembles via random forests (RFs) \citep{breiman2001random} or gradient boosting \citep{friedman2001greedy}. 
These algorithms are widely recognized as having state-of-the-art performance on moderately-sized tabular datasets \citep{caruana2008empirical,fernandez2014we,olson2018data}, even outperforming state-of-the-art deep learning methods \citep{grinsztajn2022tree}, despite the amount of attention lavished on the latter. 
Such datasets are common in many settings such as bioinformatics, healthcare, economics, and social sciences.
Naturally, decision trees and their ensembles receive widespread use via their implementation in popular machine learning packages such as \texttt{ranger} \citep{wright2015ranger}, \texttt{scikit-learn} \citep{pedregosa2011scikit}, \texttt{xgboost} \citep{chen2016xgboost}, and \texttt{lgbm} \citep{ke2017lightgbm}.
Decision trees have also been adapted to a variety of tasks beyond regression and classification, including survival analysis \citep{ishwaran2008random}, heterogeneous treatment effect estimation \citep{athey2016recursive}, time series analysis, and multi-task learning.

While alternatives exist, most decision tree models used in practice make binary, axis-aligned splits at each partitioning stage.
In this paper, we study the \emph{statistical-computational trade-offs} of these objects, focusing on regression trees.
We assume a nonparametric regression model under random design:
\begin{equation}
    \label{eq:nonparametric_regression_model}
    Y_i = f^*(\bX_i) + \varepsilon_i, \quad i =1,2,\ldots,\nsamples.
\end{equation}
Here, the covariate space $\xspace$ is a compact subset of $\R^\nfeats$, $\bX_1,\bX_2,\ldots\bX_\nsamples$ are drawn i.i.d. from a distribution $\xmeasure$ on $\xspace$, $\varepsilon_1,\varepsilon_2,\ldots,\varepsilon_\nsamples$ are noise variables drawn i.i.d. from a zero-mean 
noise distribution on $\R$, with $\varepsilon_i \indep\bX_i \eqqcolon (X_{i1},X_{i2},\ldots,X_{i\nfeats})$, and $f^*\colon\xspace \to \R$ is the conditional expectation function of the response $Y$ given $\bX = \bx$.
The observed data is denoted as $\data \coloneqq \braces*{(\bX_i,Y_i)\colon i=1,\ldots,\nsamples}$.
We will compare and contrast various tree-based estimators for $f^*$ , evaluating the accuracy of an estimate $\hat f(-;\data)$ using the $L^2$ risk (or estimation error)
\begin{equation} \nonumber
    R(\hat f(-;\data),f^*) \coloneqq \E_{\bX\sim\xmeasure}\braces*{\paren*{\hat f(\bX;\data) - f^*(\bX)}^2},
\end{equation}
and the accuracy of an estimator $\hat f(-;-)$ using the expected $L^2$ risk, $\E_{\data}\braces*{ R\paren*{\hat f(-;\data),f^*}}$.
Note that throughout this paper, we will use the convention that random variables are denoted in upper case while any fixed value they may attain is denoted in lower case.
Furthermore, vectors will be denoted using boldface, whereas scalars will be denoted using regular font.
We denote $\coordindices \coloneqq \braces*{1,2,\ldots,\nfeats}$. 
To supplement standard Big-$O$ notation, we use $\tilde{O}$ to suppress poly-logarithmic factors.


\subsection{High-Dimensional Consistency}



To establish statistical-computational trade-offs for regression trees, we analyze their performance in a high-dimensional setting ($n, d \to \infty$) with sparsity constraints on $f^*$, i.e., if $f^*$ only depends on a constant number of $\sparsity$ covariates. Informally, we say that an estimator avoids the curse of dimensionality, or that it is \emph{high-dimensional consistent}, if its expected $L^2$ risk converges to zero under a scaling regime satisfying $\log \nsamples = o(\nfeats)$.
It is easy to see that high-dimensional consistency requires feature selection:
Assuming relatively balanced splits, the average depth of a tree is $O(\log \nsamples)$.
Hence, $\log n = o(d)$ implies that not every covariate can receive a split.
As such, totally randomized trees, as well as other such algorithms that grow decision trees independently of the observed responses, are not high-dimensional consistent.

Several works 
have showed that CART and RFs are high-dimensional consistent given various condition on $f^*$ (e.g., \cite{syrgkanis2020estimation} for binary covariates and assuming submodularity, \cite{klusowski2024large} for arbitrary covariates and assuming additivity, \cite{chi2022asymptotic} for arbitrary covariates and assuming a condition called sufficient impurity decrease, which we will define more formally later in this paper). 
This helps to explain the successful application of RFs to high-dimensional problems such as genomics in which $\nsamples \ll \nfeats$.

On the other hand, these works were not able to show whether their assumed conditions were \emph{necessary} for high-dimensional consistency. 
\cite{syrgkanis2020estimation} used the exclusive-or (XOR) function to argue heuristically that some condition is indeed necessary (see Section 5 therein).
To understand this argument, we focus on the setting of binary covariates ($\xspace = \binspace$) under the uniform distribution.
In this setting, the XOR function can be written as $f^*(\bx) = x_1x_2$.
Meanwhile, every tree node corresponds to a subcube $\cell \subset \binspace$ and every potential split of $\cell$ divides it evenly into two subcubes of lower dimension.
With each split fully determined by the choice of covariate index $k=1,\ldots,\nfeats$, CART chooses the index of the covariate with the maximum plug-in squared correlation with the response within $\cell$:
\begin{equation}
    \label{eq:ID_intro}
    \widehat\Corr^2\braces*{Y,X_k~|~\bX \in \cell} = \frac{\paren*{\sum_{\bX_i \in \cell} X_{ik}(Y_i - \bar Y_{\cell})}^2}{\sum_{\bX_i \in \cell} (X_{ik} - (\bar{X}_k)_\cell)^2 \sum_{\bX_i \in \cell} (Y_i - \bar{Y}_\cell)^2};
\end{equation}
(see \cite{klusowski2021nonparametric} and \cite{klusowski2020sparse}, Equation (5)).
Unless $\cell$ already depends on either $x_1$ or $x_2$, however, we have zero correlation in the infinite-sample limit, i.e.,
\begin{equation}
\label{eq:marginal_signal}
    \Corr^2\braces*{f^*(\bX),X_k~|~\bX \in \cell} = \E\braces*{X_1X_2X_k~|~\bX \in \cell}^2 = 0,
\end{equation}
for $k=1,\ldots,\nfeats$.
This implies that CART cannot differentiate between relevant and irrelevant covariates, instead tending to make non-adaptive random splits.
Hence, it cannot avoid the curse of dimensionality.
As an ensemble of CART trees, RF suffers from the same limitations.

Beyond some heuristic calculations, \cite{syrgkanis2020estimation} did not make this argument rigorous, nor did they generalize beyond the XOR example.
\begin{quote}
\normalsize\emph{In this paper, we seek to bridge this gap and
establish a necessary and nearly sufficient condition on $f^*$ that characterizes when CART is high-dimensional consistent, and, in doing so, explicitly compare its risk performance against that of ERM and two-layer neural networks trained with SGD.}
\end{quote}
We do this in the context of binary covariates under the uniform distribution, which although unlikely to describe any real dataset, provides a useful setting where both results and calculations can be stated cleanly and precisely, thereby illustrating the essence of the strengths and limitations of using a greedy splitting rule.

\subsection{``Almost'' Characterization Using the Merged-Staircase Property}

For any subset $S \subset \coordindices$ of the coordinate indices, let $\chi_S$ be the monomial function defined by $\chi_S(\bx) = \prod_{j \in S} x_j$.\footnote{If $|S| = s$, such a function is sometimes called an $s$-parity in the theoretical computer science literature.}
It is a classical result from Fourier analysis \citep{stein2011fourier} that every function $f^* \colon \binspace \to \R$ can be uniquely written in the form
\begin{equation} \label{eq:fourier_representation}
    f^* = \sum_{i=1}^r \alpha_{S_i}\chi_{S_i},
\end{equation}
for some nonnegative integer $r$, subsets $S_1,S_2,\ldots,S_r$ of $\braces*{1,2,\ldots,\nfeats}$ and nonzero coefficients $\alpha_{S_j}, j=1,2,\ldots,r$.
We say that $f^*$ satisfies the \emph{Merged-Staircase Property} (\MSP) \citep{abbe2022merged} if the subsets $S_1,S_2,\ldots,S_r$ can be reordered such that for any $i \in \braces{1,2,\ldots,r}$, we have
\begin{equation} \label{eq:MSP}
    \abs*{S_i \backslash \cup_{j=1}^{i-1}S_j} \leq 1.\footnote{The merged-staircase property was first introduced by \cite{abbe2022merged}. See also Section \ref{subsec:comparisons_with_NNs}.}
\end{equation}
Note that any monomial of degree $k > 1$ does not satisfy the \textsf{MSP}.
In particular, the XOR function does not satisfy the \textsf{MSP}.
On the other hand, a positive example satisfying MSP is provided by the function
\begin{equation} \label{eq:MSP_example}
    f^*(\bx) = x_1 + x_2 + x_1x_2x_3.
\end{equation}

\comment{red}{For further intuition, note that \eqref{eq:fourier_representation} can be interpreted as an ANOVA decomposition for $f^*(\bX)$~\citep{wahba1990spline}. 
Each term $\chi_{S_i}$ is a contrast that captures either a main effect (if $S_i$ is a singleton) or a $k$-way interaction (if $S_i$ has cardinality $k > 1$). 
Meanwhile, the coefficient $\alpha_{S_i}$ represent the effect size.
The \textsf{MSP} imposes a type of heredity\footnote{\comment{red}{The \textsf{MSP} is related but not equivalent to the conditions of weak and strong heredity often enforced by ANOVA modeling in practice~\citep{wu2011experiments}. Weak heredity, which asserts that an interaction is present only if at least one of its lower-order parents is present, is equivalent to the Staircase Property formulated by \cite{abbe2021staircase} to analyze neural networks, and is therefore stronger than the \MSP. \cite{abbe2021staircase} later generalized the definition to the \MSP, which turns out to offer a better characterization of learnability.}} constraint on the pattern of interactions present in the ANOVA model: We can order the nonzero effects so that each newly-introduced effect brings in at most one new variable that has not appeared in any previously included effect.
In other words, interactions cannot appear ``out of nowhere'' combining many fresh variables at once.}

Our main theorem, stated informally, is the following:

\begin{theorem}[Informal]
\label{thm:informal_intro}
    When $\xspace = \binspace$, $\xmeasure = \operatorname{Unif}\paren*{\binspace}$, and $f^*$ depends only on $\sparsity$ covariates, then the expected $L^2$ risk of the CART estimator $\fcart$ satisfies:
    \begin{itemize}
        \item (Necessity) If $f^*$ does not satisfy \textsf{MSP}, then $\E_{\data}\braces*{ R\paren*{\fcart(-;\data),f^*}} = \Omega(1)$ whenever $\nsamples = \exp(O(\nfeats))$.
        \item (Near sufficiency) If $f^*$ satisfies \textsf{MSP} and its Fourier coefficients $\braces*{\alpha_{S_i}}_{i=1}^r$ are generic, then $\E_{\data}\braces*{ R\paren*{\fcart(-;\data),f^*}} = O(2^s/n)$ whenever $\nsamples = \Omega(2^s\log d)$.
    \end{itemize}
    Furthermore, regardless of whether $f^*$ satisfies \MSP, the expected $L^2$ risk of the ERM tree estimator satisfies $\E_{\data}\braces*{ R\paren*{\ferm(-;\data),f^*}} = O(2^s\log\nfeats/n)$.
\end{theorem}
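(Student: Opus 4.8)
The plan is to analyze the three assertions through one shared engine and then split into cases. The engine is a \emph{stochastic-process} view of greedy partitioning: the cells produced by CART form a random sequence of subcubes of $\binspace$, and a cell $\cell$ obtained by fixing the coordinates of a set $A\subseteq\coordindices$ is split on the coordinate $k\notin A$ that maximizes the empirical criterion \eqref{eq:ID_intro}. Everything rests on one computation: by Fourier analysis the infinite-sample limit of that criterion at $\cell$ equals $\widehat{f^*|_\cell}(\{k\})^2/\Var(f^*|_\cell)$, where the degree-one coefficient of the restricted function is $\widehat{f^*|_\cell}(\{k\})=\sum_{S:\,S\setminus A=\{k\}}\alpha_S\prod_{j\in S\cap A}x_j^{(\cell)}$ (with $x_j^{(\cell)}\in\{\pm1\}$ the value to which $x_j$ is fixed on $\cell$). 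Thus irrelevant coordinates always have population criterion $0$, and a relevant coordinate $k$ has positive population criterion at $\cell$ only if some nonzero term $\chi_S$ in \eqref{eq:fourier_representation} satisfies $S\subseteq A\cup\{k\}$ and $k\in S$ --- exactly a ``staircase step'' condition. The \textbf{ERM bound} requires none of this greedy analysis: since $f^*$ depends on at most $\sparsity$ coordinates it is constant on each of the $2^\sparsity$ subcubes they cut out, hence exactly represented by a tree with $2^\sparsity$ leaves, so the class $\funcclass$ of trees over $\coordindices$ with at most $2^\sparsity$ leaves contains $f^*$ and has zero approximation error. The number of topologies in $\funcclass$ is at most $\nfeats^{2^\sparsity-1}$, giving metric entropy $O(2^\sparsity\log\nfeats)$, and the standard oracle inequality for least squares over such a class (using boundedness of $f^*$ and sub-Gaussian noise) gives $\E_{\data}\{R(\ferm,f^*)\}=O(2^\sparsity\log\nfeats/\nsamples)$ irrespective of \MSP.

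\textbf{Necessity.} If $f^*$ fails \MSP, iterate the closure $A_0=\emptyset$, $A_{t+1}=A_t\cup(S\setminus A_t)$ over support sets $S$ with $|S\setminus A_t|\le1$; failure of \eqref{eq:MSP} means this stabilizes at some $A^*\subsetneq\suppset$ for which every support set $S$ satisfies $S\subseteq A^*$ or $|S\setminus A^*|\ge2$, and one checks $q:=|\suppset\setminus A^*|\ge2$. By the population identity, any cell fixing only coordinates in $A^*$ has population criterion exactly $0$ at every coordinate outside $A^*$, so on such cells CART splits on pure finite-sample noise. Here we invoke the coupling: the vanishing degree-one coefficient makes the empirical criteria of all coordinates outside $A^*$ exchangeable in distribution, and coupling the data to an auxiliary process in which $\suppset\setminus A^*$ occupies a uniformly random set of positions shows that, until the first time CART splits on a coordinate of $\suppset\setminus A^*$, its splits outside $A^*$ behave like uniform random choices. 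Since a binary tree with at most $\nsamples$ leaves has $\xmeasure$-measure-weighted expected depth at most $\log_2\nsamples$ (Kraft/entropy), and ``capturing'' the missing region requires a root-to-leaf path that first lands on one of the $q$ coordinates in $\suppset\setminus A^*$, a union bound gives expected captured measure $O(q\log_2\nsamples/\nfeats)$, which is at most $1/2$ once $\log\nsamples\le c\,\nfeats$ for a suitable constant $c$. On any leaf $\cell$ not fixing all of $\suppset$, $f^*|_\cell$ is non-constant with $\Var(f^*|_\cell)$ bounded below by a constant (a surviving interaction coefficient), and since $\int_\cell(c-f^*)^2\,d\xmeasure\ge\xmeasure(\cell)\Var(f^*|_\cell)$ for any constant $c$, summing over such leaves and taking expectations gives $\E_{\data}\{R(\fcart,f^*)\}=\Omega(1)$.

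\textbf{Near-sufficiency.} If $f^*$ satisfies \MSP with generic coefficients, genericity rules out accidental Fourier cancellations and yields a \emph{robust staircase property}: for every subcube $\cell$ fixing a strict subset $A\subsetneq\suppset$, the restricted function $f^*|_\cell$ has a nonzero degree-one coefficient on some coordinate of $\suppset\setminus A$, and the gap between the largest population criterion at $\cell$ and the next largest (and the zero criteria of irrelevant coordinates) is at least some $\delta_0=\delta_0(f^*)>0$. Hence population CART climbs the staircase, splitting only on relevant coordinates and shrinking $\suppset\setminus A$ by one per level, so after $\le\sparsity$ levels every leaf fixes all of $\suppset$, making $f^*|_\cell$ constant and producing a tree with $\le2^\sparsity$ leaves and zero bias. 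Passing to finite samples, show that \eqref{eq:ID_intro} concentrates within $O(\sqrt{\log\nfeats/\nsamples_\cell})$ of its population value uniformly over all $\nfeats$ coordinates and all $O(2^\sparsity)$ staircase cells; since each such cell has $\nsamples_\cell\gtrsim\nsamples\,2^{-\sparsity}$ points with high probability, the hypothesis $\nsamples=\Omega(2^\sparsity\log\nfeats)$ (large enough constant, $\delta_0$ absorbed) drives every fluctuation below $\delta_0$, so CART reproduces the population staircase off an event whose probability (for a large enough constant in the sample size) contributes only $o(2^\sparsity/\nsamples)$ to the risk. On the good event each leaf estimate is a sample mean of $\nsamples_\cell$ responses with constant conditional variance, so summing $\xmeasure(\cell)\,\Var(Y\mid\cell)/\nsamples_\cell\asymp1/\nsamples$ over the $\le2^\sparsity$ leaves gives $\E_{\data}\{R(\fcart,f^*)\}=O(2^\sparsity/\nsamples)$.

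\textbf{Main obstacle.} The crux is the necessity argument: converting ``all coordinates outside $A^*$ look the same to CART'' into a rigorous statement over the entire exponentially branching random tree. This demands a careful stochastic-process formalization of the cell sequence together with the splitting and stopping rules, and a coupling that controls the discrepancy between the true process and the symmetrized one up to the capture time --- in particular handling the residual higher-moment differences between a relevant-but-outside-$A^*$ coordinate and a genuinely irrelevant one, and the cross-cell dependence from reusing the same sample throughout the tree. The near-sufficiency direction is comparatively standard once the robust staircase property is isolated, reducing to uniform concentration of \eqref{eq:ID_intro}.
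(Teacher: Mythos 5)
Your overall architecture matches the paper's: reduce the risk lower bound to the probability of selecting a relevant covariate along a query path, symmetrize the splitting process for non-\MSP~functions, use an \SID-type uniform concentration for the upper bound, and a metric-entropy oracle inequality for ERM. Two points deserve attention.

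First, the crux you flag in the necessity argument is a genuine gap, and the fix you gesture at (``the empirical criteria of all coordinates outside $A^*$ are exchangeable in distribution'') is not literally true: even though each $k \in \suppset\setminus A^*$ is uncorrelated with $Y$ on the relevant cells, the joint law of the dataset is not invariant under a permutation swapping such a $k$ with an irrelevant coordinate, since $Y$ genuinely depends on the former, and the tree process feeds on the joint law through the conditioning induced by earlier splits. The paper's device is sharper: pick a traversal $\traversal$ hitting each non-\MSP~support set in at least two coordinates, and for each $a$ form a modified dataset $\data^{(a)}$ by deleting all traversal coordinates except the $a$-th. Because every non-\MSP~monomial still contains a deleted coordinate, $Y$ becomes exactly independent of all surviving covariates outside $\suppset_{\MSP}$, so the modified query path is a uniformly random prefix of a permutation; moreover, the first index at which the true path can disagree with every modified path must itself be a split on a traversal coordinate, so ``decoupling'' implies ``capture.'' This yields the $O(\traversalsize\log_2\nsamples/\nfeats)$ selection bound without ever asserting exchangeability of the original data. (Your Kraft/entropy depth bound is applied to the coupled, response-independent process, where it is legitimate; the paper instead uses a sign-flip martingale argument giving $\E\braces{N(\cell_{t+1})\mid N(\cell_t)}=N(\cell_t)/2$ for each fixed query point.)

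Second, your near-sufficiency sketch omits the stopping rule, which is essential. Uniform concentration only prevents wrong splits while the signal exceeds the noise; once every leaf fixes $\suppset$, the population criterion vanishes everywhere and an unpruned CART would continue splitting on noise down to purity, destroying the $O(2^\sparsity/\nsamples)$ variance bound. The paper threads a minimum impurity decrease threshold $\gamma$ strictly between the noise level $\tau/\nsamples$ and the signal level so that, on the concentration event, irrelevant splits are refused and relevant ones are forced. You also do not need a gap between the largest and second-largest relevant criteria; it suffices that the best relevant criterion exceeds the (vanishing) irrelevant ones, since a split on the ``wrong'' relevant coordinate still advances the staircase.
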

The formal statements of these results are provided in the forthcoming Theorems \ref{thm:non_msp_lower_bound}, \ref{thm:MSP_upper_bound}, and \ref{thm:robust_lower_bounds}, together with Propositions \ref{prop:info_theory_lower_bound} and \ref{prop:erm_upper_bound}.
\comment{red}{Our upper bound is attained when using a minimum impurity decrease stopping condition (see~\eqref{eq:min_imp_dec} for details) with a fixed threshold $\gamma$ that is chosen above the noise level but below the signal strength. The lower bound does not depend on $\gamma$ or on the particular stopping/pruning choices.}

Here, genericity of $\braces*{\alpha_{S_i}}_{i=1}^r$ holds with probability one whenever they are drawn from a distribution with a density \comment{red}{(see the discussion in Section \ref{sec:characterizing_CART}).}
Up to this asterisk, we see that high-dimensional consistency for CART can be completely characterized by \textsf{MSP}, a combinatorial condition on Fourier coefficients.
In addition, we see in fact that CART behaves similarly to ERM trees when \MSP~holds, and similarly to non-adaptive trees when it does not, thereby also establishing \emph{performance gaps}:
In the former case, CART sharply improves upon non-adaptive trees, and in the latter case, it performs much worse compared to ERM.
Finally, we remark that our lower bounds when $f^*$ does not satisfy \MSP~also hold more broadly for RFs, as well as for trees and ensembles grown with other greedy recursive partitioning strategies.\footnote{It is possible to extend the upper bound to various forms of RFs, but we leave this to future work.}

\subsection{Marginal Signal Bottleneck}

When a regression function $f^*$ does not satisfy \MSP, its Fourier decomposition must contain a term $\alpha_{S_i}\monomial_{S_i}$ that violates \eqref{eq:MSP}.
This term creates problems for CART in a similar way as the XOR function.
Let us call a covariate $X_k$ \emph{marginally undetectable} in $\cell$ if \eqref{eq:marginal_signal} holds.
As argued before, when $X_k$ is relevant but marginally undetectable, CART cannot distinguish it from irrelevant covariates.
One can show that there exists covariate indices $k_1,k_2 \in S_i$ such that $X_{k_1}$ is marginally undetectable
unless $\cell$ has already has split on $x_{k_2}$ and $X_{k_2}$ is marginally undetectable
unless $\cell$ has already has split on $x_{k_1}$.
We call this chicken and egg problem \emph{marginal signal bottleneck}.
It results in a highly probable event in which both these covariates remain marginally undetectable at every iteration of CART, leading to high-dimensional inconsistency.

On the other hand, it is easy to show that \MSP~together with genericity implies \citet{chi2022asymptotic}'s sufficient impurity decrease (\SID) condition.
In the setting of uniform binary features, the \SID~condition asserts a lower bound for what we call the \emph{marginal signal},
\begin{equation}
    \label{eq:SID}
    \max_{1 \leq k \leq \nfeats}\Corr^2\braces*{f^*(\bX),X_k~|~\bX \in \cell} \geq \lambda,
\end{equation}
that holds uniformly over all subcubes $\cell$ on which $\Var\braces*{f^*(\bX)|\bX \in\cell} > 0$.
When \eqref{eq:SID} holds for \emph{some} $\cell$, then with enough samples, CART is able to identify and select a relevant covariate for splitting $\cell$.
When \eqref{eq:SID} hold for \emph{all} subcubes (i.e., \SID), we are able to guarantee that CART is uniformly able to identify and select relevant covariates across every iteration loop of the algorithm, thereby yielding high-dimensional consistency.


\subsection{Interpreting CART as a Stochastic Process}

Despite the simplicity of the lower bound argument outlined in the previous section, making it rigorous in finite samples, even in the special case of the XOR function, has proved elusive until now.
To illustrate the difficulty, consider a fixed query point $\bx \in \binspace$, and consider $\path$, the root-to-leaf path taken by $\bx$.
We have to show that there is a highly probable event on which $\path$ does not split on $x_1$ or $x_2$.
The usual approach to such a task is to attempt to prove concentration of the split criteria \eqref{eq:ID_intro} over all nodes (subcubes) along the path.
However, because of the data adaptivity of CART splits, we have little control over which subcubes appear, and we would be forced to derive concentration bounds that are uniform over all subcubes.
Unfortunately, this desired result is too strong and does not hold.

To resolve this, we interpret $\path$ as a stochastic process.
The measurement at ``time point'' $t$ comprises the values $\Corr^2\braces*{f^*(\bX),X_k~|~\bX \in \cell_t}$, $k=1\ldots,\nfeats$, where $\cell_t$ is the node in $\path$ at depth $t$.
We next couple this stochastic process to one that makes totally random splits, which allows us to use symmetry to calculate the probability that it does not split on $x_1$ or $x_2$.
On the event that the two processes are coupled, this also holds for the CART path $\path$.

Although we have outlined a lower bound for the XOR function, it can be readily adapted to any function not satisfying \MSP.
Furthermore, there is nothing particularly special about the CART criterion.
The proof relies on the fact that splits are determined by considering only the marginal distributions $X_k|~\cell,Y$ for $k=1,\ldots,\nfeats$ (as opposed to the joint distribution $\bX|~\cell,Y$) and hence extends to a larger class of greedy recursive partitioning algorithms, which we formally define in Section \ref{sec:framework}.
As far as we are aware, the proof technique is completely novel.


\subsection{Robust Lower Bounds under ``Soft'' Bottlenecks}

Thus far, we have seen that when $f^*$ does not satisfy \MSP, it experiences marginal signal bottleneck, which leads to high-dimensional inconsistency.
In some sense, however, being non-\MSP~is not a robust property because a small perturbation of the zero coefficients of such a function results in one that does satisfy \MSP.
As such, our argument regarding the weaknesses of greedy recursive partitioning strategies would be stronger if we could show that their performance degrades smoothly as the regression function $f^*$ approaches the class of non-\MSP~functions.
Fortunately, this is indeed the case, as reflected by the following informal theorem.

\begin{theorem}[Informal]
    When $\xspace = \binspace$, $\xmeasure = \operatorname{Unif}\paren*{\binspace}$, $f^*$ depends only on $\sparsity$ covariates, then the expected $L^2$ risk of the CART estimator $\fcart$ satisfies 
    \begin{align*}
    E_{\data}\braces*{ R\paren*{\hat f(-;\data),f^*}} = \Omega(1),~\text{whenever}~\nsamples = \min\braces*{O\paren*{1/\min_{f\notin\MSP}R(f, f^*)},\exp(O(\nfeats))}.
    \end{align*}
\end{theorem}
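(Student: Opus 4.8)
The plan is to treat $f^*$ as a small $L^2$ perturbation of a genuinely non-\MSP~function, so that it inherits a ``soft'' marginal signal bottleneck, and then to re-run the stochastic-process coupling behind Theorem~\ref{thm:non_msp_lower_bound} with a small but nonzero signal budget. Write $\delta^2 \coloneqq \min_{f \notin \MSP} R(f,f^*)$, fix a near-minimizing $f \notin \MSP$, and decompose $f^* = f + g$ with $\|g\|_{L^2(\xmeasure)}^2 \le \delta^2$. After rescaling so $\Var\braces*{f^*(\bX)}$ is bounded away from $0$, and noting the claim is vacuous unless $\delta$ is small (since $\nsamples = O(1/\delta^2)$ then forces $\nsamples = O(1)$), we may further assume $\delta$ is small enough that the $L^2$ mass of $f^*$ on Fourier sets meeting the bottleneck coordinates introduced below is bounded below. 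Since $f \notin \MSP$, the marginal signal bottleneck argument underlying Theorem~\ref{thm:non_msp_lower_bound} produces two relevant coordinates $k_1,k_2$ lying in a common Fourier set $S_i$ of $f$, with the structural property that every Fourier set of $f$ containing one of $k_1,k_2$ also contains the other; equivalently, $\Cov\braces*{f(\bX),X_{k_j}~|~\bX\in\cell} = 0$ for $j \in \braces*{1,2}$ whenever $\cell$ fixes neither $x_{k_1}$ nor $x_{k_2}$. Because distinct Fourier terms of $g$ remain orthogonal after restriction to any subcube, $\Var\braces*{g(\bX)~|~\bX\in\cell} \le \|g\|_{L^2}^2 \le \delta^2$ for every subcube $\cell$, so Cauchy--Schwarz gives, uniformly over all subcubes $\cell$ fixing neither $x_{k_1}$ nor $x_{k_2}$,
\begin{equation} \nonumber
  \paren*{\Cov\braces*{f^*(\bX),X_{k_j}~|~\bX\in\cell}}^2 = \paren*{\Cov\braces*{g(\bX),X_{k_j}~|~\bX\in\cell}}^2 \le \Var\braces*{g(\bX)~|~\bX\in\cell} \le \delta^2, \quad j \in \braces*{1,2}.
\end{equation}
Thus the bottleneck coordinates carry at most $\delta^2$ of population marginal signal along any path that has not resolved the bottleneck -- a ``soft'' analogue of the exact vanishing exploited in the non-\MSP~case (cf.\ \eqref{eq:marginal_signal}).

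Next, fix a query point $\bx \in \binspace$ and view its root-to-leaf path $\path(\bx)$ as the stochastic process whose state at depth $t$ is the node $\cell_t$ together with the plug-in split criteria $\paren*{\widehat\Corr^2\braces*{Y,X_k~|~\bX\in\cell_t}}_{k=1}^{\nfeats}$. Because $\xmeasure$ is uniform on $\binspace$, every split is exactly mass-balanced, so $\cell_t$ holds $\approx \nsamples/2^t$ samples and $\path(\bx)$ reaches a leaf at depth $L = O(\log\nsamples)$. As long as $\path(\bx)$ has not split on $k_1$ or $k_2$, the deterministic (population) part of $\widehat\Corr^2\braces*{Y,X_k~|~\bX\in\cell_t}$ is $0$ for each of the $\ge \nfeats-\sparsity$ irrelevant coordinates and is $O(\delta^2/\noisevar)$ for $k\in\braces*{k_1,k_2}$ by the display above (the $\noisevar$ entering through normalization by the within-node response variance), while the empirical fluctuation of every criterion is of order $1/\nsamples_{\cell_t} \gtrsim 1/\nsamples$ (up to logarithmic factors). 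Hence if $\delta^2 \le c\,\noisevar/\nsamples$ for a small constant $c$ -- i.e.\ $\nsamples = O(1/\delta^2)$ -- the deterministic part of $k_1,k_2$'s criteria is dominated by the fluctuations the irrelevant coordinates also exhibit, so $k_1,k_2$ compete on essentially equal footing with the $\ge \nfeats-\sparsity$ irrelevant coordinates. Coupling $\path(\bx)$ to the totally-random-split process as in the proof of Theorem~\ref{thm:non_msp_lower_bound}, on the coupling event (of probability $1-o(1)$) the conditional probability that $\path(\bx)$ splits on $k_1$ or $k_2$ at a given level is $O(1/(\nfeats-\sparsity))$; a union bound over the $L=O(\log\nsamples)$ levels then shows that, when additionally $\log\nsamples \le \epsilon\,\nfeats$ for a small constant $\epsilon$ (the ``$\exp(O(\nfeats))$'' regime), the event $E_\bx \coloneqq \braces*{\path(\bx)\text{ never splits on }k_1\text{ or }k_2}$ has probability bounded away from $0$, uniformly over $\bx$.

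I expect the coupling step to be the crux: one must pin down the plug-in criteria tightly along the adaptively grown path -- where uniform-over-all-subcubes concentration provably fails -- while (i) ensuring the $O(\delta^2)$ bias at $k_1,k_2$ neither stands out against the sea of irrelevant coordinates nor accumulates over the $\Theta(\log\nsamples)$ levels, and (ii) allowing $\path(\bx)$ to split on relevant coordinates other than $k_1,k_2$, which reshapes the nodes but, by the structural property above, never renders $k_1$ or $k_2$ marginally detectable. (An alternative is to couple $\path(\bx)$ on $\data$ to the path CART would take on the fictitious responses $f(\bX_i)+\varepsilon_i$, for which the non-\MSP~analysis applies directly; the two paths agree on a constant-probability event once $\delta^2 = O(1/\nsamples)$.) To conclude, note that on $E_\bx$ the leaf $\leaf(\bx)$ fixes neither $x_{k_1}$ nor $x_{k_2}$, so $S_i$ is not contained in the coordinates fixed by $\leaf(\bx)$, and orthogonality of the Fourier expansion of $f^*$ restricted to $\leaf(\bx)$ gives
\begin{equation} \nonumber
  \Var\braces*{f^*(\bX)~|~\bX\in\leaf(\bx)} \ge \sum_{S\,:\,S\cap\braces*{k_1,k_2}\neq\emptyset}\alpha_S(f^*)^2 \ge c_0 > 0,
\end{equation}
the last bound using that $\delta$ is small. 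The bias lower bound for piecewise-constant estimators used in the proof of Theorem~\ref{thm:non_msp_lower_bound} -- whereby the expected $L^2$ risk of $\fcart$ restricted to $E_\bX$ is, up to lower-order terms, at least the expected within-leaf variance of $f^*$ -- then yields
\begin{equation} \nonumber
  \E_{\data}\braces*{R\paren*{\fcart(-;\data),f^*}} \ge \E_{\data}\E_{\bX}\braces*{\indicator\braces*{E_\bX}\paren*{\fcart(\bX;\data)-f^*(\bX)}^2} \gtrsim c_0\cdot\inf_{\bx}\P(E_\bx) = \Omega(1).
\end{equation}
The argument applies verbatim to random forests and to trees grown by any greedy rule that uses the data only through the per-coordinate marginals $X_k~|~\cell,Y$, and is insensitive to the stopping or pruning rule because $E_\bx$ constrains only which coordinates are split, not when growth halts.
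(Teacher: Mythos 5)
Your main line of argument differs from the paper's and has a real gap, which you yourself flag as the ``crux.'' The paper's proof (Theorem \ref{thm:robust_lower_bounds} via Lemma \ref{lem:nonmsp_robst_feature_selection_prob}) does \emph{not} try to argue that the small population signal at the bottleneck coordinates is drowned out by finite-sample fluctuations along the adaptive path. Instead it defines the fictitious dataset $\modifieddata$ with responses $\check f^*(\bX_i)+\varepsilon_i$, where $\check f^*=f^*_{-\cutset}$ is strictly non-\MSP, computes $\KL(\data\Vert\modifieddata)=\nsamples\,w(\cutset)/(2\sigma^2)$, and uses Pinsker plus the coupling characterization of total variation to find an event of probability at least $1-\sqrt{\nsamples w(\cutset)/(4\sigma^2)}$ on which $\data=\modifieddata$ \emph{as datasets}. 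On that event every downstream quantity CART computes is identical to what it would compute on non-\MSP~data, so Lemma \ref{lem:nonmsp_feature_selection_prob} applies with no modification. This matters because the exchangeability argument underlying that lemma (Step~2: drop the traversal coordinates, observe $Y$ becomes exactly independent of the surviving covariates, deduce permutation invariance, conclude the path is a prefix of a uniform random permutation) requires \emph{exact} independence. In your setup the perturbation $g$ leaves a nonzero, if small, correlation between $Y$ and the bottleneck coordinates even after dropping; the covariates are then no longer exchangeable, the totally-random-split comparator is no longer a faithful coupling, and ``small bias dominated by fluctuations'' does not translate into an actual measure-preserving symmetry you can invoke. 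You would be forced into precisely the uniform-over-adaptive-subcubes concentration analysis that, as you note, does not go through.

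Your parenthetical alternative — couple $\data$ to the dataset generated from the fictitious non-\MSP~responses $f(\bX_i)+\varepsilon_i$ and reuse the non-\MSP~analysis on the coupling event — \emph{is} the paper's argument, stated per-path rather than at the dataset level; it is the route that works, and it is the one you should have developed. Two smaller points: the paper's minimum is taken over \emph{vertex cuts} of the Fourier graph rather than over all non-\MSP~functions, and your reduction to a single pair $k_1,k_2$ with ``every Fourier set containing one also contains the other'' only covers the XOR-like case (a traversal of size two); a general non-\MSP~surrogate requires a traversal $\traversal$ of possibly larger size, hitting each disconnected Fourier set in at least two coordinates, as in Lemma \ref{lem:nonmsp_feature_selection_prob}. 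Your closing reduction from covariate non-selection to an $\Omega(1)$ risk bound is correct and matches Lemma \ref{lem:covariate_selection_tree_v2}.
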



The value, $\min_{f\notin\MSP}R(f, f^*)$, can be thought of as measuring the width of a ``soft'' bottleneck.
For example, if we set $f^*(\bx) = \alpha x_1 + x_2 + x_1x_2x_3$, then we have $\min_{f\notin\MSP}R(f, f^*) = \alpha^2$.
We thus see that the sample complexity lower bound scales inversely with the bottleneck width, thereby extending the performance gap between greedy and ERM trees to a larger range of settings.
Finally, we remark that \MSP~can be re-interpreted in graph theoretic terms as form of connectedness.
Under this interpretation, $\min_{f\notin\MSP}R(f, f^*)$ is equal to the weight of the minimum vertex cut.
We elaborate upon this perspective in Section \ref{sec:greedy_and_non_msp}.

\subsection{Comparisons with Neural Networks Trained by SGD}
\label{subsec:comparisons_with_NNs}
\comment{red}{We now compare and connect our results with the literature on training neural networks. 
Readers who are only interested in decision trees and ensembles can safely skip this sub-section.}

In computational learning theory, the problem of ``learning'' (i.e., achieving small estimation error for) sparse Boolean monomials, also called \emph{parities}, from i.i.d. noiseless observations (i.e., $\varepsilon_i = 0$ in \eqref{eq:nonparametric_regression_model}) is known to be statistically easy but computationally hard \citep{barak2022hidden}, and hence has served as a useful benchmark for learning algorithms and computational frameworks.
Recently, this has become a benchmark for studying neural networks (NNs) trained using SGD, which, when combined with our results, allows us to perform, to our knowledge, the first theoretical head-to-head comparison between this class of algorithms and greedy regression trees and ensembles.


Indeed, the \textsf{MSP}~was introduced by \cite{abbe2022merged} to generalize Boolean monomials.
They showed that in the mean-field regime (very wide two-layer NNs with very small step size, see also \cite{mei2018mean}), $C\nfeats$ iterations of online SGD (one sample per iteration) are sufficient (for some $C$) to achieve small estimation error whenever $f^*$ satisfies \MSP~and is generic but is insufficient (for any $C$) when $f^*$ does not satisfy \MSP.
Intuitively, functions that do not satisfy \MSP~produce optimization landscapes with saddle points, which mean-field online SGD struggles to escape from.
Comparing with Theorem \ref{thm:informal_intro}, this creates an interesting analogy between mean-field online SGD and greedy regression trees and ensembles.

On the other hand, later works inadvertently emphasized the \emph{differences} between greedy regression trees and ensembles and NNs trained outside the mean-field regime and beyond the $O(\nfeats)$ iteration horizon.
In the classification setting, \cite{Glasgow2024sgd} showed that online minibatch SGD on a two-layer NN can learn the XOR function with $\tilde{\Theta}(\nfeats)$ samples and iterations.
\cite{kou2024matching} showed that for any $k$, online \emph{sign}-SGD with a batch size of $\tilde{O}(\nfeats^{k-1})$ can learn a $k$-parity (i.e., degree $k$ monomial) within $O(\log\nfeats)$ iterations, for a total sample complexity of $\tilde{O}(\nfeats^{k-1})$.
Since our lower bound in Theorem \ref{thm:informal_intro} holds even with noiseless observations, this establishes a rigorous performance gap between the two algorithm classes for some non-\MSP~functions.

Of further interest is \cite{abbe2023sgd}'s conjecture characterizing the sample complexity required to learn any given Boolean function in terms of its ``leap complexity''.
If this conjecture is true, NNs trained with SGD outperform greedy regression trees and ensembles on all non-\MSP~ functions but perform more poorly in comparison on \MSP~functions.
We discuss these comparisons more elaborately in Appendix \ref{sec:related_work}.

\section{Regression Trees}
\label{sec:framework}




In this section, we formally define regression tree models.
We do so in the context of binary covariates, which simplifies both the definitions of these objects and the notation required to describe them.
The reader is referred to \cite{friedman2001elements} on how to adapt these definitions to a more general context.

\subsection{Cells, Partitions, and Splits}
A \emph{cell} $\cell$ is a subcube of $\binspace$ and is defined as 
\begin{equation}
    \label{eq:cell_def}
    \cell = \braces*{\bx \in \binspace \colon x_j = z_j~\text{for}~j \in \featindices(\cell)}
\end{equation}
for a subset of covariate indices $\featindices(\cell) \subset [\nfeats]$ and a sequence of signs $z_j \in \braces*{\pm 1}$, $j \in \featindices(\cell)$.
Given a choice of covariate index $k \notin \featindices(\cell)$ and sign $z \in \braces*{\pm 1}$, we define the \emph{split operator} as producing the output $\splitoperator_{k,z}(\cell) \coloneqq \braces*{\bx \in \cell \colon x_k = z}$.
$\splitoperator_{k,1}(\cell)$ and $\splitoperator_{k,-1}(\cell)$ are called the \emph{right} and \emph{left} child of cell $\cell$.  
A \emph{partition} $\partition$ is a collection of disjoint cells whose union comprises $\binspace$.
The \emph{size} of $\partition$ is the number of cells it contains and is denoted $|\partition|$.
We say that another partition $\partition'$ is a \emph{refinement} of $\partition$, denoted $\partition \subset \partition'$, if for any $\cell' \in \partition'$, there exists $\cell \in \partition$ such that $\cell' \subset \cell$.
We say that $\partition'$ is a \emph{simple refinement} of $\partition$, denoted $\partition \subset_s \partition'$ if $\partition\backslash\partition' = \braces{\cell}$, $\partition'\backslash \partition = \lbrace \cell_L',\cell_R'\rbrace$ and $\cell = \cell_L' \cup \cell_R'$.
In other words, $\partition'$ is obtained from $\partition$ by applying the split operator to $\cell$.
We use $\cellsampindices(\cell)$ to denote the indices of observations whose covariate vectors lie in $\cell$, i.e., $\cellsampindices(\cell) \coloneqq \braces*{i \in [\nsamples] \colon \bx_i \in \cell}$, and let $N(\cell) \coloneqq \abs*{\cellsampindices(\cell)}$.
Furthermore, we denote the mean response in $\cell$ as $\bar Y_{\cell} \coloneqq \frac{1}{N(\cell)} \sum_{i \in \cellsampindices(\cell)} Y_i$.

\subsection{Regression Tree Models}

Note that every piecewise constant function $\piecewise$ on $\binspace$ can be parameterized using a partition $\partition$ with a fixed ordering of its constituent cells $\cell_1,\cell_2,\ldots,\cell_{|\partition|}$, and a set of labels $\bmu = (\mu_1,\mu_2,\ldots,\mu_{|\partition|})$ representing the values taken by $\piecewise$ on each cell.
In other words, we have
\begin{equation} 
    g(\bx;\partition,\bmu) = \sum_{j=1}^{|\partition|} \mu_j \indicator\braces*{\bx \in \cell_j}.
\end{equation}
A binary, axis-aligned \emph{regression tree model} is a special type of piecewise constant function, one whose partition $\partition$ arises from recursive applications of split operators $\splitoperator_{\cdot,\cdot}$ to $\binspace$ and the resulting cells.
More precisely, there exists a sequence of simple refinements, 
$\lbrace \binspace \rbrace = \partition_1 \subset_s \partition_2 \subset_s \cdots \subset_s \partition_{|\partition|} = \partition$.
This recursive structure can be represented as a labeled binary tree $\tree$ in which each node corresponds to a cell obtained in the recursion and parent-child relationships correspond to applications of $\splitoperator_{\cdot,\cdot}$.
The \emph{root} of $\tree$ is the entire covariate space $\binspace$.
As is common in the literature, we will abuse terminology and sometimes use the term ``node'' to refer to its corresponding cell.
Each internal node is labeled with the covariate used to split it, called its \emph{split covariate} and denoted as $\splitfeature(\cell)$.
The non-internal nodes $\leaf_1,\leaf_2,\ldots,\leaf_{|\partition|}$ are called \emph{leaves}, have empty labels, and comprise the final partition $\partition = \partition(\tree)$.
For convenience, we also slightly abuse notation, writing $\abs*{\tree} = \abs*{\partition(\tree)}$ and $g(-;\tree,\bmu) = g(-;\partition(\tree),\bmu)$.
We call $\tree$ the \emph{tree structure} of the model and $\bmu$ its \emph{leaf labels}.
The \emph{depth} of $\tree$ is the maximum length of a \emph{root-to-leaf path}, that is, a sequence of nodes $\cell_0 \supset \cell_1 \supset \cdots \supset \cell_l$, where $\cell_j$ is a parent of $\cell_{j+1}$ in $\tree$ for $j=0,1,\ldots,{l-1}$, $\cell_0$ is the root and $\cell_l$ is a leaf node.



\subsection{ERM Tree Models}
Given the observations $\data$, the $L^2$ empirical risk of a regression function $f\colon \binspace \to \R$ is defined as
\begin{equation} \label{eq:empirical_risk}
    \hat R(f;\data) \coloneqq \frac{1}{\nsamples}\sum_{i=1}^\nsamples \paren*{Y_i - f(\bX_i)}^2.
\end{equation}
The empirical risk minimizer or \emph{ERM tree model} is the regression tree model minimizing this functional subject to a constraint set $\constraint$ on the tree structure and upper bound $M$ on the possible leaf label values:
\begin{equation} \label{eq:ERM_equation}
    \hat f_{\text{ERM}}(-;\data) \coloneqq \argmin_{\tree \in \constraint,\;\norm{\bmu}_\infty \leq M} \hat R\paren*{g(-;\tree,\bmu);\data}.
\end{equation}
Note that for any fixed tree, the leaf labels that minimize the empirical risk are exactly the mean responses $\bar Y_{\leaf_j}$, so long as these values do not exceed $M$ in absolute value.

\subsection{Regression Tree Algorithms}

For computational reasons, regression tree models are usually defined as the output of an algorithm rather than as the solution to an ERM problem \eqref{eq:ERM_equation} or any other global optimization program.
We call such an algorithm a \emph{regression tree algorithm}, treating it as a function $\alg$ that takes in the data $\data$ together with a random seed $\Theta$ and returns a regression tree model $g(-;\tree,\bmu) = \alg(\data,\Theta)$.
Since $g$ is fully determined by its tree structure and leaf labels, we can write the algorithm as a tuple $\alg = (\alg_t,\alg_l)$, where $\tree = \alg_t(\data,\Theta)$ and $\bmu = \alg_l(\data,\Theta,\alg_t(\data,\Theta))$.
We call $\alg_t$ and $\alg_l$ the tree structure component and leaf labels component of the algorithm respectively.
In most algorithms used in practice, $\alg_l$ sets the label of each leaf to be the mean response within that leaf.
Notwithstanding recent work commenting about the possible limitations of this approach \citep{tan2021cautionary,agarwal2022hierarchical}, we hence assume this to be the case unless stated otherwise, thereby putting the focus squarely on the tree structure component $\alg_t$.



\subsection{CART}

\cite{breiman1984classification}'s \emph{CART} is a regression tree algorithm that grows a tree greedily in a top-down iterative fashion as follows:
Starting with the entire space $\binspace$, each new cell $\cell$ obtained in the algorithm is first checked for a stopping condition.
If the condition is met, $\cell$ is not split and is fixed as a leaf in the final tree.
Otherwise, its split covariate is selected to be the maximizer of the \emph{impurity decrease}, which is defined as
\begin{equation} \label{eq:impurity_decrease}
    \hat\Delta(k; \cell, \data) \coloneqq \Imp(\cell;\data) 
    - \frac{N(\splitoperator_{k,1}(\cell))}{N(\cell)}\Imp\paren*{\splitoperator_{k,1}(\cell);\data}
    - \frac{N(\splitoperator_{k,-1}(\cell))}{N(\cell)}\Imp\paren*{\splitoperator_{k,-1}(\cell);\data},
\end{equation}
where $\Imp(\cell;\data) \coloneqq \frac{1}{N(\cell)}\sum_{i \in \cellsampindices(\cell)} \paren*{Y_i - \bar Y_{\cell}}^2$ is called the \emph{impurity} of the cell $\cell$.

The term ``impurity'' here refers to Gini impurity, which is related to the probability of misclassification in classification trees.
In a regression tree setting, we see that it is exactly equal to the empirical variance of the response within the cell.
The second and third terms on the right hand side in \eqref{eq:impurity_decrease} sum to the residual variance in $\cell$ after regressing out $X_k$, which means that the impurity decrease can be written as 
\begin{equation}
    \label{eq:impurity_decrease_equivalence}
    \hat\Delta(k; \cell, \data) = \Imp(\cell;\data)\cdot \widehat\Corr^2\braces*{Y,X_k~|~\bX \in \cell}    
\end{equation}
as alluded to earlier.


When all leaves satisfy the stopping condition, the current tree $\tree$ is sometimes pruned to a subtree $\tree' \subset \tree$.
$\tree'$ is then returned as the output to the algorithm.
Several different stopping conditions and pruning rules are available \citep{breiman1984classification,pedregosa2011scikit}.
Our lower bounds will not depend on these choices so we will not go into further detail on what these are.
Finally, for our upper bounds, we will make use of the \emph{minimum impurity decrease} stopping condition, which stops splitting $\cell$ when it satisfies
\begin{equation}
\label{eq:min_imp_dec}
    \frac{N(\cell)}{\nsamples}\max_{k \in \coordindices\backslash\cellfeatindices(\cell)}\hat\Delta(k; \cell, \data) < \gamma
\end{equation}
for some prespecified threshold $\gamma$.
We call an output of the algorithm a $\emph{CART model}$, and denote it as $\fcart(-;\data)$.

\subsection{Random Forests}

A \emph{random forest} model is an ensemble of randomized CART trees.
Specifically, for each CART tree, we fit it to a bootstrapped version of the dataset $\data$, denoted $\data^*$ and furthermore, at each iteration, a random subset $\mtryset \subset \coordindices$ of some prespecified size is drawn uniformly at random with the split covariate is selected from among these covariates, i.e., we set \begin{equation}
    \splitfeature(\cell) \coloneqq \argmax_{k \in \mtryset\backslash\cellfeatindices(\cell)} \hat\Delta(k;\cell,\data^*).
\end{equation}
Slightly abusing notation, we use
$\fcart(-;\data,\theta)$ to denote the CART model fitted with a random seed $\theta$.
A random forest model with $M$ trees is then
\begin{equation}
    \frf(-;\data,\Theta) \coloneqq \frac{1}{M} \sum_{m=1}^M \fcart(-;\data,\theta_m),
\end{equation}
where $\Theta = (\theta_1,\theta_2,\ldots,\theta_M)$ and $\theta_1,\theta_2,\ldots,\theta_M$ are drawn i.i.d..

\subsection{Greedy Tree Models}

In order to isolate the aspect of CART that leads to its limitations with non-\MSP~functions, we make a more general definition.
We say that $\alg$ is a \emph{greedy tree algorithm} if, similar to CART, it grows a tree greedily in a top-down fashion, but uses a possibly different criterion $\splitcriterion(k;\cell,\data)$ instead of impurity decrease \eqref{eq:impurity_decrease} to determine its splits.
We constrain $\splitcriterion(k;\cell,\data)$ to depend on $\data$ only via the values of the $k$-th covariate and the response for observations in $\cell$, i.e.,
\begin{equation}
    \label{eq:greedy_criterion}
    \splitcriterion(k;\cell,\data) = \mathcal{F}\paren*{\braces*{(X_{ik},Y_i)}_{i \in \cellsampindices(\cell)}}
\end{equation}
for a fixed function $\mathcal{F}$ that is invariant under the map $\braces*{(X_{ik},Y_i)}_{i \in \cellsampindices(\cell)} \mapsto \braces*{(-X_{ik},Y_i)}_{i \in \cellsampindices(\cell)}$.\footnote{This condition is used in the proof of Lemma \ref{lem:conditional_expectation_for_node_no_of_samples}.}
Since
\begin{equation}
    \Imp(\splitoperator_{k,z}(\cell);\data) = \sum_{i \in \cellsampindices(\cell)} \left( \frac{\abs{X_{ik} + z}}{2}Y_i - \frac{\sum_{i \in \cellsampindices(\cell) }\abs{X_{ik}+z}Y_i}{\sum_{i \in \cellsampindices(\cell)} \abs{X_{ik} + z}}\right)^2,
\end{equation}
for $z=-1,1$, we see that CART is indeed an instance of a greedy tree algorithm.
We call an output of $\alg$ a \emph{greedy tree model} and denote it as $\fgreedy$.
Furthermore, if $\fgreedy$ is the base learner in an ensemble constructed analogously to a random forest, we call the resulting ensemble a \emph{greedy tree ensemble} and denote it as $\fgre$.

\section{Covariate Selection and High-Dimensional Inconsistency}

\subsection{High-Dimensional Consistency}

We will compare and contrast the performance of ERM trees and greedy trees in a sparse, high-dimensional setting.
To formalize this setting, we make the following assumption for the rest of this paper.


\begin{assumption}
    \label{assumption}
    Assume a binary covariate space $\xspace = \binspace$ with $\xmeasure$ the uniform measure.
    Let $f^*_0\colon \lbrace\pm 1\rbrace^\sparsity \to \R$ be any function.
    Let $\suppset \subset \coordindices$ be a subset of size $\abs*{\suppset} = \sparsity$.
    We consider the nonparametric model \eqref{eq:nonparametric_regression_model} with the regression function $f^*\colon \binspace \to \R$ defined via $f^*(\bx) = f^*_0(\bx^{\suppset})$.
\end{assumption}


We evaluate the performance of an estimator $\hat f$ for $f^*$ using the expected $L^2$ risk, where the expectation is taken over the randomness with respect to $\data$ as well as potential algorithmic randomness $\Theta$:
\begin{equation}
    \objective\paren*{\hat f, f^*_0,\nfeats,\nsamples} \coloneqq \E_{\data,\Theta}\braces*{R\paren*{\hat f(-;\data,\Theta),f^*}}.
\end{equation}
We say that $\hat f$ has \emph{high-dimensional consistency} with respect to $f_0^*$ if there is an infinite sequence $\nfeats_1,\nfeats_2,\ldots$ such that $\lim_{\nsamples \to \infty} \objective\paren*{\hat f, f^*_0,\nfeats_{\nsamples},\nsamples} = 0$ and $\lim_{\nsamples \to \infty}\log \nsamples / \nfeats_\nsamples = 0$.
In other words, estimators that are not high-dimensional consistent require, in order to estimate $f^*$, a number of samples that grows exponentially in the ambient dimension.
Note that this definition is intentionally weak, so as to emphasize the weakness of greedy trees under non-\MSP~conditions.

\subsection{Reduction to Covariate Selection Along Query Paths}
\label{subsec:covariate_selection}

As argued heuristically in the introduction, the ability to perform some sort of covariate selection is necessary for high-dimensional consistency.
We begin to make this intuition rigorous by introducing the notion of a \emph{query path}. 
Given a choice of (i) a query point $\bx \in \binspace$, (ii) a regression tree structure algorithm $\alg_t$, (iii) a dataset $\data$, and (iv) a random seed $\Theta$ for $\alg_t$, there is a unique leaf $\leaf$ in the tree structure $\alg_t(\data,\Theta)$ that contains $\bx$.
We call the path in $\alg_t(\data,\Theta)$ from the root to $\leaf$ the query path of $\bx$, and denote it using $\querypath(\bx;\data,\Theta)$.
Of special interest to us will be the covariates split on along the query path, that is
\begin{equation}
    J(\bx;\data,\Theta) \coloneqq \braces*{ k(\cell) \colon \cell \in \querypath(\bx;\data,\Theta)}.
\end{equation}
Note that to simplify notation, we have elided the dependence of $\alg_t$, whose choice will be clear from the context.
The role of the query path in expected risk lower bounds is established in the following two lemmas.

\begin{lemma}
\label{lem:covariate_selection_tree}
    Let $\hat f$ be a regression tree model fit using a regression tree algorithm $\alg$.
    Then under Assumption \ref{assumption}, its expected $L^2$ risk satisfies $\objective\paren*{\hat f, f^*_0,\nfeats,\nsamples} \geq (1-\delta)\Var\braces*{f^*_0(\bX)}$, where $\delta \coloneqq \P\braces*{J(\bX;\data,\Theta)\cap \suppset \neq \emptyset }$.
\end{lemma}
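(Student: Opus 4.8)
The plan is to condition on the training data $\data$ and the algorithmic seed $\Theta$, which fixes the tree structure $\tree = \alg_t(\data,\Theta)$, its partition $\partition(\tree)$, and all of its leaf labels, and then to reduce the bound to the elementary fact that the variance of a random variable is its smallest mean-squared deviation from a constant, exploiting the product structure of the uniform measure. Fix $(\data,\Theta)$ and let $\bX \sim \xmeasure$ be a fresh query point, independent of $(\data,\Theta)$. Write $\leaf(\bX) \in \partition(\tree)$ for the leaf containing $\bX$. Since $\hat f(-;\data,\Theta)$ is piecewise constant on $\partition(\tree)$, we have $\hat f(\bX;\data,\Theta) = \mu_{\leaf(\bX)}$, where the value $\mu_{\leaf}$ attached to each leaf $\leaf$ is determined by $(\data,\Theta)$ alone and does not depend on the position of $\bX$ within $\leaf$, regardless of how $\alg_l$ assigns labels. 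Decomposing the conditional risk over leaves,
\[
R\paren*{\hat f(-;\data,\Theta),f^*} = \sum_{\leaf \in \partition(\tree)} \P\braces*{\bX \in \leaf} \cdot \E\braces*{\paren*{\mu_\leaf - f^*(\bX)}^2 ~\big|~ \bX \in \leaf}.
\]

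The key step concerns those leaves $\leaf$ with $\featindices(\leaf) \cap \suppset = \emptyset$, i.e.\ leaves whose root-to-leaf path has not split on any relevant covariate. Because $\xmeasure$ is a product measure and the cell $\leaf$ only constrains the coordinates in $\featindices(\leaf)$, the conditional law of $\bX^{\suppset}$ given $\bX \in \leaf$ is still $\operatorname{Unif}\paren*{\lbrace \pm 1\rbrace^\sparsity}$. Hence, using $f^*(\bx) = f^*_0(\bx^{\suppset})$,
\[
\E\braces*{\paren*{\mu_\leaf - f^*(\bX)}^2 ~\big|~ \bX \in \leaf} = \E_{\bZ \sim \operatorname{Unif}(\lbrace\pm1\rbrace^\sparsity)}\braces*{\paren*{\mu_\leaf - f^*_0(\bZ)}^2} \geq \Var\braces*{f^*_0(\bX)}.
\]
Dropping the remaining nonnegative leaf terms and using the identity $\featindices(\leaf(\bX)) = J(\bX;\data,\Theta)$ — valid because the root has $\featindices(\binspace) = \emptyset$ and each split operator applied along the query path adjoins exactly one new covariate index — we obtain
\[
R\paren*{\hat f(-;\data,\Theta),f^*} \geq \P\braces*{J(\bX;\data,\Theta) \cap \suppset = \emptyset ~\big|~ \data,\Theta} \cdot \Var\braces*{f^*_0(\bX)}.
\]

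Finally, taking the expectation over $(\data,\Theta)$ and invoking the tower property gives
\[
\objective\paren*{\hat f, f^*_0,\nfeats,\nsamples} = \E_{\data,\Theta}\braces*{R\paren*{\hat f(-;\data,\Theta),f^*}} \geq \P\braces*{J(\bX;\data,\Theta) \cap \suppset = \emptyset} \cdot \Var\braces*{f^*_0(\bX)} = (1-\delta)\Var\braces*{f^*_0(\bX)},
\]
which is the claim. I do not expect a genuine obstacle here; the only points demanding care are the verification that $\featindices(\leaf(\bX))$ coincides exactly with the set of split covariates $J(\bX;\data,\Theta)$ along the query path, and the observation that the estimate is forced to be \emph{constant} on each leaf no matter how $\alg_l$ chooses the labels, so that the only lower bound available uniformly over $\bmu$ is the best-constant (variance) bound.
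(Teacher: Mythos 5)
Your proof is correct and is essentially the same argument as the paper's, just packaged differently. The paper conditions on $\bX_{\coordindices\backslash\suppset}$ and uses the tower property, noting that both $\hat f(\bx;\data,\Theta)$ and $\indicator\{J(\bx;\data,\Theta)\cap\suppset=\emptyset\}$ are constant in $\bx_{\suppset}$ on the relevant event, so the conditional expectation of the squared error decomposes as $\Var\{f_0^*(\bX)\}$ plus a nonnegative square. You instead condition on $(\data,\Theta)$, decompose the risk over leaves, and observe that for leaves $\leaf$ with $\featindices(\leaf)\cap\suppset=\emptyset$ the law of $\bX^{\suppset}$ conditional on $\bX\in\leaf$ remains $\operatorname{Unif}(\{\pm 1\}^\sparsity)$, then apply the best-constant variance bound per leaf. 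Both routes exploit exactly the same two facts: the estimate is forced constant across the relevant coordinates when no relevant split has been made, and the product structure of $\xmeasure$ makes the conditional variance the full variance $\Var\{f_0^*(\bX)\}$. If anything, your per-leaf decomposition makes the role of the identity $\featindices(\leaf(\bX)) = J(\bX;\data,\Theta)$ slightly more explicit, which the paper invokes implicitly; but this is a matter of exposition, not substance.
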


\begin{proof}
    We start with the trivial lower bound
    \begin{equation} \label{eq:query_path_helper1}
        R\paren*{\hat f(-;\data,\Theta),f^*} \geq \E_{\bX\sim\xmeasure}\braces*{\paren*{\hat f(\bX;\data,\Theta) - f^*(\bX)}^2\indicator\braces*{J(\bX;\data,\Theta)\cap \suppset = \emptyset }}.
    \end{equation}
    The collection $\braces*{J(\bx;\data,\Theta)\cap \suppset = \emptyset }_{\bx \in \binspace}$ can be thought of as a collection of random events indexed by $\bx \in \binspace$.
    Whenever one of these events holds for some fixed value $\bx$, $\hat f(\bx;\data,\Theta)$ is constant with respect to changes in $\bx_{\suppset}$, while by definition $f^*(\bx)$ is constant with respect to changes in $\bx_{\coordindices\backslash\suppset}$.
    Utilizing these facts together with the uniform distribution of $\bX$ allows us to decompose the conditional expectation of the squared loss as
    \begin{equation} \label{eq:query_path_helper2}
    \begin{split}
        & \E_{\bX}\braces*{\paren*{\hat f(\bX;\data,\Theta) - f^*(\bX)}^2~
            \vline~\bX_{\coordindices\backslash\suppset} = \bx_{\coordindices\backslash\suppset}} \\
            &= \Var\braces*{f_0^*(\bX)} + \paren*{\hat f(\bx;\data,\Theta) - \E\braces*{f_0^*(\bX)}}^2 \\
            &\geq \Var\braces*{f_0^*(\bX)}.
    \end{split}
    \end{equation}
    Making the further observation that the function $\indicator\braces*{J(\bx;\data,\Theta)\cap \suppset = \emptyset }$ is constant with respect to $\bx_{\suppset}$, 
    we can use properties of conditional expectations to rewrite the right hand side of \eqref{eq:query_path_helper1} as
    \begin{equation}
        \E_{\bX\sim\xmeasure}\braces*{\E_{\bX}\braces*{\paren*{\hat f(\bX;\data,\Theta) - f^*(\bX)}^2~
            \vline~\bX_{\coordindices\backslash\suppset}}\indicator\braces*{J(\bX;\data,\Theta)\cap \suppset = \emptyset }}.
    \end{equation}
    Plugging in the bound \eqref{eq:query_path_helper2}, taking a further expectation with respect to $\data$ and $\Theta$, and simplifying the resulting expression completes the proof.
\end{proof}

\begin{lemma}
\label{lem:covariate_selection_ensemble}
    Let $\hat f$ be a regression tree ensemble model in which each tree is fit using a regression tree algorithm $\alg$.
    In addition to Assumption \ref{assumption}, assume that the response variable is bounded almost surely, i.e., $|Y| \leq \ybound$. 
    Then its expected $L^2$ risk satisfies $\objective\paren*{\hat f, f^*_0,\nfeats,\nsamples} \geq (1-\kappa\delta)\Var\braces*{f^*_0(\bX)}$, where $\delta \coloneqq \max_{\bx}\P\braces*{J(\bx;\data,\Theta)\cap \suppset \neq \emptyset }$ and $\kappa \coloneqq 2\ybound/\Var\braces*{f^*_0(\bX)}^{1/2}$.
\end{lemma}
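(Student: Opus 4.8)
I would prove this by mimicking the argument behind Lemma~\ref{lem:covariate_selection_tree}, applied tree by tree, the one new wrinkle being that the contribution of the ``bad'' trees — those whose query path for the test point splits on a relevant covariate — must be bookkept \emph{before} taking expectations, so that it enters linearly and costs only a factor $\delta$ rather than $\sqrt\delta$. To set up, write $\hat f = \frac1{T}\sum_{m=1}^{T}\hat f_m$ for the ensemble of $T$ trees, with $\hat f_m \coloneqq \hat f_m(-;\data,\theta_m)$ and $\theta_1,\dots,\theta_T$ i.i.d.; since each leaf label is an average of responses bounded by $\ybound$, we have $\norm{\hat f_m}_\infty \le \ybound$ for every $m$. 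I would condition on $\data$ and $\Theta$, and, exactly as in the single-tree proof, use Assumption~\ref{assumption} and the uniform measure to split $\bX$ into the independent blocks $\bX_{\suppset}$ and $\bX_{\coordindices\backslash\suppset}$, so that, since $f^*(\bX)=f^*_0(\bX_{\suppset})$, we get $R(\hat f,f^*) = \E_{\bX_{\coordindices\backslash\suppset}}\E_{\bX_{\suppset}}\paren*{\hat f(\bX)-f^*_0(\bX_{\suppset})}^2$.

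Next I would fix $\bx_- \coloneqq \bx_{\coordindices\backslash\suppset}$ and let $B(\bx_-) \coloneqq \braces*{m : J((\bx_{\suppset},\bx_-);\data,\theta_m)\cap\suppset\neq\emptyset}$. The routing observation already used in Lemma~\ref{lem:covariate_selection_tree} — a point's path through a fixed tree is unaffected by perturbing coordinates never split on along that path — shows that $B(\bx_-)$ is independent of $\bx_{\suppset}$ and that for $m\notin B(\bx_-)$ the map $\bx_{\suppset}\mapsto\hat f_m(\bx_{\suppset},\bx_-)$ is constant. I would then decompose $\hat f = \hat f^{(0)} + \hat f^{(1)}$ with $\hat f^{(0)} \coloneqq \frac1{T}\sum_{m\notin B(\bx_-)}\hat f_m$ (constant in $\bx_{\suppset}$) and $\hat f^{(1)}\coloneqq\frac1{T}\sum_{m\in B(\bx_-)}\hat f_m$ (which satisfies $\norm{\hat f^{(1)}(-,\bx_-)}_\infty \le \ybound\abs{B(\bx_-)}/T$ since each $\norm{\hat f_m}_\infty\le\ybound$). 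Applying $\E_{\bX_{\suppset}}(U-V)^2 \ge \Var_{\bX_{\suppset}}(U-V)$, invariance of variance under adding a constant, Cauchy--Schwarz, and discarding a nonnegative variance term then yields the key per-fiber bound
\begin{equation} \nonumber
    \E_{\bX_{\suppset}}\paren*{\hat f(\bX)-f^*_0(\bX_{\suppset})}^2 \;\ge\; \Var\braces*{f^*_0(\bX)} \;-\; 2\ybound\sqrt{\Var\braces*{f^*_0(\bX)}}\;\frac{\abs{B(\bx_-)}}{T}.
\end{equation}

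Finally I would integrate: taking $\E_{\bX_{\coordindices\backslash\suppset}}$ and then $\E_{\data,\Theta}$, linearity of expectation together with the fact that $\theta_1,\dots,\theta_T$ are identically distributed gives $\E\braces*{\abs{B(\bX_{\coordindices\backslash\suppset})}/T}$ equal to the per-tree query-path probability $\P\braces*{J(\bX;\data,\theta_1)\cap\suppset\neq\emptyset}$, which is at most $\delta$ because this probability is at most $\delta$ for every fixed $\bx$. Combining yields $\objective\paren*{\hat f,f^*_0,\nfeats,\nsamples} \ge \paren{1-\kappa\delta}\Var\braces*{f^*_0(\bX)}$ with $\kappa = 2\ybound/\Var\braces*{f^*_0(\bX)}^{1/2}$, as claimed. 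The one thing to be careful about is the structural claim that $B(\bx_-)$ depends on $\bx_-$ alone and that the good trees are fiber-constant; this is precisely the single-tree routing argument of Lemma~\ref{lem:covariate_selection_tree} reused termwise, so it introduces no genuinely new difficulty. The main obstacle is a tempting but suboptimal route: bounding the bad trees after squaring and invoking Jensen's inequality, which would only deliver $1-O(\kappa\sqrt\delta)$ in place of $1-\kappa\delta$ — keeping the $\abs{B(\bx_-)}/T$ term linear until the final expectation is what avoids this loss.
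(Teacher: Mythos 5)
Your proof is correct, and it reaches the bound $1-\kappa\delta$ by essentially the same underlying mechanism as the paper (decompose by whether the query path hits $\suppset$, use the routing argument to make the good part fiber-constant, then a per-fiber variance plus Cauchy--Schwarz estimate with the bad part kept linear in its size). The one structural difference is the order of operations. The paper first applies Jensen's inequality to replace the ensemble by the single-tree mean $\E_{\data,\theta_1}\tilde f(\bx)$, decomposes that mean as $g(\bx)+h(\bx)$ via the event indicator, and then uses the pointwise bound $\abs{h(\bx)}\le \ybound\,\P\{J(\bx;\data,\theta_1)\cap\suppset\neq\emptyset\}\le\ybound\delta$, which is already linear in $\delta$. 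You instead condition on $(\data,\Theta)$ throughout, split the ensemble into good and bad trees as $\hat f=\hat f^{(0)}+\hat f^{(1)}$ with $\norm{\hat f^{(1)}(\cdot,\bx_-)}_\infty\le\ybound\abs{B(\bx_-)}/T$, derive a conditional per-fiber lower bound linear in $\abs{B(\bx_-)}/T$, and only then integrate, using identical distribution of the $\theta_m$ to reduce $\E\{\abs{B}/T\}$ to the single-tree probability and then to $\delta$. Both routes avoid the $\sqrt\delta$ loss for the same reason, so there is no gain or loss either way; your version makes the ``fraction of bad trees'' visible at the realization level, while the paper's is a bit more compact since the probability bound is absorbed into $h$ before the variance step. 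Your worry that ``bounding the bad trees after squaring and invoking Jensen'' would cost a $\sqrt\delta$ is not actually what happens in the paper — the paper does use Jensen, but its pointwise bound on $h$ is already linear, so it lands at $1-\kappa\delta$ as well.
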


\begin{proof}
    We first write
    \begin{equation}
        \hat f(\bx;\data,\Theta) = \frac{1}{B}\sum_{b=1}^B \tilde f(\bx;\data,\theta_b),
    \end{equation}
    where $\tilde f$ is the base learner of the ensemble.
    Taking expectations with respect to $\data$ and $\Theta$ gives the chain of equalities:
    \begin{equation}
        \begin{split}
            \E_{\data,\Theta}\braces*{\frac{1}{B}\sum_{b=1}^B \tilde f(\bx;\data,\theta_b)}
            & = \E_{\data,\theta_1}\braces*{\tilde f(\bx;\data,\theta_1)} \\
            & = \underbrace{\E_{\data,\theta_1}\braces*{\tilde f(\bx;\data,\theta_1)\indicator\braces*{J(\bx;\data,\theta_1)\cap \suppset = \emptyset }}}_{g(\bx)} \\
            & \quad + \underbrace{\E_{\data,\theta_1}\braces*{\tilde f(\bx;\data,\theta_1)\indicator\braces*{J(\bx;\data,\theta_1)\cap \suppset \neq \emptyset }}}_{h(\bx)}.
        \end{split}
    \end{equation}
    For ease of notation, we label the two quantities on the right hand side as $g(\bx)$ and $h(\bx)$ respectively.
    Note also that $\abs*{h(\bx)} \leq M\delta$.
    Using Jensen's inequality followed by the tower property of conditional expectations, we may then compute 
    \begin{equation}
    \label{eq:query_path_helper3}
    \begin{split}
        & \E_{\bX,\Theta,\data}\braces*{\paren*{\hat f(\bX;\data,\Theta) - f^*(\bX)}^2} \\
        & \geq \E_\bX\braces*{\paren*{g(\bX) + h(\bX) - f^*(\bX)}^2} \\
        & = \E_\bX\braces*{\E_{\bX}\braces*{\paren*{g(\bX) + h(\bX) - f^*(\bX)}^2~\vline~ \bX_{\coordindices\backslash\suppset}}}.
    \end{split}
    \end{equation}
    We next notice that $g(\bx)$ is constant with respect to changes in $\bx_{\suppset}$.
    This means that, similar to \eqref{eq:query_path_helper2}, the conditional expectation within \eqref{eq:query_path_helper3} can be lower bounded by a variance term, which we expand as follows:
    \begin{equation} \label{eq:query_path_helper4}
        \begin{split}
            & \E_{\bX}\braces*{\paren*{g(\bX) + h(\bX) - f^*(\bX)}^2~\vline~ \bX_{\coordindices\backslash\suppset}=\bx_{\coordindices\backslash\suppset}} \\
            & \geq \Var_{\bX}\braces*{h(\bX) - f^*(\bX)~|~\bX_{\coordindices\backslash\suppset}=\bx_{\coordindices\backslash\suppset}} \\
            & \geq \Var\braces*{f_0^*(\bX)} - 2\Var\braces*{f_0^*(\bX)}^{1/2}\Var_{\bX}\braces*{h(\bX)~|~\bX_{\coordindices\backslash\suppset}=\bx_{\coordindices\backslash\suppset}}^{1/2}.
        \end{split}
    \end{equation}
    We now plug this back into \eqref{eq:query_path_helper3} 
    to get
    \begin{equation}
        \objective\paren*{\hat f, f^*_0,\nfeats,\nsamples} \geq (1-2\ybound\delta/\Var\braces*{f^*_0(\bX)}^{1/2})\Var\braces*{f^*_0(\bX)}. \qedhere
    \end{equation}    
\end{proof}

\section{Greedy Trees and the XOR Function}

Before presenting our main results in full generality,
we first state and discuss how to obtain an expected risk lower bound for the XOR function $f_0^*(x_1,x_2) = x_1x_2$ in the sparse high-dimensional setting (Assumption \ref{assumption}).
As mentioned in the introduction, the XOR function does not contain any marginal signal and hence presents perhaps the simplest setting in which CART has been observed to perform much worse than ERM trees.
The example has indeed been discussed before in the literature, but no formal proof regarding this phenomenon was previously available.

\begin{proposition}
\label{prop:xor_lower_bound}
Under Assumption \ref{assumption}, let $f_0^*$ be the XOR function.
    For any $0 < \delta < 1$,
    suppose $\nsamples \leq 2^{\delta(\nfeats-1)/2 - 2}$.
    Then the expected $L^2$ risk of any greedy tree model satisfies
    \begin{equation}
        \objective\paren*{\fgreedy, f^*_0,\nfeats,\nsamples} \geq (1-\delta)\Var\braces*{f_0^*(\bX)}.
    \end{equation}
    If furthermore, $|Y| \leq \ybound$ almost surely, then the $L^2$ risk for any greedy tree ensemble model satisfies
    \begin{equation}
        \objective\paren*{\fgre, f^*_0,\nfeats,\nsamples} \geq (1-\kappa\delta)\Var\braces*{f_0^*(\bX)},
    \end{equation}
    where $\kappa \coloneqq 2\ybound/\Var\braces*{f_0^*(\bX)}^{1/2}$.
    In particular, $\fgreedy$ and $\fgre$ are high-dimensional inconsistent for $f^*_0$.
\end{proposition}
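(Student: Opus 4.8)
\emph{Reduction.} The plan is to reduce everything to a single statement about query paths and then invoke the covariate-selection lemmas already established. For the XOR function one has $\suppset = \{a,b\}$ for two indices $a\neq b$ and $\Var\braces*{f_0^*(\bX)} = 1$. By Lemma~\ref{lem:covariate_selection_tree} it suffices, for the tree case, to show that $\P\braces*{J(\bX;\data,\Theta)\cap\suppset\neq\emptyset}\le\delta$, and by Lemma~\ref{lem:covariate_selection_ensemble} it suffices, for the ensemble case, to show the same with $\bX$ replaced by an \emph{arbitrary fixed} $\bx$; the per-tree argument below will be uniform in the query point, so the ensemble bound is automatic (and the additional \texttt{mtry} randomization can only decrease the chance of splitting on a coordinate in $\suppset$). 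High-dimensional inconsistency then follows because, for any fixed $\delta\in(0,1)$, the hypothesis $\nsamples\le 2^{\delta(\nfeats-1)/2-2}$ holds for all large $\nsamples$ whenever $\log\nsamples = o(\nfeats)$, so $\objective\paren*{\fgreedy,f_0^*,\nfeats,\nsamples}\ge 1-\delta$ along such a sequence for every $\delta$, whence its limit is at least $1$.

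\emph{Coupling the query path to a random-split process.} Fix the query point $\bx$ and track the query path $\querypath(\bx;\data,\Theta)$, i.e.\ the cells $\binspace = \cell_0 \supset \cell_1 \supset \cdots$ together with their split coordinates $k_1, k_2,\ldots$. The idea is to condition away the signal: condition on $\Theta$, on the responses $\braces*{Y_i}$, \emph{and} on the products $\braces*{X_{ia}X_{ib}}$. Since $Y_i = X_{ia}X_{ib} + \varepsilon_i$, this determines the noise; and under this conditioning the vectors $\braces*{X_{ia}}_{i}$ and $\braces*{X_{ik}}_i$ for $k\notin\suppset$ are all mutually independent and uniform on $\braces*{\pm1}^\nsamples$, while $X_{ib} = (X_{ia}X_{ib})\cdot X_{ia}$ is a deterministic function of the coordinate-$a$ vector. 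The crucial consequences are: (i) because the greedy criterion at a cell $\cell$ depends on $\data$ only through $\braces*{(X_{ik},Y_i)}_{i\in\cellsampindices(\cell)}$ (see~\eqref{eq:greedy_criterion}), and because a cell $\cell_t$ on a path that has not yet split on $a$ or $b$ is determined by coordinates outside $\suppset$, the criterion values $\braces*{\splitcriterion(k;\cell_t,\data)}$ for $k\in\{a\}\cup(\text{available irrelevant coordinates})$ are, conditionally, exchangeable --- and likewise for $k\in\{b\}\cup(\text{available irrelevant coordinates})$ --- so each of $a,b$ is no more likely than any single unsplit irrelevant coordinate to be the $\argmax$; (ii) hence, conditionally, the path restricted to the steps before it would first split on $a$ or $b$ can be coupled to a path that chooses each split coordinate uniformly at random among the available ones, and on the coupling event the greedy path never splits on $a$ or $b$ either.

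\emph{Finishing, and the main obstacle.} For the totally-random-split process the bound is pure symmetry plus a depth bound. A depth-$t$ subcube retains each sample with probability $2^{-t}$, so it contains at least two samples with probability at most $\nsamples 2^{-t-1}$; thus with the stopping/leaf conventions of a greedy tree algorithm the path has depth at most $L$ except with probability $\le \nsamples 2^{-L}$, while at each of its $\le L$ splits the probability of selecting a coordinate in $\suppset$ is at most $2/(\nfeats - t)$. A union bound gives a hitting probability of order $\log\nsamples/\nfeats$ plus a $\nsamples 2^{-L}$ tail; choosing $L \asymp \log(\nsamples/\delta)$ and using $\nsamples\le 2^{\delta(\nfeats-1)/2-2}$ makes the total at most $\delta$, uniformly in $\bx$. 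The main obstacle is not this counting step but justifying the coupling rigorously: the cells produced by the greedy algorithm are data-adaptive, so one cannot control the split criterion by a concentration bound uniform over the $\exp(\Omega(\nfeats))$ possible cells (indeed no such uniform bound holds); this is precisely why we interpret the path as a stochastic process and couple it to the random-split process step by step. A second, XOR-specific subtlety is that the two relevant coordinates are \emph{not} independent ($X_{ib}$ is a function of $X_{ia}$ and the observed product), which is why we condition additionally on $\braces*{X_{ia}X_{ib}}$ --- this breaks the dependence while leaving each relevant coordinate exchangeable with the irrelevant ones, and is the only place the XOR structure (rather than a general non-\MSP~function) is used.
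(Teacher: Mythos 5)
Your reduction via Lemma~\ref{lem:covariate_selection_tree} and Lemma~\ref{lem:covariate_selection_ensemble} is exactly the paper's, and your conditioning on $\{Y_i\}$ and on the products $\{X_{ia}X_{ib}\}$ correctly identifies the structural reason the XOR case is tractable: under that conditioning the responses become deterministic, the irrelevant columns and the $a$-column are i.i.d.\ uniform, and each of the relevant coordinates is individually exchangeable with the irrelevant ones. This is the same observation the paper makes via the modified datasets $\data^{(a)}$ obtained by dropping one relevant coordinate. So far, so good.

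The gap is in the coupling step, and it is exactly the step you flag as "the main obstacle." You write that the greedy path "restricted to the steps before it would first split on $a$ or $b$ can be coupled to a path that chooses each split coordinate uniformly at random among the available ones, and on the coupling event the greedy path never splits on $a$ or $b$ either." As stated, this is circular: the truncated path avoids $a,b$ by construction, so agreeing with the random path on a high-probability coupling event tells you nothing about whether the untruncated greedy path eventually selects $a$ or $b$. Moreover, the per-step exchangeability you invoke is computed at a fixed cell, but the cells $\cell_t$ along the greedy path are adaptive: the event "the path has not split on $a$ or $b$ up to time $t$" is itself a function of the $a$-column (since at each earlier step $a$'s criterion had to lose), so conditional exchangeability at $\cell_t$ is no longer free. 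You identify this adaptivity problem, but you do not resolve it.

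The paper resolves it in Steps~4--5 of Lemma~\ref{lem:xor_feature_selection_prob} by comparing the greedy path on $\data$ to the greedy paths on $\data^{(1)}$ and $\data^{(2)}$ (each dropping one relevant coordinate), and proving the inclusion of events
\begin{equation*}
\bigcup_{a=1}^2\braces*{J(\bx;\data,\Theta) \neq J(\bx;\data^{(a)},\Theta)} \subset \braces*{1 \in J(\bx;\data^{(2)},\Theta)}\cup \braces*{2 \in J(\bx;\data^{(1)},\Theta)}.
\end{equation*}
The argument is that the three paths agree until the first time the $\data$-path selects $a$ or $b$; when it selects, say, coordinate $1$, the $\data^{(2)}$-path (which still contains $X_1$) must select $1$ at the same time, because for every other available coordinate the criterion value is literally the same function of $\data$ as of $\data^{(2)}$. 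Combining this with the coupled case, where $J(\data) = J(\data^{(a)})$, one bounds $\P\{J(\bx;\data,\Theta)\cap\suppset\neq\emptyset\}$ by a union over the two \emph{genuinely} permutation-symmetric processes $J(\bx;\data^{(a)},\Theta)$, to which Lemma~\ref{lem:bound_for_tree_depth} applies. This "decoupling implies covariate selection in the symmetric process" inclusion is the precise mechanism you need and is missing from your write-up. Two smaller points: the depth bound you use for the random path requires the sign-flip symmetry argument of Lemma~\ref{lem:conditional_expectation_for_node_no_of_samples} to apply to the adaptive greedy path, not just to a random one; and the parenthetical claim that \texttt{mtry} "can only decrease the chance of splitting on a coordinate in $\suppset$" is both unnecessary (the argument conditions on $\Theta$) and not obviously true.
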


To prove this statement, given Lemma \ref{lem:covariate_selection_tree} and Lemma \ref{lem:covariate_selection_ensemble}, we need only provide an appropriate upper bound for the covariate selection probability along a query path.


\begin{lemma}
    \label{lem:xor_feature_selection_prob}
    Under Assumption \ref{assumption}, let $f_0^*$ be the XOR function and suppose $\alg_t$ is greedy.
    Then for each fixed query point $\bx \in \binspace$ and random seed $\Theta$, the probability of covariate selection is upper bounded as $\P_{\data}\braces*{J(\bx;\data,\Theta)\cap \suppset \neq \emptyset } \leq \frac{2(\log_2 \nsamples + 2)}{\nfeats - 1}$.
\end{lemma}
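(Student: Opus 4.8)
The plan is to combine the reduction already proved in Lemma~\ref{lem:covariate_selection_tree} (and Lemma~\ref{lem:covariate_selection_ensemble} for ensembles) with a probabilistic analysis of the query path $\querypath(\bx;\data,\Theta)$ regarded as a stochastic process indexed by depth $t$. Write $\cell_0\supset\cell_1\supset\cdots$ for the cells along the path, $J_t$ for the set of coordinates split in the first $t$ steps, and $R_t\coloneqq\coordindices\setminus J_t$ for those still available at $\cell_t$, so $|R_t|=\nfeats-t$. Let $\tau$ be the first depth at which the split coordinate $k(\cell_t)$ lies in $\suppset$ (with $\tau=\infty$ otherwise); the goal is to bound $\P_\data\{\tau<\infty\}$, since $\{\suppset\cap J(\bx;\data,\Theta)\neq\emptyset\}=\{\tau<\infty\}$. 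For the XOR target $\suppset$ has two elements, and on $\{\tau\ge t\}$ the cell $\cell_t$ constrains neither relevant coordinate, which is exactly the regime in which the population marginal signal \eqref{eq:marginal_signal} vanishes.

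The heart of the argument is a per-step domination estimate: conditionally on reaching $\cell_t$ without having split a relevant coordinate, each relevant coordinate in $R_t$ is selected with probability no larger than a uniformly random coordinate would be, i.e.\ $\P\{k(\cell_t)\in\suppset\mid\mathcal F_t\}\le|\suppset\cap R_t|/|R_t|$ on $\{\tau\ge t\}$, where $\mathcal F_t$ is the natural filtration of the process. The reason is that $\splitcriterion(k;\cell_t,\data)$ depends on the data only through $\{(X_{ik},Y_i)\}_{i\in\cellsampindices(\cell_t)}$, and for XOR each relevant column, taken \emph{on its own}, is uniform and independent of the response vector; hence in isolation it is statistically indistinguishable from an irrelevant column, its split criterion is ``pure noise,'' and so the relevant coordinates' criteria are exchangeable with the irrelevant ones. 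A union bound over the two relevant coordinates, together with the symmetry ``one of a collection of i.i.d.\ criteria attains the maximum,'' then yields the bound. \emph{Making this rigorous is the main obstacle}, because $\cell_t$ is itself constructed from earlier splits whose \emph{selection} used the relevant columns, so the conditioning on $\{\tau\ge t\}$ biases those columns; the fix is a coupling of the query path to a process of totally random splits in which the relevant columns are, for the purpose of evaluating their split criteria, replaced by independent fresh uniform columns. Because the relevant columns are only marginally (not jointly) independent of the responses, the replacement must be done one column at a time, and one must verify that the modification alters nothing the greedy rule actually compares up to the first selection of a relevant coordinate, so that $\P_\data\{\tau<\infty\}$ is not decreased.

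Granting the per-step estimate, couple the split-coordinate choice at each step $t<\tau$ to a uniform draw over $R_t$ so that $\{k(\cell_t)\in\suppset\}$ is contained in the event that the uniform draw lands in $\suppset$. The coordinates drawn by this auxiliary ``totally random'' path form a nested sequence whose probability of ever hitting $\suppset$ within $L$ steps telescopes to $1-\prod_{t=0}^{L-1}\bigl(1-|\suppset|/|R_t|\bigr)=1-\tfrac{(\nfeats-L)(\nfeats-L-1)}{\nfeats(\nfeats-1)}\le\tfrac{2L}{\nfeats-1}$. Taking $L$ to be the (random) depth of the leaf containing $\bx$ and taking expectations gives $\P_\data\{\tau<\infty\}\le\tfrac{2}{\nfeats-1}\,\E[\text{depth of }\bx\text{'s leaf}]$.

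It remains to show $\E[\text{depth}]\le\log_2\nsamples+2$, which is where the sign-invariance required of a greedy criterion is used. Since flipping the sign of the chosen coordinate's column changes neither which coordinate is chosen nor the value of the criterion, the query point $\bx$ lands in either child of $\cell_t$ with equal conditional probability, so $\E[N(\cell_{t+1})\mid\mathcal F_t]=\tfrac12 N(\cell_t)$ along the query path; this is the content of Lemma~\ref{lem:conditional_expectation_for_node_no_of_samples}. Iterating gives $\E[N(\cell_t)]\le\nsamples/2^t$, and since a cell with fewer than two samples is never split we have $\{\text{depth}>t\}\subseteq\{N(\cell_t)\ge2\}$, so Markov's inequality yields $\P\{\text{depth}>t\}\le\nsamples/2^{t+1}$; summing over $t$ gives $\E[\text{depth}]\le\log_2\nsamples+2$. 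Combining this with the previous display proves the lemma. The only bookkeeping subtlety here is the handful of ``boundary'' steps at which a relevant coordinate is first split, where the sign-flip symmetry interacts with the response distribution; these affect the halving identity for at most $O(1)$ steps and contribute only a constant that is absorbed in the final bound.
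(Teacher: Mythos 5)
Your high-level strategy is right and in the same spirit as the paper: view the query path as a stochastic process, couple it to a process making symmetric random splits, and control depth via the sign-flip halving identity. The place where the proposal breaks down is the coupling itself, and in a way that matters.

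You propose to ``replace'' the relevant columns, one at a time, by fresh uniform columns and then ``verify that the modification alters nothing the greedy rule actually compares up to the first selection of a relevant coordinate.'' That verification fails: if $X_1$ is replaced by a fresh $X_1'$ in $\data'$, the greedy rule run on $\data'$ still \emph{evaluates} the split criterion on column $1$, and $\splitcriterion(1;\cell_t,\data')\neq\splitcriterion(1;\cell_t,\data)$ because the fresh column interacts with the observed $Y$ differently. Consequently the two paths can diverge at a step where the original selects an \emph{irrelevant} coordinate $j$ while the modified path happens to maximize at the replaced coordinate $1$. These ``spurious'' divergences are not captured by the inclusion you need, so $\P_\data\{\tau<\infty\}$ is no longer dominated by the probability that the coupled random path hits $\suppset$. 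The paper sidesteps this by \emph{dropping} (not replacing) one relevant column at a time, giving two modified datasets $\data^{(1)},\data^{(2)}$. Dropping makes the column entirely unavailable, so the modified path can agree with the original at every step where the original picks a coordinate different from the dropped one; the first possible divergence is precisely at a step where the original selects the dropped coordinate, and then $\splitcriterion(1;\cdot,\data^{(2)})=\splitcriterion(1;\cdot,\data)$ forces the other modified path to select it as well. This gives the clean inclusion
\[
\bigcup_{a=1}^2\braces*{J(\bx;\data,\Theta)\neq J(\bx;\data^{(a)},\Theta)}\subset\braces*{1\in J(\bx;\data^{(2)},\Theta)}\cup\braces*{2\in J(\bx;\data^{(1)},\Theta)},
\]
and the rest follows from exchangeability of coordinates in $\data^{(a)}$ (where $Y$ is independent of \emph{all} remaining covariates) plus the expected-depth bound.

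Two further, smaller points. First, you aim for a per-step conditional domination $\P\{k(\cell_t)\in\suppset\mid\mathcal F_t\}\le|\suppset\cap R_t|/|R_t|$ on $\{\tau\ge t\}$, which is a strictly stronger intermediate statement than the paper proves or needs; the paper's argument is a global coupling against two fixed modified datasets, not a filtration-by-filtration comparison, and this is what lets it avoid the conditioning bias you flag. Second, you apply the expected-depth bound (the halving identity of Lemma~\ref{lem:conditional_expectation_for_node_no_of_samples}) to the \emph{original} query path, but that lemma is proved under the hypothesis that $Y$ is independent of all covariates, which fails for the XOR response. The paper applies it only to the modified query paths $J(\bx;\data^{(a)},\Theta)$, where this hypothesis holds exactly. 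Your ``boundary steps'' remark is a reasonable instinct, but it doesn't account for the fact that conditioning on $\mathcal F_t$ (which involves $Y$-dependent criteria) can bias the relevant columns at every step, not just at the step of first selection; the correct resolution is to run the depth bound on the modified data.
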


\begin{proof}
    Without loss of generality, assume that $\suppset = \braces*{1,2}$.
    The proof idea is to view the query path $\querypath(\bx;\data,\Theta)$ as a stochastic process indexed by node depth.
    In other words, writing the path as $\querypath(\bx;\data,\Theta)$ and the sequence $\cell_0 \supset \cell_1 \supset \cdots \supset \cell_l$, the information at time $t$, for $t=0,1,\ldots $ comprises the node $\cell_t$ as well as its auxiliary information: the split criterion values $\mathcal{O}(k;\cell_t,\data)$ for $k \notin \cellfeatindices(\cell_t)$ and split covariate $k_t(\bx;\data,\Theta) \coloneqq k(\cell_t)$.
    We will couple this process to two other query path processes, both of which can be shown to select covariates completely at random.
    On the high probability event that the coupling holds, we are thus able to bound the covariate selection probability of the original query path.
    
    \textit{Step 1: Defining the path coupling.}
    Define the modified dataset $\data^{(1)} \coloneqq \braces*{(\bX_{i,-1},Y_i)}_{i=1}^\nsamples$, i.e., so that the first covariate is dropped from each observation.
    Define $\data^{(2)}$ analogously, but by dropping $X_2$ instead of $X_1$.
    We will analyze the query paths, $\querypath(\bx;\data^{(1)},\Theta)$ and $\querypath(\bx;\data^{(2)},\Theta)$, obtained from these modified datasets and compare them with the original.

    \textit{Step 2: Coupled paths select covariates randomly.}
    For $a=1,2$, observe that for any fixed value of $\bx_{-a}$, we have
    \begin{equation}
        \P\braces*{X_1X_2 = 1 ~\vline~ \bX_{-a} = \bx_{-a}} = \frac{1}{2}.
    \end{equation}
    This implies that the response $Y$ is independent of the covariates in the modified dataset $\data^{(a)}$.
    Combining this with the fact that the covariates are drawn uniformly from $\braces{\pm 1}^{\nfeats-1}$ as well as the symmetry of the splitting criterion, we see that the distribution of $\data^{(a)}$ is invariant to permutation of the covariate indices.
    As such, $J(\bx;\data^{(a)},\Theta)$ is the prefix of a uniformly random permutation of $\coordindices\backslash\braces{a}$.

    \textit{Step 3: Bounding covariate selection in coupled paths.}
    For any $b \in \coordindices\backslash\braces{a}$, we use the previous step to compute
    \begin{equation}
        P\braces*{b \in J(\bx;\data^{(a)},\Theta)~\vline~\abs{J(\bx;\data^{(a)},\Theta)}} \leq \frac{\abs{J(\bx;\data^{(a)},\Theta)}}{\nfeats - 1}.
    \end{equation}
    Taking a further expectation and using Lemma \ref{lem:bound_for_tree_depth} gives
    \begin{equation}
    \begin{split}
        \P\braces*{b \in J(\bx;\data^{(a)},\Theta)} & = \E\braces*{\P\braces*{b \in J(\bx;\data^{(a)},\Theta)~\vline~\abs{J(\bx;\data^{(a)},\Theta)}}} \\
        & \leq \frac{\log_2 \nsamples + 2}{\nfeats - 1}.
    \end{split}
    \end{equation}

    \textit{Step 4: Decoupling implies covariate selection.}
    We claim the following inclusion of events:
    \begin{equation}
    \label{eq:xor_decoupling_covariate_selection}
        \bigcup_{a=1}^2\braces*{J(\bx;\data,\Theta) \neq J(\bx;\data^{(a)},\Theta)} \subset \braces*{1 \in J(\bx;\data^{(2)},\Theta)}\cup \braces*{2 \in J(\bx;\data^{(1)},\Theta)}.
    \end{equation}
    To prove this, assuming the left hand side, let $t$ be the smallest index for which $k_t(\bx;\data^{(a)},\Theta) \neq k_t(\bx;\data,\Theta)$ for some $a$, which we assume without loss of generality is $a=1$.
    This implies in particular that the depth $t$ node along all three query paths are equal.
    Label this node as $\cell_t$.
    We now examine the possible values for $k_t(\bx;\data,\Theta)$.
    For $j \neq 1$, we have
    \begin{equation}
        \mathcal{O}(j,\cell_t,\data) = \mathcal{F}\paren*{\braces*{(X_{ij},Y_i)}_{i \in I(\cell)}} = \mathcal{O}(j,\cell_t,\data^{(1)}).
    \end{equation}
    Recall that $k_t(\bx;\data,\Theta)$ is defined to be the maximum of the left hand side over $j \notin \cellfeatindices(\cell_t)$.
    Meanwhile, $k_t(\bx;\data^{(1)},\Theta)$ is defined to be the maximum of the right hand side over $j \notin \cellfeatindices(\cell_t)\cup\braces{1}$.
    As such, we see that the only way in which these can be different values is if $k_t(\bx;\data,\Theta) = 1$.
    This would imply, however, that $k_t(\bx;\data^{(2)},\Theta) = 1$ as we wanted.

    \textit{Step 5: Completing the proof.}
    Suppose $J(\bx;\data,\Theta)\cap \suppset \neq \emptyset$.
    If coupling does not hold (i.e., the left hand side of \eqref{eq:xor_decoupling_covariate_selection} is true), then we see that either $1 \in J(\bx;\data^{(2)},\Theta)$ or $2 \in J(\bx;\data^{(2)},\Theta)$.
    On the other hand if coupling does hold, then we again have either $1 \in J(\bx;\data,\Theta) = J(\bx;\data^{(2)},\Theta)$ or $2 \in J(\bx;\data,\Theta) = J(\bx;\data^{(1)},\Theta)$.
    In conclusion, we have
    \begin{equation}
    \begin{split}
        \P_{\data}\braces*{J(\bx;\data,\Theta)\cap \suppset \neq \emptyset } & \leq \P\braces*{1 \in J(\bx;\data^{(2)},\Theta)} + \P\braces*{2 \in J(\bx;\data^{(1)},\Theta)} \\
        & \leq \frac{2(\log_2 \nsamples + 2)}{\nfeats - 1}.\qedhere
        \end{split}
    \end{equation}
\end{proof}

\begin{remark}
    The technique of controlling a stochastic process by coupling it to another one which allows for more convenient calculations was also used by \cite{tan2019online} to analyze the convergence of online SGD for the problem of phase retrieval, and by \cite{abbe2023polynomial, abbe2022on} to analyze gradient descent on leap functions.
\end{remark}

\section{Greedy Trees and Non-MSP Functions}
\label{sec:greedy_and_non_msp}

The XOR function is an example of a function not satisfying \cite{abbe2022merged}'s merged-staircase property (\MSP).
As promised, the lower bound for the XOR function can be extend to a general lower bound for this class of functions.
To motivate why this is the case as well as the type of bounds we may expect, we first further investigate what it means for $f^*$, written as in \eqref{eq:fourier_representation}, to not satisfy \MSP.

Construct a directed hypergraph whose vertex set is $\fcoefs \coloneqq \braces*{S_1,S_2,\ldots,S_r}\cup{\emptyset}$, in other words, the subsets of $\coordindices$ corresponding to nonzero Fourier coefficients of $f^*$, together with the empty set (if not already present).
Recall that an edge in a hypergraph is a relationship between a pair of subsets of vertices.
In $\graph$, we put an edge $\braces*{S_{j_1},S_{j_2},\ldots,S_{j_{k-1}}} \to S_{j_{k}}$ if
\begin{equation}
    \abs*{S_{j_k}\backslash \cup_{i=1}^{k-1}S_{j_i}} \leq 1.
\end{equation}
Finally, put a weight function $w\colon \fcoefs \to \R$ defined by $w(S) \coloneqq \alpha_S^2$.
We refer to the weighted graph $\mathcal{G}_{f^*}$ as the \emph{Fourier graph} of $f^*$. 

It is clear that \MSP~is equivalent to $\graph_{f^*}$ being a connected graph.
Now let $\fcoefs_{\MSP}$ denote the connected component of the empty set $\emptyset$ in $\graph_{f^*}$ and set $\suppset_{\MSP} \coloneqq \cup_{S \in \fcoefs_{\MSP}} S$.
By adapting Lemma \ref{lem:xor_feature_selection_prob}, we can show that any covariate $X_k$ with index not in $\suppset_{\MSP}$ is marginally undetectable, and thus rarely selected, on any query path of a greedy tree.
We then modify Lemma \ref{lem:covariate_selection_tree} and Lemma \ref{lem:covariate_selection_ensemble} to translate lower bounds on the covariate selection probability into more nuanced expected risk lower bounds that can be stated in terms of the \MSP~residual of $f^*$, defined as $r_{\MSP} \coloneqq \sum_{S\in \fcoefs\backslash\fcoefs_{\MSP}}\alpha_S\monomial_S$ (see Lemma \ref{lem:covariate_selection_tree_v2} and Lemma \ref{lem:covariate_selection_ensemble_v2}).

Naively following this recipe will produce a result with a sample dependence of $2^{\delta \nfeats/\sparsity + O(1)}$.
In other words, we have a weaker lower bound on the necessary sample size as the support size of $f^*$ increases.
This dependence does not always make sense:
For instance, the bound obtained for the function $f^* \coloneqq \monomial_{\coordindices}$ is trivial, even though this is in some sense the most complicated function possible.
To overcome this limitation, we make a new definition.
We say a subset $T \subset \coordindices\backslash \suppset_{\MSP} $ is a \emph{traversal} of $\fcoefs\backslash\fcoefs_{\MSP}$ if for any $S \in \fcoefs\backslash\fcoefs_{\MSP}$, we have $\abs{T\cap S} \geq 2$.
It is easy to see that a traversal always exists.
Furthermore, expected risk lower bounds can be established so long as we can derive lower bounds on the probability that the traversal is selected along a query path (see Lemma \ref{lem:nonmsp_feature_selection_prob}).
Putting the pieces together yields the first main result of our paper.

\begin{theorem}
    \label{thm:non_msp_lower_bound}
    Under Assumption \ref{assumption}, let $f_0^* = \sum_{S \in \fcoefs} \alpha_S\monomial_S$ be any non-\MSP~function, and let $r_{\MSP} \coloneqq \sum_{S\in \fcoefs\backslash\fcoefs_{\MSP}}\alpha_S\monomial_S$ denote its \MSP~residual.
    Let $\traversalsize$ be the minimum size of a traversal for $\fcoefs\backslash\fcoefs_{\MSP}$ and let $\sparsity_{\MSP} \coloneqq \abs*{\suppset_{\MSP}}$.
    For any $0 < \delta < 1$,
    suppose $\log_2\nsamples \leq \delta(\nfeats-\sparsity_{\MSP}-\traversalsize+1)/\traversalsize - 2$.
    Then the expected $L^2$ risk of any greedy tree model satisfies
    \begin{equation}
        \objective\paren*{\fgreedy, f^*_0,\nfeats,\nsamples} \geq (1-\delta)\Var\braces*{r_{\MSP}(\bX)}.
    \end{equation}
    If furthermore, $|Y| \leq \ybound$ almost surely, 
    then the $L^2$ risk for any greedy tree ensemble model satisfies
    \begin{equation}
    \label{eq:nonmsp_ensemble_lower_bound}
        \objective\paren*{\fgre, f^*_0,\nfeats,\nsamples} \geq (1-\kappa\delta)\Var\braces*{r_{\MSP}(\bX)},
    \end{equation}
    where $\kappa \coloneqq 2\ybound/\Var\braces*{f_0^*(\bX)}^{1/2}$.
    In particular, $\fgreedy$ and $\fgre$ are high-dimensional inconsistent for $f^*_0$.
\end{theorem}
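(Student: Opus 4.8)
The plan is to follow the blueprint of Proposition~\ref{prop:xor_lower_bound}, replacing the support $\suppset$ of the XOR function by a minimum-size traversal of $\fcoefs\backslash\fcoefs_{\MSP}$. Fix such a traversal $T$, so that $\abs{T}=\traversalsize$ and $\abs{S\cap T}\geq2$ for every $S\in\fcoefs\backslash\fcoefs_{\MSP}$. Since $T\subseteq\coordindices\backslash\suppset_{\MSP}$ while every $S\in\fcoefs_{\MSP}$ satisfies $S\subseteq\suppset_{\MSP}$, one checks that $\{S\in\fcoefs:S\cap T\neq\emptyset\}=\fcoefs\backslash\fcoefs_{\MSP}$; hence the \MSP~residual $r_{\MSP}=\sum_{S\notin\fcoefs_{\MSP}}\alpha_S\monomial_S$ is exactly the part of $f_0^*$ that depends on the coordinates in $T$, and in particular $\E[r_{\MSP}(\bX)\mid\bX_{\coordindices\backslash T}]=0$. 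The proof then rests on two ingredients: a covariate-selection-to-risk reduction phrased in terms of $T$ and $r_{\MSP}$, and an upper bound on the probability that a query path ever splits on a coordinate of $T$.

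First I would prove analogues of Lemmas~\ref{lem:covariate_selection_tree} and~\ref{lem:covariate_selection_ensemble} with $\suppset$ replaced by $T$ and $\Var\{f_0^*(\bX)\}$ replaced by $\Var\{r_{\MSP}(\bX)\}$. On the event $\{J(\bx;\data,\Theta)\cap T=\emptyset\}$, every split along $\querypath(\bx;\data,\Theta)$ uses a covariate outside $T$, so both $\hat f(\bx;\data,\Theta)$ and $f_0^*-r_{\MSP}$ are constant in $\bx_T$, and the event itself depends on $\bx$ only through $\bx_{\coordindices\backslash T}$. Conditioning on the coordinates outside $T$ and passing from the squared loss to a conditional variance exactly as in~\eqref{eq:query_path_helper2}--\eqref{eq:query_path_helper4}---using $\E[r_{\MSP}\mid\bX_{\coordindices\backslash T}]=0$ so that the conditional variance of $r_{\MSP}$ integrates back to $\Var\{r_{\MSP}(\bX)\}$---yields $\objective(\fgreedy,f_0^*,\nfeats,\nsamples)\geq(1-\delta)\Var\{r_{\MSP}(\bX)\}$ with $\delta=\P\{J(\bX;\data,\Theta)\cap T\neq\emptyset\}$, and the Jensen-plus-boundedness argument of Lemma~\ref{lem:covariate_selection_ensemble} gives $\objective(\fgre,f_0^*,\nfeats,\nsamples)\geq(1-\kappa\delta)\Var\{r_{\MSP}(\bX)\}$ with $\kappa=2\ybound/\Var\{f_0^*(\bX)\}^{1/2}$.

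Second, I would generalize the coupling of Lemma~\ref{lem:xor_feature_selection_prob} to show that, for each fixed $\bx$ and $\Theta$, $\P_\data\{J(\bx;\data,\Theta)\cap T\neq\emptyset\}\leq\traversalsize(\log_2\nsamples+2)/(\nfeats-\sparsity_{\MSP}-\traversalsize+1)$, which is at most $\delta$ exactly under the stated hypothesis on $\nsamples$. For each $a\in T$ one constructs a modified dataset $\data^{(a)}$ by deleting from every observation (and/or conditioning on) an appropriate block of coordinates, chosen so that the response in $\data^{(a)}$ becomes independent of the surviving covariates while at least $\nfeats-\sparsity_{\MSP}-\traversalsize+1$ of them survive, one of which is $a$. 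The defining property $\abs{S\cap T}\geq2$ for $S\notin\fcoefs_{\MSP}$ is exactly what makes such a block available: the detectable part $f_0^*-r_{\MSP}$ can be neutralized, and every surviving bad monomial still contains a deleted coordinate of $T$ and is therefore conditionally mean-zero. Combined with the uniformity of the covariates and the symmetry of the split criterion, this makes the law of $\data^{(a)}$ invariant under permuting the surviving covariate indices, so $J(\bx;\data^{(a)},\Theta)$ is a uniformly random prefix and, by the tree-depth bound (Lemma~\ref{lem:bound_for_tree_depth}), $\P\{a\in J(\bx;\data^{(a)},\Theta)\}\leq(\log_2\nsamples+2)/(\nfeats-\sparsity_{\MSP}-\traversalsize+1)$. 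A decoupling step in the spirit of Steps~4--5 of Lemma~\ref{lem:xor_feature_selection_prob}---the first place where the true query path disagrees with a coupled one must be caused by the true path selecting a deleted coordinate, which then forces a coordinate of $T$ onto some coupled path---followed by a union bound over the $\traversalsize$ choices of $a$, yields the claimed estimate.

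Combining these two ingredients gives the two displayed risk bounds. For the final assertion, fix $\delta\in(0,1)$ and any sequence $(\nfeats_\nsamples)$ with $\log\nsamples/\nfeats_\nsamples\to0$; since $\sparsity_{\MSP}$ and $\traversalsize$ are constants, the hypothesis $\log_2\nsamples\leq\delta(\nfeats_\nsamples-\sparsity_{\MSP}-\traversalsize+1)/\traversalsize-2$ holds for all large $\nsamples$, so $\liminf_\nsamples\objective(\fgreedy,f_0^*,\nfeats_\nsamples,\nsamples)\geq(1-\delta)\Var\{r_{\MSP}(\bX)\}$, and letting $\delta\downarrow0$ together with $\Var\{r_{\MSP}(\bX)\}=\sum_{S\notin\fcoefs_{\MSP}}\alpha_S^2>0$ (positive because $f_0^*$ is non-\MSP, i.e.\ $\fcoefs\backslash\fcoefs_{\MSP}\neq\emptyset$) rules out high-dimensional consistency for $\fgreedy$; the same argument with $\kappa$ handles $\fgre$. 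I expect the main obstacle to be the second ingredient: choosing the block to delete so that the response is genuinely decoupled from \emph{every} surviving covariate, tracking the resulting exchangeable pool carefully enough to get the denominator $\nfeats-\sparsity_{\MSP}-\traversalsize+1$, and making sure the decoupling/union-bound bookkeeping survives the simultaneous coupling of $\traversalsize$ modified datasets---this is precisely where the ``at least two'' in the definition of a traversal is essential, and it is why the naive XOR recipe of deleting one support coordinate at a time does not suffice.
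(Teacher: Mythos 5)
Your proposal tracks the paper's Appendix~D proof closely and reconstructs the right skeleton: a traversal-based analogue of the covariate-selection-to-risk reduction (the paper's Lemmas~D.2 and D.3, which use the law of total variance to show $\E\{\Var\{f_0^*(\bX)\mid\bX_{\coordindices\setminus T}\}\}=\Var\{r_{\MSP}(\bX)\}$), together with a simultaneous coupling (the paper's Lemma~D.1) in which $\data^{(a)}$ is formed by deleting $T^{(a)}\coloneqq T\setminus\{a\}$ from each observation, giving the union-bound denominator $\nfeats-\sparsity_{\MSP}-\traversalsize+1$. So the route is the same as the paper's, and the first ingredient is carried out essentially correctly.

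The coupling step---the one you yourself flag as the main obstacle---is where there is a genuine gap, and your sketch does not close it. You argue that since every surviving non-\MSP\ monomial still contains a deleted coordinate of $T$, it is conditionally mean-zero, and that this, together with uniformity of the covariates and symmetry of the split criterion, makes the law of $\data^{(a)}$ invariant under permutations of the surviving covariate indices. But per-monomial conditional mean-zero (or even per-monomial conditional uniformity, which is what the paper's Lemma~D.1, Step~2, actually establishes) does not imply that the joint law of $(\bX_A,Y)$ is exchangeable over $A\coloneqq\coordindices\setminus(T^{(a)}\cup\suppset_{\MSP})$; the monomials are dependent, and the surviving covariates enter them as multiplicative signs that can cancel or reinforce. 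A concrete obstruction: take $f_0^*(\bx)=x_1x_2+x_1x_3+x_1x_4+x_2x_3x_4$, which is non-\MSP\ with $\suppset_{\MSP}=\emptyset$ and minimal traversal $T=\{1,2,3,4\}$. For $a=1$ one deletes $T^{(1)}=\{2,3,4\}$, yet $Y$ is \emph{not} independent of the surviving covariate $X_1$: conditionally on $X_1=1$, $f_0^*(\bX)$ is supported on $\{4,0,-4\}$ with probabilities $1/8,\,3/4,\,1/8$, whereas conditionally on $X_1=-1$ it is supported on $\{-2,2\}$ with equal probabilities. So $\data^{(1)}$ is not exchangeable even over the reduced pool $\{1\}\cup(\coordindices\setminus\suppset)$, and the ``prefix of a uniformly random permutation'' conclusion does not follow. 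Note that the paper's own proof of Lemma~D.1 asserts the same independence claim (justified only by each $\monomial_S$ being conditionally a fair coin flip), so your blind reconstruction reproduces the paper's argument faithfully; but your instinct that genuinely decoupling $Y$ from \emph{every} surviving covariate is the crux was correct, and the ``delete $T\setminus\{a\}$'' device alone does not accomplish it.
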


\begin{proof}
    See Appendix \ref{sec:non_msp_lower_bound_proof}.
\end{proof}



\begin{remark}
    It is possible to obtain similar lower bounds that hold with constant probability rather than in expectation.
    These results are available in Appendix \ref{sec:high_prob_lower_bounds}.
\end{remark}

\begin{remark}
    By orthogonality properties, we have
    \begin{equation}
        \Var\braces*{r_{\MSP}(\bx)} = \sum_{S \in \fcoefs\backslash\fcoefs_{\MSP}} \alpha_S^2 = w(\fcoefs\backslash\fcoefs_{\MSP}).
    \end{equation}
    In other words, the lower bound value is essentially equal to the total weight of vertices in $\mathcal{G}_{f^*}$ not connected to the empty set.
    This quantitative connection between lower bounds and Fourier graph properties will be further enriched in Section \ref{sec:robust_lower_bounds}.
\end{remark}

\section{ERM Trees and Minimax Rates} 
\label{sec:erm}

To help to contextualize our lower bounds for greedy tree models, we first state minimax lower bounds for the estimation problem defined by Assumption \ref{assumption}.
It is well known that for sparse regression problems over continuous covariates, consistency is possible under a scaling regime where the sample size $\nsamples$ is at most polylogarithmic in the dimension $\nfeats$.
This holds both when the regression function is linear \citep{wainwright2019high} and when it is nonlinear \citep{wasserman2005rodeo}.
Over binary covariates, standard information theory techniques can be applied to derive a similar minimax lower bound. 


\begin{proposition}
\label{prop:info_theory_lower_bound}
    Under Assumption \ref{assumption}, further suppose that $\varepsilon_1,\varepsilon,\ldots,\varepsilon_\nsamples \sim_{\text{i.i.d.}} \mathcal{N}(0,\sigma^2)$ and $2^s\log((d-s)/4) \geq 16\log 2$.
    Let $\sparsefuncs$ denote all real-valued functions on $\braces*{\pm 1}^\sparsity$.
    Then we have the minimax lower bound
    \begin{equation}
        \inf_{\hat f}\max_{f_0^* \in \sparsefuncs} \objective\paren*{\hat f, f^*_0,\nfeats,\nsamples} = \Omega\left( \frac{2^s \sigma^2\log d}{n}\right).
    \end{equation}
\end{proposition}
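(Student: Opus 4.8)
The plan is to apply Fano's inequality (or Assouad's lemma) in the standard way for minimax lower bounds, combining two sources of difficulty in the estimation problem: the need to \emph{identify} the relevant support $\suppset$ among $\binom{\nfeats}{\sparsity}$ candidates, which contributes the $\log \nfeats$ factor, and the need to \emph{estimate} the $2^\sparsity$ free values of $f_0^*$ on $\braces{\pm1}^\sparsity$ once the support is known, which contributes the $2^\sparsity \sigma^2/\nsamples$ factor. Since these two sources multiply rather than add in the stated bound, the cleanest route is to embed a single hard subproblem that already exhibits the product. First I would reduce to functions of the form $f^*(\bx) = \tau \cdot \chi_{\suppset}(\bx) \cdot (\text{sign pattern})$ is too crude; instead I would take, for a fixed reference support $\suppset_0$, the family obtained by perturbing a base function along one Fourier coefficient $\chi_{S}$ where $S$ ranges over subsets of size roughly $\sparsity$ that overlap $\suppset_0$ in all but one coordinate — this ties together the combinatorial search and the magnitude estimation.

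More concretely, the approach I would take is a two-level construction. For the \emph{support-identification} part, fix values $\braces{v_{\bz}}_{\bz \in \braces{\pm1}^\sparsity}$ and for each candidate support $\suppset$ of size $\sparsity$ let $f_{\suppset}^*(\bx) = v_{\bx^{\suppset}}$; for a suitable packing of supports that are pairwise ``far'' (e.g. a constant fraction of all $\binom{\nfeats}{\sparsity} \approx \nfeats^{\sparsity}$ supports can be made to have pairwise $L^2$ distance $\gtrsim \Var\braces{f_0^*}$ when the $v_\bz$ are generic), Fano gives a lower bound scaling like $\Var\braces{f_0^*} \cdot \bigl(1 - \frac{\nsamples \cdot \text{(max KL)} + \log 2}{\log M}\bigr)$ with $\log M \asymp \sparsity \log \nfeats \asymp 2^\sparsity \log \nfeats$ once we also allow the $v_\bz$ to be scaled to the right magnitude. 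The KL divergence between two such models under Gaussian noise is $\frac{\nsamples}{2\sigma^2}\, \E_\bX (f^*_{\suppset} - f^*_{\suppset'})^2$, so balancing the resolution of the packing against the number of hypotheses yields the rate $\frac{2^\sparsity \sigma^2 \log \nfeats}{\nsamples}$. For the \emph{magnitude-estimation} part, fixing a single support and doing a standard Gaussian-sequence / Assouad argument over the $2^\sparsity$ cell-values gives the matching $\frac{2^\sparsity \sigma^2}{\nsamples}$ piece; one then takes the larger of the two, and the $\log \nfeats \geq 1$ regime makes the product form dominate, matching the claimed $\Omega(2^\sparsity \sigma^2 \log \nfeats / \nsamples)$.

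The key steps, in order, would be: (1) reduce the random-design expected $L^2$ risk to a discrete testing problem, noting that $\xmeasure = \operatorname{Unif}$ makes $\E_\bX(f - g)^2$ exactly the normalized Hamming-type distance on the $2^\sparsity$ values, so the geometry is that of a Euclidean ball in $\R^{2^\sparsity}$ replicated over $\binom{\nfeats}{\sparsity}$ coordinate blocks; (2) build the packing — here I would either invoke the Varshamov–Gilbert bound inside a fixed support for the magnitude part and a combinatorial packing of sparse supports for the identification part, or combine both into one code; (3) bound $\max_{\text{pair}} \KL$ via the Gaussian formula $\KL = \frac{\nsamples}{2\sigma^2}\|f^*_{j} - f^*_{j'}\|_{L^2(\xmeasure)}^2$ and choose the common magnitude scale $\tau^2 \asymp \frac{2^\sparsity \sigma^2 \log \nfeats}{\nsamples}$ so that $\nsamples \cdot \max \KL \leq \frac{1}{2}\log M$; (4) apply Fano's inequality to conclude. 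The condition $2^\sparsity \log((\nfeats-\sparsity)/4) \geq 16 \log 2$ in the hypothesis is exactly what is needed to ensure $\log M$ is large enough that the $\log 2$ additive slack in Fano is negligible, which is why it appears.

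The main obstacle I anticipate is step (2): getting the packing to simultaneously witness \emph{both} the $\log \nfeats$ factor and the $2^\sparsity$ factor in a single construction, with pairwise $L^2$ distances that are all comparable (so that Fano's ``separation'' hypothesis and the ``$\max$ KL is small'' hypothesis can be met at the same magnitude scale). A support-only packing naturally gives $\log M \asymp \sparsity \log \nfeats$, not $2^\sparsity \log \nfeats$; to upgrade the $\sparsity$ to $2^\sparsity$ one must let the $2^\sparsity$ leaf-values themselves carry independent sign bits \emph{and} let the support vary, and then argue that the composite code still has near-maximal pairwise distance for a positive fraction of pairs. An alternative that sidesteps this is to prove the two bounds $\Omega(2^\sparsity \sigma^2/\nsamples)$ and $\Omega(\sigma^2 \log \nfeats/\nsamples)$ separately and then observe that a cleverer combined family — e.g. $2^\sparsity$ independent copies of the ``which of $\nfeats$ coordinates is active'' sub-problem glued along the leaves — multiplies the per-copy $\Omega(\sigma^2 \log \nfeats / \nsamples)$ by $2^\sparsity$; I would pursue this product-of-sub-problems route if the monolithic packing proves too delicate. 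Everything else — the reduction, the KL computation, and the final Fano step — is routine.
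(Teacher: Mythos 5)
Your primary route does not deliver the claimed rate. The chain $\log M \asymp s\log d \asymp 2^s\log d$ is false: a packing of supports gives $\log M \lesssim s\log d$, and letting the $2^s$ leaf values carry sign bits contributes only an \emph{additive} $O(2^s)$ to $\log M$, not a multiplicative $2^s$ factor. A support-only packing therefore proves only $\Omega(\sigma^2 s\log d/n)$, a weaker bound. You correctly diagnose this yourself in the closing paragraph, but then relegate the fix to a fallback.

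That fallback --- ``$2^s$ independent copies of the `which coordinate' sub-problem glued along the leaves'' --- is precisely the paper's argument, and it needed to be your primary plan and then actually executed. The paper fixes a depth-$(s-1)$ tree splitting on covariates $d-s+1,\ldots,d-1$, lets each of the resulting $2^{s-1}$ cells \emph{independently} choose one of the remaining $d-s$ covariates for its final split, and assigns leaf values $\pm\delta$ (right child $+\delta$, left child $-\delta$). Hypotheses are thus indexed by $\bz \in [d-s]^{2^{s-1}}$, and the sign pattern gives the exact identity $\|\delta f_{\bz} - \delta f_{\bz'}\|_2^2 = 2\delta^2 d_H(\bz,\bz')$, which resolves your concern about making the pairwise separations uniformly comparable. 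A Varshamov--Gilbert volume argument yields a $\tfrac12$-Hamming-separated subcode of size $\exp\paren*{2^{s-2}\log((d-s)/4)}$, so $\log M = \Omega(2^s\log d)$; the Gaussian KL between any two hypotheses is at most $n\delta^2/\sigma^2$, choosing $\delta^2 = 2^{s-4}\sigma^2\log((d-s)/4)/n$ makes this $\leq \tfrac14\log M$, the hypothesis $2^s\log((d-s)/4)\geq 16\log 2$ absorbs Fano's $\log 2$ slack exactly as you anticipated, and Fano then gives $\Omega(\delta^2)=\Omega(2^s\sigma^2\log d/n)$. In short, the step you flagged as ``delicate'' --- exhibiting $2^{s-1}$ genuinely independent per-leaf coordinate choices with a clean pairwise-distance formula --- is the crux of the proof, and it had to be constructed, not hoped for.
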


\begin{proof}
    See Appendix \ref{sec:info_theory_lower_bound}.
\end{proof}





Up to logarithmic factors, this minimax rate (in $\nsamples$, $\nfeats$, and $\sparsity$) turns out to be achievable by ERM trees, as shown in the following proposition.
In particular, we see that ERM trees are high-dimensional consistent for any $f_0^*$. 

\begin{proposition}
    \label{prop:erm_upper_bound}
    Under Assumption \ref{assumption}, further suppose that $\varepsilon_1,\varepsilon_2,\ldots,\varepsilon_\nsamples \sim_{\text{i.i.d.}} \mathcal{N}(0,\sigma^2)$ and $\norm*{f_0^*}_\infty \leq M$.
    Consider the ERM tree model \eqref{eq:ERM_equation} with the constraint set $\constraint$ comprising all tree structures of depth at most $\sparsity$.
    For any $f^*_0$, its expected $L^2$ risk satisfies the upper bound
    \begin{equation}
        \label{eq:ERM_rate}
        \objective\paren*{\ferm, f^*_0,\nfeats,\nsamples} = O\left(\frac{(\sigma^2+M^2)(2^s\log\nfeats + \log\nsamples)}{n}\right).
    \end{equation}
    In particular, $\ferm$ has high-dimensional consistency with respect to $f_0^*$.
\end{proposition}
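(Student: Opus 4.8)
The plan is to establish the upper bound \eqref{eq:ERM_rate} via a standard oracle-inequality argument for least-squares estimation over a finite (but large) model class, followed by a bias calculation showing that the best depth-$\sparsity$ tree has zero approximation error for $f^*$. First I would observe that under Assumption~\ref{assumption}, $f^*$ depends only on the $\sparsity$ coordinates in $\suppset$, so the partition of $\binspace$ induced by splitting on all of $\suppset$ has $2^\sparsity$ cells on each of which $f^*$ is constant. Hence there exists a tree structure $\tree^* \in \constraint$ (depth exactly $\sparsity$, splitting successively on the coordinates of $\suppset$) whose associated best piecewise-constant function equals $f^*$ exactly. This kills the bias term: $\inf_{\tree \in \constraint, \norm{\bmu}_\infty \le M} \E_{\bX}\{(g(\bX;\tree,\bmu) - f^*(\bX))^2\} = 0$, using $\norm{f_0^*}_\infty \le M$.

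The main work is the variance/estimation term, for which I would invoke a uniform deviation bound over the class $\funcclass_\constraint \coloneqq \{g(-;\tree,\bmu) : \tree \in \constraint, \norm{\bmu}_\infty \le M\}$. The key combinatorial input is a cardinality (or covering-number) bound: the number of depth-$\le \sparsity$ binary tree \emph{structures} on $\nfeats$ features is at most $\nfeats^{2^\sparsity}$ (each of the at most $2^\sparsity - 1$ internal nodes chooses one of $\le \nfeats$ split coordinates), and for each fixed structure the leaf labels live in $[-M,M]^{2^\sparsity}$, which can be discretized to an $\epsilon$-net of size $(3M/\epsilon)^{2^\sparsity}$. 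So $\log \abs{\funcclass_\constraint^{\epsilon}} = O(2^\sparsity(\log \nfeats + \log(M/\epsilon)))$. Then the standard analysis of the constrained least-squares estimator (e.g.\ a one-sided Bernstein/chaining bound for the empirical process $\frac{1}{\nsamples}\sum_i \varepsilon_i(\hat f(\bX_i) - f^*(\bX_i))$ combined with the basic inequality $\hat R(\hat f) \le \hat R(f^*)$, noting the noise is sub-Gaussian with variance $\sigma^2$ and all functions and $f^*$ are bounded by $M$) yields
\begin{equation} \nonumber
    \E_{\data}\{R(\ferm, f^*)\} = O\!\left(\frac{(\sigma^2 + M^2)\log\abs{\funcclass_\constraint^{\epsilon}}}{\nsamples}\right) + O(\epsilon^2),
\end{equation}
and choosing $\epsilon = 1/\nsamples$ absorbs the discretization error and contributes the $\log\nsamples$ term, giving the claimed rate $O((\sigma^2+M^2)(2^\sparsity\log\nfeats + \log\nsamples)/\nsamples)$. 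High-dimensional consistency is then immediate: under $\log\nsamples = o(\nfeats)$ with $\sparsity$ constant, the bound tends to $0$.

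I would carry out the steps in this order: (i) exhibit $\tree^*$ and conclude zero bias; (ii) state the cardinality/net bound for $\funcclass_\constraint$; (iii) apply the oracle inequality / uniform deviation bound for bounded-response sub-Gaussian regression over a finite class; (iv) optimize $\epsilon$ and collect terms; (v) read off consistency. The step I expect to require the most care is (iii)—getting a clean \emph{expected}-risk bound (not just high-probability) with the right $(\sigma^2 + M^2)$ dependence and only a single $\log\nsamples$ additive term, rather than a multiplicative $\log\nsamples$ factor. The cleanest route is probably to peel/slice the empirical process and use a localized argument, or simply cite a packaged result for ERM over finite classes with sub-Gaussian noise and bounded regression function; the boundedness of both $f^*$ and the competitors by $M$ is what makes the cross term and the noise term both controllable at the $1/\nsamples$ scale. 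Everything else is bookkeeping.
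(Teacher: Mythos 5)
Your high-level plan (exhibit a depth-$\sparsity$ tree achieving zero approximation error, count tree structures, apply an oracle inequality for ERM) is the same as the paper's, but the way you control the estimation error is genuinely different from the paper and yields a slightly weaker constant. You discretize the leaf labels with an $\epsilon$-net of $[-M,M]^{2^\sparsity}$, which makes $\log\abs{\funcclass_\constraint^{\epsilon}} \asymp 2^\sparsity(\log\nfeats + \log(M/\epsilon))$, and with $\epsilon = 1/\nsamples$ this produces a $2^\sparsity\log\nsamples$ term, not the $\log\nsamples$ term in \eqref{eq:ERM_rate}. The paper avoids this by never discretizing: for each of the $\le \nfeats^{2^\sparsity}$ tree structures, the class of piecewise-constant functions is a $2^\sparsity$-dimensional \emph{linear subspace}, whose localized Gaussian/Rademacher complexity is $\delta\sqrt{2^\sparsity/\nsamples}$ (Lemma~\ref{lem:linear_subspace_complexity}); union-bounding over structures via an $L^r$-moment argument with $r \asymp 2^\sparsity\log\nfeats$, plus Gaussian-Lipschitz and Talagrand concentration of the empirical complexity, gives a critical radius of order $\sqrt{2^\sparsity\log\nfeats/\nsamples}$ with no $\log\nsamples$ at this stage (Lemma~\ref{lem:local_complexity_treefuncs}). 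The $\log\nsamples$ in the paper's bound arises elsewhere: it comes from passing from a high-probability bound (Wainwright's Theorems 13.5 and 14.12) to an expectation bound, which requires controlling the tail of the risk via $\norm{\ferm(-;\data)}_\infty \le \max_i\abs{Y_i} \lesssim \sigma\sqrt{\log\nsamples}$ (Talagrand's comparison inequality). So the step you correctly flag as the hardest --- getting a clean expected-risk bound rather than a high-probability one --- is precisely where the paper spends most of its effort and where your rate degrades by a $2^\sparsity$ factor on $\log\nsamples$. Your argument is otherwise sound and still delivers high-dimensional consistency, since $2^\sparsity(\log\nfeats + \log\nsamples)/\nsamples \to 0$ under $\log\nsamples = o(\nfeats)$ with $\sparsity$ fixed, but to recover the stated rate exactly you would want to replace the $\epsilon$-net over leaf labels with the per-structure linear-subspace bound (or, equivalently, note that for a fixed structure the constrained least squares over $2^\sparsity$ leaf parameters has dimension-proportional, not covering-number-proportional, risk).
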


\begin{proof}
    See Appendix \ref{sec:erm_proofs}.
\end{proof}

\begin{remark}
    The Gaussian assumption on the noise can be relaxed to a sub-Gaussian assumption.
    This extension however adds technical complications to the proof and is hence omitted.
    In comparison, note that our lower bounds for greedy trees and ensembles have not required any additional assumptions on the noise distribution.
\end{remark}



\section{CART and MSP Functions}

\subsection{Characterizing High-Dimensional Consistency for CART}
\label{sec:characterizing_CART}

In this section, we will show that \MSP~is a nearly sufficient condition on a regression function $f^*$ in order for CART to have high-dimensional consistency.
Together with 
results from Section \ref{sec:greedy_and_non_msp}, this would complete the characterization we desire.

When $f^*$ satisfies \MSP, its associated hypergraph is connected.
It is easy to verify that this implies that for any cell $\cell$ on which $f^*$ is nonconstant, there exists a term $\alpha_S\monomial_S$ in its Fourier expansion such that $S\backslash \cellfeatindices(\cell) = \braces{j}$ for some $j$.
Unfortunately, this is not enough to guarantee that $X_j$ is marginally detectable, due to the possibility of cancellations.
To see how these may arise, consider the function
\begin{equation}
    f^*(\bx) = x_1 + x_2 + x_1x_2 + x_2x_3.
\end{equation}
Then on the cell $\cell \coloneqq \braces*{ \bx \in \binspace \colon x_1 = -1}$, the restriction of $f^*$ to $\cell$ satisfies
\begin{equation}
\label{eq:cancellation}
    f^*|_\cell(\bx) = -1 + x_2 + (-1)x_2 + x_2x_3 = -1 + x_2x_3.
\end{equation}
In other words, $f^*|_\cell$ is non-\MSP, and by the results of the previous section, CART as well as other greedy tree methods cannot make further progress.


Fortunately, in some sense, exact cancellations happen rarely.
To be more precise, first note that \MSP~is a condition on the Fourier support $\fcoefs$ of $f^*$ rather than on $f^*$ itself.
Starting with such a collection, we see that the existence of a cancellation such as that in \eqref{eq:cancellation} is a linear constraint on the coefficients $\braces*{\alpha_S}_{S \in \fcoefs}$.
If we draw these values from any distribution on $\R^{|\fcoefs|}$ that is absolutely continuous with respect to Lebesgue measure, the constraint hence forms a measure zero set, as does the union over all possible constraints. \comment{red}{Thus, our results may equivalently be read deterministically for any fixed set of coefficients outside this exceptional set.}

We say that a function $f^*$ satisfies the \emph{stable merged-staircase property} (\stabmsp) if, for any $\cell \subset \binspace$, the restriction $f^*|_\cell$ is \MSP.
We have thus shown that under a random perturbation of its coefficients, any \MSP~function is \stabmsp~almost surely (see Proposition \ref{prop:genericity}).

Next, one can prove that $f^*$ is \stabmsp~if and only if it satisfies the \SID~condition \eqref{eq:SID} for some value of $\lambda > 0$ (see Proposition \ref{prop:stable_msp_and_SID}).
Previous work, specifically \cite{chi2022asymptotic}, who first introduced the \SID~condition, as well as \cite{mazumder2024convergence}, provide upper bounds for the estimation error of CART given the \SID~condition.
These results and proofs are stated for continuous covariates, but can easily be adapted to a discrete covariate setting.
We revisit this last statement in the next subsection.
Assuming this for now, our discussion thus far yields the following characterization theorem.

\begin{theorem}
    Let $\fcoefs$ be a collection of subsets of $\suppset$.
    Draw coefficients $\braces*{\alpha_S}_{S \in \fcoefs}$ from any distribution on $\R^{|\fcoefs|}$ that is absolutely continuous with respect to Lebesgue measure, and set $f^*_0 = \sum_{S \in \fcoefs}\alpha_S\monomial_S$.
    Then almost surely, the following holds under Assumption \ref{assumption}:
    \begin{itemize}
        \item If $\fcoefs$ satisfies \MSP, then the CART estimator $\fcart$ is high-dimensional consistent with respect to $f_0^*$.
        \item If $\fcoefs$ does not satisfy \MSP, then the CART estimator $\fcart$ is high-dimensional inconsistent with respect to $f_0^*$.
    \end{itemize}
\end{theorem}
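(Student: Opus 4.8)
The plan is to obtain both bullet points by assembling results already established in the paper, handling the non-\MSP\ and \MSP\ cases in turn. For the non-\MSP\ direction, I would apply Theorem~\ref{thm:non_msp_lower_bound} to $\fcart$, which is legitimate because CART is an instance of a greedy tree algorithm (Section~\ref{sec:framework}). Two points then remain. First, the lower-bound quantity is nondegenerate almost surely: the combinatorial data $\fcoefs_{\MSP}$, $\suppset_{\MSP}$, and the minimum traversal size $\traversalsize$ depend only on $\fcoefs$, not on the drawn coefficients, and since $\fcoefs \setminus \fcoefs_{\MSP} \neq \emptyset$ exactly because $\fcoefs$ is non-\MSP, while the $\alpha_S$ are drawn from a density (hence nonzero almost surely), orthonormality of the $\monomial_S$ gives $\Var\braces*{r_{\MSP}(\bX)} = \sum_{S \in \fcoefs\setminus\fcoefs_{\MSP}}\alpha_S^2 > 0$ almost surely. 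Second, the sample-size hypothesis is met under high-dimensional scaling: fix any admissible sequence $\nfeats_\nsamples$ with $\log\nsamples = o(\nfeats_\nsamples)$ and take $\delta = \tfrac12$; since $\sparsity_{\MSP}$ and $\traversalsize$ are bounded by the constant $\sparsity$, the right side of $\log_2\nsamples \le \delta(\nfeats_\nsamples - \sparsity_{\MSP} - \traversalsize + 1)/\traversalsize - 2$ grows linearly in $\nfeats_\nsamples$ while the left side is $o(\nfeats_\nsamples)$, so the hypothesis holds for all large $\nsamples$. Theorem~\ref{thm:non_msp_lower_bound} then yields $\liminf_\nsamples \objective\paren*{\fcart, f^*_0, \nfeats_\nsamples, \nsamples} \ge \tfrac12\Var\braces*{r_{\MSP}(\bX)} > 0$, and since this holds for every admissible sequence, $\fcart$ is high-dimensional inconsistent.

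For the \MSP\ direction, I would chain three earlier statements. First, Proposition~\ref{prop:genericity} shows that, off a Lebesgue-null set of coefficient vectors, $f^*_0$ is \stabmsp: a cancellation causing some restriction $f^*_0|_\cell$ to violate \MSP\ imposes a proper affine constraint on $\braces*{\alpha_S}_{S \in \fcoefs}$, and since there are only finitely many cells $\cell$ and finitely many failure patterns for a restriction, the exceptional set is a finite union of proper affine subspaces, hence Lebesgue-null, hence of probability zero under any absolutely continuous law. Second, Proposition~\ref{prop:stable_msp_and_SID} upgrades \stabmsp\ to the quantitative \SID\ condition~\eqref{eq:SID} for some $\lambda > 0$. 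Third, I would invoke the \SID-based estimation-error bound for CART (the forthcoming Theorem~\ref{thm:MSP_upper_bound}, which adapts \cite{chi2022asymptotic} and \cite{mazumder2024convergence} to the uniform binary setting using the minimum impurity decrease stopping rule~\eqref{eq:min_imp_dec}), giving $\objective\paren*{\fcart, f^*_0, \nfeats, \nsamples} = O(2^\sparsity/\nsamples)$ whenever $\nsamples = \Omega(2^\sparsity\log\nfeats)$. Taking, say, $\nfeats_\nsamples = \nsamples$ satisfies both $\log\nsamples = o(\nfeats_\nsamples)$ and the sample-size requirement for large $\nsamples$, while the risk bound vanishes as $\nsamples \to \infty$; hence $\fcart$ is high-dimensional consistent for $f^*_0$.

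Modulo the cited propositions, the genuinely new work is twofold: carrying out the genericity bookkeeping behind Proposition~\ref{prop:genericity} (pinning down exactly which linear constraints on $\braces*{\alpha_S}$ produce cancellations, ranging over all restrictions $\cell$ and all failure patterns), and verifying that the \SID-based CART analyses of \cite{chi2022asymptotic} and \cite{mazumder2024convergence}, written for continuous covariates, transfer to $\binspace$ under the uniform measure. I expect rechecking their impurity-decrease concentration and bias-control arguments for subcubes, together with the discrete-specific stopping and pruning accounting, to be the main hurdle; this is precisely what the next subsection is devoted to.
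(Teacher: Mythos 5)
Your proposal is correct and follows essentially the same route the paper takes: Theorem \ref{thm:non_msp_lower_bound} for the non-\MSP\ bullet (noting $\Var\{r_{\MSP}\} = \sum_{S\in\fcoefs\backslash\fcoefs_{\MSP}}\alpha_S^2 > 0$ a.s.\ and plugging in a fixed $\delta$), and the chain Proposition \ref{prop:genericity} $\Rightarrow$ \stabmsp\ $\Rightarrow$ Proposition \ref{prop:stable_msp_and_SID} $\Rightarrow$ \SID\ $\Rightarrow$ consistency for the \MSP\ bullet. One small clarification: Theorem \ref{thm:MSP_upper_bound} is stated in terms of the $\lambda$-stable MSP condition \eqref{eq:stab_msp} rather than \SID\ directly, but since \SID\ for some $\lambda>0$ (together with the fact that there are only finitely many effective cells, so $\Var\{f^*(\bX)\mid\bX\in\cell\}$ has a positive minimum over non-constant cells) implies $\lambda'$-stable MSP for some $\lambda'>0$, your invocation goes through unchanged.
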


\begin{remark}
    While \cite{syrgkanis2020estimation} obtained a family of upper bounds for CART in a binary covariate setting, their assumed condition (see Assumption 4.1 their paper) turns out to be stronger than the \SID~condition.
    Their assumption comprises a type of submodularity for the population impurity decrease values for a \emph{fixed} covariate, more precisely, that $\Delta(k;\cell) \geq \lambda \Delta(k;\cell')$, whenever $\cell' \subset \cell$ and for all $k \notin \cellfeatindices(\cell')$.
    Here
    \begin{equation}
        \Delta(k;\cell) \coloneqq \Corr^2\braces*{Y,X_k~|~\bX \in \cell}\Var\braces*{f^*(\bX)~|~\bX \in \cell}.
    \end{equation}
    For an example of an \SID~function that does not satisfy this assumption, consider $f^*(\bx) = x_1 + x_1x_2$, in which $\Delta\paren*{2,\binspace} = 0$ and $\Delta\paren*{2, \braces*{\bx \in \binspace \colon x_1=1}} = 1$.
\end{remark}

\begin{remark}
    The failure of the directed hypergraph structure $\mathcal{G}_{f^*}$ to fully reflect the statistical properties of the regression function $f^*$ under some circumstances is reminiscent of the potential for DAGs representing a Markov factorization of a distribution to fail to be faithful to its conditional independence structure \citep{uhler2013geometry}.
\end{remark}

\begin{remark}
    It is natural to ask whether our lower bounds can be extended to handle all $f^*$ that do not satisfy \stabmsp~(or equivalently the \SID~condition).
    This turns out to be impossible---there exist functions that do not satisfy \stabmsp, but for which high-dimensional consistency holds.
    For instance, consider $f^*(\bx) = x_1 + 2x_2 + 2x_1x_2 + x_2x_3$.
    Since $f^*|_{\braces*{x_1=-1}} = x_2x_3$, this does not satisfy \stabmsp, but on the other hand, CART would always first split on $x_2$ given enough samples, in which case this cancellation never occurs. Indeed, by reformulating CART as a sequential greedy approximation algorithm for convex optimization in a Hilbert space, \cite{klusowski2024large} showed high-dimensional consistency for additive models and arbitrary covariates without the \SID~condition.
\end{remark}

\subsection{Sharp Upper Bounds for CART}

The \SID~assumption does not always lead to sharp bounds.
Consider for instance the $\sparsity$th-order AND function $f^*(\bx) = \indicator\braces*{x_j = 1 \colon j =1,2,\ldots,\sparsity}$.
We may compute $\Corr^2\braces*{f^*(\bX),X_j} = (2^\sparsity-1)^{-1}$, which implies that $f^*$ does not satisfy \SID~with any $\lambda$ below this value. Note that the \SID~condition is used to guarantee that, for every cell $\cell$, there exists a potential split that would decrease the residual variance within $\cell$ by a constant factor of at most $1-\lambda$. 
\comment{red}{Combined with concentration of impurity decrease values, one can then form a recursive inequality of the form $J(l+1) \leq (1-\lambda/(1+\epsilon))J(l)$, where $J(l)$ denotes the squared bias of the CART tree grown to depth $l$ and $\epsilon$ is a finite-sample error. Iterating gives $J(l) \leq J(0)\exp(-\lambda l/(1+\epsilon))$, while the variance at depth $l$ is $O(2^l/n)$, up to poly-log factors in $n$ and $d$. Balancing bias and variance requires $\exp(-\lambda l / (1+\epsilon)) \approx 2^l/n$, i.e., $l \approx \log n / (\lambda/(1+\epsilon) + \log 2)$, in which case we obtain the optimized risk bound $O(n^{-\lambda / ((1+\epsilon)\log 2 + \lambda)})$.  
Substituting $\lambda = (2^\sparsity - 1)^{-1}$ shows that the exponent $\lambda / ((1+\epsilon)\log 2 + \lambda)$ is of the order $2^{-\sparsity}$, so the bound is trivial unless $\log_2 n = \Omega(2^\sparsity)$.}  
We will soon show that this is unnecessarily pessimistic, with the true sample complexity of the order $\log_2 \nsamples = 2\sparsity + o(1)$.

On the other hand, while the AND function has poor marginal signal strength at the root, the marginal signal strength improves as the tree gets deeper and more of the relevant covariates are selected.
Indeed, we have $\Corr^2\braces*{f^*(\bX),X_j~|~X_1=X_2=\cdots=X_{j-1}=1} = (2^{\sparsity-j}-1)^{-1}$.
By only quantifying the worst case signal strength, the \SID~condition is unable to capture and exploit this nuance.


To provide a sharper bound, we need to provide a new definition.
We say that $f^*$ has the \emph{$\lambda$-stable merged staircase property}, denoted $f^* \in \stabmsp(\lambda)$, if for any cell $\cell \subset \binspace$ such that $f|_\cell$ is not constant, we have
\begin{equation} \label{eq:stab_msp}
    \max_{k \in \coordindices\backslash \cellfeatindices(\cell)}\Cov^2\braces*{f^*(\bX),X_k~|~\bX \in \cell}2^{-|\cellfeatindices(\cell)\cap \suppset|} \geq \lambda.
\end{equation}
Compared with the SID formula \eqref{eq:SID}, this definition is stated in terms of the squared covariance and further modulates it with the base 2 exponential of the number of relevant covariates already split on.
This quantity more accurately captures the signal-to-noise ratio for the problem of covariate selection within $\cell$.
Indeed, rather than attempt to control the amount of bias reduction at each split, we will instead lower bound the probability of selecting a relevant covariate.
Formalizing this approach yields the following theorem.


\begin{theorem}
    \label{thm:MSP_upper_bound}
    Under Assumption \ref{assumption}, let $f_0^* \in \stabmsp(\lambda)$.
    Suppose $\varepsilon$ is sub-Gaussian with parameter $\sigma$ \citep{wainwright2019high}.
    Denote 
    \begin{equation}
        \tau \coloneqq 18(9\norm{f^*}_\infty^2 + \sigma^2)\paren*{(s+2)\log 3 + \log(2\nfeats\nsamples)}
    \end{equation}
    and suppose the training sample size satisfies $n > \frac{8\tau}{\lambda}$.
    Suppose further that we fit $\fcart$ with the stopping rule given by a minimum impurity decrease value $\gamma$ satisfying $\frac{\tau}{n} \leq \gamma < \Big(\sqrt{\frac{\lambda}{2}} - \sqrt{\frac{\tau}{n}}\Big)_+^2$.
    Such a value $\gamma$ exists given the lower bound on $\nsamples$.
    We then have the expected $L^2$ risk upper bound
    \begin{equation}
        \label{eq:msp_upper_expectation}
        \objective\paren*{\fgreedy, f^*_0,\nfeats,\nsamples} = O\paren*{\frac{2^{\sparsity}(\norm{f^*}_\infty^2 + \sigma^2)(\sparsity + \log \nsamples)}{n}}.
    \end{equation}
    In particular, $\fcart$ is high-dimensional consistent for $f_0^*$.
\end{theorem}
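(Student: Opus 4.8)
The plan is to show that, on a high-probability event, CART fit with the stopping rule~\eqref{eq:min_imp_dec} behaves exactly like an oracle tree of depth at most $\sparsity$: it splits only on relevant coordinates (those in $\suppset$) and stops precisely when the restriction of $f^*$ to the current cell has become constant. Granting this, the bias vanishes and only an estimation-error term of the claimed order survives. Concretely, write $\objective\paren{\fcart,f^*_0,\nfeats,\nsamples} = \E_{\bX,\data}\braces{(\bar Y_{\leaf(\bX)} - f^*(\bX))^2}$, where $\leaf(\bX)$ is the leaf of the fitted tree containing $\bX$, and control this by conditioning on a good event $\event$ on which every leaf $\leaf$ satisfies $\cellfeatindices(\leaf)\subseteq\suppset$.

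The key structural observation is about scaling. Since $\Imp(\cell;\data)\,\widehat\Corr^2\braces{Y,X_k\,|\,\bX\in\cell} = \widehat\Cov^2\braces{Y,X_k\,|\,\bX\in\cell}/\widehat\Var\braces{X_k\,|\,\bX\in\cell}$ and $\widehat\Var\braces{X_k\,|\,\bX\in\cell}\approx 1$, the quantity compared against $\gamma$ in~\eqref{eq:min_imp_dec} is $\approx (N(\cell)/\nsamples)\,\widehat\Cov^2\braces{Y,X_k\,|\,\bX\in\cell}$; and since $N(\cell)/\nsamples\approx 2^{-|\cellfeatindices(\cell)|}$ while $\widehat\Cov\braces{Y,X_k\,|\,\bX\in\cell}$ equals $\Cov\braces{f^*(\bX),X_k\,|\,\bX\in\cell}$ up to a fluctuation of order $\sqrt{(\norm{f^*}_\infty^2+\sigma^2)/N(\cell)}$, the noise floor of this scaled impurity decrease is $O(\tau/\nsamples)$ \emph{uniformly over the depth of $\cell$}. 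This is exactly why the $2^{-|\cellfeatindices(\cell)\cap\suppset|}$ weighting appears in~\eqref{eq:stab_msp}: along any path that has split only on relevant coordinates one has $|\cellfeatindices(\cell)\cap\suppset| = |\cellfeatindices(\cell)|$, so $\stabmsp(\lambda)$ forces $(N(\cell)/\nsamples)\max_k\Cov^2\braces{f^*(\bX),X_k\,|\,\bX\in\cell}\gtrsim\lambda$, and the admissible window $\tfrac{\tau}{\nsamples}\le\gamma<(\sqrt{\lambda/2}-\sqrt{\tau/\nsamples})_+^2$ for the threshold is precisely the gap between this depth-free signal level and the depth-free noise floor, nonempty once $\nsamples\gtrsim\tau/\lambda$.

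To make this rigorous I would place the good event on a data-independent family of cells. Let $\modelspace$ be the collection of all cells $\cell$ with $\cellfeatindices(\cell)\subseteq\suppset$ (so $\lvert\modelspace\rvert\le 3^\sparsity$), and let $\event$ be the event that, for every $\cell\in\modelspace$ and every $k\notin\cellfeatindices(\cell)$: $N(\cell)\ge\tfrac12\nsamples 2^{-|\cellfeatindices(\cell)|}$, $\widehat\Var\braces{X_k\,|\,\bX\in\cell}\ge\tfrac34$, and both $\widehat\Cov\braces{Y,X_k\,|\,\bX\in\cell}$ and $\bar Y_{\cell}$ deviate from their population counterparts $\Cov\braces{f^*(\bX),X_k\,|\,\bX\in\cell}$ and $\E[Y\,|\,\bX\in\cell]$ by at most $\sqrt{\tau/N(\cell)}$; the constant in $\tau$ is exactly what is needed for sub-Gaussian/Chernoff bounds plus a union bound over these $\le 3^\sparsity\nfeats$ events (and the $\le\nsamples$ possible values of $N(\cell)$) to give $\P(\event^c)=O(1/\nsamples)$. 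On $\event$, a top-down induction shows the fitted tree never leaves $\modelspace$: at a node $\cell\in\modelspace$, if $f^*|_\cell$ is constant then every $\Cov\braces{f^*(\bX),X_k\,|\,\bX\in\cell}=0$, so every scaled impurity decrease is at most $\tau/\nsamples\le\gamma$ and CART stops; if $f^*|_\cell$ is non-constant then, since all prior splits lie in $\suppset$, $\stabmsp(\lambda)$ forces some relevant coordinate to have scaled impurity decrease exceeding $(\sqrt{\lambda/2}-\sqrt{\tau/\nsamples})_+^2>\gamma$, while every irrelevant $k\notin\suppset$ has $\Cov\braces{f^*(\bX),X_k\,|\,\bX\in\cell}=0$ and scaled impurity decrease at most $\tau/\nsamples\le\gamma$, so CART does not stop and splits on a coordinate in $\suppset$. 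Hence on $\event$ each leaf $\leaf$ has depth $\lvert\cellfeatindices(\leaf)\rvert\le\sparsity$, $N(\leaf)\ge\tfrac12\nsamples 2^{-|\cellfeatindices(\leaf)|}$, and $f^*|_\leaf$ exactly constant.

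Assembling the risk is then routine: on $\event$ the bias is zero and the estimation error is $\sum_\leaf\P(\bX\in\leaf)\Var(\bar Y_{\leaf}) = O\bigl(2^\sparsity(\norm{f^*}_\infty^2+\sigma^2)(\sparsity+\log\nsamples)/\nsamples\bigr)$, using that there are at most $2^\sparsity$ leaves, $\Var\braces{Y\,|\,\bX\in\leaf}=O(\norm{f^*}_\infty^2+\sigma^2)$, $N(\leaf)\gtrsim\nsamples\,\P(\bX\in\leaf)$, and the high-probability control of the leaf means; on $\event^c$ the contribution is at most $\P(\event^c)$ times a crude $\mathrm{poly}(\nsamples)$ bound on $\E_\data[R(\fcart,f^*)\mid\event^c]$ from sub-Gaussian moments of the leaf labels, which is negligible. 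This gives~\eqref{eq:msp_upper_expectation}, and high-dimensional consistency follows. The main obstacle is the interplay of uniform concentration over all cells the data-adaptive tree could visit with the threshold argument in the induction: one must verify that a single fixed $\gamma$ makes CART stop exactly on cells where $f^*$ has become constant and split on a relevant coordinate everywhere else, uniformly over all depths. The chicken-and-egg issue — that which cells are "relevant" is itself decided by the algorithm — is handled by placing $\event$ on the data-independent family $\modelspace$ and closing the loop with the induction; the delicate point is matching constants so that the $2^{-|\cellfeatindices(\cell)\cap\suppset|}$ factor in $\stabmsp(\lambda)$ exactly cancels the $N(\cell)/\nsamples$ shrinkage, making both the signal bound and the noise bound depth-free and hence the window for $\gamma$ nonempty precisely under $\nsamples>8\tau/\lambda$.
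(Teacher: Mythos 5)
Your proposal follows essentially the same route as the paper's proof: the paper also places the good event on the data-independent family $\secondcellcollection = \braces{\cell : \cellfeatindices(\cell)\subset\suppset}$ (your $\modelspace$, size $3^\sparsity$), proves uniform concentration of the (square-root) impurity decreases, cell sample counts, and cell means over this family via a union bound (Lemmas \ref{lem:conc_cell_quantities}--\ref{lem:CART_invariants}), uses a minimality/induction argument on depth to show the fitted tree never leaves $\secondcellcollection$ and that $f^*$ is constant on every leaf, assembles the risk from the leaf-mean concentration, and handles the complementary event by truncation. The only cosmetic difference is normalization bookkeeping: the paper works directly with $\ridhat(k;\cell,\data) = \sqrt{p_k(1-p_k)}(\bar Y_R - \bar Y_L)$ rather than separately controlling $\widehat\Cov$ and $\widehat\Var\braces{X_k\mid\cell}$, and the final truncation step in the paper integrates a Talagrand-type tail bound on $\max_i|Y_i|$ rather than invoking a generic polynomial bound — the latter being the one point in your sketch where "crude $\mathrm{poly}(n)$" would need to be sharpened to $O(\log n)$-order control to beat the $\P(\event^c)=O(1/n)$ factor, but this is a presentational detail, not a gap.
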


\begin{proof}
    See Appendix \ref{sec:upper_bounds_proofs}.
\end{proof}

Returning to the example of the AND function, it is easy to check that we may set $\lambda$ in \eqref{eq:stab_msp} to be $2^{-2\sparsity}$.
Plugging this into Theorem \ref{thm:MSP_upper_bound} shows that \eqref{eq:msp_upper_expectation} holds whenever $\nsamples = \tilde{\Omega}(\sparsity2^{2\sparsity})$ in agreement with what was mentioned earlier.

\section{Robust Lower Bounds for Greedy Trees} 
\label{sec:robust_lower_bounds}

In this section, we will prove lower bounds for \MSP~functions that depend on how far they are from being non-\MSP.
Since being non-\MSP~is a somewhat fragile property and can be destroyed by small perturbations of the zero coefficients of the function, we call these robust lower bounds. 
In doing so, we will also strengthen the connection between the Fourier graph of a function $f^*$ and its estimation properties using greedy trees and ensembles.
These lower bounds will provide further performance gaps between greedy trees and ERM trees.
Finally, comparing the lower bounds with the results from the previous section enable us to get a sense of how sharp both sets of bounds are.

To motivate the theorem and our proof technique, we first revisit the $s$th order AND function discussed in the previous section.
We have already seen that $\tilde{O}(\sparsity2^{2\sparsity})$ samples are sufficient for CART to achieve small estimation error.
While a much smaller number than what was previously achievable using the \SID~assumption, it is still larger than the upper bound of $\tilde{O}(2^s)$ achievable using ERM trees (Proposition \ref{prop:erm_upper_bound}).
Nonetheless, the weak upper bound turns out to be relatively sharp.
To see this, we write
\begin{equation}
        \indicator\braces*{x_j = 1 \colon j =1,2,\ldots,\sparsity} = \prod_{j=1}^\sparsity \paren*{\frac{x_j + 1}{2}}
         = 2^{-\sparsity} + \sum_{j=1}^\sparsity  2^{-\sparsity} x_j + \text{higher order terms}.
\end{equation}
From this representation, we see that all first order terms have small coefficients when $\sparsity$ is relatively large, and if removed, will turn the function into a non-\MSP~one.

This argument can be made rigorous via the technique of coupling.
Let $f^*$ denote the AND function and let define $\check f^*$ via $\check f^*(\bx) = f^*(\bx) - \sum_{j=1}^\sparsity  2^{-\sparsity} x_j$.
Now define a dataset $\modifieddata \coloneqq \braces*{(\bX_i,\check Y_i)}_{i=1}^\nsamples$ with $\check Y_i = \check f^*(\bX_i) + \varepsilon_i$ for $i=1,2,\ldots,\nsamples$.
If $\varepsilon_1,\varepsilon_2,\ldots,\varepsilon_\nsamples \sim \mathcal{N}(0,\sigma^2)$, it is easy to check that the Kullback–Leibler divergence between the original and modified datasets satisfies
\begin{equation}
    \KL\paren*{\data\| \modifieddata} = \frac{\sparsity2^{-2\sparsity}\nsamples}{2\sigma^2}.
\end{equation}
Using Pinsker's inequality, this gives a bound on the total variation distance, which we use to argue that there is a coupling between the two datasets with a probability at least $1-\sqrt{\frac{s2^{-2s}n}{4\sigma^2}}$ event on which they are equal.
The probability of this event is thus at least $1/2$ unless $\nsamples = \Omega(2^{2s}/s)$, and on this event, a lower bound on the estimation error for $\check f^*$ translates to one on the estimation error for $f^*$.

This coupling technique can be extended to handle other \MSP~functions.
Considering the Fourier graph of the AND function, we see that the vertices $\braces*{1},\braces{2},\ldots,\braces{\sparsity}$ form a \emph{vertex cut} whose removal disconnects the vertex $\emptyset$ from the rest of the graph.
More generally, if we can identify a vertex cut $\cutset$ for the Fourier graph of some function $f^* = \sum_{S \in \fcoefs} \alpha_S\monomial_S$, its weight $w(\cutset)$ quantifies the $L^2$ distance between $f^*$ and the non-\MSP~function $f^*_{-\cutset} \coloneqq \sum_{S \in \fcoefs\backslash\cutset} \alpha_S\monomial_S$.
Any estimation error lower bound for $f^*_{-\cutset}$ can therefore be translated into a lower bound for $f^*$ on the probability at least $1-\sqrt{\frac{w(\cutset)\nsamples}{2\sigma^2}}$ event that datasets generated with these regression functions can be coupled.

To state this more formally, we first note that the Fourier graph of $f^*_{-\cutset}$ is equivalent to the induced subgraph of $\fcoefs\backslash\cutset$ in $\mathcal{G}_{f^*}$, denoted $\mathcal{G}_{f^*}[\fcoefs\backslash\cutset]$.
Denote the connected component of the empty set $\emptyset$ in $\mathcal{G}_{f^*}[\fcoefs\backslash\cutset]$ as $\fcoefs_{-\cutset,\MSP}$ and set $\suppset_{-\cutset,\MSP} \coloneqq \cup_{S \in \fcoefs_{-\cutset,\MSP}} S$.
Furthermore, let $\fcoefs_{-\cutset,-\MSP} \coloneqq \fcoefs\backslash\paren*{\fcoefs_{-\cutset,\MSP}\cup\cutset}$ denote the subcollection comprising all vertices disconnected from $\emptyset$ in $\mathcal{G}_{f^*}[\fcoefs\backslash\cutset]$.
We have the following elaboration on Theorem \ref{thm:non_msp_lower_bound}.

\begin{theorem}
    \label{thm:robust_lower_bounds}
    Under Assumption \ref{assumption}, suppose $\varepsilon_1,\varepsilon_2,\ldots,\varepsilon_\nsamples \sim_{\text{i.i.d.}} \mathcal{N}(0,\sigma^2)$, and let $\mathcal{G}_{f_0^*}$ be the Fourier graph of $f_0^*$ with weight function $w$.
    Let $\cutset \subset \mathcal{S}$ be a subset of vertices of $\mathcal{G}_{f_0^*}$.
    Let $\traversalsize$ be the minimum size of a traversal $\traversal$ for $\fcoefs_{-\cutset,-\MSP}$ such that $\traversal \subset \coordindices \backslash \suppset_{-\cutset,\MSP}$ and let $s_{-\cutset,\MSP} \coloneqq \abs*{\suppset_{-\cutset,\MSP}}$.
    For any $0 < \delta < 1$, suppose
    \begin{equation}
        \log_2 n \leq \min\braces*{{\frac{\delta\paren*{\nfeats - s_{-\cutset,\MSP} - \traversalsize}}{2\traversalsize} - 2},\; \log_2\frac{\delta^2\sigma^2}{w(\cutset)} - 1}.
    \end{equation}
    Then the expected $L^2$ risk of any greedy tree model satisfies
    \begin{equation}
        \objective\paren*{\fgreedy, f^*_0,\nfeats,\nsamples} \geq (1-\delta)w\paren*{\fcoefs_{-\cutset,-\MSP}}.
    \end{equation}
\end{theorem}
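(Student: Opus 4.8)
The plan is to combine the coupling argument sketched in the surrounding text with the non-\MSP~lower bound of Theorem \ref{thm:non_msp_lower_bound}. Concretely, I would first form the ``cut-down'' regression function $f^*_{-\cutset} \coloneqq \sum_{S \in \fcoefs\backslash\cutset} \alpha_S\monomial_S$, which (by construction, and by the definition of a vertex cut) is a non-\MSP~function whose Fourier graph equals the induced subgraph $\mathcal{G}_{f^*}[\fcoefs\backslash\cutset]$. Since the $\monomial_S$ are orthonormal in $L^2(\operatorname{Unif})$, the $L^2$ distance between $f^*$ and $f^*_{-\cutset}$ is exactly $\sqrt{w(\cutset)}$, and under Gaussian noise the corresponding datasets $\data$ (generated with $f^*$) and $\modifieddata$ (generated with $f^*_{-\cutset}$, same $\bX_i$'s and $\varepsilon_i$'s) satisfy $\KL(\data\|\modifieddata) = n\,w(\cutset)/(2\sigma^2)$ by the standard product-Gaussian KL formula. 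Pinsker's inequality then gives total variation distance at most $\sqrt{n\,w(\cutset)/(4\sigma^2)}$, so a maximal coupling makes $\data = \modifieddata$ with probability at least $1 - \sqrt{n\,w(\cutset)/(4\sigma^2)}$; under the hypothesis $\log_2 n \leq \log_2(\delta^2\sigma^2/w(\cutset)) - 1$, this probability is at least $1 - \delta/\sqrt 2 \geq 1 - \delta$ (up to adjusting constants).

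Next I would apply Theorem \ref{thm:non_msp_lower_bound} to $f^*_{-\cutset}$. The \MSP~residual of $f^*_{-\cutset}$ is $r_{\MSP} = \sum_{S \in \fcoefs_{-\cutset,-\MSP}} \alpha_S\monomial_S$, with $\Var\{r_{\MSP}(\bX)\} = w(\fcoefs_{-\cutset,-\MSP})$ by orthogonality; the relevant traversal parameter is $\traversalsize$ as defined in the statement, and the relevant ambient support is $s_{-\cutset,\MSP}$. The sample-size condition $\log_2 n \leq \delta(\nfeats - s_{-\cutset,\MSP} - \traversalsize)/(2\traversalsize) - 2$ is (up to the factor of $2$, which I would absorb by running the argument with $\delta/2$) exactly the hypothesis of Theorem \ref{thm:non_msp_lower_bound} applied to $f^*_{-\cutset}$, so that theorem yields $\objective(\fgreedy, f^*_{-\cutset,0}, \nfeats, \nsamples) \geq (1-\delta/2)\,w(\fcoefs_{-\cutset,-\MSP})$ — but this is the risk when the algorithm is run on $\modifieddata$. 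On the coupling event $\{\data = \modifieddata\}$, the greedy tree fitted to $\data$ is literally identical to the one fitted to $\modifieddata$, so its predictions agree; hence for the purpose of lower-bounding the risk against $f^*$ we may, on this event, substitute the $\modifieddata$-trained tree. The remaining task is to transfer a lower bound on $R(\hat f_{\modifieddata}, f^*_{-\cutset})$ into one on $R(\hat f_{\data}, f^*)$, which I would do by revisiting the query-path decomposition of Lemma \ref{lem:covariate_selection_tree}: the key structural fact is that whenever the query path of $\bx$ fails to split on any coordinate in the traversal, $\hat f(\bx)$ is constant in the block of coordinates carrying $r_{\MSP}$, and then $\E_{\bX}[(\hat f(\bX) - f^*(\bX))^2 \mid \text{rest}] \geq \Var\{r_{\MSP}(\bX)\} = w(\fcoefs_{-\cutset,-\MSP})$ — this inequality is insensitive to which of $f^*$ or $f^*_{-\cutset}$ is the target, because the two differ only by terms supported on $\cutset$, which live in the conditioned-on coordinates. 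Putting the pieces together: with probability at least $1-\delta$ (coupling) the traversal is furthermore unsplit with high probability (from the Theorem \ref{thm:non_msp_lower_bound} machinery, run on $\modifieddata$), and on the intersection of these events the risk against $f^*$ is at least $w(\fcoefs_{-\cutset,-\MSP})$; taking expectations and collecting the $\delta$-losses gives the claimed $(1-\delta)w(\fcoefs_{-\cutset,-\MSP})$ bound after a constant rescaling of $\delta$.

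I expect the main obstacle to be the bookkeeping in the final transfer step rather than any single hard estimate: one must be careful that the query-path/covariate-selection event (which is what Theorem \ref{thm:non_msp_lower_bound} actually controls, via the coupling to randomly-splitting processes of Lemma \ref{lem:xor_feature_selection_prob} and its non-\MSP~generalization) is defined relative to the dataset $\modifieddata$, and then argue that on $\{\data = \modifieddata\}$ this event coincides with the analogous event for $\data$. A secondary subtlety is that the two sources of failure probability — $\delta/\sqrt 2$ from Pinsker, and the $\delta$ (or $\delta/2$) from the covariate-selection bound — must be combined without blowing up the constant, which is handled cleanly by running the covariate-selection half of the argument at tolerance $\delta/2$ and the coupling half so that $\P(\data \neq \modifieddata) \leq \delta/2$; this is exactly why the two terms inside the $\min$ in the hypothesis carry the factors $\delta/2$ and $\delta^2/2$ (equivalently the ``$-2$'' and ``$-1$'' offsets). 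The ensemble version (if desired) would follow the same route using Lemma \ref{lem:covariate_selection_ensemble} in place of Lemma \ref{lem:covariate_selection_tree}, at the cost of the extra factor $\kappa$, though the statement as given only asserts the single-tree bound.
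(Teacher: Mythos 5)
Your plan follows essentially the same route as the paper: form the cut-down function $f^*_{-\cutset}$, bound $\KL(\data\|\modifieddata) = n\,w(\cutset)/(2\sigma^2)$, apply Pinsker and a maximal coupling, and then combine the feature-selection bound for the non-\MSP~function $f^*_{-\cutset}$ with the query-path risk decomposition. The paper packages these steps as Lemma~\ref{lem:nonmsp_robst_feature_selection_prob} (the vertex-cut extension of Lemma~\ref{lem:nonmsp_feature_selection_prob}, combining the coupling with the earlier random-permutation analysis) followed by a direct application of Lemma~\ref{lem:covariate_selection_tree_v2} to the \emph{original} $f^*$ with $\fcoefs' = \fcoefs_{-\cutset,-\MSP}$. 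This sidesteps the ``transfer step'' that worries you: because Lemma~\ref{lem:nonmsp_robst_feature_selection_prob} bounds $\P_\data\{J(\bx;\data,\Theta)\cap\traversal\neq\emptyset\}$ for the original dataset $\data$ (via $\P\{J(\bx;\modifieddata,\Theta)\cap\traversal\neq\emptyset\} + \P\{\modifieddata\neq\data\}$), there is no need to first prove a risk bound against $f^*_{-\cutset}$ and then convert it. Your instinct to ``revisit the query-path decomposition'' is exactly what the paper does; your initial detour through Theorem~\ref{thm:non_msp_lower_bound} as a black box is just a slightly less direct organization of the same ideas.

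One justification in your transfer argument is incorrect as stated, though the conclusion it supports is still true. You claim that $f^*$ and $f^*_{-\cutset}$ ``differ only by terms supported on $\cutset$, which live in the conditioned-on coordinates.'' That is not so: in the order-$\sparsity$ AND example, $\cutset = \{\{1\},\ldots,\{\sparsity\}\}$ and the minimum traversal is $\traversal = [\sparsity]$, so every $\cutset$-term is supported \emph{inside} $\traversal$, not in $\coordindices\setminus\traversal$. The argument survives because one does not need the $\cutset$-terms to vanish after conditioning --- one only needs the one-sided inequality
\begin{equation}
\E\bigl\{\Var\{f^*(\bX)\mid\bX_{\coordindices\backslash\traversal}\}\bigr\} = \sum_{S:\,S\cap\traversal\neq\emptyset}\alpha_S^2 \;\geq\; \sum_{S\in\fcoefs_{-\cutset,-\MSP}}\alpha_S^2 = w(\fcoefs_{-\cutset,-\MSP}),
\end{equation}
which holds because $|\traversal\cap S|\geq 2$ for $S\in\fcoefs_{-\cutset,-\MSP}$, and any additional $\cutset$-terms that hit $\traversal$ only enlarge the left side. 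Finally, your reading of the two ``$-2$'' and ``$-1$'' offsets is a bit off: the ``$-2$'' in the first condition is inherited unchanged from the expected query-path length bound (Lemma~\ref{lem:bound_for_tree_depth}) and is present already in Theorem~\ref{thm:non_msp_lower_bound}; it is the extra factor of $\tfrac{1}{2}$ in the $2\traversalsize$ denominator, together with the ``$-1$'' (i.e.\ $n \leq \delta^2\sigma^2/(2w(\cutset))$), that implements the $\delta$-budget split between the permutation term and the Pinsker term.
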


\begin{proof}
    See Appendix \ref{sec:robust_lower_bounds_proofs}.
\end{proof}

As seen from the AND function, the sample complexity bound in Theorem \ref{thm:robust_lower_bounds} can be relatively sharp.
This is not always the case however.
Consider, for instance, $f^* = \sum_{j=1}^{\sparsity - 1} \monomial_{\braces*{1,2,\ldots,j}} + \alpha \monomial_{\braces*{1,2,\ldots,\sparsity}} + \sum_{j=\sparsity+1}^{2\sparsity} \monomial_{\braces*{1,2,\ldots,j}}$.
Applying Theorem \ref{thm:robust_lower_bounds} to this function gives a sample complexity lower bound of $\min\braces*{\Omega(1/\alpha^2),\exp(\Omega(\nfeats))}$.
On the other hand, we have
\begin{equation}
    \max_{k \in \braces*{s,s+1,\ldots,\nfeats}}\Cov^2\braces*{f^*(\bX),X_k~|~X_1,X_2,\ldots,X_{s-1}}2^{-s} = \alpha^2 2^{-s},
\end{equation}
which implies that the sample complexity upper bound from Theorem \ref{thm:MSP_upper_bound} is $\Omega(2^s/\alpha^2)$.
One may show that this upper bound is sharp---due to the structure of $f^*$, with enough samples, CART will only attempt to split on $x_s$ at depth $s$, in which case the sample size available for determining this split is approximately $n/2^s$.
This nuance is not captured by our lower bound, yet any attempt to address it seems to require sacrificing some aspect of the generality of our result.
We hence leave this effort to future work.

\begin{remark}
    Since \eqref{eq:nonmsp_ensemble_lower_bound}, the lower bounds for greedy tree ensembles, requires a bounded noise assumption, it does not immediately extend to a robust lower bound, which requires a Gaussian noise assumption.
    This can be overcome by conditioning on the highly probable event on which $\max_{1 \leq i \leq \nsamples}\abs*{Y_i} \leq C\sigma\sqrt{\log\nsamples}$ for some universal constant $C$.
\end{remark}

\section{Extensions of Lower Bounds}
\label{sec:extensions}

The results in our paper have been presented in the setting of binary covariates under the uniform distribution.
Nonetheless, our lower bound proof techniques also apply more generally, which we illustrate in this section.
First, recall that our strategy has been to (i) derive an upper bound on the probability of selecting relevant covariates (c.f., Lemma \ref{lem:xor_feature_selection_prob}) and then (ii) turn this into a lower bound on estimation error (c.f., Lemma \ref{lem:covariate_selection_tree} and Lemma \ref{lem:covariate_selection_ensemble}).
The arguments for the second step are very general and depend only on the independence between the relevant covariates and the irrelevant covariates.
As such, aside from this consideration, the barrier to any extension lies with the the first step.
With this in mind, we illustrate two extensions of the bounds to continuous covariates.
That is, we study the setting $\xspace = \ctsspace$, $\xmeasure = \operatorname{Unif}\paren*{\ctsspace}$.
Note also that in the continuous setting, CART and RFs are invariant to rescaling of the covariate values, such that any analysis of the uniform distribution immediately extends to distributions with continuous densities and independent covariates.

\begin{example}
    Consider the original CART and RF algorithms \citep{friedman2001elements}.
    The XOR function can be extended to continuous covariates as a step function:
    \begin{equation}
        f^*(\bx) = \indicator\braces*{x_1x_2 \geq 1/2} - \indicator\braces*{x_1x_2 < 1/2}.
    \end{equation}
    Similar to discrete covariate setting, there is marginal signal bottleneck, and we may completely imitate the proof of Lemma \ref{lem:xor_feature_selection_prob}, except that we can no longer use the results from Appendix \ref{sec:query_path_length} to bound the query path length.
    We can enforce this by constraining the algorithm to make relatively balanced splits on irrelevant covariates (i.e., whenever $\cell_L$ and $\cell_R$ are the children resulting from a chosen split, the log ratio $\log\paren*{\xmeasure(\cell_L)/\xmeasure(\cell_R)}$ has to be bounded with high probability).
    With this, we can obtain 
    an estimation error lower bound for the CART and RF estimators whenever $\log_2\nsamples \leq C\nfeats$ for some universal $C$.
\end{example}

Observe that the more general lower bounds in Theorem \ref{thm:non_msp_lower_bound} and Theorem \ref{thm:robust_lower_bounds} also generalize, with the modification that each monomial $\monomial_S$ is redefined as 
\begin{equation}
    \monomial_S(\bx) \coloneqq \prod_{j \in S} 2^{-1/2}\paren*{\indicator\braces*{x_j \geq 1/2} - \indicator\braces*{x_j < 1/2}}.
\end{equation}
However, unlike the binary covariate setting in which these monomials form a basis for the entire space of $L^2$ functions, these are but a small subset of $L^2\paren*{\ctsspace}$, which makes the theory less satisfying.

On the other hand, we note that $L^2\paren*{\ctsspace}$ possesses a wavelet basis, which is constructed as follows:
For each depth $\depth=0,1,2,\ldots$, for $\idx=1,2,\ldots,2^{\depth}$, define the univariate function
\begin{equation}
    \wavelet_{\depth,\idx}(x) \coloneqq - 2^{\depth/2}\indicator\braces*{\frac{m-1}{2^{l}} < x \leq \frac{m-1/2}{2^{l}}} + 2^{\depth/2}\indicator\braces*{\frac{m-1/2}{2^{l}} < x \leq \frac{m}{2^{l}}}.
\end{equation}
As we vary $l$ and $m$, this forms the Haar wavelet basis for $L^2\paren*{[0,1]}$, and we can extend it to a basis for $L^2\paren*{\ctsspace}$ by taking products.
Just as the Fourier basis gives a convenient basis for analyzing CART when applied to binary covariates, the product Haar wavelet basis is convenient for analyzing dyadic CART, whose splits are constrained to be at the midpoint of each side of a given cell.
Indeed, the impurity decrease for splitting on a covariate $X_k$ in a cell $\cell$ can be written as $\widehat\Cov^2\braces*{Y,\wavelet_{l,m}(X_k)~|~\bX \in \cell}$ for an appropriate choice of $(l,m)$.

\begin{example}
    Consider the sine function $f^*(\bx) = \sin(\pi x_1)$.
    One can show that for any cell $\cell$, unless $\cell$ has already split on $x_1$, $f^*(\bX)$ is conditionally independent of $\wavelet_{0,1}(X_1)$ in $\cell$.
    Using a modification of our coupling technique, one can therefore show that dyadic CART, when fitted to data generated from this function, will make random splits.
    We therefore obtain a constant lower bound for the estimation error of dyadic CART and RF estimators whenever $\log_2\nsamples \leq C\nfeats$ for some universal $C$.
\end{example}

To see how far we can push this result, we next construct the equivalent of the Fourier graph.
To this end, we first index each basis element using a collection of tuples, i.e., given $\Lambda \coloneqq \braces*{(k_j,l_j,m_j)\colon j=1,2,\ldots,r}$, define
$\monomial_{\Lambda}(\bx) \coloneqq \prod_{j=1}^r \wavelet_{l_j,m_j}(x_{k_j})$.
Now given any function $f^* \in L^2\paren*{\ctsspace}$, we may write it as $f^*\coloneqq  \sum_{\Lambda \in \mathfrak{L}} \alpha_\Lambda \monomial_\lambda$.
Define the directed hypergraph $\mathcal{G}$ with vertex set $\mathfrak{L}$ and put an edge $\braces*{\Lambda^{(1)},\Lambda^{(2)},\ldots,\Lambda^{(q)}} \to \Lambda^{(q+1)}$ if the following hold: (i) the union $\cup_{j=1}^q \Lambda^{(j)}$ contains at most one entry for each covariate, (ii) $\abs*{\Lambda^{(q+1)}\backslash \cup_{j=1}^q \Lambda^{(j)}} \leq 1$, and (iii) if $\Lambda^{(q+1)}\backslash \cup_{j=1}^q \Lambda^{(j)} = \braces*{(k,l,m)}$, then $(k,l-1,\lceil m/2 \rceil) \in \cup_{j=1}^q \Lambda^{(j)}$.
Similar to the binary case, we define a weight function via $w(\Lambda) \coloneqq \alpha_\Lambda^2$ and call $(\mathcal{G},w)$ the \emph{wavelet graph} of $f^*$.

The natural analogue of \MSP~is connectedness in the wavelet graph and one would expect the absence of this condition to imply high-dimensional inconsistency.
Unfortunately, disconnectedness alone does not imply the conditional independence we need for our previous proofs to go through.
For instance, if we set $f^*(\bx) = \wavelet_{1,1}(x_1)$, then $\braces*{(1,1,1)}$ is disconnected from $\emptyset$ in the wavelet graph.
While this implies that the random variable $\psi_{0,1}(X_1)$, which corresponds to a first split on $x_1$, is uncorrelated with the response variable $Y$ in any cell, the two variables are still clearly dependent.
We leave how to address this issue to future work.

\comment{red}{Finally, we note that while our current analysis is tailored to the uniform distribution and relies on the structure of Fourier or wavelet bases, recent work in computational learning theory has pursued more distribution-agnostic complexity characterizations. For example, \citet{joshi2024on} analyze the statistical and gradient query complexity of learning sparse functions under general product measures.}

\section{Overcoming Marginal Signal Bottleneck}


The proof of our lower bounds isolates a source of the potential poor performance of greedy trees:
It stems from their use of marginal projections of the dataset (c.f., Equation \eqref{eq:greedy_criterion}) to determine splits instead of using its joint distribution.
While this increases computational efficiency, it runs into problems when large regression function heterogeneity is masked by weak marginal signal. 
We called this problem \emph{marginal signal bottleneck}.
In this section, we discuss the ability of other classes of regression tree algorithms to avoid the marginal signal bottleneck problem and the computation price each incurs in doing so.

Within the family of greedy algorithms, \cite{loh2002regression}'s GUIDE algorithm tests for pairwise interactions when selecting splits and is thus able to estimate the XOR function.
On the other hand, it is unable to estimate higher-order monomials.
\comment{red}{One can generalize this idea in a binary feature setting to permit splits on $k$-way monomials $\prod_{j \in S} x_j$ with $|S|\leq k$. 
Such an algorithm can indeed learn leap-$k$ regression functions (e.g. order $k$ monomials, see also Appendix \ref{sec:related_work}), under a genericity condition on Fourier coefficients, with a much lower sample complexity of $O(k\log d)$, but at a much higher computational cost of $O(nd^k)$ per split.
However, for functions beyond leap-$k$ (e.g., monomials of order larger than $k$), all conditional correlations of degree $\leq k$ vanish, so consistency still requires $\exp(\Omega(d))$ samples.}
Alternatively, CART can be modified to allow for splits that are oblique, that is, along linear hyperplanes $\sum_{j=1}^d \beta_j x_j $ depending on multiple covariates \citep{cattaneo2022convergence}.
This creates fewer opportunities for signal bottleneck---for instance, the XOR function in continuous covariates is no longer an issue.
However, optimizing over oblique splits is vastly more computationally intensive and also reduces the interpretability of the resulting model. 

Bayesian CART \citep{chipman1998bayesian} and its ensemble version, Bayesian Additive Regression Trees (BART) \citep{chipman2010bart} put regression trees in a Bayesian framework and use a Metropolis-Hastings strategy to sample from its posterior.
Each MCMC iteration can be thought of as a stochastic update of the tree structure and can potentially escape local minima in which deterministic methods may get stuck.
However, \cite{tan2024computational} was able to show that the sampler mixes slowly and in fact becomes increasingly greedy as the training data increases, leading to problems with estimating the XOR function.


Due to improvements in hardware as well as generic optimization solvers, directly solving the ERM problem for decision trees has become much more feasible than before and is currently an active area of research.
Various approaches have been formulated, including mixed integer linear programming \citep{bertsimas2017optimal,verwer2019learning,zhu2020scalable,aghaei2021strong} and dynamic programming \citep{hu2019optimal,lin2020generalized,aglin2020learning,demirovic2022murtree,zhang2023optimal}.
Most of these works focus solely on algorithms for classification trees, with \cite{zhang2023optimal} the only work we are aware of that as introduced an algorithm for optimal regression trees. On the theoretical front, it is interesting to examine if this class of algorithms can provably overcome the limitations of greedy trees and their ensembles, under polynomial runtime constraints.

\section{Simulations}
\label{sec:simulations}

To supplement our theory, we perform a simulation study in which we generate training data according to \eqref{eq:nonparametric_regression_model}, with the covariates distributed uniformly on $\braces{\pm 1}^d$ and the regression function chosen to be $f^*(\bx) = x_1x_2 + \alpha x_1$.
This corresponds to the XOR function when $\alpha = 0$, while setting nonzero $\alpha$ allows for some amount of marginal signal.
We vary $d \in \braces{10, 20, 30, 40, 50}$, $\log_2 n \in \braces{7,9,11,13,15}$, $\alpha \in \braces{0, 0.02}$, and $\sigma^2 \in \braces{0, 0.1}$.
We use two metrics to evaluate the performance of CART on each dataset:
\begin{itemize}
    \item Expected mean squared error (MSE) $R(\fcart, f^*)$, where $\fcart$ is fitted with a minimum impurity decrease stopping criterion \eqref{eq:min_imp_dec}. Here, the threshold $\gamma$ is selected based on a held-out validation set.
    \item Split coverage of each feature $X_k$ by a CART tree $\tree$ that is grown to purity, that is, with no stopping criterion.
    Split coverage is defined as:
    \begin{equation} \label{eq:coverage}
    \text{coverage}(k,\tree) \coloneqq \sum_{\leaf \in \partition(\tree)} 2^{-\text{depth}(\leaf)}\indicator\braces*{\text{path from root to}~\leaf~\text{splits on}~X_k}.
\end{equation}
For the uniform measure, $\text{coverage}(k,\tree)$ computes the probability that the root-to-leaf path in $\tree$ followed by a random query point $\bX$ splits on the feature $X_k$.
\end{itemize}

Selected results from the experiment, averaged over 200 replicates, are shown in Figure \ref{fig:heatmap_mse} and Figure \ref{fig:heatmap_coverage}.
In each panel of Figure \ref{fig:heatmap_mse}, we plot how expected MSE varies across the grid of $n$ and $d$ values, stratifying on all other hyperparameters of the data generating process.
In both panels, we focus on the noiseless setting ($\sigma^2 = 0$), under which the null model satisfies $R(0,f^*) = 1 + \alpha^2$.
The left panel corresponds to the XOR setting ($\alpha=0$), while the right panel corresponds to $\alpha=0.02$.
Note that while $d$ varies on a linear scale, $n$ varies on a logarithmic scale.
Nonetheless, in the left panel, we see that the expected MSE values increase along the bottom-left to top-right diagonal, reflecting that the sample complexity required to ``learn'' the regression function has exponential dependence on the ambient dimension, which is in agreement with Proposition \ref{prop:xor_lower_bound}.
This exponential dependence disappears in the right panel, even though the energy of the additional term, $0.02X_1$, is much smaller than that of the main term $X_1X_2$.
This agrees with the high-dimensional consistency guarantee provided in Theorem \ref{thm:MSP_upper_bound}.
Figure \ref{fig:heatmap_coverage} similarly plots how split coverage varies across the grid of $n$ and $d$ values, fixing $\alpha = 0$ and $\sigma^2 = 0$.
For this figure, however, the left panel corresponds to the coverage of $X_2$, a relevant feature, while the right panel corresponds to the coverage of $X_3$, which is an irrelevant feature.
In the left panel, we see that coverage of $X_2$ has a decreasing trend across the bottom-left to top-right diagonal, which again shows that the sample complexity required for reliably performing feature selection grows exponentially in $d$, in agreement with Lemma \ref{lem:xor_feature_selection_prob}.
The right panel gives reference values for coverage of an irrelevant feature. One may observe that these values are upper bounded by $\log_2 n/d$.
Additional simulation results, including those for $\sigma^2 = 0.01$, are shown in Appendix \ref{sec:sims_appendix}.

\begin{figure}
    \centering
    \includegraphics[width=\linewidth]{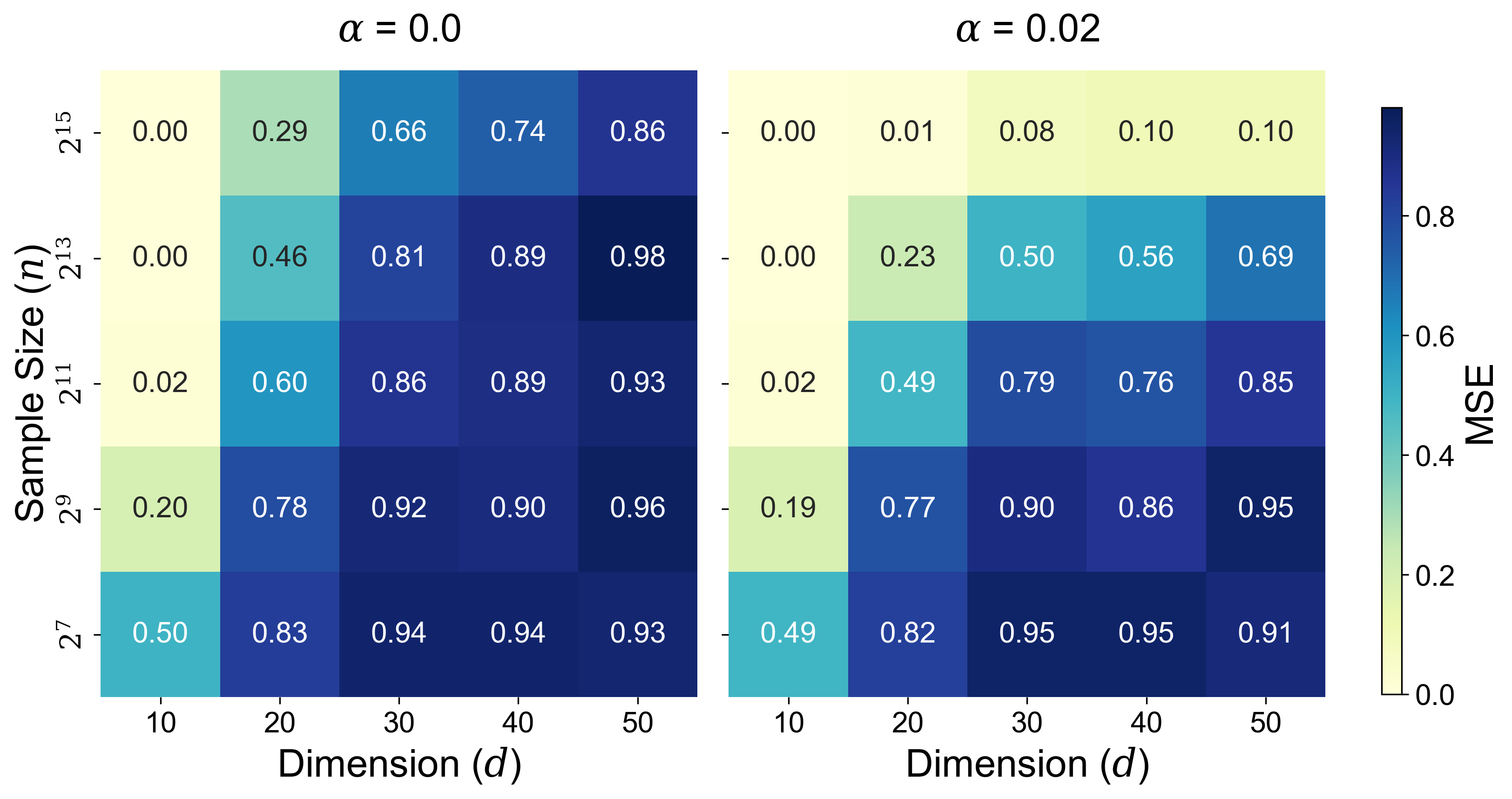}
    \caption{Expected MSE $R(\fcart,f)$ for CART fitted to $Y = X_1X_2 + \alpha X_1$, with $\alpha = 0$ (left panel) and $\alpha = 0.02$ (right panel). CART is fitted with a minimum impurity decrease stopping criterion. We observe high-dimensional consistency in the right panel since the true function satisfies \MSP and lack of high-dimensional consistency in the left panel since the true function does not satisfy \MSP.}
    \label{fig:heatmap_mse}
\end{figure}

\begin{figure}
    \centering
    \includegraphics[width=\linewidth]{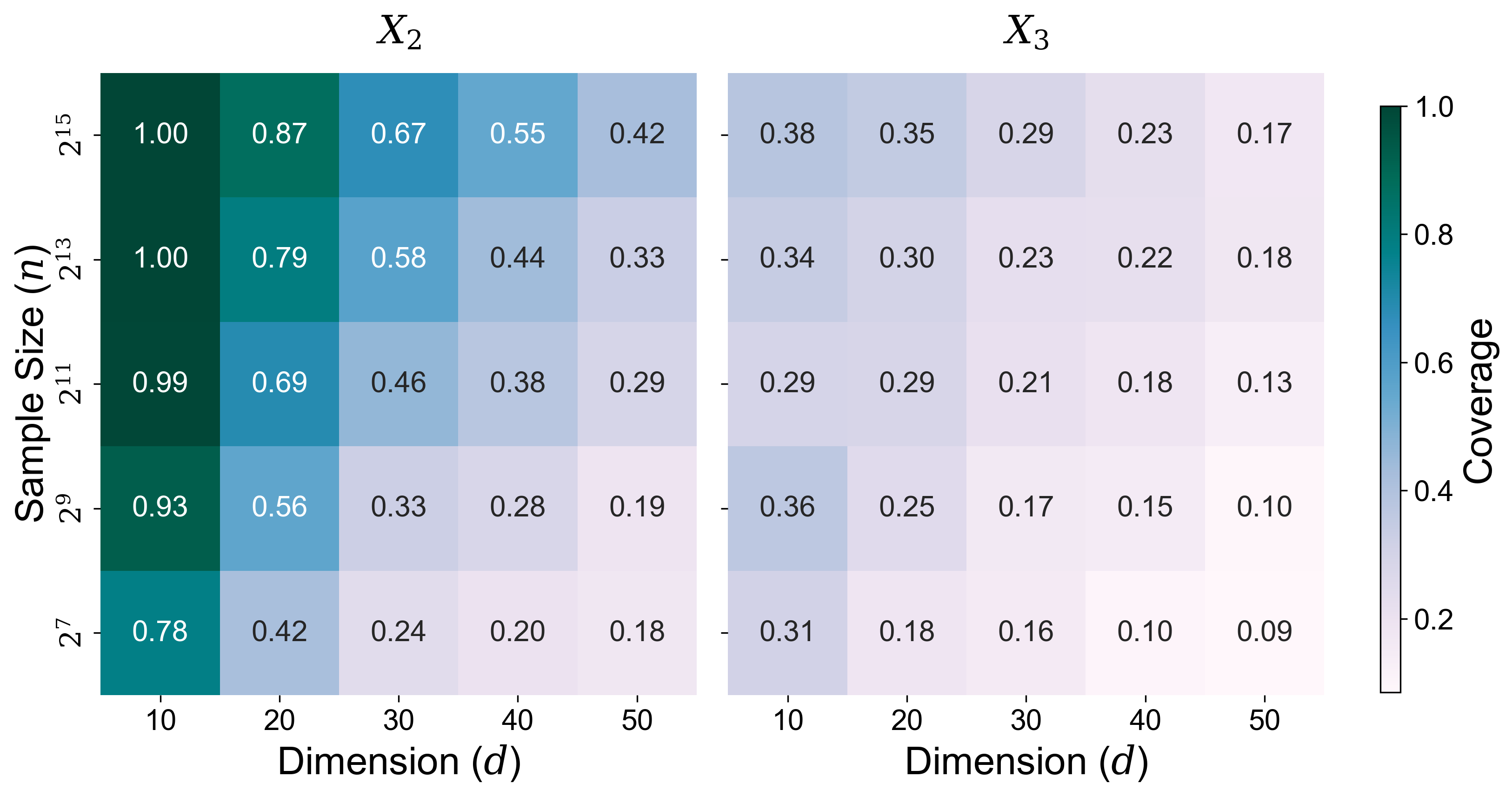}
    \caption{Split coverage for features $X_2$ and $X_3$ for CART fitted to $Y = X_1X_2$, without any stopping criterion. Split coverage is defined in \eqref{eq:coverage}. We observe that the sample size required to reliably split on the relevant feature grows exponentially in $d$.}
    \label{fig:heatmap_coverage}
\end{figure}


\section{Discussion}
\label{sec:conclusion}

In this paper, we have provided rigorous estimation error lower bounds for CART and RFs that show they have the potential to get stuck in local minima. 
In fact, we have been able to go much further by evaluating the performance of regression tree algorithms in terms of high-dimensional consistency.
We have used the \MSP~to completely characterize the regression functions for which CART is high-dimensional consistent and secondly established performance gaps between greedy and globally optimal tree algorithms over a large class of regression functions.

Our work adds to the small but growing literature on negative results for regression trees and ensembles.
\cite{tang2018when} showed that RFs can be inconsistent when its constituent trees are not diverse enough.
\cite{tan2021cautionary} proved that individual regression trees are inefficient at estimating additive structure, regardless of the way in which they are optimized.
\cite{cattaneo2022pointwise} demonstrate that when trees are fitted with CART to a constant signal model, they tend to place splits near the boundaries of the covariate space. 
This tendency can result in nodes containing very few samples---a significant problem when uniform control over estimation accuracy is desired, such as in the context of heterogeneous treatment effect estimation.
Finally, \cite{tan2024computational} proved that mixing times for Bayesian Additive Regression Trees can increase with the training sample size.

These results, especially when used as part of a head-to-head comparison against other methods, sound a note of caution that regression trees and ensembles should not be used blindly, despite their popularity---in certain situations, their output can be unexpectedly inaccurate or vastly inferior to that of other methods.
They also suggest ways in which these methods can be modified to overcome their shortcomings or justify changes that have already been proposed. 

Finally, while we have proved lower bounds for greedy axis-aligned regression trees, it is natural to ask whether lower bounds of a similar flavor hold for oblique regression trees as well as other more general classes of regression trees and ensembles under computational constraints.
We hope our work will serve as a first step in clarifying the statistical-computational trade-offs inherent within the class of greedy algorithms fitting regression trees and ensembles.

\subsection*{Acknowledgments}

The authors would like to thank Yingying Fan, Samory Kpotufe, Wei-Yin Loh, Rahul Mazumder, and Denny Wu for insightful discussions. Tan gratefully acknowledges support from NUS Start-up Grant A-8000448-00-00 and MOE AcRF Tier 1 Grant A-8002498-00-00. Klusowski gratefully acknowledges financial support from the National Science Foundation (NSF) through CAREER DMS-2239448. Balasubramanian gratefully acknowledges support from NSF through grants DMS-2053918 and DMS-2413426.

\clearpage
\bibliography{00_refs}

\begin{thebibliography}{85}
\providecommand{\natexlab}[1]{#1}
\providecommand{\url}[1]{\texttt{#1}}
\expandafter\ifx\csname urlstyle\endcsname\relax
  \providecommand{\doi}[1]{doi: #1}\else
  \providecommand{\doi}{doi: \begingroup \urlstyle{rm}\Url}\fi

\bibitem[Abbe and Boix-Adser{\`a}(2022)]{abbe2022on}
Emmanuel Abbe and Enric Boix-Adser{\`a}.
\newblock On the non-universality of deep learning: quantifying the cost of
  symmetry.
\newblock In Alice~H. Oh, Alekh Agarwal, Danielle Belgrave, and Kyunghyun Cho,
  editors, \emph{Advances in Neural Information Processing Systems}, 2022.
\newblock URL \url{https://openreview.net/forum?id=Leg6spUEFFf}.

\bibitem[Abbe and Sandon(2023)]{abbe2023polynomial}
Emmanuel Abbe and Colin Sandon.
\newblock Polynomial-time universality and limitations of deep learning.
\newblock \emph{Communications on Pure and Applied Mathematics}, 76\penalty0
  (11):\penalty0 3493--3549, 2023.
\newblock \doi{https://doi.org/10.1002/cpa.22121}.
\newblock URL \url{https://onlinelibrary.wiley.com/doi/abs/10.1002/cpa.22121}.

\bibitem[Abbe et~al.(2021{\natexlab{a}})Abbe, Boix-Adsera, Brennan, Bresler,
  and Nagaraj]{abbe2021staircase}
Emmanuel Abbe, Enric Boix-Adsera, Matthew Brennan, Guy Bresler, and Dheeraj
  Nagaraj.
\newblock The staircase property: How hierarchical structure can guide deep
  learning, 2021{\natexlab{a}}.

\bibitem[Abbe et~al.(2021{\natexlab{b}})Abbe, Kamath, Malach, Sandon, and
  Srebro]{abbe2021power}
Emmanuel Abbe, Pritish Kamath, Eran Malach, Colin Sandon, and Nathan Srebro.
\newblock On the power of differentiable learning versus {PAC} and {SQ}
  learning.
\newblock In M.~Ranzato, A.~Beygelzimer, Y.~Dauphin, P.S. Liang, and J.~Wortman
  Vaughan, editors, \emph{Advances in Neural Information Processing Systems},
  volume~34, pages 24340--24351. Curran Associates, Inc., 2021{\natexlab{b}}.

\bibitem[Abbe et~al.(2022)Abbe, Adsera, and Misiakiewicz]{abbe2022merged}
Emmanuel Abbe, Enric~Boix Adsera, and Theodor Misiakiewicz.
\newblock The merged-staircase property: {A} necessary and nearly sufficient
  condition for {SGD} learning of sparse functions on two-layer neural
  networks.
\newblock In \emph{Conference on Learning Theory}, pages 4782--4887. PMLR,
  2022.

\bibitem[Abbe et~al.(2023)Abbe, Adsera, and Misiakiewicz]{abbe2023sgd}
Emmanuel Abbe, Enric~Boix Adsera, and Theodor Misiakiewicz.
\newblock {SGD} learning on neural networks: leap complexity and
  saddle-to-saddle dynamics.
\newblock In \emph{The Thirty Sixth Annual Conference on Learning Theory},
  pages 2552--2623. PMLR, 2023.

\bibitem[Agarwal et~al.(2022)Agarwal, Tan, Ronen, Singh, and
  Yu]{agarwal2022hierarchical}
Abhineet Agarwal, Yan~Shuo Tan, Omer Ronen, Chandan Singh, and Bin Yu.
\newblock Hierarchical shrinkage: Improving the accuracy and interpretability
  of tree-based models.
\newblock In \emph{International Conference on Machine Learning}, pages
  111--135. PMLR, 2022.

\bibitem[Aghaei et~al.(2021)Aghaei, G{\'o}mez, and Vayanos]{aghaei2021strong}
Sina Aghaei, Andr{\'e}s G{\'o}mez, and Phebe Vayanos.
\newblock Strong optimal classification trees.
\newblock \emph{arXiv preprint arXiv:2103.15965}, 2021.

\bibitem[Aglin et~al.(2020)Aglin, Nijssen, and Schaus]{aglin2020learning}
Ga{\"e}l Aglin, Siegfried Nijssen, and Pierre Schaus.
\newblock Learning optimal decision trees using caching branch-and-bound
  search.
\newblock In \emph{Proceedings of the AAAI Conference on Artificial
  Intelligence}, volume~34, pages 3146--3153, 2020.

\bibitem[Arous et~al.(2024)Arous, Gheissari, and Jagannath]{arous2024high}
G{\'e}rard~Ben Arous, Reza Gheissari, and Aukosh Jagannath.
\newblock High-dimensional limit theorems for {SGD}: Effective dynamics and
  critical scaling.
\newblock \emph{Communications on Pure and Applied Mathematics}, 77\penalty0
  (3):\penalty0 2030--2080, 2024.

\bibitem[Athey and Imbens(2016)]{athey2016recursive}
Susan Athey and Guido Imbens.
\newblock Recursive partitioning for heterogeneous causal effects.
\newblock \emph{Proceedings of the National Academy of Sciences}, 113\penalty0
  (27):\penalty0 7353--7360, 2016.

\bibitem[Barak et~al.(2022)Barak, Edelman, Goel, Kakade, Malach, and
  Zhang]{barak2022hidden}
Boaz Barak, Benjamin Edelman, Surbhi Goel, Sham Kakade, Eran Malach, and Cyril
  Zhang.
\newblock Hidden progress in deep learning: {SGD} learns parities near the
  computational limit.
\newblock \emph{Advances in Neural Information Processing Systems},
  35:\penalty0 21750--21764, 2022.

\bibitem[Bertsimas and Dunn(2017)]{bertsimas2017optimal}
Dimitris Bertsimas and Jack Dunn.
\newblock Optimal classification trees.
\newblock \emph{Machine Learning}, 106\penalty0 (7):\penalty0 1039--1082, 2017.

\bibitem[Biau(2012)]{biau2012analysis}
G{\'e}rard Biau.
\newblock Analysis of a random forests model.
\newblock \emph{The Journal of Machine Learning Research}, 13:\penalty0
  1063--1095, 2012.

\bibitem[Biau and Devroye(2010)]{biau2010layered}
G{\'e}rard Biau and Luc Devroye.
\newblock On the layered nearest neighbour estimate, the bagged nearest
  neighbour estimate and the random forest method in regression and
  classification.
\newblock \emph{Journal of Multivariate Analysis}, 101\penalty0 (10):\penalty0
  2499--2518, 2010.

\bibitem[Biau and Scornet(2016)]{biau2016random}
G{\'e}rard Biau and Erwan Scornet.
\newblock A random forest guided tour.
\newblock \emph{Test}, 25:\penalty0 197--227, 2016.

\bibitem[Biau et~al.(2008)Biau, Devroye, and Lugosi]{biau2008consistency}
G{\'e}rard Biau, Luc Devroye, and G{\"a}bor Lugosi.
\newblock Consistency of random forests and other averaging classifiers.
\newblock \emph{Journal of Machine Learning Research}, 9\penalty0 (9), 2008.

\bibitem[Breiman(2001)]{breiman2001random}
Leo Breiman.
\newblock Random forests.
\newblock \emph{Machine Learning}, 45\penalty0 (1):\penalty0 5--32, 2001.

\bibitem[Breiman et~al.(1984)Breiman, Friedman, Stone, and
  Olshen]{breiman1984classification}
Leo Breiman, Jerome Friedman, Charles~J Stone, and Richard~A Olshen.
\newblock \emph{Classification and Regression Trees}.
\newblock CRC press, 1984.

\bibitem[Caruana et~al.(2008)Caruana, Karampatziakis, and
  Yessenalina]{caruana2008empirical}
Rich Caruana, Nikos Karampatziakis, and Ainur Yessenalina.
\newblock An empirical evaluation of supervised learning in high dimensions.
\newblock In \emph{Proceedings of the 25th International Conference on Machine
  learning}, pages 96--103, 2008.

\bibitem[Cattaneo et~al.(2022)Cattaneo, Klusowski, and
  Tian]{cattaneo2022pointwise}
Matias~D Cattaneo, Jason~M Klusowski, and Peter~M Tian.
\newblock On the pointwise behavior of recursive partitioning and its
  implications for heterogeneous causal effect estimation.
\newblock \emph{arXiv preprint arXiv:2211.10805}, 2022.

\bibitem[Cattaneo et~al.(2023)Cattaneo, Klusowski, and
  Underwood]{cattaneo2023inference}
Matias~D Cattaneo, Jason~M Klusowski, and William~G Underwood.
\newblock Inference with {M}ondrian random forests.
\newblock \emph{arXiv preprint arXiv:2310.09702}, 2023.

\bibitem[Cattaneo et~al.(2024)Cattaneo, Chandak, and
  Klusowski]{cattaneo2022convergence}
Matias~D. Cattaneo, Rajita Chandak, and Jason~M. Klusowski.
\newblock Convergence rates of oblique regression trees for flexible function
  libraries.
\newblock \emph{The Annals of Statistics}, 52\penalty0 (2):\penalty0 466 --
  490, 2024.
\newblock \doi{10.1214/24-AOS2354}.

\bibitem[Chatterjee and Goswami(2021)]{chatterjee2021adaptive}
Sabyasachi Chatterjee and Subhajit Goswami.
\newblock Adaptive estimation of multivariate piecewise polynomials and bounded
  variation functions by optimal decision trees.
\newblock \emph{The Annals of Statistics}, 49\penalty0 (5):\penalty0
  2531--2551, 2021.

\bibitem[Chaudhuri et~al.(1995)Chaudhuri, Lo, Loh, and
  Yang]{chaudhuri1995generalized}
Probal Chaudhuri, Wen-Da Lo, Wei-Yin Loh, and Ching-Ching Yang.
\newblock Generalized regression trees.
\newblock \emph{Statistica Sinica}, pages 641--666, 1995.

\bibitem[Chen and Guestrin(2016)]{chen2016xgboost}
Tianqi Chen and Carlos Guestrin.
\newblock {XGB}oost: A scalable tree boosting system.
\newblock In \emph{Proceedings of the 22nd ACM SIGKDD International Conference
  on Knowledge Discovery and Data Mining}, pages 785--794, 2016.

\bibitem[Chi et~al.(2022)Chi, Vossler, Fan, and Lv]{chi2022asymptotic}
Chien-Ming Chi, Patrick Vossler, Yingying Fan, and Jinchi Lv.
\newblock Asymptotic properties of high-dimensional random forests.
\newblock \emph{The Annals of Statistics}, 50\penalty0 (6):\penalty0
  3415--3438, December 2022.

\bibitem[Chipman et~al.(1998)Chipman, George, and
  McCulloch]{chipman1998bayesian}
Hugh~A Chipman, Edward~I George, and Robert~E McCulloch.
\newblock Bayesian {CART} model search.
\newblock \emph{Journal of the American Statistical Association}, 93\penalty0
  (443):\penalty0 935--948, 1998.

\bibitem[Chipman et~al.(2010)Chipman, George, and McCulloch]{chipman2010bart}
Hugh~A Chipman, Edward~I George, and Robert~E McCulloch.
\newblock {BART}: Bayesian additive regression trees.
\newblock \emph{The Annals of Applied Statistics}, 4\penalty0 (1):\penalty0
  266--298, 2010.

\bibitem[Demirovi{\'c} et~al.(2022)Demirovi{\'c}, Lukina, Hebrard, Chan,
  Bailey, Leckie, Ramamohanarao, and Stuckey]{demirovic2022murtree}
Emir Demirovi{\'c}, Anna Lukina, Emmanuel Hebrard, Jeffrey Chan, James Bailey,
  Christopher Leckie, Kotagiri Ramamohanarao, and Peter~J Stuckey.
\newblock Murtree: Optimal decision trees via dynamic programming and search.
\newblock \emph{The Journal of Machine Learning Research}, 23\penalty0
  (1):\penalty0 1169--1215, 2022.

\bibitem[Donoho(1997)]{donoho1997cart}
David~L Donoho.
\newblock {CART} and best-ortho-basis: a connection.
\newblock \emph{The Annals of Statistics}, 25\penalty0 (5):\penalty0
  1870--1911, 1997.

\bibitem[Feldman et~al.(2021)Feldman, Guzman, and
  Vempala]{feldman2021statistical}
Vitaly Feldman, Cristobal Guzman, and Santosh Vempala.
\newblock Statistical query algorithms for mean vector estimation and
  stochastic convex optimization.
\newblock \emph{Mathematics of Operations Research}, 46\penalty0 (3):\penalty0
  912--945, 2021.

\bibitem[Fern{\'a}ndez-Delgado et~al.(2014)Fern{\'a}ndez-Delgado, Cernadas,
  Barro, and Amorim]{fernandez2014we}
Manuel Fern{\'a}ndez-Delgado, Eva Cernadas, Sen{\'e}n Barro, and Dinani Amorim.
\newblock Do we need hundreds of classifiers to solve real world classification
  problems?
\newblock \emph{The Journal of Machine Learning Research}, 15\penalty0
  (1):\penalty0 3133--3181, 2014.

\bibitem[Friedman et~al.(2001)Friedman, Hastie, Tibshirani,
  et~al.]{friedman2001elements}
Jerome Friedman, Trevor Hastie, Robert Tibshirani, et~al.
\newblock \emph{The Elements of Statistical Learning}, volume~1.
\newblock Springer Series in Statistics New York, 2001.

\bibitem[Friedman(2001)]{friedman2001greedy}
Jerome~H Friedman.
\newblock Greedy function approximation: {A} gradient boosting machine.
\newblock \emph{The Annals of Statistics}, pages 1189--1232, 2001.

\bibitem[Glasgow(2024)]{Glasgow2024sgd}
M.~R. Glasgow.
\newblock {SGD} finds then tunes features in two-layer neural networks with
  near-optimal sample complexity: A case study in the {XOR} problem.
\newblock In \emph{The Twelfth International Conference on Learning
  Representations}, 2024.

\bibitem[Grinsztajn et~al.(2022)Grinsztajn, Oyallon, and
  Varoquaux]{grinsztajn2022tree}
L{\'e}o Grinsztajn, Edouard Oyallon, and Ga{\"e}l Varoquaux.
\newblock Why do tree-based models still outperform deep learning on typical
  tabular data?
\newblock \emph{Advances in Neural Information Processing Systems},
  35:\penalty0 507--520, 2022.

\bibitem[Hu et~al.(2019)Hu, Rudin, and Seltzer]{hu2019optimal}
Xiyang Hu, Cynthia Rudin, and Margo Seltzer.
\newblock Optimal sparse decision trees.
\newblock \emph{Advances in Neural Information Processing Systems (NeurIPS)},
  2019.

\bibitem[Hyafil and Rivest(1976)]{laurent1976constructing}
Laurent Hyafil and Ronald~L Rivest.
\newblock {Constructing optimal binary decision trees is NP-complete}.
\newblock \emph{Information Processing Letters}, 5\penalty0 (1):\penalty0
  15--17, 1976.

\bibitem[Ishwaran et~al.(2008)Ishwaran, Kogalur, Blackstone, and
  Lauer]{ishwaran2008random}
Hemant Ishwaran, Udaya~B Kogalur, Eugene~H Blackstone, and Michael~S Lauer.
\newblock Random survival forests.
\newblock \emph{The Annals of Applied Statistics}, 2\penalty0 (3):\penalty0
  841--860, 2008.

\bibitem[Joshi et~al.(2024)Joshi, Misiakiewicz, and Srebro]{joshi2024on}
Nirmit Joshi, Theodor Misiakiewicz, and Nathan Srebro.
\newblock On the complexity of learning sparse functions with statistical and
  gradient queries.
\newblock In \emph{The Thirty-eighth Annual Conference on Neural Information
  Processing Systems}, 2024.
\newblock URL \url{https://openreview.net/forum?id=Q0KwoyZlSo}.

\bibitem[Ke et~al.(2017)Ke, Meng, Finley, Wang, Chen, Ma, Ye, and
  Liu]{ke2017lightgbm}
Guolin Ke, Qi~Meng, Thomas Finley, Taifeng Wang, Wei Chen, Weidong Ma, Qiwei
  Ye, and Tie-Yan Liu.
\newblock Lightgbm: A highly efficient gradient boosting decision tree.
\newblock \emph{Advances in Neural Information Processing Systems}, 30, 2017.

\bibitem[Kearns(1998)]{kearns1998efficient}
Michael Kearns.
\newblock Efficient noise-tolerant learning from statistical queries.
\newblock \emph{Journal of the ACM (JACM)}, 45\penalty0 (6):\penalty0
  983--1006, 1998.

\bibitem[Klusowski(2020)]{klusowski2020sparse}
Jason~M Klusowski.
\newblock Sparse learning with {CART}.
\newblock In H.~Larochelle, M.~Ranzato, R.~Hadsell, M.~F. Balcan, and H.~Lin,
  editors, \emph{Advances in Neural Information Processing Systems}, volume~33,
  pages 11612--11622. Curran Associates, Inc., 2020.

\bibitem[Klusowski(2021)]{klusowski2021sharp}
Jason~M Klusowski.
\newblock Sharp analysis of a simple model for random forests.
\newblock In Arindam Banerjee and Kenji Fukumizu, editors, \emph{Proceedings of
  The 24th International Conference on Artificial Intelligence and Statistics},
  volume 130 of \emph{Proceedings of Machine Learning Research}, pages
  757--765. PMLR, 13--15 Apr 2021.

\bibitem[Klusowski and Tian(2021)]{klusowski2021nonparametric}
Jason~M Klusowski and Peter Tian.
\newblock Nonparametric variable screening with optimal decision stumps.
\newblock In Arindam Banerjee and Kenji Fukumizu, editors, \emph{Proceedings of
  The 24th International Conference on Artificial Intelligence and Statistics},
  volume 130 of \emph{Proceedings of Machine Learning Research}, pages
  748--756. PMLR, 13--15 Apr 2021.

\bibitem[Klusowski and Tian(2024)]{klusowski2024large}
Jason~M. Klusowski and Peter~M. Tian.
\newblock Large scale prediction with decision trees.
\newblock \emph{Journal of the American Statistical Association}, 119\penalty0
  (545):\penalty0 525--537, 2024.
\newblock \doi{10.1080/01621459.2022.2126782}.

\bibitem[Kou et~al.(2024)Kou, Chen, Gu, and Kakade]{kou2024matching}
Yiwen Kou, Zixiang Chen, Quanquan Gu, and Sham~M Kakade.
\newblock Matching the statistical query lower bound for k-sparse parity
  problems with stochastic gradient descent.
\newblock \emph{arXiv preprint arXiv:2404.12376}, 2024.

\bibitem[Lakshminarayanan et~al.(2014)Lakshminarayanan, Roy, and
  Teh]{lakshminarayanan2014mondrian}
Balaji Lakshminarayanan, Daniel~M Roy, and Yee~Whye Teh.
\newblock {M}ondrian forests: Efficient online random forests.
\newblock \emph{Advances in Neural Information Processing Systems}, 27, 2014.

\bibitem[Lin et~al.(2020)Lin, Zhong, Hu, Rudin, and
  Seltzer]{lin2020generalized}
Jimmy Lin, Chudi Zhong, Diane Hu, Cynthia Rudin, and Margo Seltzer.
\newblock Generalized and scalable optimal sparse decision trees.
\newblock In \emph{International Conference on Machine Learning}, pages
  6150--6160. PMLR, 2020.

\bibitem[Lin and Jeon(2006)]{lin2006random}
Yi~Lin and Yongho Jeon.
\newblock Random forests and adaptive nearest neighbors.
\newblock \emph{Journal of the American Statistical Association}, 101\penalty0
  (474):\penalty0 578--590, 2006.

\bibitem[Loh(2002)]{loh2002regression}
Wei-Yin Loh.
\newblock Regression tress with unbiased variable selection and interaction
  detection.
\newblock \emph{Statistica Sinica}, pages 361--386, 2002.

\bibitem[Mazumder and Wang(2024)]{mazumder2024convergence}
Rahul Mazumder and Haoyue Wang.
\newblock On the convergence of {CART} under sufficient impurity decrease
  condition.
\newblock \emph{Advances in Neural Information Processing Systems}, 36, 2024.

\bibitem[Mei et~al.(2018)Mei, Montanari, and Nguyen]{mei2018mean}
Song Mei, Andrea Montanari, and Phan-Minh Nguyen.
\newblock A mean field view of the landscape of two-layer neural networks.
\newblock \emph{Proceedings of the National Academy of Sciences}, 115\penalty0
  (33):\penalty0 E7665--E7671, 2018.

\bibitem[Mourtada et~al.(2017)Mourtada, Ga{\"\i}ffas, and
  Scornet]{mourtada2017universal}
Jaouad Mourtada, St{\'e}phane Ga{\"\i}ffas, and Erwan Scornet.
\newblock Universal consistency and minimax rates for online {M}ondrian
  forests.
\newblock \emph{Advances in Neural Information Processing Systems}, 30, 2017.

\bibitem[Murdoch et~al.(2019)Murdoch, Singh, Kumbier, Abbasi-Asl, and
  Yu]{murdoch2019definitions}
W~James Murdoch, Chandan Singh, Karl Kumbier, Reza Abbasi-Asl, and Bin Yu.
\newblock Definitions, methods, and applications in interpretable machine
  learning.
\newblock \emph{Proceedings of the National Academy of Sciences}, 116\penalty0
  (44):\penalty0 22071--22080, 2019.

\bibitem[Oko et~al.(2024)Oko, Song, Suzuki, and Wu]{oko2024learning}
Kazusato Oko, Yujin Song, Taiji Suzuki, and Denny Wu.
\newblock Learning sum of diverse features: computational hardness and
  efficient gradient-based training for ridge combinations.
\newblock \emph{arXiv preprint arXiv:2406.11828}, 2024.

\bibitem[Olson et~al.(2018)Olson, Cava, Mustahsan, Varik, and
  Moore]{olson2018data}
Randal~S Olson, William~La Cava, Zairah Mustahsan, Akshay Varik, and Jason~H
  Moore.
\newblock Data-driven advice for applying machine learning to bioinformatics
  problems.
\newblock In \emph{Biocomputing 2018: Proceedings of the Pacific Symposium},
  pages 192--203. World Scientific, 2018.

\bibitem[O'Reilly(2024)]{o2024statistical}
Eliza O'Reilly.
\newblock Statistical advantages of oblique randomized decision trees and
  forests.
\newblock \emph{arXiv preprint arXiv:2407.02458}, 2024.

\bibitem[Pedregosa et~al.(2011)Pedregosa, Varoquaux, Gramfort, Michel, Thirion,
  Grisel, Blondel, Prettenhofer, Weiss, Dubourg, et~al.]{pedregosa2011scikit}
Fabian Pedregosa, Ga{\"e}l Varoquaux, Alexandre Gramfort, Vincent Michel,
  Bertrand Thirion, Olivier Grisel, Mathieu Blondel, Peter Prettenhofer, Ron
  Weiss, Vincent Dubourg, et~al.
\newblock Scikit-learn: Machine learning in python.
\newblock \emph{The Journal of Machine Learning Research}, 12:\penalty0
  2825--2830, 2011.

\bibitem[Quinlan(1986)]{quinlan1986induction}
J~Ross Quinlan.
\newblock Induction of decision trees.
\newblock \emph{Machine learning}, 1\penalty0 (1):\penalty0 81--106, 1986.

\bibitem[Quinlan(1993)]{quinlan1993c4}
J~Ross Quinlan.
\newblock \emph{C4. 5: Programs for Machine Learning}.
\newblock Morgan Kaufmann, 1993.

\bibitem[Ren and Lee(2024)]{ren2024learning}
Yunwei Ren and Jason~D Lee.
\newblock Learning orthogonal multi-index models: A fine-grained information
  exponent analysis.
\newblock \emph{arXiv preprint arXiv:2410.09678}, 2024.

\bibitem[Roy and Teh(2008)]{roy2008mondrian}
Daniel~M Roy and Yee~W Teh.
\newblock The {M}ondrian process.
\newblock In \emph{Advances in Neural Information Processing Systems}, pages
  1377--1384, 2008.

\bibitem[Rudin et~al.(2021)Rudin, Chen, Chen, Huang, Semenova, and
  Zhong]{rudin2021interpretable}
Cynthia Rudin, Chaofan Chen, Zhi Chen, Haiyang Huang, Lesia Semenova, and Chudi
  Zhong.
\newblock Interpretable machine learning: Fundamental principles and 10 grand
  challenges.
\newblock \emph{arXiv preprint arXiv:2103.11251}, 2021.

\bibitem[Scornet(2016)]{scornet2016random}
Erwan Scornet.
\newblock Random forests and kernel methods.
\newblock \emph{IEEE Transactions on Information Theory}, 62\penalty0
  (3):\penalty0 1485--1500, 2016.

\bibitem[Scornet et~al.(2015)Scornet, Biau, and Vert]{scornet2015}
Erwan Scornet, Gérard Biau, and Jean-Philippe Vert.
\newblock {Consistency of random forests}.
\newblock \emph{The Annals of Statistics}, 43\penalty0 (4):\penalty0 1716 --
  1741, 2015.
\newblock \doi{10.1214/15-AOS1321}.

\bibitem[Shi et~al.(2024)Shi, Bhattacharjee, Balasubramanian, and
  Polonik]{shi2024multivariate}
Zhaoyang Shi, Chinmoy Bhattacharjee, Krishnakumar Balasubramanian, and Wolfgang
  Polonik.
\newblock Multivariate gaussian approximation for random forest via
  region-based stabilization.
\newblock \emph{arXiv preprint arXiv:2403.09960}, 2024.

\bibitem[Stein and Shakarchi(2011)]{stein2011fourier}
Elias~M Stein and Rami Shakarchi.
\newblock \emph{Fourier analysis: an introduction}, volume~1.
\newblock Princeton University Press, 2011.

\bibitem[Syrgkanis and Zampetakis(2020)]{syrgkanis2020estimation}
Vasilis Syrgkanis and Manolis Zampetakis.
\newblock Estimation and inference with trees and forests in high dimensions.
\newblock In \emph{Conference on Learning Theory}, pages 3453--3454. PMLR,
  2020.

\bibitem[Talagrand(1995)]{talagrand1995concentration}
Michel Talagrand.
\newblock Concentration of measure and isoperimetric inequalities in product
  spaces.
\newblock \emph{Publications Math{\'e}matiques de l'Institut des Hautes Etudes
  Scientifiques}, 81:\penalty0 73--205, 1995.

\bibitem[Tan and Vershynin(2023)]{tan2019online}
Yan~Shuo Tan and Roman Vershynin.
\newblock Online stochastic gradient descent with arbitrary initialization
  solves non-smooth, non-convex phase retrieval.
\newblock \emph{Journal of Machine Learning Research}, 24\penalty0
  (58):\penalty0 1--47, 2023.

\bibitem[Tan et~al.(2021)Tan, Agarwal, and Yu]{tan2021cautionary}
Yan~Shuo Tan, Abhineet Agarwal, and Bin Yu.
\newblock A cautionary tale on fitting decision trees to data from additive
  models: generalization lower bounds, 2021.

\bibitem[Tan et~al.(2024)Tan, Ronen, Saarinen, and Yu]{tan2024computational}
Yan~Shuo Tan, Omer Ronen, Theo Saarinen, and Bin Yu.
\newblock The computational curse of big data for bayesian additive regression
  trees: A hitting time analysis.
\newblock \emph{arXiv preprint arXiv:2406.19958}, 2024.

\bibitem[Tang et~al.(2018)Tang, Garreau, and von Luxburg]{tang2018when}
Cheng Tang, Damien Garreau, and Ulrike von Luxburg.
\newblock When do random forests fail?
\newblock In S.~Bengio, H.~Wallach, H.~Larochelle, K.~Grauman, N.~Cesa-Bianchi,
  and R.~Garnett, editors, \emph{Advances in Neural Information Processing
  Systems}, volume~31. Curran Associates, Inc., 2018.

\bibitem[Uhler et~al.(2013)Uhler, Raskutti, B{\"u}hlmann, and
  Yu]{uhler2013geometry}
Caroline Uhler, Garvesh Raskutti, Peter B{\"u}hlmann, and Bin Yu.
\newblock Geometry of the faithfulness assumption in causal inference.
\newblock \emph{The Annals of Statistics}, pages 436--463, 2013.

\bibitem[Vershynin(2018)]{vershynin2018high}
Roman Vershynin.
\newblock \emph{High-Dimensional Probability: An Introduction with Applications
  in Data Science}, volume~47.
\newblock Cambridge University Press, 2018.

\bibitem[Verwer and Zhang(2019)]{verwer2019learning}
Sicco Verwer and Yingqian Zhang.
\newblock Learning optimal classification trees using a binary linear program
  formulation.
\newblock In \emph{Proceedings of the AAAI Conference on Artificial
  Intelligence}, volume~33, pages 1625--1632, 2019.

\bibitem[Wahba(1990)]{wahba1990spline}
Grace Wahba.
\newblock \emph{Spline models for observational data}.
\newblock SIAM, 1990.

\bibitem[Wainwright(2019)]{wainwright2019high}
M.J. Wainwright.
\newblock \emph{High-Dimensional Statistics: A Non-Asymptotic Viewpoint}.
\newblock Cambridge Series in Statistical and Probabilistic Mathematics.
  Cambridge University Press, 2019.
\newblock ISBN 9781108498029.

\bibitem[Wasserman and Lafferty(2005)]{wasserman2005rodeo}
Larry Wasserman and John Lafferty.
\newblock Rodeo: Sparse nonparametric regression in high dimensions.
\newblock \emph{Advances in Neural Information Processing Systems}, 18, 2005.

\bibitem[Wright et~al.(2017)Wright, Ziegler, et~al.]{wright2015ranger}
Marvin~N Wright, Andreas Ziegler, et~al.
\newblock ranger: A fast implementation of random forests for high dimensional
  data in {C}++ and {R}.
\newblock \emph{Journal of Statistical Software}, 77\penalty0 (i01), 2017.

\bibitem[Wu and Hamada(2011)]{wu2011experiments}
CF~Jeff Wu and Michael~S Hamada.
\newblock \emph{Experiments: planning, analysis, and optimization}.
\newblock John Wiley \& Sons, 2011.

\bibitem[Zhang et~al.(2023)Zhang, Xin, Seltzer, and Rudin]{zhang2023optimal}
Rui Zhang, Rui Xin, Margo Seltzer, and Cynthia Rudin.
\newblock Optimal sparse regression trees.
\newblock In \emph{Proceedings of the AAAI Conference on Artificial
  Intelligence}, volume~37, pages 11270--11279, 2023.

\bibitem[Zhu et~al.(2020)Zhu, Murali, Phan, Nguyen, and
  Kalagnanam]{zhu2020scalable}
Haoran Zhu, Pavankumar Murali, Dzung Phan, Lam Nguyen, and Jayant Kalagnanam.
\newblock {A scalable MIP-based method for learning optimal multivariate
  decision trees}.
\newblock \emph{Advances in Neural Information Processing Systems},
  33:\penalty0 1771--1781, 2020.

\end{thebibliography}

\appendix
\appendix
\counterwithin{equation}{section}
\counterwithin{theorem}{section}

\section{Consistency of ERM trees, Greedy Trees, and Non-Adaptive Trees}\label{sec:additionalrelwork}

Machine learning models are often fit by solving \emph{empirical risk minimization (ERM)}.
Solutions to ERM over appropriate classes of decision tree functions can be shown to be consistent, but unfortunately are computationally infeasible to obtain.
\comment{red}{For instance, ERM for classification trees is known to be \emph{NP-complete} in the worst case~\citep{laurent1976constructing}  when the attribute count $d$ is part of the input. Their reduction depends only on the choice of split variables and tree topology, so the same hardness applies verbatim to the squared‐error regression objective studied in this work. If $d$ is fixed, the input space has only $2^{d}$ patterns, so one can enumerate the $\le d^{2^{s}}$ depth-$s$ trees and evaluate each in $O(n)$ time, yielding a (large-constant) polynomial algorithm; hardness disappears in this constant-$d$ corner case~\citep{donoho1997cart,chatterjee2021adaptive}. Whereas, our focus in this work is on the high-dimensional regime ($d\to\infty$, $\log n/d\to0$) where exhaustive ERM needs $O(n d^{2^{s}})$ operations while CART fits a depth-$s$ tree in $O(nd)$.}

Practitioners make use of \emph{greedy algorithms} such as CART ~\citep{breiman1984classification}, ID3~\citep{quinlan1986induction}, or C4.5~\citep{quinlan1993c4}.
Such algorithms grow a tree in a top-down manner, with the split for a given node chosen by optimizing a local node-specific objective.
Once made, a split is never reversed.\footnote{Decision trees are sometimes pruned as a one-off post-processing procedure after they are fully grown.}
These heuristics seem to work well in practice---both RFs and gradient boosting are built out of CART trees---but are notoriously difficult to analyze.
Indeed, early efforts to prove consistency of greedy regression and classification trees under standard smoothness conditions on $f^*$ had to further assume that the bias of the tree converged to zero~\citep{breiman1984classification,chaudhuri1995generalized}.
More recent works were able to replace this assumption with more reasonable alternatives on $f^*$ such as additivity ~\citep{scornet2015,klusowski2020sparse,klusowski2024large} or a submodularity condition \citep{syrgkanis2020estimation}.
\cite{chi2022asymptotic} further introduced and derived consistency under the sufficient impurity decrease (SID) condition \eqref{eq:SID}.
\cite{mazumder2024convergence} later improved upon the consistency rates of \cite{chi2022asymptotic} and showed that the SID condition is satisfied for larger classes of models.

\comment{red}{In another line of work, researchers have studied decision trees whose splits are \emph{non-adaptive}, meaning that the partitioning is chosen independently of the response $Y$. Examples include purely random-split trees~\citep{biau2008consistency,biau2012analysis,klusowski2021sharp}, Mondrian trees which split according to a Bayesian model of the covariate distribution~\citep{roy2008mondrian,lakshminarayanan2014mondrian,mourtada2017universal,cattaneo2023inference}, and $k$-Potential Nearest Neighbors~\citep{lin2006random,biau2010layered,scornet2016random,biau2016random,shi2024multivariate}. These methods are valuable as theoretical baselines but are generally not used in practice. In particular, purely random forests fall in this class, whereas methods such as honest trees remain adaptive in their split selection and thus do not. Despite having consistency guarantees under fewer assumptions, non-adaptive trees typically underperform greedy, label-driven procedures in terms of adaptivity to heterogeneous truth.}



\section{Lower Bounds for Non-Adaptive Trees}
\label{sec:non_adaptive_trees}

We say that a regression tree algorithm $\alg$ is \emph{non-adaptive} if $\alg_t$ does not depend on the response data $Y_1,Y_2,\ldots,Y_\nsamples$ (see Section \ref{sec:framework}).
We call the output of such an algorithm a non-adaptive tree model, denoting it using $\fna(-;\data,\Theta)$.
This class of tree models include \emph{completely random trees}, which are grown iteratively, similar to CART, except that at each iteration, the split feature for a new cell $\cell$ is chosen uniformly at random from $\coordindices\backslash\cellfeatindices(\cell)$.
This can also be generalized to incorporate non-uniform split probabilities.
Another subclass comprises \emph{Mondrian trees} \citep{roy2008mondrian}, which are defined via a recursive generative process that selects splits according to the length of each side of a cell and uses a stochastic stopping condition.
These tree models are more appropriately defined over continuous covariates.
As is the case with greedy trees, non-adaptive trees can be combined in ensembles, with the resulting estimators described as some type of random forests \citep{lakshminarayanan2014mondrian,biau2012analysis}.
We will denote an ensemble of non-adaptive tree models using $\fnae(-;\data,\Theta)$.

While most standard non-adaptive tree algorithms have some form of symmetry in terms of how they handle different covariates, this is not a necessary component of the definition.
In order to derive estimation error lower bounds for non-adaptive trees, we therefore first augment Assumption \ref{assumption} to further assume that the relevant covariate set $\suppset$ is generated uniformly at random.

\begin{assumption}
    \label{assumption_nonadaptive}
    Assume a binary covariate space $\xspace = \binspace$ with $\xmeasure$ the uniform measure.
    Let $f^*_0\colon \lbrace\pm 1\rbrace^\sparsity \to \R$ be any function.
    Let $\suppset \subset \coordindices$ be a subset of size $\abs*{\suppset} = \sparsity$ that is drawn uniformly at random.
    We consider the nonparametric model \eqref{eq:nonparametric_regression_model} with the regression function $f^*\colon \binspace \to \R$ defined via $f^*(\bx) = f^*_0(\bx^{\suppset})$.
\end{assumption}

We also slightly modify our definition of expected $L^2$ risk to also include an expectation with respect to $\suppset$.
\begin{equation}
    \objective\paren*{\hat f, f^*_0,\nfeats,\nsamples} \coloneqq \E_{\suppset,\data,\Theta}\braces*{R\paren*{\hat f(-;\data,\Theta),f^*}}.
\end{equation}
Note that we will keep these modified definitions only for this section of the paper.

Our proof strategy for obtaining lower bounds for the expected $L^2$ risk will be similar to that use for greedy trees and ensembles.
To this end, observe that Lemma \ref{lem:covariate_selection_tree} holds for any regression tree algorithm.
Meanwhile, we have the following modified versions of Lemma \ref{lem:covariate_selection_ensemble} and Lemma \ref{lem:xor_feature_selection_prob}.

\begin{lemma}
\label{lem:covariate_selection_ensemble_nonadaptive}
    Let $\hat f$ be a regression tree ensemble model in which each tree is fit using a regression tree algorithm $\alg$.
    Then under Assumption \ref{assumption_nonadaptive}, its expected $L^2$ risk satisfies $\objective\paren*{\hat f, f^*_0,\nfeats,\nsamples} \geq \paren*{1-\kappa\delta^{1/2}}\Var\braces*{f^*_0(\bX)}$, where $\delta \coloneqq \P\braces*{J(\bX;\data,\Theta)\cap \suppset \neq \emptyset }$ and $\kappa \coloneqq 2\|f_0^*\|_{\infty}/\Var\braces*{f^*_0(\bX)}^{1/2}$.
\end{lemma}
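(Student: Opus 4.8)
The plan is to mirror the proof of Lemma~\ref{lem:covariate_selection_ensemble}, altering it at the two points where that argument invoked the almost-sure bound $\abs{Y}\le\ybound$. First, the uniform control on the contribution of the ``bad'' event $\braces{J(\bx)\cap\suppset\neq\emptyset}$ will now come from \emph{non-adaptivity}: for a non-adaptive algorithm, the tree structure $\alg_t(\data,\Theta)$, the query path $\querypath(\bx;\data,\Theta)$, and hence the leaf containing $\bx$, are measurable with respect to the training covariates $\braces{\bX_i}_{i=1}^\nsamples$ and the seed alone, so leaf labels can be replaced by their conditional means, which are bounded by $\norm{f_0^*}_\infty$. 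Second, since we now average over a random $\suppset$ and over the query point rather than taking a worst case, a Cauchy--Schwarz step will produce a $\delta^{1/2}$ where Lemma~\ref{lem:covariate_selection_ensemble} produced a $\delta$.

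Concretely, write $\hat f(\bx;\data,\Theta)=\tfrac1B\sum_{b=1}^B\tilde f(\bx;\data,\theta_b)$, so that $\E_{\data,\Theta}\braces{\hat f(\bx;\data,\Theta)}=\E_{\data,\theta_1}\braces{\tilde f(\bx;\data,\theta_1)}=g_{\suppset}(\bx)+h_{\suppset}(\bx)$, with $h_{\suppset}(\bx)\coloneqq\E_{\data,\theta_1}\braces{\tilde f(\bx;\data,\theta_1)\indicator\braces{J(\bx;\data,\theta_1)\cap\suppset\neq\emptyset}}$ and $g_{\suppset}$ carrying the complementary indicator. For fixed $\suppset$, I would apply Jensen over $(\data,\Theta)$ to get $\objective\ge\E_{\bX}\braces{(g_{\suppset}(\bX)+h_{\suppset}(\bX)-f^*(\bX))^2\mid\suppset}$ and then condition on $\bX_{\coordindices\backslash\suppset}$. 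As in the proofs of Lemmas~\ref{lem:covariate_selection_tree} and~\ref{lem:covariate_selection_ensemble}, on the event $\braces{J(\bx)\cap\suppset=\emptyset}$ the query path of $\bx$ through the (fixed) tree splits only on coordinates outside $\suppset$, so this event and the leaf value are both unchanged by perturbing $\bx_{\suppset}$; hence $g_{\suppset}(\bx)$ depends only on $\bx_{\coordindices\backslash\suppset}$. Discarding the $g_{\suppset}$ term and lower bounding the conditional expectation by $\Var_{\bX_{\suppset}}\braces{h_{\suppset}(\bX)-f^*(\bX)\mid\bX_{\coordindices\backslash\suppset}}\ge\Var\braces{f_0^*(\bX)}-2\Var\braces{f_0^*(\bX)}^{1/2}\Var_{\bX_{\suppset}}\braces{h_{\suppset}(\bX)\mid\bX_{\coordindices\backslash\suppset}}^{1/2}$, then averaging over $\bX_{\coordindices\backslash\suppset}$ and $\suppset$ and using concavity of the square root, reduces everything to bounding $\E_{\suppset,\bX}\braces{h_{\suppset}(\bX)^2}$.

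The crux is this last bound. Conditioning on $\mathcal{G}\coloneqq\sigma(\braces{\bX_i}_{i=1}^\nsamples,\theta_1,\suppset)$, non-adaptivity makes the query path of $\bx$ and the leaf $\leaf$ it falls into $\mathcal{G}$-measurable, so the indicator in $h_{\suppset}$ is $\mathcal{G}$-measurable and $\E\braces{\tilde f(\bx;\data,\theta_1)\mid\mathcal{G}}=\E\braces{\bar Y_{\leaf}\mid\mathcal{G}}=\tfrac{1}{N(\leaf)}\sum_{i\in I(\leaf)}f^*(\bX_i)$ (using $\E\braces{\varepsilon_i\mid\mathcal{G}}=0$), which has absolute value at most $\norm{f_0^*}_\infty$; the degenerate case of an empty leaf is handled by the standard labeling convention, which is again bounded by $\norm{f_0^*}_\infty$. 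The tower property then gives $\abs{h_{\suppset}(\bx)}\le\norm{f_0^*}_\infty\,q_{\suppset}(\bx)$, where $q_{\suppset}(\bx)\coloneqq\P_{\data,\theta_1}\braces{J(\bx;\data,\theta_1)\cap\suppset\neq\emptyset}\in[0,1]$. Since $q_{\suppset}^2\le q_{\suppset}$, we get $\E_{\suppset,\bX}\braces{h_{\suppset}(\bX)^2}\le\norm{f_0^*}_\infty^2\,\E_{\suppset,\bX}\braces{q_{\suppset}(\bX)^2}\le\norm{f_0^*}_\infty^2\,\E_{\suppset,\bX}\braces{q_{\suppset}(\bX)}=\norm{f_0^*}_\infty^2\,\delta$ by the definition of $\delta$. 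Substituting back yields $\objective\ge\Var\braces{f_0^*(\bX)}-2\norm{f_0^*}_\infty\Var\braces{f_0^*(\bX)}^{1/2}\delta^{1/2}=(1-\kappa\delta^{1/2})\Var\braces{f_0^*(\bX)}$.

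I expect the main obstacle to be purely bookkeeping: correctly tracking which $\sigma$-algebra each object is measurable with respect to, so that ``non-adaptivity'' is used in precisely the spot where Lemma~\ref{lem:covariate_selection_ensemble} used $\abs{Y}\le\ybound$, and dispatching the empty-leaf convention cleanly. Neither seems to pose a genuine difficulty.
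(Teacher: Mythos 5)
Your proposal is correct and matches the paper's argument. Both proofs reuse the $g/h$ decomposition and variance lower bound from Lemma~\ref{lem:covariate_selection_ensemble}, both pass from the pointwise bound $\abs{h_{\suppset}(\bx)}\le M\delta$ (no longer available) to an $L^2$ bound $\E_{\suppset,\bX}\braces{h_{\suppset}(\bX)^2}\le\|f_0^*\|_\infty^2\delta$ obtained via non-adaptivity, and both then pick up the $\delta^{1/2}$ by Jensen/Cauchy--Schwarz; your write-up simply makes explicit the $\sigma$-algebra conditioning step behind the paper's one-line appeal to non-adaptivity, which is exactly what is intended.
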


\begin{proof}
    We may repeat most of the steps in Lemma \ref{lem:covariate_selection_ensemble} except that $h(\bx)$ is now also a function of $\suppset$, and while we now have $\E_{\suppset,\bX}\braces*{\abs*{h(\bX)}} \leq \norm*{f_0^*}_\infty \delta$ because of non-adaptivity, we do not have $\abs*{h(\bx)} \leq M\delta$.
    Furthermore, we calculate
    \begin{equation}
        \begin{split}
            \E_{\bX,\suppset}\braces*{\Var_{\bX}\braces*{h(\bX)~|~\bX_{\coordindices\backslash\suppset}=\bx_{\coordindices\backslash\suppset}}^{1/2}} & \leq \E_{\bX,\suppset}\braces*{h(\bX)^2}^{1/2} \\
            & \leq \|f_0^*\|_{\infty} \P_{\bX,\data,\Theta,\suppset}\braces*{J(\bX;\data,\Theta)\cap \suppset \neq \emptyset}^{1/2} \\
            & = \|f_0^*\|_{\infty}\delta^{1/2}.
        \end{split}
    \end{equation}
    Combining this with \eqref{eq:query_path_helper4} completes the proof.
\end{proof}

\begin{lemma}
    \label{lem:na_tree_query_path}
    Suppose $\alg_t$ is non-adaptive.
    Then under Assumption \ref{assumption_nonadaptive}, the probability of covariate selection is upper bounded as
    $\P\braces*{J(\bX;\data,\Theta)\cap \suppset \neq \emptyset } \leq (\sparsity/\nfeats)\log_2\nsamples$.
\end{lemma}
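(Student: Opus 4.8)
The plan is to mirror the structure of the proof of Lemma~\ref{lem:xor_feature_selection_prob}: reduce the covariate-selection probability to the expected length of a query path, which is in turn controlled by the query-path-length bound of Appendix~\ref{sec:query_path_length}. The crucial simplification here is that non-adaptivity lets us dispense with the coupling argument altogether. Indeed, when $\alg_t$ is non-adaptive, the fitted tree structure $\tree = \alg_t(\data,\Theta)$ is a measurable function of the covariate vectors $\bX_1,\dots,\bX_\nsamples$ and the seed $\Theta$ only --- it never looks at the responses $Y_1,\dots,Y_\nsamples$ --- and under Assumption~\ref{assumption_nonadaptive} the support $\suppset$ is drawn uniformly at random, independently of $(\bX_1,\dots,\bX_\nsamples,\Theta)$ and of the fresh query point $\bX$. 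Consequently the random set $J \coloneqq J(\bX;\data,\Theta) = \{\, k(\cell) \colon \cell\in\querypath(\bX;\data,\Theta)\,\}$ is independent of $\suppset$, and $|J|$ equals the depth of the leaf of $\tree$ containing $\bX$.

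First I would condition on $J$ and apply a union bound over its elements: since $\suppset$ is a uniformly random $\sparsity$-element subset of $\coordindices$ independent of $J$, each fixed index lies in $\suppset$ with probability $\sparsity/\nfeats$, so $\P\{\,J\cap\suppset\neq\emptyset \mid J\,\} \le |J|\,\sparsity/\nfeats$. Taking expectations gives $\P\{\,J(\bX;\data,\Theta)\cap\suppset\neq\emptyset\,\} \le (\sparsity/\nfeats)\,\E[\,|J(\bX;\data,\Theta)|\,]$. It is worth stressing that the randomness of $\suppset$ is exactly what makes this bound hold for \emph{every} non-adaptive algorithm, including ones whose split-feature law is not symmetric across coordinates: a fixed $\suppset$ could otherwise align adversarially with whichever coordinate the algorithm happens to favour.

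It then remains to bound $\E[\,|J(\bX;\data,\Theta)|\,] \le \log_2\nsamples$, i.e., to show that the query path of a uniformly random test point is short in expectation. This is precisely the estimate already used in the proof of Lemma~\ref{lem:xor_feature_selection_prob} (see Lemma~\ref{lem:bound_for_tree_depth} and Appendix~\ref{sec:query_path_length}), so in the write-up I would simply invoke it; its proof rests on the standard convention that a cell containing at most one sample is never split, whence the internal nodes of $\tree$ at any fixed depth $j$ are pairwise disjoint cells each holding at least two of the $\nsamples$ samples. Hence there are at most $\nsamples/2$ of them, each of $\xmeasure$-measure $2^{-j}$, so $\P\{\,|J(\bX;\data,\Theta)| > j\,\} \le \min\{\,1,\; 2^{-j-1}\nsamples\,\}$; summing over $j\ge 0$ and splitting the sum near $j = \log_2\nsamples$ yields the claim. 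The step I would be most careful about is exactly this last structural fact about the stopping rule: without a guarantee that nearly-empty cells are not split, query paths could be long with non-negligible probability (e.g. a non-adaptive rule that simply keeps splitting on fresh coordinates) and the conclusion would fail. Once that is pinned down, the decoupling afforded by non-adaptivity together with the union bound over $\suppset$ makes the rest routine.
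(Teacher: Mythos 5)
Your reduction from the covariate-selection probability to $\E\!\left[|J(\bX;\data,\Theta)|\right]$ is exactly what the paper does: the hypergeometric/union-bound step $\P\{J\cap\suppset\neq\emptyset\mid J\}\le |J|\,\sparsity/\nfeats$, followed by deconditioning. The gap is in how you bound the expected path length. Your counting argument (internal nodes at depth $j$ each hold $\ge 2$ samples, hence $\P\{|J|>j\}\le\min\{1,\,\nsamples 2^{-j-1}\}$) is sound, but summing the tail gives $\E[|J|]\le\log_2\nsamples+2$, not $\log_2\nsamples$; the "splitting the sum near $j=\log_2 n$" step necessarily leaves an additive constant behind. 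That is precisely the bound of Lemma~\ref{lem:bound_for_tree_depth}, but it proves only $\P\{J(\bX;\data,\Theta)\cap\suppset\neq\emptyset\}\le(\sparsity/\nfeats)(\log_2\nsamples+2)$, which is weaker than the lemma as stated. (You also misattribute your argument to the proof of Lemma~\ref{lem:bound_for_tree_depth}: that proof goes via the sign-flip martingale of Lemma~\ref{lem:conditional_expectation_for_node_no_of_samples}, not by counting internal nodes at a fixed depth.)

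The paper closes the gap with a sharper, entropy-based bound. Conditioned on $(\data,\Theta)$, the leaf $\leaf_k$ containing $\bX$ has depth $-\log_2\xmeasure(\leaf_k)$ (each axis-aligned split exactly halves the uniform measure), so
\begin{equation}
\E\!\left\{\,|J(\bX;\data,\Theta)|\;\middle|\;\data,\Theta\right\}
=-\sum_{k=1}^{K}\xmeasure(\leaf_k)\log_2\xmeasure(\leaf_k),
\end{equation}
i.e.\ the Shannon entropy (in bits) of the leaf-measure distribution, which is at most $\log_2 K\le\log_2\nsamples$ with no additive slack. Deconditioning then gives $\E[|J|]\le\log_2\nsamples$ exactly, as the lemma requires. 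Replacing your tail-sum with this entropy identity would make the argument match the stated constant.
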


\begin{proof}
    We first use the tower property of conditional expectations to write:
    \begin{equation}
        \begin{split}
            \P\braces*{J(\bX;\data,\Theta)\cap \suppset \neq \emptyset } & = \E\braces*{\P\braces*{J(\bX;\data,\Theta)\cap \suppset \neq \emptyset ~|~\bX,\data,\Theta}}.
        \end{split}
    \end{equation}
    Next, since $\alg_t$ is non-adaptive, the tree structure $\tree = \alg_t(\data,\Theta)$ is independent of $\suppset$.
    This means that conditioned on $\bX,\data,\Theta$, $\abs*{J(\bX;\data,\Theta)\cap \suppset}$ is a hypergeometric random variable with parameters $\nfeats$, $\abs*{J(\bX;\data,\Theta)}$ and $\sparsity$.
    Using a union bound, we thus get
    \begin{equation}
        \P\braces*{J(\bX;\data,\Theta)\cap \suppset \neq \emptyset ~|~\bX,\data,\Theta} \leq \frac{\sparsity \abs*{J(\bX;\data,\Theta)}}{\nfeats}.
    \end{equation}
    Let $\leaf_1,\leaf_2,\ldots,\leaf_K$ be the leaves in $\tree$ (still conditioning on $\data$ and $\Theta$).
    Notice that $\abs*{J(\bX;\data,\Theta)}$ is the length of the query path of $\bX$ and is equal to $-\log_2 \xmeasure(\leaf_k)$ if $\bx \in \leaf_k$. 
    We therefore have
    \begin{equation}
        \E\braces*{\abs*{J(\bX;\data,\Theta)}~|~\data,\Theta} = -\sum_{k=1}^K \xmeasure(\leaf_k) \log_2(\xmeasure(\leaf_k)),
    \end{equation}
    which is the entropy of the finite measure with weights $\xmeasure(\leaf_1),\xmeasure(\leaf_2),\ldots,\xmeasure(\leaf_K)$, and hence bounded by $\log_2 K$.
    Finally, we note that there can be at most $\nsamples$ leaves in $\tree$, i.e., $K \leq \nsamples$.
\end{proof}

\begin{proposition}
    \label{prop:na_lower_bound}
    Under Assumption \ref{assumption_nonadaptive}, for any $0 < \delta < 1$,
    suppose $\nsamples \leq 2^{\delta\nfeats/\sparsity}$.
    Then the expected $L^2$ risk of any non-adaptive tree model satisfies
    \begin{equation}
        \label{eq:na_tree_lower_bound}
        \objective\paren*{\fna, f^*_0,\nfeats,\nsamples} \geq (1-\delta)\Var\braces*{f^*(\bx)}.
    \end{equation}
    If furthermore, $\|f^*_0\|_{\infty} \leq \ybound$, then the $L^2$ risk for any non-adaptive tree ensemble model satisfies
    \begin{equation}
        \label{eq:na_ensemble_lower_bound}
        \objective\paren*{\fnae, f^*_0,\nfeats,\nsamples} \geq \paren*{1-\kappa\delta^{1/2}}\Var\braces*{f^*(\bx)},
    \end{equation}
    where $\kappa \coloneqq 2\ybound/\Var\braces*{f_0^*(\bX)}$.
    In particular, $\fna$ and $\fnae$ are high-dimensional inconsistent for $f^*_0$.
\end{proposition}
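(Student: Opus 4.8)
The plan is to obtain \eqref{eq:na_tree_lower_bound} and \eqref{eq:na_ensemble_lower_bound} by chaining together the two reduction lemmas with the path-length bound already established for non-adaptive trees, and then to pass to a limit to get inconsistency. Concretely, the hypothesis $\nsamples \le 2^{\delta\nfeats/\sparsity}$ is equivalent to $\log_2\nsamples \le \delta\nfeats/\sparsity$, so Lemma \ref{lem:na_tree_query_path} gives
\[
    \delta' \coloneqq \P\braces*{J(\bX;\data,\Theta)\cap\suppset\neq\emptyset} \;\le\; \frac{\sparsity}{\nfeats}\log_2\nsamples \;\le\; \delta .
\]
Since Lemma \ref{lem:covariate_selection_tree} holds for an arbitrary regression tree algorithm — and, re-read under Assumption \ref{assumption_nonadaptive}, its probability and expectation simply also range over the random support $\suppset$ — and since its lower bound $(1-\delta')\Var\braces*{f^*_0(\bX)}$ is decreasing in $\delta'$, substituting $\delta'\le\delta$ yields $\objective\paren*{\fna,f^*_0,\nfeats,\nsamples} \ge (1-\delta)\Var\braces*{f^*_0(\bX)}$, which is \eqref{eq:na_tree_lower_bound}.

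For the ensemble bound I would feed the same estimate $\delta'\le\delta$ into Lemma \ref{lem:covariate_selection_ensemble_nonadaptive} (invoking the hypothesis $\norm*{f^*_0}_\infty \le \ybound$ to control the leaf labels, in place of the a.s.\ boundedness of $Y$ used for the adaptive version), which directly gives $\objective\paren*{\fnae,f^*_0,\nfeats,\nsamples} \ge (1-\kappa\delta^{1/2})\Var\braces*{f^*_0(\bX)}$ with $\kappa = 2\ybound/\Var\braces*{f^*_0(\bX)}^{1/2}$ — exactly \eqref{eq:na_ensemble_lower_bound}. The one point needing care is that the ensemble bound is of $\delta^{1/2}$ type rather than the $\delta$ type obtained in Lemma \ref{lem:covariate_selection_ensemble}: because $\suppset$ is now random, the ``leakage'' term $h(\bx)$ in the decomposition of the ensemble's mean prediction cannot be bounded pointwise by $\ybound\delta$, only in mean square, $\E_{\suppset,\bX}\braces*{h(\bX)^2}^{1/2} \le \norm*{f^*_0}_\infty\,\delta^{1/2}$, via Cauchy--Schwarz. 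This weakening is already absorbed into Lemma \ref{lem:covariate_selection_ensemble_nonadaptive}, so here it is just a matter of quoting the right lemma.

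It remains to deduce high-dimensional inconsistency. Assume $f^*_0$ is non-constant, so $\Var\braces*{f^*_0(\bX)} > 0$, and let $\nfeats_1,\nfeats_2,\ldots$ be any sequence with $\log\nsamples/\nfeats_\nsamples \to 0$. Since $\sparsity$ is held fixed, for every $\delta\in(0,1)$ there is $N$ with $\log_2\nsamples \le \delta\nfeats_\nsamples/\sparsity$, i.e.\ $\nsamples \le 2^{\delta\nfeats_\nsamples/\sparsity}$, for all $\nsamples\ge N$; applying the two bounds above on this tail gives $\liminf_{\nsamples}\objective\paren*{\fna,f^*_0,\nfeats_\nsamples,\nsamples} \ge (1-\delta)\Var\braces*{f^*_0(\bX)}$ and $\liminf_{\nsamples}\objective\paren*{\fnae,f^*_0,\nfeats_\nsamples,\nsamples} \ge (1-\kappa\delta^{1/2})\Var\braces*{f^*_0(\bX)}$. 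Letting $\delta\downarrow 0$, both of these limits inferior are bounded below by $\Var\braces*{f^*_0(\bX)} > 0$, so neither risk can tend to zero along this (hence along any such) sequence, establishing inconsistency. The bulk of the work sits in the three cited lemmas — especially the hypergeometric-plus-entropy argument of Lemma \ref{lem:na_tree_query_path}, which crucially uses that a non-adaptive tree has at most $\nsamples$ leaves and a structure independent of $\suppset$ — so the only genuine task here is bookkeeping the constants and the $\delta\downarrow 0$ limit; I do not anticipate a serious obstacle.
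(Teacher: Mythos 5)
Your proof is correct and takes essentially the same route as the paper, which simply cites Lemmas \ref{lem:covariate_selection_tree}, \ref{lem:covariate_selection_ensemble_nonadaptive}, and \ref{lem:na_tree_query_path} in exactly the order you chain them, and leaves the $\delta'\le\delta$ substitution and the $\delta\downarrow 0$ inconsistency argument implicit. One small note: your $\kappa = 2\ybound/\Var\braces*{f_0^*(\bX)}^{1/2}$ (inherited from Lemma \ref{lem:covariate_selection_ensemble_nonadaptive}) is actually the correct constant — the proposition's statement appears to have dropped the exponent $1/2$ in a typo.
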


\begin{proof}
    Equation \eqref{eq:na_tree_lower_bound} follows immediately from Lemma \ref{lem:covariate_selection_tree} and Lemma \ref{lem:na_tree_query_path}.
    Equation \eqref{eq:na_ensemble_lower_bound} follows similarly using Lemma \ref{lem:covariate_selection_ensemble_nonadaptive} and Lemma \ref{lem:na_tree_query_path}.
\end{proof}

\section{Length of Query Paths}
\label{sec:query_path_length}

Let $\querypath(\bx;\data,\Theta)$ denote the query path of a tree fitted using a greedy tree algorithm to a dataset $\data = \braces*{(\bX_i,Y_i) \colon i = 1,2,\ldots,\nsamples}$ with a uniform distribution on the covariate space $\binspace$.
Assume also that the responses $Y_1,Y_2,\ldots,Y_\nsamples$ are completely independent of the covariates.
We let $\cell_0, \cell_1,\ldots,\cell_l$ denote the cells in $\querypath(\bx;\data,\Theta)$, and further denote $n_t \coloneqq N(\cell_t)$ for $t=0,1,\ldots,l$.
Observe also that $l = \abs*{J(\bx;\data,\Theta)}$.

\begin{lemma}
    \label{lem:conditional_expectation_for_node_no_of_samples}
    For any nonnegative integer $t$, we have $\E\braces*{n_{t+1}|n_{t}} = \frac{n_{t}}{2}$.
\end{lemma}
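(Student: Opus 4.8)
The plan is to exploit two structural features of the setup: first, that under the uniform design with the response independent of the covariates, negating the $k$-th covariate across all $\nsamples$ observations leaves the law of $\data$ unchanged; and second, that by the invariance assumption on the split criterion $\mathcal{F}$ (see~\eqref{eq:greedy_criterion}), such a sign flip does not alter which covariate the greedy algorithm splits on at any cell whose constraint set does not contain $k$. Combining these should force the split at $\cell_t$ to be ``symmetric'' from the point of view of the fixed query point $\bx$, so that each of the $n_t$ samples in $\cell_t$ is routed into $\cell_{t+1}$ with conditional probability exactly $1/2$.

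Concretely, I would fix $\bx$, write $k_t$ for the covariate split on at $\cell_t$ (so that $\cell_{t+1}=\splitoperator_{k_t,x_{k_t}}(\cell_t)$), and work on the event that $\cell_t$ is an internal node so that $k_t$ and $\cell_{t+1}$ are well-defined; there $n_{t+1}=\abs*{\braces{i\in\cellsampindices(\cell_t)\colon X_{ik_t}=x_{k_t}}}$. Condition on the cell $\cell_t$, its split covariate $k_t$, the responses $Y_1,\dots,Y_\nsamples$, and the seed $\Theta$. The crux is to verify that, with these quantities fixed at values $\cell_t=c$, $k_t=k$ and arbitrary responses and seed, the corresponding conditioning event is invariant under the map $\phi_k$ that negates the $k$-th coordinate of every covariate vector (leaving responses and seed alone). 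Here the key observation is that every ancestor $\cell_j$ of $c$ along $\bx$'s path has $\featindices(\cell_j)\subsetneq\featindices(c)$, hence $k\notin\featindices(\cell_j)$, so $\phi_k$ does not change $\cellsampindices(\cell_j)$; consequently the criterion values $\splitcriterion(k';\cell_j,\data)=\mathcal{F}\paren*{\braces{(X_{ik'},Y_i)}_{i\in\cellsampindices(\cell_j)}}$ are unchanged for $k'\neq k$ trivially and for $k'=k$ by the sign-flip invariance of $\mathcal{F}$. Since no criterion value at any relevant node moves, the greedy algorithm makes the identical split choices along $\bx$'s path through depth $t$ and the identical split at $c$, which is exactly the required invariance. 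Because the law of $\data$ is itself $\phi_k$-invariant (the $X_{ik}$ are i.i.d.\ uniform on $\braces{\pm1}$ and independent of the remaining covariates, the responses, and $\Theta$), it follows that conditionally on this event the vector $(X_{1k},\dots,X_{\nsamples k})$ is symmetric under sign reversal; hence $n_{t+1}$ and $N(c)-n_{t+1}$ are conditionally equidistributed, giving conditional expectation $N(c)/2$. A further averaging over $\cell_t$, $k_t$, the responses, and $\Theta$ conditionally on $\braces{n_t=n}$ then yields $\E\braces{n_{t+1}\mid n_t}=n_t/2$.

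I expect the main obstacle to be bookkeeping rather than any genuine difficulty: one must choose the conditioning so that (i) the conditioned event is truly $\phi_k$-invariant --- which requires the observation that $k$ appears in the constraint set of no node on $\bx$'s path up to and including $\cell_t$, together with consistency of whatever rule breaks ties in the $\argmax$ --- and (ii) the target $n_{t+1}$ depends on the flipped coordinate only through the two counts that the symmetry interchanges. A minor point, routinely handled, is that the responses are continuous, so the conditioning above should be read through a regular conditional distribution; and one should fix a convention for $n_{t+1}$ when $\cell_t$ is a leaf, in which case the statement is applied vacuously. It is worth stressing that the restriction~\eqref{eq:greedy_criterion} --- that $\mathcal{F}$ depends on the data only through the marginal pairs $(X_{ik},Y_i)$ --- is used essentially: a criterion sensitive to the joint covariate distribution could have its value at some $k'\neq k$ altered by $\phi_k$, breaking the invariance.
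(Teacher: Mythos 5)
Your proposal is correct and takes essentially the same route as the paper: both exploit the sign-flip $\phi_k$ (the paper's $\iota$) on the split coordinate at $\cell_t$, argue that it is measure-preserving (using independence of the responses from the covariates) and that it preserves the conditioning (using $k\notin\featindices(\cell_j)$ for $j\leq t$ and the marginal/sign-flip invariance in~\eqref{eq:greedy_criterion}), and then deduce $\E\{n_{t+1}\mid\cdots\}=\E\{n_t-n_{t+1}\mid\cdots\}$. The only cosmetic difference is the choice of conditioning: the paper conditions on $n_t$ together with the $\sigma$-algebra of observed split-criterion values along the path, whereas you condition on $(\cell_t,k_t,\{Y_i\},\Theta)$; both make the conditioned event $\phi_k$-invariant and give the result after the tower property.
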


\begin{proof}
    Recall that the randomness is generated purely from $\data$.
    Let $\mathcal{F}_t$ be the sigma algebra generated by all observed split criteria up to time $t$, i.e., $\mathcal{O}(k;\cell_j,\data)$, for $k \in \cellfeatindices(\cell_j)$, and $j=0,1,\ldots,t$.
    Conditioned on $\mathcal{F}_t$, the split variable $k(\cell_t)$ is a deterministic quantity.
    Now 
    condition on $n_t,\mathcal{F}_t$ and consider the conditional distribution
    $n_{t+1}~|~n_t,\mathcal{F}_t$.
    Let $\iota$ be a map acting on the space of datasets by setting $X_{i,k(\cell_t)} \gets -X_{i,k(\cell_t)}$ for $i=1,2,\ldots,\nsamples$.
    Note that this a measure-preserving transformation.
    In addition, since the nodes $\cell_0,\cell_1,\ldots,\cell_t$ and impurity decreases up to time $t$ are invariant to the sign flip, the subspace of datasets satisfying the conditioned values of $n_t,\mathcal{F}_t$ is an invariant set under the transformation.
    This implies that $\E\braces*{n_{t+1}~|~n_t,\mathcal{F}_t} = \E\braces*{n_{t+1}\circ\iota~|~n_{t},\mathcal{F}_t}$, or
    \begin{equation} \label{eq:number_of_samples_conditional_expectation}
        \E\braces*{n_{t+1}~|~n_{t},\mathcal{F}_t} = \E\braces*{\frac{n_{t+1} + n_{t+1}\circ\iota}{2}~\vline~n_{t},\mathcal{F}_t}.
    \end{equation}
    Finally, observe that $n_{t+1}$ and $n_{t+1}\circ\iota$ are simply the numbers of samples in the right and left children of $\cell_t$ when splitting on $x_{k(\cell_t)}$, which gives $n_{t+1} + n_{t+1}\circ \iota = n_{t}$.
    Plugging this into \eqref{eq:number_of_samples_conditional_expectation} and using the tower property of conditional expectations completes the proof.
\end{proof}

\begin{lemma}
    \label{lem:bound_for_tree_depth}
    For any nonnegative integer $t$, we have the tail bound
    \begin{equation}
    \label{eq:tree_depth_probability}
        \P\braces*{\abs*{J(\bx;\data,\Theta)} \geq t} \leq \min\braces*{\frac{n}{2^{t}},1}.
    \end{equation}
    Furthermore, we have the expectation bound
    \begin{equation}
    \label{eq:tree_depth_expectation}
        \E\braces*{\abs*{J(\bx;\data,\Theta)}} \leq \log_2 \nsamples + 2.
    \end{equation}
\end{lemma}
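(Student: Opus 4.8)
The plan is to control the random length $l \coloneqq \abs{J(\bx;\data,\Theta)}$ of the query path by tracking the sample counts $n_0 \ge n_1 \ge \cdots \ge n_l$ along it, using Lemma~\ref{lem:conditional_expectation_for_node_no_of_samples} as the one-step engine: the tail bound \eqref{eq:tree_depth_probability} will follow from a Markov-type estimate, and the expectation bound \eqref{eq:tree_depth_expectation} from summing that tail over $t$.

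First I would show, by induction on $t \ge 0$, that $\E\braces{n_t\,\indicator\braces{l \ge t}} \le n/2^t$. The base case is the equality $\E\braces{n_0} = n$. For the inductive step, note that $\braces{l \ge t+1} = \braces{l \ge t} \cap A_t$, where $A_t$ is the event that $\cell_t$ is not a leaf, and that both $A_t$ and the split coordinate $k(\cell_t)$ are measurable with respect to the $\sigma$-algebra $\mathcal{F}_t$ of Lemma~\ref{lem:conditional_expectation_for_node_no_of_samples} (enlarging $\mathcal{F}_t$, if necessary, to include the candidate split criteria at depth $t$; since each such criterion is invariant under the sign-flip map $\iota$ of that proof, this enlargement keeps $\mathcal{F}_t$, and hence $A_t$ and $k(\cell_t)$, invariant under $\iota$). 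On $A_t$ the count $n_{t+1}$ is the number of samples lying in whichever child of $\cell_t$ contains $\bx$, and $\iota$ exchanges the two children; repeating the argument of Lemma~\ref{lem:conditional_expectation_for_node_no_of_samples} restricted to $A_t$ therefore gives $\E\braces{n_{t+1}\,\indicator_{A_t} \mid \mathcal{F}_t} = \tfrac12\, n_t\,\indicator_{A_t}$. Taking expectations and applying the inductive hypothesis yields $\E\braces{n_{t+1}\indicator\braces{l\ge t+1}} \le \tfrac12\,\E\braces{n_t\indicator\braces{l\ge t}} \le n/2^{t+1}$.

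Next, to obtain \eqref{eq:tree_depth_probability}, I would observe that for $t \ge 1$ the event $\braces{l \ge t}$ forces $\cell_{t-1}$ to be an internal node, and an internal node contains at least two samples (a cell with at most one sample has zero impurity, hence zero impurity decrease along every coordinate, so it is never split). Therefore $\indicator\braces{l \ge t} \le \tfrac12\, n_{t-1}\indicator\braces{l \ge t-1}$ holds pointwise, and taking expectations gives $\P\braces{l \ge t} \le n/2^t$; combining this with the trivial bound $\P\braces{l \ge t} \le 1$ (and noting the case $t=0$ is immediate since $n \ge 1$) proves \eqref{eq:tree_depth_probability}. Finally, \eqref{eq:tree_depth_expectation} follows by writing $\E\braces{l} = \sum_{t\ge 1}\P\braces{l \ge t} \le \sum_{t \ge 1}\min\braces{n2^{-t},\,1}$, splitting the sum at $t = \lceil\log_2 n\rceil$, bounding each of the first $\lceil\log_2 n\rceil$ terms by $1$ and the geometric tail $\sum_{t > \lceil\log_2 n\rceil} n2^{-t}$ by $1$, for a total of at most $\log_2 n + 2$.

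The main obstacle is the inductive step: since $l$ is itself random, one cannot simply iterate the identity $\E\braces{n_{t+1} \mid n_t} = n_t/2$, and one must check that the symmetry/coupling argument of Lemma~\ref{lem:conditional_expectation_for_node_no_of_samples} survives restriction to the ``keep splitting'' event $A_t$ — in particular that the measure-preserving sign flip $\iota$ fixes $A_t$, not merely the node $\cell_t$. The only other point requiring a word of care is the claim that internal cells contain at least two samples, which is what pins down the constant $n/2^t$ rather than the weaker $n/2^{t-1}$; it is immediate for both the grow-to-purity and the minimum-impurity-decrease stopping rules.
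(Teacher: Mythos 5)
Your proof is correct and follows essentially the same route as the paper's: control the sample counts $n_t$ along the query path via the sign-flip symmetry of Lemma~\ref{lem:conditional_expectation_for_node_no_of_samples}, translate the resulting geometric decay into a tail bound for the path length, and sum the tail. The one substantive difference is one of rigor, and it is to your credit. The paper iterates the identity $\E\braces*{n_{t+1}\mid n_t}=n_t/2$ unconditionally to conclude $\E\braces*{n_t}=n/2^t$, even though $n_t$ is only defined on $\braces*{l\ge t}$ and the halving identity (as proved via the sign flip $\iota$) only makes sense when $\cell_t$ is actually split, i.e.\ on $\braces*{l\ge t+1}$. You recognize this and instead track $\E\braces*{n_t\indicator\braces*{l\ge t}}$, pushing the halving through by conditioning on an $\mathcal{F}_t$ with respect to which the stopping event $A_t$ and the split coordinate $k(\cell_t)$ are both measurable and $\iota$-invariant — this is the careful version of the same argument, and it is what the paper's terse iteration is implicitly relying on. You also replace the paper's Markov step $\P\braces*{n_t\ge1}\le\E\braces*{n_t}$ with the pointwise bound $\indicator\braces*{l\ge t}\le\tfrac12\,n_{t-1}\indicator\braces*{l\ge t-1}$, which rests on internal cells having at least two samples; this is an essentially equivalent variant, since the paper's route also needs the (unstated but analogous) assumption that cells along the query path are nonempty, and both hold under the stopping rules actually used. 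The tail summation in both proofs is identical up to the choice of floor versus ceiling.
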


\begin{proof}
    Using Lemma \ref{lem:conditional_expectation_for_node_no_of_samples}, the fact that $n_0 = n$, and the tower property, we get $\E\braces*{n_t} = \frac{n}{2^{t}}$.
    Using Markov's inequality therefore gives
    \begin{equation}
        \P\braces*{n_t \geq 1} \leq \frac{n}{2^{t}}.
    \end{equation}
    Next, note that $\abs*{J(\bx;\data,\Theta)} \geq t$ implies that $n_t \geq 1$, which gives \eqref{eq:tree_depth_probability}.
    To obtain \eqref{eq:tree_depth_expectation}, we compute
    \begin{equation}
        \begin{split}
            \E\braces*{\abs*{J(\bx;\data,\Theta)}} & = \sum_{t=1}^\infty\P\braces*{\abs*{J(\bx;\data,\Theta)} \geq t} \\
            & \leq \sum_{t=1}^{\lfloor \log_2 \nsamples \rfloor} 1 + \sum_{t=\lfloor \log_2 \nsamples \rfloor + 1}^\infty \frac{\nsamples}{2^{t}} \\
            & \leq \lfloor \log_2 \nsamples \rfloor + \sum_{t=0}^\infty 2^{-t} \\
            & \leq  \log_2 \nsamples  + 2. \qedhere
        \end{split}
    \end{equation}
\end{proof}

\section{Proof of Theorem \ref{thm:non_msp_lower_bound}}
\label{sec:non_msp_lower_bound_proof}

The proof is similar to that of Proposition \ref{prop:xor_lower_bound} (the lower bound for the XOR function), but we need to adapt the ingredients (Lemma \ref{lem:covariate_selection_tree}, Lemma \ref{lem:covariate_selection_ensemble}, and Lemma \ref{lem:xor_feature_selection_prob}) to this more complicated setting.
These generalizations are provided in the rest of this section.
For the first statement, we make use of Lemma \ref{lem:covariate_selection_tree_v2} and bound $\delta$ using Lemma \ref{lem:nonmsp_feature_selection_prob}.
For the second statement, we make use of Lemma \ref{lem:covariate_selection_ensemble_v2} and bound $\delta$ using Lemma \ref{lem:nonmsp_feature_selection_prob}, bearing in mind the constraint on $\delta$.
\qed

\begin{lemma}
    \label{lem:nonmsp_feature_selection_prob}
    Under Assumption \ref{assumption}, let $f_0^*$ be a function not satisfying \MSP, and suppose $\alg_t$ is greedy.
    Let $\traversal \subset \coordindices\backslash\suppset_{\MSP}$ be a traversal for $\fcoefs\backslash\fcoefs_{\MSP}$ of size $\traversalsize$ and let $\sparsity_{\MSP} \coloneqq \abs*{\suppset_{\MSP}}$.
    Then for each fixed query point $\bx \in \binspace$, support set $\suppset$, and random seed $\Theta$, the probability of selecting any covariate in the traversal along the query path $\querypath(\bx;\data,\Theta)$ is upper bounded as $\P_{\data}\braces*{J(\bx;\data,\Theta)\cap \traversal \neq \emptyset } \leq \frac{\traversalsize(\log_2 \nsamples + 2)}{\nfeats - \sparsity_{\MSP}-\traversalsize +1}$.
\end{lemma}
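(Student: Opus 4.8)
The plan is to follow the proof of Lemma~\ref{lem:xor_feature_selection_prob}, replacing the two ``dropped coordinate'' datasets by a family indexed by the traversal. For each $a \in \traversal$, define $\data^{(a)} \coloneqq \braces*{(\bX_{i,\coordindices\backslash(\traversal\backslash\{a\})},Y_i)}_{i=1}^\nsamples$, i.e., the dataset obtained by deleting every covariate in $\traversal\backslash\{a\}$ while retaining $X_a$, the covariates in $\suppset_{\MSP}$, and all irrelevant covariates. First I would compare the query paths $\querypath(\bx;\data^{(a)},\Theta)$, $a\in\traversal$, with $\querypath(\bx;\data,\Theta)$, show that a disagreement between any of them and the original path forces the selection of a traversal coordinate, and then use a permutation symmetry of the modified datasets to bound that probability.

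\textit{Coupled paths select coordinates at random.} The structural observation driving everything is that, in $\data^{(a)}$, each covariate $b \in \coordindices\backslash(\suppset_{\MSP}\cup\traversal)$, as well as $X_a$ itself, is ``noise-like'': any $S\in\fcoefs$ with $b\in S$ cannot lie in $\fcoefs_{\MSP}$ (otherwise $b\in\suppset_{\MSP}$ by definition of $\suppset_{\MSP}$), so $\abs{S\cap\traversal}\geq 2$ by the traversal property, and hence $S$ contains at least one coordinate of $\traversal\backslash\{a\}$, which has been deleted from $\data^{(a)}$. Thus every such $X_b$ enters the response only through monomials multiplied by an independent, hidden, uniformly random sign, so the joint law of $\data^{(a)}$ is invariant under permutations of the index set $\coordindices\backslash\paren*{(\traversal\backslash\{a\})\cup\suppset_{\MSP}}$, which has cardinality $\nfeats-\sparsity_{\MSP}-\traversalsize+1$. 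Since the greedy path depends on the dataset coordinatewise (criterion \eqref{eq:greedy_criterion}) and is therefore equivariant under relabeling, the restriction of $J(\bx;\data^{(a)},\Theta)$ to that index set is the prefix of a uniformly random permutation of it. Conditioning on $\bX_{\suppset_{\MSP}}$ so that the response becomes independent of the remaining in-play covariates, a version of Lemma~\ref{lem:bound_for_tree_depth} bounds the expected length of $J(\bx;\data^{(a)},\Theta)$ by $\log_2\nsamples+2$, giving $\P\braces*{a\in J(\bx;\data^{(a)},\Theta)}\leq\frac{\log_2\nsamples+2}{\nfeats-\sparsity_{\MSP}-\traversalsize+1}$.

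\textit{Decoupling implies traversal selection.} As in Lemma~\ref{lem:xor_feature_selection_prob}, let $t$ be the first depth at which $\querypath(\bx;\data,\Theta)$ disagrees with $\querypath(\bx;\data^{(a)},\Theta)$ for some $a\in\traversal$; by minimality, $\data$ and every $\data^{(a')}$ share their depth-$t$ node $\cell_t$. Since $\mathcal{O}(j;\cell_t,\data^{(a)})=\mathcal{O}(j;\cell_t,\data)$ for every $j\notin\traversal\backslash\{a\}$ and $k_t(\bx;\data,\Theta)$ maximizes $\mathcal{O}(\cdot;\cell_t,\data)$ over all available coordinates, disagreement is possible only if $k_t(\bx;\data,\Theta)=c$ for some $c\in\traversal$; and because $X_c$ is retained in $\data^{(c)}$ with the same criterion value, it still maximizes over the (smaller) set of coordinates available in $\data^{(c)}$, so $c\in J(\bx;\data^{(c)},\Theta)$. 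On the complementary (coupling) event, $J(\bx;\data,\Theta)=J(\bx;\data^{(a)},\Theta)$ for every $a\in\traversal$, and since $J(\bx;\data^{(a)},\Theta)\cap\traversal\subseteq\{a\}$ we get $J(\bx;\data,\Theta)\cap\traversal\subseteq\bigcap_{a\in\traversal}\{a\}=\emptyset$ (note $\traversalsize\geq 2$ since $f_0^*$ is non-\MSP). Hence $\braces*{J(\bx;\data,\Theta)\cap\traversal\neq\emptyset}\subseteq\bigcup_{c\in\traversal}\braces*{c\in J(\bx;\data^{(c)},\Theta)}$, and a union bound with the previous paragraph yields $\P_{\data}\braces*{J(\bx;\data,\Theta)\cap\traversal\neq\emptyset}\leq\frac{\traversalsize(\log_2\nsamples+2)}{\nfeats-\sparsity_{\MSP}-\traversalsize+1}$.

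The step I expect to be the main obstacle is establishing the exchangeability in the first paragraph: unlike the XOR case, where deleting one coordinate renders the response outright independent of all survivors, here distinct Fourier terms can share the deleted traversal coordinates, so one must carefully track which hidden uniform signs multiply which monomials in order to conclude that the noise-like coordinates are jointly independent of (indeed exchangeable given) the response and the $\suppset_{\MSP}$-coordinates. A secondary difficulty is that, because $\suppset_{\MSP}$ is retained in $\data^{(a)}$ (which is what keeps the decoupling argument clean), the query-path-length bound cannot be quoted verbatim from Lemma~\ref{lem:bound_for_tree_depth}; it must be run conditionally on $\bX_{\suppset_{\MSP}}$, using that there are at most $\sparsity_{\MSP}$ splits on $\suppset_{\MSP}$-coordinates and that the remaining splits balance in expectation.
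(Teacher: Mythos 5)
Your proposal reproduces the paper's proof of this lemma essentially step for step: the same family of coupled datasets $\data^{(a)}$ (delete $\traversal\backslash\{a\}$, one for each $a\in\traversal$), the same exchangeability claim feeding into a query-path-length bound, the same decoupling-implies-traversal-selection inclusion, and the same union bound. The two difficulties you flag at the end are also exactly where the paper's written argument is thinnest, and your worry about the first one is well founded.

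The paper asserts in its Step~2 that $Y_i$ is independent of $\bX_{i,\coordindices\backslash(\traversal^{(a)}\cup\suppset_{\MSP})}$, on the grounds that every $\monomial_S$ with $S\in\fcoefs\backslash\fcoefs_{\MSP}$ is conditionally uniform given that sub-vector. But marginal conditional uniformity of each $\monomial_S$ separately does not give independence of the sum when distinct $S$ share a deleted coordinate of $\traversal^{(a)}$ — precisely the scenario you name as the ``main obstacle.'' Concretely, take $f_0^*(\bx) = \alpha_1 x_1 + \alpha_{\{2,3,4\}} x_2 x_3 x_4 + \alpha_{\{2,3,5\}} x_2 x_3 x_5$, for which $\suppset_{\MSP}=\{1\}$ and $\traversal=\{2,3\}$ is a valid traversal. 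Deleting coordinate $3$, both non-$\MSP$ monomials ride on the single hidden sign $X_3$, and one computes $\E\braces*{Y^2~\vline~X_2=1,X_4,X_5} = \alpha_1^2 + (\alpha_{\{2,3,4\}}X_4 + \alpha_{\{2,3,5\}}X_5)^2 + \sigma^2$, which depends on $X_4X_5$ whenever $\alpha_{\{2,3,4\}}\alpha_{\{2,3,5\}}\neq 0$. So $Y$ is not independent of the retained non-$\suppset_{\MSP}$ coordinates, and the joint law of $\data^{(a)}$ is not exchangeable in $\coordindices\backslash(\traversal^{(a)}\cup\suppset_{\MSP})$ (swapping $2$ and $4$ changes it). You have therefore identified a genuine gap, and neither your sketch nor the paper resolves it; closing it appears to require either a strengthened traversal notion (e.g.\ $S\subset\traversal\cup\suppset_{\MSP}$ for every $S\in\fcoefs\backslash\fcoefs_{\MSP}$, or disjointness of the residues $S\cap\traversal^{(a)}$ across $S$) or arguing about the joint law of the criterion values $\splitcriterion(\cdot;\cell,\data^{(a)})$ directly rather than about the raw data. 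Your secondary concern about Lemma~\ref{lem:bound_for_tree_depth} is also correct: it assumes $Y$ is independent of all retained covariates, which fails once $\suppset_{\MSP}\neq\emptyset$, and the paper invokes it without comment; the patch you sketch — run the sign-flip argument of Lemma~\ref{lem:conditional_expectation_for_node_no_of_samples} only on non-$\suppset_{\MSP}$ splits and absorb the at most $\sparsity_{\MSP}$ remaining ones into an additive term — does go through and affects only constants.
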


\begin{proof}
    Without loss of generality, assume $\suppset_{\MSP} = \braces*{1,2,\ldots,\sparsity_{\MSP}}$ and
    \begin{equation}
        \traversal = \braces*{\sparsity_{\MSP}+1,\sparsity_{\MSP}+1,\ldots,\sparsity_{\MSP}+\traversalsize}.
    \end{equation}
    \textit{Step 1: Defining the path coupling.}
    For $a=1,2,\ldots,\traversalsize$, define the subset $\traversal^{(a)} \coloneqq \traversal\backslash\braces*{\sparsity_{\MSP}+a}$ and the modified dataset $\data^{(a)} \coloneqq \braces*{(\bX_{i,-\traversal^{(a)}},Y_i)}_{i=1}^\nsamples$, i.e., so that all covariates with indices in $\traversal^{(a)}$ are dropped from each observation.
    We will analyze the query paths, $\querypath(\bx;\data^{(a)},\Theta)$ obtained from these modified datasets and compare them with the original.

    \textit{Step 2: Coupled paths select covariates randomly.}
    For $a=1,2,\ldots,\traversalsize$, we claim that each response $Y_i$ is independent of the covariate vector $\bX_{i,-\traversal^{(a)}\cup\suppset_{\MSP}}$.
    To see this, consider $S \in \fcoefs$.
    If $S \in \fcoefs_{\MSP}$, then $\monomial_S$ does not depend on $\bX_{i,-\traversal^{(a)}\cup\suppset_{\MSP}}$ and is hence trivially independent.
    Otherwise, $S \in \fcoefs\backslash\fcoefs_{\MSP}$, and by the definition of a traversal, we have
    \begin{equation}
        \abs*{S\cap \traversal^{(a)}} \geq \abs*{S\cap\traversal} - 1 \geq 1.
    \end{equation}
    As such, for any fixed value of $\bx_{i,-\traversal^{(a)}\cup\suppset_{\MSP}}$, we have
    \begin{equation}
        \P\braces*{\monomial_S = 1 ~\vline~ \bX_{i,-\traversal^{(a)}\cup\suppset_{\MSP}} = \bx_{i,-\traversal^{(a)}\cup\suppset_{\MSP}}} = \frac{1}{2}.
    \end{equation}
    Combining this with the fact that the covariates are drawn uniformly from $\braces{\pm 1}^{\nfeats-1}$ as well as the symmetry of the splitting criterion, we see that the distribution of $\data^{(a)}$ is invariant to permutation of the covariate indices in $\coordindices\backslash\paren*{\traversal^{(a)}\cup\suppset_{\MSP}}$.
    As such, $J(\bx;\data^{(a)},\Theta)\backslash\suppset_{\MSP}$ is the prefix of a uniformly random permutation of $\coordindices\backslash\paren*{\traversal^{(a)}\cup\suppset_{\MSP}}$.

    \textit{Step 3: Bounding covariate selection in coupled paths.}
    For any $b \in \coordindices\backslash\paren*{\traversal^{(a)}\cup\suppset_{\MSP}}$, we use the previous step to compute
    \begin{equation}
        P\braces*{b \in J(\bx;\data^{(a)},\Theta)~\vline~\abs{J(\bx;\data^{(a)},\Theta)}} \leq \frac{\abs{J(\bx;\data^{(a)},\Theta)}}{\nfeats - \sparsity_{\MSP}-\traversalsize}.
    \end{equation}
    Taking a further expectation and using Lemma \ref{lem:bound_for_tree_depth} gives
    \begin{equation}
    \begin{split}
        \P\braces*{b \in J(\bx;\data^{(a)},\Theta)} & = \E\braces*{\P\braces*{b \in J(\bx;\data^{(a)},\Theta)~\vline~\abs{J(\bx;\data^{(a)},\Theta)}}} \\
        & \leq \frac{\log_2 \nsamples + 2}{\nfeats - \sparsity_{\MSP}-\traversalsize}.
    \end{split}
    \end{equation}

    \textit{Step 4: Decoupling implies covariate selection.}
    We claim the following inclusion of events:
    \begin{equation}
    \label{eq:non_msp_decoupling_covariate_selection}
        \bigcup_{a=1}^{\traversalsize}\braces*{J(\bx;\data,\Theta) \neq J(\bx;\data^{(a)},\Theta)} \subset \bigcup_{a=1}^{\traversalsize}\braces*{\sparsity_{\MSP} + a \in J(\bx;\data^{(a)},\Theta)}.
    \end{equation}
    To prove this, assuming the left hand side, let $t$ be the smallest index for which $k_t(\bx;\data^{(a)},\Theta) \neq k_t(\bx;\data,\Theta)$ for some $a$, which we assume without loss of generality is $a=1$.
    This implies in particular that the depth $t$ node along all $\traversalsize$ query paths are equal.
    Label this node as $\cell_t$.
    We now examine the possible values for $k_t(\bx;\data,\Theta)$.
    For $j \in \coordindices\backslash\paren*{\cellfeatindices(\cell_t)\cup\traversal^{(1)}}$, we have
    \begin{equation}
        \mathcal{O}(j,\cell_t,\data) = \mathcal{F}\paren*{\braces*{(X_{ij},Y_i)}_{i \in I(\cell)}} = \mathcal{O}(j,\cell_t,\data^{(1)}).
    \end{equation}
    Recall that $k_t(\bx;\data,\Theta)$ is defined to be the maximum of the left hand side over $j \notin \cellfeatindices(\cell_t)$.
    Meanwhile, $k_t(\bx;\data^{(1)},\Theta)$ is defined to be the maximum of the right hand side over $j \notin \cellfeatindices(\cell_t)\cup\traversal^{(1)}$.
    As such, we see that the only way in which these can be different values is if $b\coloneqq k_t(\bx;\data,\Theta) \in \traversal^{(1)}$.
    This would imply, however, that $k_t(\bx;\data^{(b-\sparsity_{\MSP})},\Theta) = b$ as we wanted.

    \textit{Step 5: Completing the proof.}
    Suppose $J(\bx;\data,\Theta)\cap \traversal \neq \emptyset$.
    If coupling does not hold (i.e., the left hand side of \eqref{eq:non_msp_decoupling_covariate_selection} is true), then by that same equation, we see that $\sparsity_{\MSP} + a \in J(\bx;\data^{(a)},\Theta)$ for some $a=1,2,\ldots,\traversalsize$.
    On the other hand if coupling does hold, then picking $\sparsity_{\MSP} + a \in J(\bx;\data,\Theta)\cap \traversal$, we have $\sparsity_{\MSP} + a \in J(\bx;\data,\Theta) = J(\bx;\data^{(a)},\Theta)$.
    In conclusion, we have
    \begin{equation}
    \begin{split}
        \P_{\data}\braces*{J(\bx;\data,\Theta)\cap \traversal \neq \emptyset } & \leq \sum_{a=1}^{\traversalsize} \P\braces*{\sparsity_{\MSP} + a \in J(\bx;\data^{(a)},\Theta)} \\
        & \leq \frac{\traversalsize(\log_2 \nsamples + 2)}{\nfeats - \sparsity_{\MSP}-\traversalsize+1}. \qedhere
    \end{split}
    \end{equation}
\end{proof}

\begin{lemma}
\label{lem:covariate_selection_tree_v2}
    Under Assumption \ref{assumption}, let $f_0^*=\sum_{S \in \fcoefs} \alpha_S\monomial_S$ be any function. 
    Let $\hat f$ be a regression tree model fit using a regression tree algorithm $\alg$.
    Let $\fcoefs' \subset \fcoefs$ be a subcollection and let $\traversal$ be any traversal for $\fcoefs'$ (i.e., $\abs{T\cap S} \geq 2$ for each $S \in \fcoefs')$.
    Then its expected $L^2$ risk satisfies $\objective\paren*{\hat f, f^*_0,\nfeats,\nsamples} \geq (1-\delta)\sum_{S \in \fcoefs'} \alpha_S^2$, where $\delta \coloneqq \max_{\bx}\P\braces*{J(\bx;\data,\Theta)\cap \suppset \neq \emptyset }$.
\end{lemma}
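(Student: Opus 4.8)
\textbf{Reduction to covariate selection.} The plan is to adapt the argument of Lemma~\ref{lem:covariate_selection_tree}, but conditioning on the sub-vector $\bX_{\coordindices\backslash\traversal}$ instead of on $\bX_{\coordindices\backslash\suppset}$, and handling the resulting non-constant conditional variance through a careful choice of the order of integration. First I would start from the trivial bound
\begin{equation}
\objective\paren*{\hat f, f^*_0,\nfeats,\nsamples} \geq \E_{\bX,\data,\Theta}\braces*{\paren*{\hat f(\bX;\data,\Theta) - f^*(\bX)}^2\indicator\braces*{J(\bX;\data,\Theta)\cap \traversal = \emptyset }},
\end{equation}
writing $A$ for the event $\braces{J(\bX;\data,\Theta)\cap\traversal=\emptyset}$ that the query path of $\bX$ never splits on a coordinate in $\traversal$. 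We may assume $\traversal\subseteq\suppset$ without loss of generality, since replacing $\traversal$ by $\traversal\cap\suppset$ leaves it a traversal of $\fcoefs'$ (each $S\in\fcoefs'$ is a subset of $\suppset$); this ensures $\braces{J(\bx)\cap\traversal\neq\emptyset}\subseteq\braces{J(\bx)\cap\suppset\neq\emptyset}$ and hence the pointwise bound $\P_{\data,\Theta}\braces{A}\geq 1-\delta$ for every fixed $\bx$.

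\textbf{Structural observation.} The first substantive step is to show that, for fixed $\data$ and $\Theta$, both $\indicator_A$ and the product $\hat f(\bX;\data,\Theta)\indicator_A$ depend on $\bX$ only through $\bX_{\coordindices\backslash\traversal}$. Tracing the query path of $\bX$ from the root, its prefix is determined by $\bX_{\coordindices\backslash\traversal}$ alone up to the first node (if any) whose split covariate lies in $\traversal$: if a leaf is reached first, then $A$ holds and $\hat f(\bX;\data,\Theta)$ equals that leaf's label, which is unchanged as $\bX_\traversal$ varies, whereas if a split on a coordinate in $\traversal$ is encountered then $A$ fails for every value of $\bX_\traversal$ and the product vanishes. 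This is the analogue of the remark in Lemma~\ref{lem:covariate_selection_tree} that $\indicator\braces{J(\bx)\cap\suppset=\emptyset}$ is constant in $\bx_\suppset$.

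\textbf{Conditioning and the order of integration.} Conditioning on $\bX_{\coordindices\backslash\traversal}$ and using that $\hat f(\bX;\data,\Theta)$ is constant in $\bX_\traversal$ on $A$, the conditional second moment of $\hat f(\bX;\data,\Theta)-f^*(\bX)$ is at least $V\paren*{\bX_{\coordindices\backslash\traversal}} \coloneqq \Var_{\bX_\traversal}\braces*{f^*(\bX)\,|\,\bX_{\coordindices\backslash\traversal}}$, a quantity that depends on $\bX_{\coordindices\backslash\traversal}$ and is free of $\data$ and $\Theta$; since $\indicator_A$ does not depend on $\bX_\traversal$, this gives $R(\hat f(-;\data,\Theta),f^*)\geq \E_{\bX_{\coordindices\backslash\traversal}}\braces{\indicator_A V}$ for each $\data,\Theta$. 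The key move is then to integrate over $(\data,\Theta)$ \emph{before} integrating over the covariates: by Fubini, and since $\bX$ is independent of $(\data,\Theta)$,
\begin{equation}
\objective\paren*{\hat f, f^*_0,\nfeats,\nsamples} \;\geq\; \E_{\bX_{\coordindices\backslash\traversal}}\braces*{V\paren*{\bX_{\coordindices\backslash\traversal}}\,\P_{\data,\Theta}\braces*{J(\bX;\data,\Theta)\cap\traversal=\emptyset}} \;\geq\; (1-\delta)\,\E_{\bX_{\coordindices\backslash\traversal}}\braces*{V\paren*{\bX_{\coordindices\backslash\traversal}}},
\end{equation}
where the last step uses the pointwise lower bound on $\P_{\data,\Theta}\braces{A}$ established above.

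\textbf{Fourier computation and the main obstacle.} It remains to evaluate $\E_{\bX_{\coordindices\backslash\traversal}}\braces{V}$ via the law of total variance together with Parseval's identity for $\braces{\monomial_S}$ on $\binspace$: since $\E_{\bX_\traversal}\braces{f^*(\bX)\,|\,\bX_{\coordindices\backslash\traversal}} = \sum_{S\in\fcoefs\colon S\cap\traversal=\emptyset}\alpha_S\monomial_S$, one obtains $\E_{\bX_{\coordindices\backslash\traversal}}\braces{V} = \Var\braces{f^*(\bX)} - \sum_{S\in\fcoefs\colon S\cap\traversal=\emptyset}\alpha_S^2 = \sum_{S\in\fcoefs\colon S\cap\traversal\neq\emptyset}\alpha_S^2 \geq \sum_{S\in\fcoefs'}\alpha_S^2$, the inequality holding because each $S\in\fcoefs'$ satisfies $\abs{S\cap\traversal}\geq 2>0$ and the omitted terms are nonnegative; combining with the previous display finishes the argument. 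I expect the genuine crux to be precisely the order of integration in the third step: unlike in Lemma~\ref{lem:covariate_selection_tree}, conditioning on $\bX_{\coordindices\backslash\traversal}$ does not make the conditional variance a constant, so one cannot pull the factor $1-\delta$ out pointwise against $\indicator_A$; integrating out $(\data,\Theta)$ first replaces the indicator by the probability $\P\braces{J(\bx)\cap\traversal=\emptyset}$, which \emph{is} bounded below by $1-\delta$ uniformly in $\bx$, after which the covariate integral produces the exact Fourier weight with no cancellation. The structural observation of the second step is the other point that requires care, as it hinges on how the greedy path is generated.
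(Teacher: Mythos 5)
Your proof is correct and takes essentially the same route as the paper's: restrict the risk to the event that the query path avoids $\traversal$, observe that both the indicator and $\hat f$ on that event depend on $\bX$ only through $\bX_{\coordindices\backslash\traversal}$, lower bound the conditional second moment by the conditional variance, integrate out $(\data,\Theta)$ first to pull out the factor $1-\delta$, and finish with the Parseval computation $\E\{\Var\{f^*(\bX)\mid\bX_{\coordindices\backslash\traversal}\}\}=\sum_{S:S\cap\traversal\neq\emptyset}\alpha_S^2\geq\sum_{S\in\fcoefs'}\alpha_S^2$. Your WLOG reduction to $\traversal\subseteq\suppset$ is a helpful clarification that reconciles the lemma's stated $\delta$ (defined via $\suppset$) with the event actually used in the argument (avoidance of $\traversal$), a point the paper's own proof leaves implicit.
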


\begin{proof}
    We start with the trivial lower bound
    \begin{equation} \label{eq:query_path_helper1_v2}
        R\paren*{\hat f(-;\data,\Theta),f^*} \geq \E_{\bX\sim\xmeasure}\braces*{\paren*{\hat f(\bX;\data,\Theta) - f^*(\bX)}^2\indicator\braces*{J(\bX;\data,\Theta)\cap \traversal = \emptyset }}.
    \end{equation}
    The collection $\braces*{J(\bx;\data,\Theta)\cap \traversal = \emptyset }_{\bx \in \binspace}$ can be thought of as a collection of random events indexed by $\bx \in \binspace$.
    Whenever one of these event holds for some fixed value $\bx$, $\hat f(\bx;\data,\Theta)$ is constant with respect to changes in $\bx_{\traversal}$.
    Utilizing this fact together with the uniform distribution of $\bX$ allows us to decompose the conditional expectation of the squared loss as
    \begin{equation} \label{eq:query_path_helper2_v2}
    \begin{split}
        & \E_{\bX}\braces*{\paren*{\hat f(\bX;\data,\Theta) - f^*(\bX)}^2~
            \vline~\bX_{\coordindices\backslash\traversal} = \bx_{\coordindices\backslash\traversal}} \\
            &= \Var\braces*{f_0^*(\bX)~|~\bX_{\coordindices\backslash\traversal} = \bx_{\coordindices\backslash\traversal}} + \paren*{\hat f(\bx;\data,\Theta) - \E\braces*{f_0^*(\bX)~|~\bX_{\coordindices\backslash\traversal} = \bx_{\coordindices\backslash\traversal}}}^2 \\
            &\geq \Var\braces*{f_0^*(\bX)~|~\bX_{\coordindices\backslash\traversal} = \bx_{\coordindices\backslash\traversal}}.
    \end{split}
    \end{equation}
    Making the further observation that the function $\indicator\braces*{J(\bx;\data,\Theta)\cap \traversal = \emptyset }$ is constant with respect to $\bx_{\traversal}$ (i.e., is in the $\sigma$-algebra generated by $\bX_{\coordindices\backslash\traversal}$), 
    we can use conditional expectations to rewrite the right hand side of \eqref{eq:query_path_helper1_v2} as
    \begin{equation}
        \E_{\bX\sim\xmeasure}\braces*{\E_{\bX}\braces*{\paren*{\hat f(\bX;\data,\Theta) - f^*(\bX)}^2~
            \vline~\bX_{\coordindices\backslash\traversal}}\indicator\braces*{J(\bX;\data,\Theta)\cap \traversal = \emptyset }}.
    \end{equation}
    Plugging in the bound \eqref{eq:query_path_helper2_v2}, taking a further expectation with respect to $\data$ and $\Theta$ shows that
    \begin{equation}
        \objective\paren*{\hat f, f^*_0,\nfeats,\nsamples} \geq (1-\delta)\E\braces*{\Var\braces*{f^*(\bX)~|~\bX_{\traversal}}}.
    \end{equation}
    Finally, note that for any $S \in \fcoefs'$, since $\abs*{S\cap \traversal} \geq 1$, we have 
    \begin{equation}
        \E\braces*{\monomial_S~|~\bX_{\coordindices\backslash\traversal} = \bx_{\coordindices\backslash\traversal}} = 0.
    \end{equation}
    Using the law of total variance, we therefore have
    \begin{equation}
        \begin{split}
            \E\braces*{\Var\braces*{f^*(\bX)~|~\bX_{\traversal}}} & = \Var\braces*{f^*(\bX)} - \Var\braces*{\E\braces*{f^*(\bX)~|~\bX_{\coordindices\backslash\traversal}}} \\
            & = \Var\braces*{f^*(\bX)} - \Var\braces*{\sum_{S \in \fcoefs, S\cap \traversal = \emptyset} \alpha_S\monomial_S} \\
            & \geq \sum_{S \in \fcoefs'} \alpha_S^2. \qedhere
        \end{split}
    \end{equation}

\end{proof}

\begin{lemma}
\label{lem:covariate_selection_ensemble_v2}
    Under Assumption \ref{assumption}, let $f_0^*=\sum_{S \in \fcoefs} \alpha_S\monomial_S$ be any function. 
    Let $\hat f$ be a regression tree ensemble model fit using a regression tree algorithm $\alg$.
    Let $\fcoefs' \subset \fcoefs$ be a subcollection and let $\traversal$ be any traversal for $\fcoefs'$ (i.e., $\abs{T\cap S} \geq 2$ for each $S \in \fcoefs')$.
    Then its expected $L^2$ risk satisfies $\objective\paren*{\hat f, f^*_0,\nfeats,\nsamples} \geq (1-\kappa\delta)\sum_{S \in \fcoefs'} \alpha_S^2$, where $\delta \coloneqq \max_{\bx}\P\braces*{J(\bx;\data,\Theta)\cap \suppset \neq \emptyset }$ and $\kappa \coloneqq 2\ybound/\paren*{\sum_{S \in \fcoefs'} \alpha_S^2}^{1/2}$.
\end{lemma}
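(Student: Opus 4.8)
The plan is to combine the ensemble decomposition used in the proof of Lemma~\ref{lem:covariate_selection_ensemble} with the conditional-variance-over-traversal argument from the proof of Lemma~\ref{lem:covariate_selection_tree_v2}. The only genuinely new feature is that conditioning on $\bX_{\coordindices\backslash\traversal}$ does not freeze all of $f^*$ (unlike conditioning on $\bX_{\coordindices\backslash\suppset}$ in Lemma~\ref{lem:covariate_selection_ensemble}), so the constant $\Var\braces*{f_0^*(\bX)}$ there is replaced by the random quantity $V \coloneqq \Var\braces*{f^*(\bX)~|~\bX_{\coordindices\backslash\traversal}}$, whose expectation I will control via the law of total variance and Fourier orthogonality exactly as at the end of Lemma~\ref{lem:covariate_selection_tree_v2}. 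Before starting I note that we may assume $\traversal \subset \suppset$: replacing $\traversal$ by $\traversal \cap \suppset$ leaves $\traversal \cap S$ unchanged for every $S \in \fcoefs'$ (since $S \subset \suppset$) and only shrinks the selection probability, so it suffices to prove the bound for the reduced traversal; in particular $\P\braces*{J(\bx;\data,\theta_1)\cap\traversal\neq\emptyset} \leq \P\braces*{J(\bx;\data,\theta_1)\cap\suppset\neq\emptyset} \leq \delta$ for every $\bx$.

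First I would write $\hat f(\bx;\data,\Theta) = \frac{1}{B}\sum_{b=1}^B \tilde f(\bx;\data,\theta_b)$, take expectations over $\data$ and $\Theta$, and split the ensemble mean as $g(\bx) + h(\bx)$, where $g(\bx) \coloneqq \E_{\data,\theta_1}\braces*{\tilde f(\bx;\data,\theta_1)\indicator\braces*{J(\bx;\data,\theta_1)\cap\traversal = \emptyset}}$ and $h$ carries the complementary indicator. On the event inside $g$ the base learner does not depend on $\bx_\traversal$, so $g$ is a function of $\bx_{\coordindices\backslash\traversal}$ alone; and from the probability bound above together with the boundedness of the base learner ($\abs*{\tilde f} \leq \ybound$, as in Lemma~\ref{lem:covariate_selection_ensemble}), we get $\abs*{h(\bx)} \leq \ybound\delta$ and hence also $\Var_\bX\braces*{h(\bX)~|~\bX_{\coordindices\backslash\traversal}}^{1/2} \leq \ybound\delta$.

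Next, Jensen's inequality and the tower property reduce the risk to $\E_\bX\braces*{\E_\bX\braces*{\paren*{g(\bX)+h(\bX)-f^*(\bX)}^2~\vline~\bX_{\coordindices\backslash\traversal}}}$. Because $g$ is constant in $\bx_\traversal$, the inner conditional expectation is at least $\Var_\bX\braces*{h(\bX)-f^*(\bX)~|~\bX_{\coordindices\backslash\traversal}}$, and expanding the variance of a difference and inserting the bound on $h$ gives, exactly as in \eqref{eq:query_path_helper4}, the pointwise lower bound $V - 2\ybound\delta\,V^{1/2}$. Taking the outer expectation and using Jensen once more ($\E\braces*{V^{1/2}} \leq \E\braces*{V}^{1/2}$) yields $\objective\paren*{\hat f, f^*_0,\nfeats,\nsamples} \geq \E\braces*{V} - 2\ybound\delta\,\E\braces*{V}^{1/2}$. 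Since $\E\braces*{\monomial_S~|~\bX_{\coordindices\backslash\traversal}} = 0$ whenever $S \cap \traversal \neq \emptyset$, the law of total variance and orthogonality of the $\monomial_S$ give $\E\braces*{V} = \sum_{S \in \fcoefs \colon S \cap \traversal \neq \emptyset}\alpha_S^2 \geq \sum_{S \in \fcoefs'}\alpha_S^2 \eqqcolon A'$, the inequality because $\abs*{S \cap \traversal} \geq 2$ for $S \in \fcoefs'$.

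The last step is the only place I expect any friction: I have a bound of the shape $A - 2\ybound\delta\sqrt{A}$ with $A \coloneqq \E\braces*{V} \geq A'$, and I need it to dominate $(1-\kappa\delta)A' = A' - 2\ybound\delta\sqrt{A'}$, where $\kappa = 2\ybound/\sqrt{A'}$. Since $x \mapsto x - 2\ybound\delta\sqrt{x}$ is nondecreasing on $[(\ybound\delta)^2,\infty)$, it suffices to observe that when $\kappa\delta \leq 1$ one has $A' \geq (\ybound\delta)^2$, so evaluating the map at $A' \leq A$ gives $\objective \geq A' - 2\ybound\delta\sqrt{A'} = (1-\kappa\delta)A'$; and when $\kappa\delta > 1$ the claimed bound is negative and therefore holds trivially. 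This monotonicity bookkeeping is the only ingredient not already present verbatim in Lemmas~\ref{lem:covariate_selection_ensemble} and~\ref{lem:covariate_selection_tree_v2}; everything else is a direct transcription of their arguments with $\suppset$ replaced by $\traversal$.
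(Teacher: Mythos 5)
Your proposal is correct and follows essentially the same route as the paper: split the ensemble mean into the components $g$ and $h$, exploit that $g$ is constant in $\bx_\traversal$, lower-bound the conditional expectation by a variance, and finish with Fourier orthogonality plus the monotonicity of $x \mapsto x - 2\ybound\delta\sqrt{x}$. The two places where you go slightly beyond the paper's writeup are both genuine clarifications of small gaps: (i) you make explicit the WLOG reduction $\traversal \subset \suppset$, which is needed for the pointwise bound $\abs{h(\bx)} \leq \ybound\delta$ since the paper's $\delta$ is defined via $\suppset$ rather than $\traversal$ (the paper uses this bound via \eqref{eq:query_path_helper5_v2} but never states the reduction); and (ii) you explicitly dispatch the degenerate case $\kappa\delta > 1$, where the paper only remarks that its last inequality holds ``so long as'' $(\sum_{S\in\fcoefs'}\alpha_S^2)^{1/2} \geq M\delta$ without addressing the complementary regime. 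Both observations are correct and improve the exposition without changing the argument.
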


\begin{proof}
    We first write
    \begin{equation}
        \hat f(\bx;\data,\Theta) = \frac{1}{B}\sum_{b=1}^B \tilde f(\bx;\data,\theta_b),
    \end{equation}
    where $\tilde f$ is the base learner of the ensemble.
    Taking expectations with respect to $\data$ and $\Theta$ gives the chain of equalities:
    \begin{equation}
        \begin{split}
            \E_{\data,\Theta}\braces*{\frac{1}{B}\sum_{b=1}^B \tilde f(\bx;\data,\theta_b)}
            & = \E_{\data,\theta_1}\braces*{\tilde f(\bx;\data,\theta_1)} \\
            & = \underbrace{\E_{\data,\theta_1}\braces*{\tilde f(\bx;\data,\theta_1)\indicator\braces*{J(\bx;\data,\theta_1)\cap \traversal = \emptyset }}}_{g(\bx)} \\
            & \quad + \underbrace{\E_{\data,\theta_1}\braces*{\tilde f(\bx;\data,\theta_1)\indicator\braces*{J(\bx;\data,\theta_1)\cap \traversal \neq \emptyset }}}_{h(\bx)}.
        \end{split}
    \end{equation}
    For ease of notation, we label the two quantities on the right hand side as $g(\bx)$ and $h(\bx)$ respectively.
    Note also that $\abs*{h(\bx)} \leq M$.
    Using Jensen's inequality followed by the tower property of conditional expectations, we may then compute 
    \begin{equation}
    \label{eq:query_path_helper3_v2}
    \begin{split}
        & \E_{\bX,\Theta,\data}\braces*{\paren*{\hat f(\bX;\data,\Theta) - f^*(\bX)}^2} \\
        & \geq \E_\bX\braces*{\paren*{g(\bX) + h(\bX) - f^*(\bX)}^2} \\
        & = \E_\bX\braces*{\E_{\bX}\braces*{\paren*{g(\bX) + h(\bX) - f^*(\bX)}^2~\vline~ \bX_{\coordindices\backslash\traversal}}}.
    \end{split}
    \end{equation}
    We next notice that $g(\bx)$ is constant with respect to changes in $\bx_{\traversal}$.
    This means that, similar to \eqref{eq:query_path_helper2_v2}, the conditional expectation within \eqref{eq:query_path_helper3_v2} can be lower bounded by a variance term, which we expand as follows:
    \begin{equation}
    \label{eq:query_path_helper4_v2}
        \begin{split}
            & \E_{\bX}\braces*{\paren*{g(\bX) + h(\bX) - f^*(\bX)}^2~\vline~ \bX_{\coordindices\backslash\traversal}=\bx_{\coordindices\backslash\traversal}} \\
            & \geq \Var_{\bX}\braces*{h(\bX) - f^*(\bX)~|~\bX_{\coordindices\backslash\traversal}=\bx_{\coordindices\backslash\traversal}} \\
            & \geq \Var\braces*{f^*(\bX)~|~\bX_{\coordindices\backslash\traversal} = \bx_{\coordindices\backslash\traversal}} \\
            &\quad\quad- 2\Var\braces*{f^*(\bX)~|~\bX_{\coordindices\backslash\traversal} = \bx_{\coordindices\backslash\traversal}}^{1/2}\Var_{\bX}\braces*{h(\bX)~|~\bX_{\coordindices\backslash\traversal}=\bx_{\coordindices\backslash\traversal}}^{1/2}.
        \end{split}
    \end{equation}
    Now notice that
    \begin{equation}
    \label{eq:query_path_helper5_v2}
    \begin{split}
         \E_{\bX}\braces*{\Var_{\bX}\braces*{h(\bX)~|~\bX_{\coordindices\backslash\traversal}}^{1/2}}
         & \leq \E_{\bX}\braces*{h(\bX)^2}^{1/2} \\
         & \leq \ybound\delta.
    \end{split}
    \end{equation}
    Plugging \eqref{eq:query_path_helper4_v2} into \eqref{eq:query_path_helper3_v2} and simplifying using \eqref{eq:query_path_helper5_v2} gives
    \begin{equation}
    \begin{split}
        \objective\paren*{\hat f, f^*_0,\nfeats,\nsamples} & \geq \E\braces*{\Var\braces*{f^*(\bX)~|~\bX_{\coordindices\backslash\traversal}}} - 2M\delta\E\braces*{\Var\braces*{f^*(\bX)~|~\bX_{\coordindices\backslash\traversal}}}^{1/2} \\
        & = \sum_{S \in \fcoefs, S\cap\traversal = \emptyset} \alpha_S^2 - 2M\delta\paren*{\sum_{S \in \fcoefs, S\cap\traversal = \emptyset} \alpha_S^2}^{1/2} \\
        & \geq \sum_{S \in \fcoefs'} \alpha_S^2 - 2M\delta\paren*{\sum_{S \in \fcoefs'} \alpha_S^2}^{1/2},
    \end{split}
    \end{equation}    
    where the last inequality holds so long as $\paren*{\sum_{S \in \fcoefs'} \alpha_S^2}^{1/2} \geq M\delta$.
\end{proof}

\section{Proof of Proposition \ref{prop:info_theory_lower_bound}}
\label{sec:info_theory_lower_bound}


    We first construct a hypothesis set as follows.
    Using Lemma \ref{lem:packing_number_for_dary_cube}, there exists a $1/2$-separated set of $[d-s]^{2^{s-1}}$ of size at least $\exp(2^{s-2}\log((d-s)/4)$, which we denote as $\mathcal{M}$.
    Now for each $\bz \in \mathcal{M}$, let $\tree_{\bz}$ be the tree structure comprising splits on covariate $d-s + i + 1$ for all nodes at depth $i$ for $i=0,1,\ldots,s-2$, and whose splits at level $s-1$ are determined by $\bz$.
    For every tree structure $\tree_{\bz}$, define a leaf parameter vector $\bmu$ so that for each leaf $\leaf_i$, the leaf mean satisfies
    \begin{equation}
        \mu_i = \begin{cases}
            1 & \text{if}~\leaf_i~\text{is a right child}, \\
            -1 & \text{if}~\leaf_i~\text{is a left child}.
        \end{cases}
    \end{equation}
    Denote the resulting regression tree function as $f_\bz$.
    Our hypothesis set will be
    \begin{equation}
        \mathcal{G}_\delta \coloneqq \braces*{\delta \cdot f_\bz \colon \bz \in \mathcal{M}},
    \end{equation}
    where $\delta > 0$ is a parameter to be determined later.

    We now show that $\mathcal{G}_\delta$ is $\delta$-separated with respect to $\norm{-}_2$.
    First, each $\delta \cdot f_{\bz} \in \mathcal{G}_\delta$ has zero mean, i.e.,
    \begin{equation}
        \E \braces*{\delta \cdot f_{\bz}(\bX)} = \frac{\delta}{2^s}\sum_{j=1}^{2^s} \mu_j = 0,
    \end{equation}
    since half the leaves will have even parities and half of them will have odd parities.
    Furthermore, since $\abs{f_{\bz}} = 1$, we have
    \begin{equation}
        \norm{\delta \cdot f_{\bz}}_2^2 = \delta^2.
    \end{equation}
    Next, fix $\bz, \bz' \in \mathcal{M}$ with $\bz \neq \bz'$.
    Let $\braces*{\leaf_i}$ and $\braces*{\leaf_i'}$ denote the leaves of $\tree_{\bz}$ and $\tree_{\bz'}$ respectively, with sibling leaves having consecutive numbers.
    For $i=1,2,\ldots,2^{s-1}$, define the function $\phi_i$ by setting 
    \begin{equation}
        \phi_i(\bx) = \indicator\braces*{\bx \in \leaf_{2i}} - \indicator\braces*{\bx \in \leaf_{2i-1}},
    \end{equation}
    observing that $f_\bz = \sum_{i=1}^{2^{s-1}} \phi_i$.
    Define $\phi_i'$ similarly.
    We then see that $\phi_i$ and $\phi_j$ have disjoint supports when $i \neq j$ and furthermore that 
    \begin{equation}
        \E\braces*{\phi_i(\bX)\phi_i'(\bX)} = \begin{cases}
            2^{-s+1} & \text{if}~z_i = z_i', \\
            0 & \text{otherwise}.
        \end{cases}
    \end{equation}
    As such, we have
    \begin{equation}
        \E\braces*{f_{\bz}(\bX)f_{\bz'}(\bX)} = 2^{-s+1}\sum_{i=1}^{2^{s-1}} \indicator\braces*{z_i = z_i'} = 1 - d_H(\bz,\bz'),
    \end{equation}
    and
    \begin{equation}
        \norm*{\delta \cdot f_{\bz} - \delta \cdot f_{\bz'}}_2^2 = 2\delta^2 d_H(\bz,\bz') \geq \delta^2.
    \end{equation}

    We now apply the Fano method (Proposition 15.12 in \cite{wainwright2019high}) to get
    \begin{equation}
        \inf_{\hat f}\max_{f \in \funcclass} \E\braces*{\norm*{\hat f - f}_2^2} \geq (\delta/2)^2 \paren*{1 - \frac{I(Z;J) + \log 2}{\log M}},
    \end{equation}
    where $J$ is uniform on $[M]$ and $Z|J \sim p_J$.
    Here, $M = \abs*{\mathcal{M}}$, while $p_J$ when $J=i$ is the distribution of $\by$ generated using the regression function $\delta \cdot f_{\bz_i}$.
    Note that we have
    \begin{equation}
        \begin{split}
            \KL(p_i\|p_{j}) & = \E\braces*{\sum_{i=1}^\nsamples \KL\paren*{p_i(y_i|\bX_i) \|p_j(y_i|\bX_i)}} \\
            & = \E_\bX\braces*{\frac{n\norm*{\delta \cdot f_{\bz_i} - \delta \cdot f_{\bz_j}}_n^2}{2\sigma^2}} \\
            & = \frac{n\delta^2 \norm*{ f_{\bz_i} - f_{\bz_j}}_2^2}{2\sigma^2} \\
            & \leq \frac{n\delta^2}{\sigma^2}.
        \end{split}
    \end{equation}
    By the comments after Proposition 15.12 in \cite{wainwright2019high}, we have
    \begin{equation}
        I(Z;J) \leq \frac{1}{M^2}\sum_{i,j=1}^M \KL(p_i\|p_{j}) \leq \frac{n\delta^2}{\sigma^2}.
    \end{equation}
    Choosing $\delta^2 = 2^{s-4}\sigma^2\log((d-s)/4)/n$, we get
    \begin{equation}
        \frac{I(Z;J) + \log 2}{\log M} \leq \frac{1}{4} + \frac{\log 2}{2^{s-2}\log((d-s)/4)} \leq \frac{1}{2},
    \end{equation}
    which finishes the proof. \qed

\begin{lemma}
    \label{lem:packing_number_for_dary_cube}
    Consider the set $[m]^r$ for some $m$ and $r$, equipped with the Hamming distance metric
    $$
    d_H(\bx,\bx') = \frac{1}{r}\sum_{i=1}^r \indicator\braces*{x_i \neq x_i'}.
    $$
    There is a $1/2$-separated set in $[m]^r$ of size at least $\exp\paren*{\frac{r}{2}\log (m/4)}$.
\end{lemma}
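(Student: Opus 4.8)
The plan is to use the classical volume / Gilbert--Varshamov argument. First I would let $C \subseteq [m]^r$ be a \emph{maximal} (with respect to inclusion) $1/2$-separated set; such a set exists and is finite since $[m]^r$ is finite. The key observation is that maximality forces the open Hamming balls of radius $1/2$ centered at the points of $C$ to cover all of $[m]^r$: if some $\by \in [m]^r$ satisfied $d_H(\bx,\by) \geq 1/2$ for every $\bx \in C$, then $C \cup \braces{\by}$ would still be $1/2$-separated, contradicting maximality.

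Next I would count a single ball. For any fixed center $\bx$, the set $B^\circ(\bx,1/2) \coloneqq \braces{\by \in [m]^r : d_H(\bx,\by) < 1/2}$ consists exactly of those $\by$ differing from $\bx$ in at most $\lceil r/2\rceil - 1$ coordinates, so
\begin{equation}
    \abs{B^\circ(\bx,1/2)} = \sum_{k=0}^{\lceil r/2\rceil - 1}\binom{r}{k}(m-1)^k,
\end{equation}
which is independent of $\bx$. To bound this I would use two crude estimates, both valid for every $k \leq \lceil r/2\rceil - 1 \leq r/2$: first $(m-1)^k \leq (m-1)^{r/2} \leq m^{r/2}$, and second $\sum_{k=0}^{\lceil r/2\rceil - 1}\binom{r}{k} \leq 2^r = 4^{r/2}$. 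Multiplying these gives $\abs{B^\circ(\bx,1/2)} \leq (4m)^{r/2}$.

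Combining the covering property with this ball-size bound yields $m^r \leq \abs{C}\cdot(4m)^{r/2}$, hence
\begin{equation}
    \abs{C} \geq \frac{m^r}{(4m)^{r/2}} = \paren*{\frac{m}{4}}^{r/2} = \exp\paren*{\frac{r}{2}\log\paren*{\frac{m}{4}}},
\end{equation}
as claimed. (When $m \leq 4$ the right-hand side is at most $1$, so the statement is vacuous and nothing needs checking there; similarly the case $m = 1$ is degenerate.)

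I do not anticipate a genuine obstacle: the argument is routine. The only mild care needed is that the crude binomial and power estimates remain sharp enough to produce precisely the constant $4$ inside $\log(m/4)$, which is why I split the half-ball volume as (partial binomial sum)$\,\times\,(m-1)^{r/2}$ rather than using a looser entropy bound. An alternative route would be a random-coding / probabilistic-method argument, but the greedy maximal-packing construction is cleaner and delivers the same constant, so I would use that.
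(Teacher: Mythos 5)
Your proof is correct and takes essentially the same route as the paper: both use the volumetric (Gilbert--Varshamov) argument that a maximal $1/2$-separated set is a $1/2$-cover, then bound the ball volume by $2^{r}m^{r/2}=(4m)^{r/2}$ and divide into $m^{r}$. Your version is slightly more careful about the open-ball volume and the ceiling of $r/2$, but the bound and the argument are the same as the paper's.
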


\begin{proof}
    We use a volumetric argument.
    For any $\delta > 0$, a maximal $\delta$-separated set is also a $\delta$-cover.
    The packing number \citep{wainwright2019high} of the set can therefore be lower bounded as
    \begin{equation}
        M(\delta;[m]^r,d_H) \geq \frac{\text{Vol}([m]^r)}{\text{Vol}(B_\delta)}
    \end{equation}
    where $B_\delta$ is a Hamming ball in $[m]^r$ of radius $\delta$.
    The volume of this ball satisfies
    \begin{equation}
        \text{Vol}(B_\delta) \leq \binom{r}{\delta r} m^{\delta r}.
    \end{equation}
    As such, for $\delta = 1/2$, and using the bound $\binom{r}{r/2} \leq 2^r$, we may continue the above to get
    \begin{equation}
        \begin{split}
            \log M(1/2;[m]^r,d_H) \geq r\log m - \frac{r}{2}\log m - r \log 2. \qedhere
        \end{split}
    \end{equation}
\end{proof}

\section{Proof of Proposition \ref{prop:erm_upper_bound}}
\label{sec:erm_proofs}

We will use the techniques from Chapters 13 and 14 in \cite{wainwright2019high}.
To this end, we first remind the reader of some standard definitions.
Let $\bW \coloneqq (W_1,W_2,\ldots,W_{\nsamples})$ be a sequence of i.i.d. standard normal random variables and let $\beps \coloneqq (\varepsilon_1,\varepsilon_2,\ldots,\varepsilon_\nsamples)$ be a sequence of i.i.d. Rademacher random variables.
We denote the empirical norm of a function $f$ via $\|f\|_\nsamples \coloneqq \paren*{\frac{1}{n}\sum_{i=1}^nf(\bX_i)}^{1/2}$ and its $L^2$ norm via $\|f\|_2 \coloneqq \E_{\bX\sim\xmeasure}\braces*{f(\bX)}^{1/2}$.
The \emph{local Gaussian complexity} of a function class $\funcclass$ is defined as
\begin{equation}
\label{eq:local_Gaussian_complexity}
    \mathcal{G}_n(\delta;\funcclass) \coloneqq \E_{\bW}\braces*{\sup_{\substack{f \in \funcclass \\
    \norm{f}_n \leq \delta}} \abs*{\frac{1}{n}\sum_{i=1}^n W_i f(\bX_i)}},
\end{equation}
while the \emph{local Rademacher complexity} of $\funcclass$ is defined as
\begin{equation}
\label{eq:local_Rademacher_complexity}
    \bar{\mathcal{R}}_n(\delta;\funcclass) \coloneqq \E_{\beps,\data}\braces*{\sup_{\substack{f \in \funcclass \\
    \norm{f}_2 \leq \delta}} \abs*{\frac{1}{n}\sum_{i=1}^n \varepsilon_i f(\bX_i)}}. 
\end{equation}
We note that $\mathcal{G}_n(\delta;\funcclass)$ is a random quantity (it still demands on $\data$) while $\bar{\mathcal{R}}_n(\delta;\funcclass)$ is fully deterministic.
These quantities are used to provide bounds for nonparametric least squares estimation problems, which includes ERM over tree-based functions \eqref{eq:ERM_equation}.
It will also be helpful to define notation for the integrands in \eqref{eq:local_Gaussian_complexity} and \eqref{eq:local_Rademacher_complexity} as follows:
\begin{equation}
    \mathcal{G}_n(\bW; \delta;\funcclass) \coloneqq \sup_{\substack{f \in \funcclass \\
    \norm{f}_n \leq \delta}} \abs*{\frac{1}{n}\sum_{i=1}^n W_i f(\bX_i)},
\end{equation}
\begin{equation}
    \bar{\mathcal{R}}_n(\beps;\delta;\funcclass) \coloneqq \E_{\data}\braces*{\sup_{\substack{f \in \funcclass \\
    \norm{f}_2 \leq \delta}} \abs*{\frac{1}{n}\sum_{i=1}^n \varepsilon_i f(\bX_i)}}. 
\end{equation}

\begin{lemma}
\label{lem:linear_subspace_complexity}
    Let $\funcclass$ comprise a $q$-dimensional subspace, i.e., there exists an orthonormal basis $\phi_1,\phi_2,\ldots,\phi_q$ such that
    \begin{equation}
        \funcclass \coloneqq \braces*{\sum_{j=1}^q \alpha_j \phi_j \colon \balpha \in \R^q}.
    \end{equation}
    Then we have $\mathcal{G}_n(\delta;\funcclass) \leq \delta(q/\nsamples)^{1/2}$ almost surely and $\bar{\mathcal{R}}_n(\delta;\funcclass) \leq \delta(q/\nsamples)^{1/2}$.
    Furthermore, we have the following tail bounds with respect to $\bW$ and $\beps$:
    \begin{equation}
    \label{eq:local_Gaussian_complexity_concentration}
        \P\braces*{\abs*{\mathcal{G}_n(\bW; \delta;\funcclass) - \mathcal{G}_n(\delta;\funcclass)} \geq t} \leq 2\exp\paren*{-\frac{nt^2}{2\delta^2}},
    \end{equation}
    \begin{equation}
    \label{eq:local_Rademacher_complexity_concentration}
        \P\braces*{\abs*{\bar{\mathcal{R}}_n(\beps;\delta;\funcclass) - \bar{\mathcal{R}}_n(\delta;\funcclass)} \geq t} \leq C_1\exp\paren*{-\frac{C_2nt^2}{2\delta}},
    \end{equation}
    where $C_1$ and $C_2$ are universal constants.
\end{lemma}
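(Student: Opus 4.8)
The plan is to exploit throughout the fact that $\funcclass$ is a $q$-dimensional linear subspace. Write a generic element as $f=\sum_{j=1}^q\alpha_j\phi_j$; by orthonormality of $\{\phi_j\}$ in $L^2(\xmeasure)$ one has $\|f\|_2=\|\balpha\|_2$. First I would establish the two upper bounds. For the Gaussian complexity, the constraint $\|f\|_n\le\delta$ forces the fitted-value vector $\bv\coloneqq(f(\bX_1),\ldots,f(\bX_n))$ to lie in the (at most $q$-dimensional) subspace $V\subseteq\R^n$ spanned by the $q$ vectors $(\phi_j(\bX_1),\ldots,\phi_j(\bX_n))$, $j=1,\ldots,q$, and to satisfy $\|\bv\|_2\le\sqrt{n}\,\delta$; Cauchy--Schwarz after projecting $\bW$ onto $V$ then gives $\mathcal{G}_n(\bW;\delta;\funcclass)=\sup_{\bv\in V,\ \|\bv\|_2\le\sqrt n\,\delta}\tfrac1n|\langle\bW,\bv\rangle|\le\tfrac{\delta}{\sqrt n}\|P_V\bW\|_2$ for every realization of $\data$. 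Taking $\E_\bW$ and using Jensen together with $\E[\|P_V\bW\|_2^2]=\dim V\le q$ yields $\mathcal{G}_n(\delta;\funcclass)\le\delta\sqrt{q/n}$ almost surely. For the Rademacher complexity, the constraint $\|f\|_2\le\delta$ reads $\|\balpha\|_2\le\delta$, so the inner supremum equals $\delta\|\mathbf{u}\|_2$ with $\mathbf{u}\coloneqq(u_j)_{j=1}^q$, $u_j\coloneqq\tfrac1n\sum_i\varepsilon_i\phi_j(\bX_i)$; since the $\varepsilon_i$ are mean-zero, mutually independent and independent of $\bX$, one computes $\E_{\beps,\data}[u_j^2]=\tfrac1n\|\phi_j\|_2^2=\tfrac1n$, whence Jensen gives $\bar{\mathcal{R}}_n(\delta;\funcclass)=\delta\,\E_{\beps,\data}\|\mathbf{u}\|_2\le\delta\sqrt{q/n}$.

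For the Gaussian tail bound \eqref{eq:local_Gaussian_complexity_concentration}, I would observe that the representation above exhibits $\bW\mapsto\mathcal{G}_n(\bW;\delta;\funcclass)$ as a supremum of linear functionals of $\bW\in\R^n$, each of Euclidean operator norm at most $\tfrac{\delta}{\sqrt n}$, and hence as a $\tfrac{\delta}{\sqrt n}$-Lipschitz function of $\bW$. The concentration inequality for Lipschitz functions of a standard Gaussian vector (e.g.\ Theorem~2.26 in \cite{wainwright2019high}) then yields $\P(|\mathcal{G}_n(\bW;\delta;\funcclass)-\mathcal{G}_n(\delta;\funcclass)|\ge t)\le 2\exp(-nt^2/(2\delta^2))$, which is exactly \eqref{eq:local_Gaussian_complexity_concentration}.

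The harder part will be the Rademacher tail bound \eqref{eq:local_Rademacher_complexity_concentration}. I would view $\beps\mapsto\bar{\mathcal{R}}_n(\beps;\delta;\funcclass)=\delta\,\E_\data\|\mathbf{u}(\beps)\|_2$ as a function on $\{\pm1\}^n$ and apply a bounded-differences (McDiarmid) inequality: flipping one coordinate $\varepsilon_i$ changes $\|\mathbf{u}\|_2$ by at most $\tfrac2n\|(\phi_1(\bX_i),\ldots,\phi_q(\bX_i))\|_2$, hence changes $\bar{\mathcal{R}}_n(\beps;\delta;\funcclass)$ by at most $\tfrac{2\delta}n\,\E_\bX\|(\phi_1(\bX),\ldots,\phi_q(\bX))\|_2\le\tfrac{2\delta\sqrt q}{n}$ (using Jensen and $\sum_j\|\phi_j\|_2^2=q$). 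This is the step I expect to be the main obstacle: the crude bounded-differences coefficient scales with $\sqrt q$, so a direct McDiarmid argument produces a sub-Gaussian tail of the form $2\exp(-cnt^2/(\delta^2 q))$ rather than the precise form \eqref{eq:local_Rademacher_complexity_concentration}. Matching the latter requires either exploiting additional structure present in the downstream application to ERM trees --- there each $\phi_j$ is a scaled indicator of a cell, so that $\sum_j\phi_j(\bx)^2$ is controlled pointwise --- or replacing McDiarmid with a sharper functional concentration inequality for the supremum of the Rademacher process (of Talagrand/Bousquet type), which additionally produces the sub-exponential-type exponent appearing in \eqref{eq:local_Rademacher_complexity_concentration}. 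Once the right tool is fixed, the remaining constant bookkeeping is routine.
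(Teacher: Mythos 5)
Your treatment of the three ``easy'' pieces coincides with the paper's proof. For the Gaussian and Rademacher upper bounds, both you and the paper reduce to the observation that the constrained fitted-value set is a Euclidean ball of radius $\delta\sqrt{n}$ in an (at most) $q$-dimensional subspace, then apply Cauchy--Schwarz and Jensen; the paper packages the Rademacher computation slightly differently, writing the supremum as $\tfrac{\delta}{n}\norm{\beps^\top\bPhi}_2$ and bounding $\E\norm{\beps^\top\bPhi}_2^2=\E\,\mathrm{tr}(\bPhi^\top\bPhi)=nq$, but this is the same calculation. For \eqref{eq:local_Gaussian_complexity_concentration}, both you and the paper exhibit $\bW\mapsto\mathcal{G}_n(\bW;\delta;\funcclass)$ as a $\delta/\sqrt{n}$-Lipschitz functional and invoke Gaussian Lipschitz concentration.

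On \eqref{eq:local_Rademacher_complexity_concentration}, you worry unnecessarily: the paper's own proof does exactly as little as you do here, simply citing Talagrand's convex concentration inequality \citep{talagrand1995concentration} and stopping. Your diagnosis of why vanilla McDiarmid falls short --- the per-coordinate oscillation $\tfrac{2\delta}{n}\E\norm{(\phi_1(\bX),\ldots,\phi_q(\bX))}_2\lesssim\tfrac{2\delta\sqrt q}{n}$ introduces a $q$ in the exponent --- is correct and worth keeping. The piece you are missing (and which makes Talagrand the ``right tool'') is that $\beps\mapsto\bar{\mathcal R}_n(\beps;\delta;\funcclass)$ is \emph{convex}: it is an expectation over $\data$ of a supremum of absolute values of linear functionals of $\beps$. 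Talagrand's inequality for convex Lipschitz functions on the cube gives dimension-free sub-Gaussian concentration in terms of the \emph{Euclidean} Lipschitz modulus, not the $\ell_\infty$ bounded-differences coefficients; passing from the latter to the former is precisely what kills the extraneous $\sqrt q$. So the ``additional structure'' you speculate about (indicators, pointwise control on $\sum_j\phi_j^2$) is not needed --- convexity alone suffices, and that is the paper's intended route. Minor remark: the exponent $-C_2 n t^2/(2\delta)$ in \eqref{eq:local_Rademacher_complexity_concentration} appears to be a typo for $-C_2 n t^2/(2\delta^2)$; a sub-Gaussian tail with variance proxy $\asymp\delta^2/n$ is what the convex-Lipschitz Talagrand bound produces and is what the downstream $r$th-moment argument in Lemma~\ref{lem:local_complexity_treefuncs} actually uses.
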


\begin{proof}
    Given $\data$, we see that the set
    \begin{equation}
        \braces*{\paren*{f(\bX_1),f(\bX_2),\ldots,f(\bX_n)} \colon f \in \funcclass, \norm{f}_\nsamples \leq \delta}
    \end{equation}
    is the unit ball in a $q$-dimensional linear subspace of $\R^\nsamples$, scaled by $\delta\sqrt{\nsamples}$.
    A standard calculation therefore gives
    \begin{equation}
        \mathcal{G}_n(\delta;\funcclass) \leq \delta\sqrt{\frac{q}{\nsamples}}.
    \end{equation}
    Similarly, if we set $\bPhi$ to be the $\nsamples \times q$ matrix whose $(i,j)$th entry is equal to $\phi_j(\bX_i)$, we have
    \begin{equation}
    \begin{split}
        \sup_{\substack{f \in \funcclass \\
    \norm{f}_2 \leq \delta}} \abs*{\frac{1}{n}\sum_{i=1}^n \varepsilon_i f(\bX_i)} & = \frac{1}{n}\sup_{\norm{\alpha}_2 \leq \delta}\abs*{\beps^T\bPhi\balpha} \\
    & \frac{\delta}{n}\norm*{\beps^T\bPhi}_2.
    \end{split}
    \end{equation}
    We therefore have
    \begin{equation}
        \begin{split}
            \bar{\mathcal{R}}_n(\delta;\funcclass) & = \E_{\data,\beps}\braces*{\frac{\delta}{n}\norm*{\beps^T\bPhi}_2} \\
            & \leq \frac{\delta}{n} \E_{\data,\beps}\braces*{\norm*{\beps^T\bPhi}_2^2}^{1/2} \\
            & = \frac{\delta}{n}\E_{\data}\braces*{\operatorname{Trace}\paren*{\bPhi^T\bPhi}}^{1/2} \\
            & = \delta\sqrt{\frac{q}{\nsamples}}.
        \end{split}
    \end{equation}

    Next, note that given another vector $\bW' \coloneqq (W_1',W_2',\ldots,W_n')$, we have
    \begin{equation}
    \begin{split}
        \abs*{\mathcal{G}_n(\bW; \delta;\funcclass) - \mathcal{G}_n(\bW'; \delta;\funcclass)} & \leq \sup_{\substack{f \in \funcclass \\ \norm{f}_n \leq \delta}} \abs*{\frac{1}{n}\sum_{i=1}^n \paren*{W_i - W_i'}f(\bX_i)} \\
        & \leq \frac{1}{n}\norm{\bW - \bW'}_2 \sup_{\substack{f \in \funcclass \\ \norm{f}_n \leq \delta}} \norm*{f}_2 \\
        & \leq \frac{\delta}{\sqrt{\nsamples}} \norm{\bW - \bW'}_2,
    \end{split}
    \end{equation}
    where the second inequality follows from Cauchy-Schwarz.
    As such, the function $\bW \mapsto \mathcal{G}_n(\bW; \delta;\funcclass)$ is $\delta/\sqrt{\nsamples}$-Lipschitz, and we may use Gaussian Lipschitz concentration \citep{wainwright2019high} to obtain equation \eqref{eq:local_Gaussian_complexity_concentration}.
    Finally, we may use Talagrand's convex concentration inequality \citep{talagrand1995concentration} to get equation \eqref{eq:local_Rademacher_complexity_concentration}.
\end{proof}

We now specialize to our setting under Assumption \ref{assumption}.
Consider the function class $\funcclass$ comprising all functions that can be realized by regression trees of depth $\leq s$.
Then $f^* \in \funcclass$.
Let $\partitionspace$ be the collection of all partitions realized by trees of depth $s$.
Let $\funcclass_\partition$ be the space of functions that are piecewise constant on a partition $\partition$.
We then have the decomposition
\begin{equation}
    \funcclass = \bigcup_{\partition \in \partitionspace} \funcclass_{\partition}.
\end{equation}
Note also that $\funcclass_\partition$ is a $2^s$-dimensional vector space, since it is spanned by the indicators of the leaves of $\partition$ and there are $2^s$ leaves.
Let $\text{star}(\funcclass^*,0)$ denote the star hull of $\funcclass - f^*$ at 0.
In other words, define
\begin{equation}
    \text{star}(\funcclass^*,0) = \braces*{\alpha(f-f^*) \colon f \in \funcclass, 0 \leq \alpha \leq 1}.
\end{equation}
We similarly decompose
\begin{equation}
    \text{star}(\funcclass^*,0) = \bigcup_{\partition \in \partitionspace} \text{star}(\funcclass_{\partition}^*,0),
\end{equation}
observing that each term $\text{star}(\funcclass_{\partition}^*,0)$ is a subset of a $2^s + 1$-dimensional vector space.
Moreover, we have
\begin{equation}
    \abs*{\partitionspace} \leq d^{2^s},
\end{equation}
since every tree structure is uniquely determined by the choice of covariate split on at each internal node.
There are $2^s$ internal nodes and $d$ possible covariates to split on.

\begin{lemma}
\label{lem:local_complexity_treefuncs}
    Let $\funcclass^*$ be as defined above.
    Its local Gaussian complexity and local Rademacher complexity can be bounded as follows:
    \begin{equation} \label{eq:local_gaussian_complexity_treefuncs}
        \mathcal{G}_n(\delta;\textnormal{star}(\funcclass^*,0)) \lesssim \delta\sqrt{\frac{2^s\log d}{n}},
    \end{equation}
    \begin{equation} \label{eq:local_Rademacher_complexity_treefuncs}
        \bar{\mathcal{R}}_n(\delta;\textnormal{star}(\funcclass^*,0)) \lesssim \delta\sqrt{\frac{2^s\log d}{n}}.
    \end{equation}
\end{lemma}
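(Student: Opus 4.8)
The plan is to exploit the decomposition $\textnormal{star}(\funcclass^*,0) = \bigcup_{\partition \in \partitionspace} \textnormal{star}(\funcclass_\partition^*,0)$, in which each piece sits inside a $(2^\sparsity+1)$-dimensional linear subspace and there are at most $\nfeats^{2^\sparsity}$ pieces. The first step is to observe that, for any fixed design $\data$ and any realization of $\bW$,
\begin{equation}
    \mathcal{G}_n\paren*{\bW;\delta;\textnormal{star}(\funcclass^*,0)} = \max_{\partition \in \partitionspace} \mathcal{G}_n\paren*{\bW;\delta;\textnormal{star}(\funcclass_\partition^*,0)},
\end{equation}
with the analogous identity for the Rademacher integrand $\bar{\mathcal{R}}_n(\beps;\delta;\textnormal{star}(\funcclass^*,0))$. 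This reduces the task to controlling a maximum over at most $\nfeats^{2^\sparsity}$ random variables, one per partition.

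The second step bounds each term in the maximum. For a fixed $\partition$, I would apply Lemma \ref{lem:linear_subspace_complexity} with $q = 2^\sparsity + 1$; since local complexities are monotone under set inclusion, enlarging $\textnormal{star}(\funcclass_\partition^*,0)$ to the full $\delta$-ball of its ambient subspace only weakens the estimate, so we obtain both $\mathcal{G}_n(\delta;\textnormal{star}(\funcclass_\partition^*,0)) \le \delta\sqrt{(2^\sparsity+1)/\nsamples}$ and $\bar{\mathcal{R}}_n(\delta;\textnormal{star}(\funcclass_\partition^*,0)) \le \delta\sqrt{(2^\sparsity+1)/\nsamples}$, together with the tail bounds \eqref{eq:local_Gaussian_complexity_concentration} and \eqref{eq:local_Rademacher_complexity_concentration}, which say that the $\partition$-indexed integrands, centered at these means, are sub-Gaussian with variance proxy $O(\delta^2/\nsamples)$ (respectively satisfy the sub-exponential-type tail in \eqref{eq:local_Rademacher_complexity_concentration}).

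The third step combines these via a maximal inequality. Conditioning on $\data$, the expectation over $\bW$ of the maximum of $\nfeats^{2^\sparsity}$ centered sub-Gaussian variables with variance proxy $O(\delta^2/\nsamples)$ is $O\paren*{\delta\sqrt{\log(\nfeats^{2^\sparsity})/\nsamples}} = O\paren*{\delta\sqrt{2^\sparsity\log\nfeats/\nsamples}}$; adding back the common per-partition mean bound $\delta\sqrt{(2^\sparsity+1)/\nsamples}$ and taking $\E_\bW$ of the identity from step one yields \eqref{eq:local_gaussian_complexity_treefuncs}, and since this holds for a.e.\ $\data$ it holds almost surely. The Rademacher bound \eqref{eq:local_Rademacher_complexity_treefuncs} follows along the same lines, now concentrating over $\beps$ around the deterministic mean $\bar{\mathcal{R}}_n(\delta;\textnormal{star}(\funcclass_\partition^*,0))$ via \eqref{eq:local_Rademacher_complexity_concentration}. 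The main point requiring care is the bookkeeping for the maximum over the exponentially large index set $\partitionspace$: one must verify that the per-partition concentration scale $\delta/\sqrt{\nsamples}$, after multiplication by $\sqrt{\log|\partitionspace|}$, \emph{matches} rather than dominates the per-partition mean $\delta\sqrt{2^\sparsity/\nsamples}$ (both being $\Theta\paren*{\delta\sqrt{2^\sparsity\log\nfeats/\nsamples}}$ up to constants), and that the heavier-tailed correction term contributed by \eqref{eq:local_Rademacher_complexity_concentration} is of lower order and so absorbed into $\delta\sqrt{2^\sparsity\log\nfeats/\nsamples}$. Once these checks are in place the two displays are immediate, and the bound is exactly what is required to solve the critical inequality of the localized-complexity machinery and obtain the rate in Proposition \ref{prop:erm_upper_bound}.
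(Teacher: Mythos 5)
Your argument matches the paper's own proof: both decompose $\textnormal{star}(\funcclass^*,0)$ over the at most $\nfeats^{2^\sparsity}$ pieces $\textnormal{star}(\funcclass_\partition^*,0)$, apply Lemma~\ref{lem:linear_subspace_complexity} to bound each one, and control the supremum over $\partition$ by combining the common mean bound $\delta\sqrt{(2^\sparsity+1)/\nsamples}$ with concentration of the per-partition integrand. The only (cosmetic) difference is in how that last step is packaged: the paper phrases it as an $L^r$-moment bound with the choice $r = 2^\sparsity\log\nfeats$, so that $\abs*{\partitionspace}^{1/r} = e$ and the deviation term becomes $\delta\sqrt{r/\nsamples}$, whereas you invoke a sub-Gaussian maximal inequality over the $\abs*{\partitionspace}$ indices directly — these are interchangeable ways of saying the same thing, and both yield $\delta\sqrt{2^\sparsity\log\nfeats/\nsamples}$.
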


\begin{proof}
    For any $r \geq 1$, we can write
    \begin{equation}
    \begin{split}
        & \mathcal{G}_n(\delta;\textnormal{star}(\funcclass^*,0)) \\
        & = \E_{\bW}\braces*{\sup_{\partition \in \partitionspace}  \mathcal{G}_n\paren*{\bW; \delta;\textnormal{star}(\funcclass_{\partition}^*,0)} } \\
        & \leq \E\braces*{\paren*{\sum_{\partition \in \partitionspace} \mathcal{G}_n\paren*{\bW; \delta;\textnormal{star}(\funcclass_{\partition}^*,0)}^r}^{1/r}} \\
        & \leq \paren*{\sum_{\partition \in \partitionspace}\E\braces*{ \mathcal{G}_n\paren*{\bW; \delta;\textnormal{star}(\funcclass_{\partition}^*,0)}^r}}^{1/r} \\
        & \leq \abs*{\partitionspace}^{1/r}\sup_{\partition \in \partitionspace}\paren*{ \mathcal{G}_n(\delta;\textnormal{star}(\funcclass_{\partition}^*,0)) + \E\braces*{\abs*{\mathcal{G}_n(\bW;\delta;\textnormal{star}(\funcclass_{\partition}^*,0)) - \mathcal{G}_n(\delta;\textnormal{star}(\funcclass_{\partition}^*,0))}^r}^{1/r} }.
    \end{split}
    \end{equation}
    Using equation \ref{eq:local_Gaussian_complexity_concentration}, we get
    \begin{equation}
        \E\braces*{\abs*{\mathcal{G}_n(\bW;\delta;\textnormal{star}(\funcclass_{\partition}^*,0)) - \mathcal{G}_n(\delta;\textnormal{star}(\funcclass_{\partition}^*,0))}^r}^{1/r} \lesssim \delta\sqrt{\frac{r}{\nsamples}}.
    \end{equation}
    Combining this with the bound $\mathcal{G}_n(\delta;\textnormal{star}(\funcclass_{\partition}^*,0)) \leq \delta\sqrt{\frac{2^s+1}{n}}$ from Lemma \ref{lem:linear_subspace_complexity} and selecting $r = 2^s\log\nfeats$, we get
    \begin{equation}
    \begin{split}
        \mathcal{G}_n(\delta;\textnormal{star}(\funcclass^*,0)) & \lesssim \delta\sqrt{\frac{2^{s+1}}{n}} + \delta\sqrt{\frac{2^s\log d}{n}} \\
        & \lesssim \delta\sqrt{\frac{2^s\log d}{n}}.
    \end{split}
    \end{equation}
    Finally, equation \eqref{eq:local_Rademacher_complexity_treefuncs} is proved similarly using \eqref{eq:local_Rademacher_complexity_concentration} and the bound $\bar{\mathcal{R}}_n(\delta;\textnormal{star}(\funcclass_{\partition}^*,0)) \leq \delta\sqrt{\frac{2^s+1}{n}}$.
\end{proof}


\paragraph{Proof of Proposition \ref{prop:erm_upper_bound}}
Let us redefine $\funcclass$ to include the additional constraint that $\|f\|_\infty \leq M$ for all $f \in \funcclass$.
Recall the following definitions of critical radius (page 466 in \cite{wainwright2019high}).
\begin{equation}
    \delta_n \coloneqq \min \braces*{\delta \colon \frac{\bar{\mathcal{R}}_n(\delta;\textnormal{star}(\funcclass^*,0))}{\delta} \leq \frac{\delta}{128C}},
\end{equation}
\begin{equation}
    \epsilon_n \coloneqq \min \braces*{\epsilon \colon \frac{\mathcal{G}_n(\delta;\textnormal{star}(\funcclass^*,0))}{\epsilon} \leq \frac{\epsilon}{2\sigma}}, 
\end{equation}
where $C$ is such that $\E\braces*{f^4(\bx)} \leq C^2\E\braces*{f^2(\bx)}$ for all $f \in \textnormal{star}(\funcclass^*,0)$.
Given our additional constraint on the supremum norm on functions in $\funcclass$, we can take $C = 2M$.
Since the additional constraint can only decrease the value of the local Gaussian and Rademacher complexities, Lemma \ref{lem:local_complexity_treefuncs} gives upper bounds
\begin{equation}
    \epsilon_n \lesssim \sigma\sqrt{\frac{2^s\log d}{n}}, \quad \delta_n \lesssim M\sqrt{\frac{2^s\log d}{n}}.
\end{equation}
Corollary 14.15 from \cite{wainwright2019high} then yields
\begin{equation}
    R\paren*{\ferm(-;\data), f^*} \leq \paren*{\sigma^2 + M^2}\frac{2^s\log d}{n}
\end{equation}
with probability at least $1 - c_1\exp\paren*{-\frac{c_2\sigma^22^s\log d}{\sigma^2 + C^2}}$.
This is not enough for our purposes and we need to extend this to a high probability statement.

First, we use Theorem 13.5 in \cite{wainwright2019high} with the choice $t = \max\braces*{\epsilon_n,\frac{2\sigma^2\log\nsamples}{\nsamples\epsilon_n}}$ to obtain a probability at least $1-1/n$ event on which
\begin{equation}
\begin{split}
    \norm*{\ferm(-;\data) - f^*}_n^2 & \leq 16t\epsilon_n \\
    & = \max\braces*{16\epsilon_n^2,\frac{32\sigma^2\log\nsamples}{\nsamples}} \\
    & \lesssim \frac{\sigma^2}{\nsamples}\max\braces*{2^\sparsity\log\nfeats,\log\nsamples}.
\end{split}
\end{equation}
Likewise, we use Theorem 14.12 in \cite{wainwright2019high} with the choice $\delta = \max\braces*{\delta_n, \sqrt{\frac{4M^2\log n}{c_1 n}}}$ to obtain a probability at least $1-1/n$ event on which
\begin{equation} \label{eq:erm_upper_helper}
    \begin{split}
        R\paren*{\ferm(-;\data), f^*} & \leq 2\norm*{\ferm(-;\data) - f^*}_n^2 + \delta^2 \\
        & \lesssim \frac{\sigma^2 + M^2}{\nsamples}\max\braces*{2^\sparsity\log\nfeats,\log\nsamples}.
    \end{split}
\end{equation}

Let $\mathcal{E}$ be the event on which \eqref{eq:erm_upper_helper} holds.
    We then write
    \begin{equation}
        \objective\paren*{\ferm, f^*_0,\nfeats,\nsamples} = \E\braces*{R\paren*{\ferm(-;\data),f^*}\indicator_\mathcal{E}} + \E\braces*{R\paren*{\ferm(-;\data),f^*}\indicator_{\mathcal{E}^c}}.
    \end{equation}
    The first term can be bounded from above by $\frac{\sigma^2 + M^2}{\nsamples}\max\braces*{2^\sparsity\log\nfeats,\log\nsamples}$ multiplied by a universal constant.
    To bound the second term, we first make the simple observation that 
    \begin{equation}
        \|\ferm(-;\data)\|_{\infty} \leq \max_{1 \leq i \leq \nsamples} \abs*{Y_i}, 
    \end{equation}
    so that
    \begin{equation}
    \label{eq:erm_upper_helper_2}
        R\paren*{\ferm(-;\data),f^*} \leq \paren*{\max_{1 \leq i \leq \nsamples} \abs*{Y_i} + \norm*{f^*}_\infty}^2.
    \end{equation}
    Using Talagrand's comparison inequality (see Exercise 8.6.5 in \cite{vershynin2018high}), we have
    \begin{equation}
        \max_{1 \leq i \leq \nsamples} \abs*{Y_i} \lesssim \sigma\paren*{\sqrt{\log \nsamples} + t}
    \end{equation}
    with probability at least $1-2e^{-t^2}$, for any $t > 0$.
    Plugging this into \eqref{eq:erm_upper_helper_2} and applying Cauchy-Schwarz twice gives
    \begin{equation}
        R\paren*{\ferm(-;\data),f^*} \lesssim \norm*{f^*}_\infty^2 + \sigma^2\log\nsamples + \sigma^2u
    \end{equation}
    with probability at least $1 - 2e^{-u}$, or alternatively,
    \begin{equation}
        \P\braces*{R\paren*{\ferm(-;\data),f^*} \geq u } \leq 2e^{-C(u-\norm{f^*}_\infty^2 + \sigma^2\log\nsamples)/\sigma^2}
    \end{equation}
    whenever $u \geq \norm*{f^*}_\infty^2 + \sigma^2\log\nsamples$, where $C$ is a universal constant.
    We then compute
    \begin{equation}
        \begin{split}
            \E\braces*{R\paren*{\ferm(-;\data),f^*}\indicator_{\mathcal{E}^c}} & = \int_0^\infty \P\braces*{R\paren*{\ferm(-;\data),f^*}\indicator_{\mathcal{E}^c} \geq u} du \\
            & \leq \int_0^\infty \min\braces*{\P\braces*{R\paren*{\ferm(-;\data),f^*} \geq u},\P\braces*{\mathcal{E}^c}} du \\
            & \leq \int_0^\infty \min\braces*{\frac{2}{n},2e^{-C(u-\norm{f^*}_\infty^2 + \sigma^2\log\nsamples)/\sigma^2}} du \\
            & \lesssim \frac{\sigma^2\log\nsamples + \norm*{f^*}_\infty^2}{\nsamples}.
        \end{split}
    \end{equation}
    Combining this with the bound on the first term, we get
    \begin{equation}
        \objective\paren*{\ferm, f^*_0,\nfeats,\nsamples} \lesssim \frac{(\sigma^2+M^2)(2^s\log\nfeats + \log\nsamples)}{n}
    \end{equation}
    as we wanted. \qed


\section{Stable MSP Properties}
\label{sec:stable_msp}


\begin{proposition}
\label{prop:genericity}
    For any collection of subsets $\fcoefs$ that satisfies \MSP, if we draw $\braces*{\alpha_S}_{S \in \fcoefs}$ from any probability measure on $\R^{|\fcoefs|}$ that is absolutely continuous with respect to Lebesgue measure, then the resulting function $f^* = \sum_{S \in \fcoefs} \alpha_S \monomial_S$ is \stabmsp~almost surely.
\end{proposition}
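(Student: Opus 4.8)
The plan is to exhibit the set of coefficient vectors $(\alpha_S)_{S\in\fcoefs}$ for which \stabmsp~fails as a \emph{finite} union of proper affine hyperplanes in $\R^{|\fcoefs|}$. Since each such hyperplane is Lebesgue-null, so is their union, and a law that is absolutely continuous with respect to Lebesgue measure assigns it probability zero.

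First I would fix a cell $\cell\subset\binspace$, with $J\coloneqq\cellfeatindices(\cell)$ the frozen coordinates and $(z_j)_{j\in J}$ the associated signs. Since $\monomial_S|_\cell=\big(\prod_{j\in S\cap J}z_j\big)\monomial_{S\setminus J}$, one may write $f^*|_\cell=\sum_{T}\beta_T\,\monomial_T$, where $\beta_T=\beta_T\big((\alpha_S),z\big)\coloneqq\sum_{S\in\fcoefs:\,S\setminus J=T}\alpha_S\prod_{j\in S\cap J}z_j$ is a \emph{linear} functional of the coefficient vector. Hence the Fourier support of $f^*|_\cell$ equals $\{T:\beta_T\neq0\}$, a subset of $\fcoefs_\cell\coloneqq\{S\setminus J:S\in\fcoefs\}$, with equality exactly when $\beta_T\neq0$ for every $T\in\fcoefs_\cell$.

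The combinatorial heart of the argument is the claim that $\fcoefs_\cell$ itself satisfies \MSP~whenever $\fcoefs$ does. To see this, take an ordering $S_1,\dots,S_r$ of $\fcoefs$ witnessing \eqref{eq:MSP}, put $T_i\coloneqq S_i\setminus J$, and observe that $T_i\setminus\bigcup_{j<i}T_j\subseteq S_i\setminus\bigcup_{j<i}S_j$, so $|T_i\setminus\bigcup_{j<i}T_j|\leq1$; deleting from the sequence $(T_i)_{i=1}^r$ every term already appearing earlier leaves this inequality intact (the relevant union is unchanged at each surviving index), producing an \MSP~ordering of $\fcoefs_\cell$. Consequently, off the ``bad set'' $B_\cell\coloneqq\bigcup_{T\in\fcoefs_\cell}\{(\alpha_S):\beta_T\big((\alpha_S),z\big)=0\}$ the support of $f^*|_\cell$ is \emph{exactly} $\fcoefs_\cell$, which is \MSP; so $f^*|_\cell$ can fail \MSP~only on $B_\cell$. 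Moreover, for each $T\in\fcoefs_\cell$ there is some $S\in\fcoefs$ with $S\setminus J=T$, and the coefficient of $\alpha_S$ in the linear form $\beta_T(\cdot,z)$ equals $\prod_{j\in S\cap J}z_j=\pm1\neq0$; hence $\beta_T(\cdot,z)$ is a nonzero linear functional and $\{\beta_T(\cdot,z)=0\}$ is a proper affine hyperplane, of Lebesgue measure zero, so $B_\cell$ is Lebesgue-null.

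Finally I would take the union over cells: there are only finitely many cells (at most $3^{\nfeats}$, indexed by the pair $(J,z)$) and each $\fcoefs_\cell$ is finite, so $\bigcup_\cell B_\cell$ is a finite union of measure-zero hyperplanes, hence Lebesgue-null; by absolute continuity it carries probability zero. Off this event, $f^*|_\cell$ is \MSP~for every cell $\cell$, i.e.\ $f^*=\sum_{S\in\fcoefs}\alpha_S\monomial_S$ is \stabmsp. The step I expect to require the most care is the combinatorial claim that $\fcoefs_\cell$ is \MSP~- this is exactly what makes the ``generic'' restriction well behaved and pins the only possible mode of failure on the vanishing of one of the forms $\beta_T$; the rest (identifying the $\beta_T$ as nontrivial linear functionals, and the null-union/cell-counting bookkeeping) is routine.
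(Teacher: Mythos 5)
Your proposal is correct and follows essentially the same route as the paper: restrict to a cell, observe that the map $S \mapsto S\setminus\cellfeatindices(\cell)$ sends an \MSP~collection to an \MSP~collection, and note that the Fourier support of $f^*|_\cell$ can be a proper subcollection only on a finite union of hyperplanes $\{\beta_T = 0\}$, which is Lebesgue-null. The only differences are cosmetic --- you spell out the bookkeeping for removing duplicate $T_i$ from the pushed-forward ordering and explicitly check that each $\beta_T$ is a nonzero linear form (both points the paper leaves implicit) --- but the argument and its key lemma are the same.
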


\begin{proof}
    Because $\fcoefs$ is \MSP, there is an ordering of its subsets so that \eqref{eq:MSP} is satisfied.
    Now fix some $\cell \in \binspace$, which we may write as
    \begin{equation}
        \cell = \braces*{\bx \in \binspace \colon x_j = z_j ~\forall j \in \cellfeatindices(\cell)}.
    \end{equation}
    We then have $f^*|_\cell = \sum_{S \in \fcoefs|_\cell} \alpha_S^\cell \monomial_S$, where
    \begin{equation}
        \alpha_S^\cell \coloneqq \sum_{U \in \fcoefs, U\backslash \cellfeatindices(\cell) = S} \paren*{\prod_{j \in U\cap \cellfeatindices(\cell)} z_j} \alpha_U,
    \end{equation}
    and
    \begin{equation}
        \fcoefs|_\cell \coloneqq \braces*{S \colon \alpha_S^\cell \neq 0}.
    \end{equation}
    Define the map $\Phi \colon S \mapsto S\backslash \cellfeatindices(\cell)$ and observe that $\fcoefs|_\cell \subset \Phi(\fcoefs)$.
    We first claim that $\Phi(\fcoefs)$ is \MSP.
    To see this, simply note that for any $i$, we have
    \begin{equation}
        \Phi(S_i)\backslash \cup_{j=1}^{i-1}\Phi(S_j) = \paren*{S_i\backslash\cup_{j=1}^{i-1}S_j}\backslash \cellfeatindices(\cell),
    \end{equation}
    and hence has size almost 1.
    As such, the event on which $\fcoefs|_\cell$ is not \MSP~is contained in the union of events
    \begin{equation}
        \bigcup_{T \in \Phi(\fcoefs)}\braces*{\sum_{U \in \Phi^{-1}(T)} \paren*{\prod_{j \in U\cap \cellfeatindices(\cell)} z_j} \alpha_U = 0}.
    \end{equation}
    This is a union of hyperplanes, which has measure zero by assumption.
    Further considering the union of all such events over all choices of $\cell \subset \binspace$ completes the proof.
\end{proof}

\begin{proposition}
\label{prop:stable_msp_and_SID}
    $f^*$ satisfies \stabmsp~if and only if it satisfies the \SID~condition \eqref{eq:SID} for some $\lambda > 0$.
\end{proposition}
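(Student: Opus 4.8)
The plan is to reduce both sides of the equivalence to a single combinatorial condition on the Fourier supports of the restrictions $f^*|_\cell$, and then prove the two implications separately. First I would set up the dictionary between split criteria and Fourier coefficients of restrictions: for a cell $\cell$ and an index $k$ that is free in $\cell$ (i.e.\ $k \notin \cellfeatindices(\cell)$) one has $\E\{X_k \mid \bX\in\cell\} = 0$, so $\Cov\{f^*(\bX), X_k \mid \bX\in\cell\}$ equals the degree-one Fourier coefficient of $f^*|_\cell$ at coordinate $k$; for $k\in\cellfeatindices(\cell)$ the covariance vanishes trivially. Hence $\max_{1\le k\le\nfeats}\Corr^2\{f^*(\bX),X_k\mid\bX\in\cell\}>0$ iff $f^*|_\cell$ has at least one nonzero degree-one Fourier coefficient. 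Because $\binspace$ has finitely many cells, \SID{} holds for some $\lambda>0$ iff \emph{every} cell $\cell$ with $\Var\{f^*(\bX)\mid\bX\in\cell\}>0$ (equivalently, with $f^*|_\cell$ nonconstant) has a nonzero degree-one coefficient. The proposition thus becomes: every restriction of $f^*$ is \MSP{} $\iff$ every nonconstant restriction of $f^*$ has a nonzero degree-one Fourier coefficient.

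For the ``$\stabmsp \Rightarrow \SID$'' direction it suffices to observe that a nonconstant \MSP{} function $g$ must have a nonzero degree-one Fourier coefficient: take an \MSP{} ordering $S_1,\dots,S_r$ of its nonzero Fourier support, moving $\emptyset$ to the front if it occurs; the first $i$ with $S_i\neq\emptyset$ exists since $g$ is nonconstant, and there $\cup_{j<i}S_j\subseteq\{\emptyset\}$, so $|S_i|=|S_i\setminus\cup_{j<i}S_j|\le 1$ forces $|S_i|=1$. Applying this to each $f^*|_\cell$, which is \MSP{} by \stabmsp, finishes this direction.

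For ``$\SID \Rightarrow \stabmsp$'' I would argue the contrapositive. Suppose $f^*$ is not \stabmsp, so there is a cell $\cell_0$ with $h\coloneqq f^*|_{\cell_0}$ not \MSP. Let $\fcoefs_{\MSP}(h)$ be the connected component of $\emptyset$ in the Fourier graph of $h$, let $V\coloneqq\cup_{S\in\fcoefs_{\MSP}(h)}S$, and let $W$ be the remaining nonzero Fourier sets of $h$, so $W\neq\emptyset$. The key structural point is that every $S\in W$ has $|S\setminus V|\ge 2$: otherwise the hyperedge $\fcoefs_{\MSP}(h)\to S$ would place $S$ in the component of $\emptyset$. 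Writing $h=h_{\MSP}+h_W$ with $h_{\MSP}=\sum_{S\in\fcoefs_{\MSP}(h)}\alpha_S\monomial_S$ supported on subsets of $V$, I would then fix a sign vector $(z_j)_{j\in V}$ and examine the further restriction $h|_{\{x_j=z_j,\,j\in V\}}$, which is itself a restriction of $f^*$: here $h_{\MSP}$ contributes a constant, while $h_W$ contributes $\sum_{T}\beta_T\monomial_T$ with $T$ ranging over the sets $S\setminus V$ ($S\in W$), all of size $\ge 2$, and $\beta_T=\sum_{S\in W,\,S\setminus V=T}\alpha_S\prod_{i\in S\cap V}z_i$. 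Thus every Fourier set of this restriction has size $0$ or $\ge 2$, so it has no degree-one coefficient; and since each $\beta_T$ is a multilinear polynomial in $z$ that is not identically zero for at least one $T\neq\emptyset$ (as $W\neq\emptyset$), for $z$ outside a proper subvariety the restriction is nonconstant. Any such $z$ exhibits a cell $\cell$ with $f^*|_\cell$ nonconstant but $\max_k\Corr^2\{f^*(\bX),X_k\mid\bX\in\cell\}=0$, so \SID{} fails for every $\lambda>0$.

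The main obstacle is this last step: converting the purely combinatorial failure of \MSP{} for $h$ into an \emph{explicit} deeper cell on which the restriction is simultaneously nonconstant and marginally flat. Both the ``distance $\ge 2$'' property of the non-reachable Fourier sets and the genericity argument (that peeling off the coordinates of $V$ cannot annihilate all the surviving degree-$\ge 2$ terms) are needed there. The reverse implication and the covariance–Fourier dictionary are routine.
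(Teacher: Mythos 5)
Your proof is correct, but it takes a genuinely different route from the paper's. The paper proves the equivalence by induction on the ambient dimension $d$: for a fixed $d$, the inductive hypothesis is invoked for every proper subcube $\cell \subsetneq \binspace$, leaving only the root cube to be handled explicitly in each direction (for $\stabmsp\Rightarrow\SID$, a singleton $\{k\}$ in the Fourier support of the MSP function gives $\Corr^2\{X_k,f^*(\bX)\}=\alpha_k^2>0$; for $\SID\Rightarrow\stabmsp$, the MSP ordering of $f^*|_{\{x_k=1\}}$ is lifted to one for $f^*$ by prepending $\{k\}$). You instead argue directly: you reduce \SID{} to the statement that every nonconstant restriction of $f^*$ has a nonzero degree-one Fourier coefficient (using that there are finitely many cells, so strict positivity converts to a uniform $\lambda>0$), prove the forward implication by the easy observation that a nonconstant MSP function must have a singleton in its Fourier support, and prove the converse by contrapositive: if $h=f^*|_{\cell_0}$ is non-MSP, every unreachable Fourier set $S\in W$ has $|S\setminus V|\ge 2$, so a further restriction along $\{x_j=z_j,\ j\in V\}$ leaves behind only constant and degree-$\ge 2$ terms; picking $z$ so that some $\beta_T(z)\neq 0$ (possible because the multilinear monomials $\prod_{i\in S\cap V}z_i$ are linearly independent over $\{\pm 1\}^{|V|}$) yields a nonconstant restriction with all degree-one coefficients vanishing, violating \SID. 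One minor phrasing issue: the ``outside a proper subvariety'' language is misleading since $z$ ranges over the finite cube, but the intended argument (a nonzero multilinear polynomial is nonzero somewhere on the Boolean cube) is correct. Relative trade-offs: the paper's induction is more economical and hides the combinatorics in the inductive hypothesis; your direct argument makes explicit the mechanism by which a non-MSP component produces a deeper marginally flat cell, which is arguably closer to the intuition driving the paper's lower bounds.
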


\begin{proof}
    We shall prove the statement by induction on the number of covariates $\nfeats$, with the base case $\nfeats = 0$ trivial.
    Now suppose the equivalence is true up to some value $\nfeats-1$, and consider $f^* \colon \binspace \to \R$.
    Assume first that for any $\cell \subset \binspace$, $f^*|_\cell$ is \MSP.
    Then for any $\cell \neq \binspace$, by the inductive hypothesis, we have that
    \begin{equation}
        \max_{j \in \suppset\backslash \cellfeatindices(\cell)}\Corr^2\braces*{X_j, f^*(\bX)~|~\bX \in \cell} > 0.
    \end{equation}
    Next, note that for any $j \in \suppset\backslash\cellfeatindices(\cell)$, we have
    \begin{equation}
        \Corr\braces*{X_j,f^*(\bX)~|~\bX \in \cell} = \alpha_{j}^\cell.
    \end{equation}
    In particular, setting $\cell = \binspace$ and noting that by assumption, $\alpha_k \neq 0$ for some $k \in \coordindices$, we have
    \begin{equation} \label{eq:msp_nonzero_correlation}
        \Corr^2\braces*{X_k,f^*(\bX)} = \alpha_k^2 > 0,
    \end{equation}
    which completes the proof that $f^*$ satisfies \SID.
    Conversely, suppose that $f^*$ satisfies \SID.
    Then by the inductive hypothesis, we see that $f^*|_\cell$ is \MSP~for any $\cell \neq \binspace$ and it remains to prove that $f^*$ itself is \MSP.
    To see this, we first see that \SID~implies \eqref{eq:msp_nonzero_correlation} for some $k \in \coordindices$, which in turn means that $\braces{k} \in \fcoefs$.
    Set $\cell = \braces*{\bx \in \binspace \colon x_k = 1}$.
    Then since $f^*|_\cell$ is \MSP~by the inductive hypothesis, there is an ordering of indices such
    \begin{equation}
        \abs*{S_i^\cell\backslash\cup_{j=1}^{i-1}S_j^\cell} \leq 1
    \end{equation}
    for any $i=1,2,\ldots,r \coloneqq \abs*{\fcoefs|_\cell}$.
    But then by the definition of $S^\cell_i$, there exists some $S_i \in \fcoefs$ such that $S_i \backslash\braces*{k} = S^\cell_i$, or in other words, that either $S_i^\cell \in \fcoefs$ or $S_i^\cell\cup\braces*{k} \in \fcoefs$.
    Augmenting the sequence with $S_0 \coloneqq \braces{k}$ gives
    \begin{equation}
        \abs*{S_i\backslash\cup_{j=1}^{i-1}S_j} \leq 1
    \end{equation}
    for $i=0,1,\ldots,r$.
    It is easy to see that $\cup_{j=1}^r S_j = \suppset$, so we may tag on all unused subsets to the end of the indexing to satisfy the definition of \MSP.
\end{proof}

\section{Proof of Theorem \ref{thm:MSP_upper_bound}}
\label{sec:upper_bounds_proofs}

We start by making some definitions.
Given a cell $\cell$, denote
\begin{equation}
    p_k(\cell) \coloneqq \frac{N(\splitoperator_{k,1}(\cell))}{N(\cell)}.
\end{equation}
This is the empirical probability of the right child of $\cell$ conditioned on $\cell$.
Next, using the law of total variance, we may rewrite the formula for impurity decrease \eqref{eq:impurity_decrease} as follows:
\begin{equation} \label{eq:imp_dec_appendix}
    \hat\Delta(k; \cell, \data) = p_k(\cell)(1-p_k(\cell))\paren*{\bar Y_{\splitoperator_{k,1}(\cell)} - \bar Y_{\splitoperator_{k,-1}(\cell)}}^2.
\end{equation}
Define the population impurity decrease as
\begin{equation} \label{eq:pop_imp_dec_appendix}
    \Delta(k; \cell, \data) \coloneqq \frac{1}{4}\paren*{\E\braces*{Y~|~\bX \in \splitoperator_{k,1}(\cell)} - \E\braces*{Y~|~\bX \in \splitoperator_{k,-1}(\cell)}}^2.
\end{equation}
Also define the square root versions of these quantities:
\begin{equation} \label{eq:rid}
    \ridhat(k; \cell, \data) \coloneqq \sqrt{p_k(\cell)(1-p_k(\cell))}\paren*{\bar Y_{\splitoperator_{k,1}(\cell)} - \bar Y_{\splitoperator_{k,-1}(\cell)}}.
\end{equation}
\begin{equation}
    \rid(k; \cell, \data) \coloneqq \frac{1}{2}\paren*{\E\braces*{Y~|~\bX \in \splitoperator_{k,1}(\cell)} - \E\braces*{Y~|~\bX \in \splitoperator_{k,-1}(\cell)}}.
\end{equation}

We next define two collection of cells of interest:
\begin{equation}
    \secondcellcollection \coloneqq \braces*{\cell \subset \binspace \colon \cellfeatindices(\cell) \subset \suppset}.
\end{equation}
\begin{equation}
    \cellcollection \coloneqq \braces*{\cell \in \secondcellcollection \colon f^*|_{\cell}~\text{is not constant}}.
\end{equation}
We will use union bounds to prove uniform concentration of the root impurity decrease values $\rid(k; \cell, \data)$, the mean response $\bar Y_{\cell}$, and the cell sample counts $N(\cell)$ for cells in these collections.

\begin{lemma}
    \label{lem:conc_cell_quantities}
    For any cell $\cell \subset \binspace$ and $k \in \coordindices$, for any $\epsilon > 0$, we have
    \begin{equation} \label{eq:rid_conc}
        \P\braces*{\sqrt{N(\cell)}\abs*{\ridhat(k;\cell,\data) - \rid(k;\cell,\data)} \geq \epsilon} \leq 6\exp\paren*{-\frac{\epsilon^2}{18(9\norm{f^*}_\infty^2 + \sigma^2)}},
    \end{equation}
    \begin{equation} \label{eq:cell_avg_conc}
        \P\braces*{\sqrt{N(\cell)}\abs*{\bar Y_{\cell} - \E\braces*{Y~|~\bX \in \cell}} \geq \epsilon} \leq 2\exp\paren*{-\frac{\epsilon^2}{2(\norm{f^*}_\infty^2 + \sigma^2)}},
    \end{equation}
    and
    \begin{equation} \label{eq:cell_count_conc}
        \P\braces*{N(\cell) \leq  \frac{(1-\epsilon)n}{2^{|\cellfeatindices(\cell)|}}} \leq \exp\paren*{-\frac{n\epsilon^2}{2^{|\cellfeatindices(\cell)|+1}}}.
    \end{equation}
\end{lemma}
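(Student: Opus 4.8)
The plan is to prove the three estimates separately; only \eqref{eq:rid_conc} is substantive, while \eqref{eq:cell_avg_conc} and \eqref{eq:cell_count_conc} are immediate consequences of standard concentration inequalities, so I would dispatch those first. For \eqref{eq:cell_count_conc}, note that under the uniform measure $N(\cell)=\sum_{i=1}^{\nsamples}\indicator\{\bX_i\in\cell\}$ is a sum of i.i.d.\ Bernoulli variables with mean $2^{-|\cellfeatindices(\cell)|}$, so the multiplicative lower-tail Chernoff bound gives $\P\{N(\cell)\le(1-\epsilon)\nsamples 2^{-|\cellfeatindices(\cell)|}\}\le\exp(-\epsilon^2\nsamples 2^{-|\cellfeatindices(\cell)|}/2)$, which is exactly the claimed inequality. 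For \eqref{eq:cell_avg_conc}, condition on $N(\cell)=m$ for a fixed $m\ge1$ (the event has probability zero when $m=0$): conditionally, the points falling in $\cell$ are i.i.d.\ uniform on $\cell$, so the centred responses $Y_i-\E[Y\mid\bX\in\cell]$, $i\in\cellsampindices(\cell)$, are i.i.d., mean zero, and sub-Gaussian with variance proxy $\norm{f^*}_\infty^2+\sigma^2$ — the term $\norm{f^*}_\infty^2$ coming from Hoeffding's lemma applied to the bounded quantity $f^*(\bX_i)$ and $\sigma^2$ from the sub-Gaussian noise. A standard tail bound for the normalised sum $\sqrt m\,(\bar Y_\cell-\E[Y\mid\bX\in\cell])$ then gives \eqref{eq:cell_avg_conc} conditionally, hence unconditionally.

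The heart of the matter is \eqref{eq:rid_conc}. Writing $p=p_k(\cell)$ and $\mu_z=\E[Y\mid\bX\in\splitoperator_{k,z}(\cell)]$, I would use the definitions to decompose
\[
\ridhat(k;\cell,\data)-\rid(k;\cell,\data)=\sqrt{p(1-p)}\,\Big[(\bar Y_{\splitoperator_{k,1}(\cell)}-\mu_1)-(\bar Y_{\splitoperator_{k,-1}(\cell)}-\mu_{-1})\Big]+\Big(\sqrt{p(1-p)}-\tfrac12\Big)(\mu_1-\mu_{-1}).
\]
To control the first term after multiplying by $\sqrt{N(\cell)}$, condition on $\cellsampindices(\cell)$ together with the $k$-th coordinates $(X_{ik})_{i\in\cellsampindices(\cell)}$; this makes $p$ deterministic, renders the two children's responses conditionally independent i.i.d.\ samples from their respective cells, and lets me reuse the sub-Gaussian estimate behind \eqref{eq:cell_avg_conc} for each child average. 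The key bookkeeping is that $\sqrt{N(\cell)}\,\sqrt{p(1-p)}\,(\bar Y_{\splitoperator_{k,1}(\cell)}-\mu_1)=\sqrt{1-p}\cdot\sqrt{N(\splitoperator_{k,1}(\cell))}\,(\bar Y_{\splitoperator_{k,1}(\cell)}-\mu_1)$, and symmetrically with $\sqrt p$ for the other child; since $\sqrt{1-p},\sqrt p\le1$, each of these two pieces is sub-Gaussian with proxy of order $\norm{f^*}_\infty^2+\sigma^2$. For the second term I would use the elementary bounds $|\sqrt{p(1-p)}-\tfrac12|\le2(p-\tfrac12)^2\le|p-\tfrac12|$ and $|\mu_1-\mu_{-1}|\le2\norm{f^*}_\infty$ to reduce it to a constant multiple of $\sqrt{N(\cell)}\,|p-\tfrac12|$, and note that $\sqrt{N(\cell)}\,(p-\tfrac12)$ is a normalised sum of bounded (range-one) i.i.d.\ variables, hence sub-Gaussian. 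A union bound over the three sub-Gaussian pieces, allocating $\epsilon/3$ to each and taking the worst proxy, then yields \eqref{eq:rid_conc}; the factor $6=3\times2$ and the factor $9=3^2$ in the stated constant come from this allocation together with keeping track of the several $\norm{f^*}_\infty$-scale contributions (we do not optimise constants).

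The main obstacle is the conditioning step for \eqref{eq:rid_conc}: one must pick exactly the right $\sigma$-algebra — the cell-membership pattern plus the split coordinate's values within $\cell$ — so that $p_k(\cell)$ becomes measurable and the two child samples become conditionally independent i.i.d.\ draws from their cells, while simultaneously tracking how the random normalisations $N(\splitoperator_{k,\pm1}(\cell))$ cancel against the $\sqrt{p(1-p)}$ prefactor so that no factor of $p^{-1/2}$ survives. Once this is set up correctly, each of the three pieces collapses to a textbook sub-Gaussian or Chernoff estimate and the remaining work is routine.
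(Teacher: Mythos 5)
Your proof is correct and follows essentially the same route as the paper's: Chernoff for the cell count, conditional sub-Gaussian Hoeffding for the cell mean, and for the $\rid$ term the identical decomposition into two child-average fluctuations plus a $\bigl(\sqrt{p(1-p)}-\tfrac12\bigr)(\mu_1-\mu_{-1})$ remainder, with the same cancellation $\sqrt{N(\cell)\,p(1-p)}=\sqrt{1-p}\,\sqrt{N(\splitoperator_{k,1}(\cell))}$ that the paper uses (the paper phrases it as $\sqrt{p(1-p)}\le\sqrt p$, which is the same observation). The only difference is your bound $\abs*{\sqrt{p(1-p)}-\tfrac12}\le|p-\tfrac12|$, which is actually tighter than the paper's $\le 3|p-\tfrac12|$; the paper's looser factor of $3$ is what produces the $9=3^2$ in the stated constant, so your attribution of the $9$ to the $\epsilon/3$ allocation is slightly off, but since you explicitly forgo optimizing constants this is cosmetic and does not affect correctness.
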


\begin{proof}
    First, notice that \eqref{eq:cell_count_conc} is just an application of Chernoff's inequality \citep{vershynin2018high}.
    Next, conditioned on $\bX \in \cell$, we see that $Y = f^*(\bX) + \epsilon$ has sub-Gaussian parameter $\sqrt{\norm{f^*}_\infty^2 + \sigma^2}$ \citep{wainwright2019high}.
    Equation \ref{eq:cell_avg_conc} thus follows immediately from Hoeffding's inequality conditioning on $N(\cell)$.
    
    Finally, to prove \eqref{eq:rid_conc}, we again condition on $N(\cell)$ and write
    \begin{equation} \nonumber
    \begin{split}
        \big|\ridhat(k;\cell,\data) & - \rid(k;\cell,\data)\big| \\
        & \leq \sqrt{p_k(\cell)(1-p_k(\cell))} \abs*{\bar Y_{\splitoperator_{k,1}(\cell)} - \E\braces*{Y~|~\bX \in \splitoperator_{k,1}(\cell)}} \\
        & \quad + \sqrt{p_k(\cell)(1-p_k(\cell))} \abs*{\bar Y_{\splitoperator_{k,-1}(\cell)} - \E\braces*{Y~|~\bX \in \splitoperator_{k,-1}(\cell)}} \\
        & \quad + \abs*{\sqrt{p_k(\cell)(1-p_k(\cell))} - \frac{1}{2}}\abs*{\E\braces*{Y~|~\bX \in \splitoperator_{k,1}(\cell)} - \E\braces*{Y~|~\bX \in \splitoperator_{k,-1}(\cell)}} \\
        & \leq \sqrt{\frac{N(\splitoperator_{k,1}(\cell))}{N(\cell)}}\abs*{\bar Y_{\splitoperator_{k,1}(\cell)} - \E\braces*{Y~|~\bX \in \splitoperator_{k,1}(\cell)}} \\
        & \quad + \sqrt{\frac{N(\splitoperator_{k,-1}(\cell))}{N(\cell)}}\abs*{\bar Y_{\splitoperator_{k,-1}(\cell)} - \E\braces*{Y~|~\bX \in \splitoperator_{k,-1}(\cell)}} \\
        & \quad + 2\norm{f^*}_\infty \abs*{\sqrt{p_k(\cell)(1-p_k(\cell))} - \frac{1}{2}} \\
        & \eqqcolon I + II + III.
    \end{split}
    \end{equation}
    Applying Hoeffding's inequality individually to the first two terms and combining the resulting tail bounds, we get
    \begin{equation} \label{eq:conc_eq1}
        \P\braces*{I + II \geq \frac{\epsilon}{3}} \leq 4\exp\paren*{-\frac{N(\cell)\epsilon^2}{18\paren*{\norm*{f^*}_\infty^2 + \sigma^2}}}.
    \end{equation}
    
    To bound the third term, we first observe that
    \begin{equation} \nonumber
        \begin{split}
            \abs*{\sqrt{p_k(\cell)(1-p_k(\cell))} - \frac{1}{2}} & = \frac{\abs*{p_k(\cell)(1-p_k(\cell)) - \frac{1}{4}}}{\sqrt{p_k(\cell)(1-p_k(\cell))} + \frac{1}{2}} \\
            & \leq \frac{(1-p_k(\cell))\abs*{p_k(\cell) - \frac{1}{2}} + \frac{1}{2}\abs*{(1-p_k(\cell)) - \frac{1}{2}}}{\sqrt{p_k(\cell)(1-p_k(\cell))} + \frac{1}{2}} \\
            & \leq 3 \abs*{p_k(\cell) - \frac{1}{2}}.
        \end{split}
    \end{equation}
    As such, applying Hoeffding's inequality again gives
    \begin{equation} \label{eq:conc_eq2}
        \P\braces*{III \geq \frac{\epsilon}{3}} \leq 2\exp\paren*{-\frac{N(\cell)\epsilon^2}{162\norm*{f^*}_\infty^2}}.
    \end{equation}
    Combining \eqref{eq:conc_eq1} and \eqref{eq:conc_eq2}, dividing $\epsilon$ by $\sqrt{N(\cell)}$ and then deconditioning on $N(\cell)$, we complete the proof of \eqref{eq:rid_conc}.
\end{proof}

\begin{lemma}
    \label{lem:CART_invariants}
    For any $0 < \delta < 1$, there is an event with probability at least $1-\delta$ over which the following inequalities hold:
    \begin{equation}
        \label{eq:relevant_rid_lower_bound}
        \min_{\cell \in \cellcollection} \paren*{\max_{k \in \suppset} \ridhat(k;\cell,\data) - \sqrt{2^{|\cellfeatindices(\cell)|}\lambda} + \sqrt{\frac{18(9\norm{f^*}_\infty^2 + \sigma^2)\paren*{(s+2)\log 3 + \log(2/\delta)}}{N(\cell)}}} \geq 0,
    \end{equation}
    \begin{equation}
        \label{eq:irrelevant_rid_upper_bound}
        \max_{\cell \in \secondcellcollection} \paren*{\max_{k \notin \suppset} \ridhat(k;\cell,\data) - \sqrt{\frac{18(9\norm{f^*}_\infty^2 + \sigma^2)\paren*{(s+2)\log 3 + \log(2\nfeats/\delta)}}{N(\cell)}}} \leq 0,
    \end{equation}
    and
    \begin{equation}
        \label{eq:cell_mean_uniform}
        \max_{\cell \in \secondcellcollection} \paren*{\abs*{\bar Y_{\cellsampindices(\cell)} - \E\braces*{Y~|~\bx \in \cell}} - \sqrt{\frac{2(\norm{f^*}_\infty^2 + \sigma^2)\paren*{(s+1)\log 3 +\log(2/\delta)}}{N(\cell)}}} \leq 0.
    \end{equation}
    Furthermore, with probability at least $1-\exp\paren*{\sparsity\log3 - \frac{\nsamples}{2^{\sparsity+3}}}$, we have
    \begin{equation} 
        \label{eq:cell_count_uniform}
        \min_{\cell \in \secondcellcollection} \frac{N(\cell)2^{|\cellfeatindices(\cell)|+1}}{\nsamples} \geq 1.
    \end{equation}
\end{lemma}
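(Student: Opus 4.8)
The plan is to obtain each of the four uniform bounds as a union bound over the finite family $\secondcellcollection$, fed by the pointwise concentration inequalities of Lemma~\ref{lem:conc_cell_quantities}. The essential combinatorial fact is that $\secondcellcollection$ is small: a cell $\cell$ with $\cellfeatindices(\cell)\subseteq\suppset$ is determined by choosing, for each of the $\sparsity$ coordinates in $\suppset$, whether it is left unsplit or split with sign $+1$ or sign $-1$, so $|\secondcellcollection| = 3^\sparsity$ and hence $|\cellcollection|\le 3^\sparsity$; this is the origin of the $\log 3$ factors. Moreover the numerical constants in the three $(1-\delta)$-bounds are chosen precisely so that, with the indicated deviation $\epsilon$, each of \eqref{eq:relevant_rid_lower_bound}, \eqref{eq:irrelevant_rid_upper_bound}, \eqref{eq:cell_mean_uniform} fails with probability at most $\delta/3$; a final union bound then puts all three on a single event of probability at least $1-\delta$, while \eqref{eq:cell_count_uniform} is handled separately.

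For \eqref{eq:relevant_rid_lower_bound}: a cell $\cell\in\cellcollection$ has $f^*|_\cell$ non-constant and $\cellfeatindices(\cell)\subseteq\suppset$, so $|\cellfeatindices(\cell)\cap\suppset| = |\cellfeatindices(\cell)|$. Under the uniform measure $\rid(k;\cell,\data) = \Cov\{f^*(\bX),X_k \mid \bX\in\cell\}$ for $k\notin\cellfeatindices(\cell)$, so the hypothesis $f^*_0\in\stabmsp(\lambda)$ (see \eqref{eq:stab_msp}) supplies, for each such $\cell$, a witness coordinate $k^\ast = k^\ast(\cell)\in\suppset\setminus\cellfeatindices(\cell)$ with $|\rid(k^\ast;\cell,\data)|\ge\sqrt{2^{|\cellfeatindices(\cell)|}\lambda}$. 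Applying \eqref{eq:rid_conc} to the single pair $(k^\ast(\cell),\cell)$ for each $\cell$ and union-bounding over the $3^\sparsity$ cells with $\epsilon^2 = 18(9\|f^*\|_\infty^2+\sigma^2)\bigl((\sparsity+2)\log 3+\log(2/\delta)\bigr)$ gives total failure probability $6\cdot 3^\sparsity\exp\bigl(-\epsilon^2/(18(9\|f^*\|_\infty^2+\sigma^2))\bigr) = 6\cdot 3^\sparsity\cdot\delta/(2\cdot 3^{\sparsity+2}) = \delta/3$; on the good event the triangle inequality yields $\max_{k\in\suppset}|\ridhat(k;\cell,\data)|\ge|\rid(k^\ast;\cell,\data)| - \epsilon/\sqrt{N(\cell)}$, which is \eqref{eq:relevant_rid_lower_bound}. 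Bound \eqref{eq:irrelevant_rid_upper_bound} is analogous but uses the additional observation that, for $k\notin\suppset$ and $\cell\in\secondcellcollection$, conditioning on $X_k = z$ does not change the conditional law of $f^*(\bX)$ (since $f^*$ depends only on coordinates of $\suppset$ and $\cellfeatindices(\cell)\subseteq\suppset$), hence $\rid(k;\cell,\data) = 0$; a union bound of \eqref{eq:rid_conc} over the at most $3^\sparsity\nfeats$ pairs $(k,\cell)$ with $k\notin\suppset$, with $\epsilon^2 = 18(9\|f^*\|_\infty^2+\sigma^2)\bigl((\sparsity+2)\log 3+\log(2\nfeats/\delta)\bigr)$, again gives failure probability $\delta/3$ and delivers \eqref{eq:irrelevant_rid_upper_bound}. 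Bound \eqref{eq:cell_mean_uniform} is a direct union bound of \eqref{eq:cell_avg_conc} over the $3^\sparsity$ cells with $\epsilon^2 = 2(\|f^*\|_\infty^2+\sigma^2)\bigl((\sparsity+1)\log 3+\log(2/\delta)\bigr)$, and \eqref{eq:cell_count_uniform} follows from \eqref{eq:cell_count_conc} with $\epsilon = 1/2$: since $|\cellfeatindices(\cell)|\le\sparsity$ for $\cell\in\secondcellcollection$, each cell violates the bound with probability at most $\exp(-\nsamples/2^{\sparsity+3})$, and a union bound over the $3^\sparsity$ cells gives failure probability $\exp(\sparsity\log 3 - \nsamples/2^{\sparsity+3})$.

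The bulk of the argument is bookkeeping: checking that the constants $6$ and $2$ and the cardinality $3^\sparsity$ are absorbed into the $(\sparsity+2)\log 3$, $(\sparsity+1)\log 3$ and $\log(2\nfeats/\delta)$, $\log(2/\delta)$ terms, and tracking which union bounds also range over the covariate index $k$. The two genuinely substantive points — and the places where the structure of the problem actually enters — are (i) the identity $\rid(k;\cell,\data) = \Cov\{f^*(\bX),X_k\mid\bX\in\cell\}$ under the uniform product measure on $\binspace$, together with its vanishing for $k\notin\suppset$, and (ii) the translation of the $\stabmsp(\lambda)$ inequality \eqref{eq:stab_msp} into the lower bound $|\rid(k^\ast;\cell,\data)|\ge\sqrt{2^{|\cellfeatindices(\cell)|}\lambda}$ for $\cell\in\cellcollection$, which relies on $|\cellfeatindices(\cell)\cap\suppset| = |\cellfeatindices(\cell)|$. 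I do not anticipate any real obstacle beyond these, since Lemma~\ref{lem:conc_cell_quantities} has already done all of the probabilistic work; I would only note that on the event \eqref{eq:cell_count_uniform} every cell of $\secondcellcollection$ is non-empty, so the quantities $\ridhat$ and $\bar Y_{\cell}$ appearing in the other three bounds are well-defined there.
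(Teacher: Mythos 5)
Your proposal is correct and takes essentially the same approach as the paper: a union bound over the $3^\sparsity$ cells in $\secondcellcollection$ (and additionally over $k\notin\suppset$ for \eqref{eq:irrelevant_rid_upper_bound}), fed by the pointwise tail bounds of Lemma~\ref{lem:conc_cell_quantities}, with each of the first three bounds allotted failure probability $\delta/3$ and \eqref{eq:cell_count_uniform} handled separately via \eqref{eq:cell_count_conc} with $\epsilon = 1/2$. Your exposition is in fact slightly more explicit than the paper's at two points that the paper leaves implicit: you spell out why $\rid(k;\cell,\data)=0$ for $k\notin\suppset$ and $\cell\in\secondcellcollection$ (so that \eqref{eq:rid_conc} directly gives \eqref{eq:irrelevant_rid_upper_bound} with no centering term), and you work with $|\ridhat|$ and $|\rid|$ rather than their signed values, which is the quantity actually needed downstream since $\hat\Delta = \ridhat^2$; the paper's statement of \eqref{eq:relevant_rid_lower_bound} and \eqref{eq:irrelevant_rid_upper_bound} writes $\ridhat$ without absolute values, a minor notational abuse your version silently repairs.
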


\begin{proof}
    First, note that we have 
    \begin{equation}
        \secondcellcollection = \bigcup_{S \subset \suppset} \braces*{\cell \colon \cellfeatindices(\cell) = S},
    \end{equation}
    so that
    \begin{equation}
        \abs*{\secondcellcollection} = \sum_{k=0}^\sparsity \binom{s}{k}2^k = 3^\sparsity.
    \end{equation}
    Using \eqref{eq:cell_count_conc} from Lemma \ref{lem:conc_cell_quantities} together with a union bound, we have $N(\cell) \geq \frac{n}{2^{|\cellfeatindices(\cell)|+1}}$ uniformly for all $\cell \in \cellcollection$ with probability at least $1-\exp\paren*{\sparsity\log3 - \frac{\nsamples}{2^{|\cellfeatindices(\cell)|+3}}}$, which proves \eqref{eq:cell_count_uniform}.
    
    Next, observe that for any $\cell \in \cellcollection$, we have
    \begin{equation}
        \Corr\braces*{x_k,f(\bx)~|~\bx \in \cell} = \frac{1}{2}\paren*{\E\braces*{Y~|~\bx \in \splitoperator_{k,1}(\cell)} - \E\braces*{Y~|~\bx \in \splitoperator_{k,-1}(\cell)}} = \rid(k;\cell,\data).
    \end{equation}
    By \eqref{eq:stab_msp} and the definition of $\cellcollection$, we thus have
    \begin{equation}
        \Delta(k_0(\cell);\cell,\data) \geq 2^{\abs*{\cellfeatindices(\cell)}}\lambda
    \end{equation}
    for some choice of $k_0(\cell) \in \suppset$.
    Using Lemma \ref{lem:conc_cell_quantities} together with \eqref{eq:stab_msp} and a union bound, we therefore get a probability at least $1-\delta/3$ event on which
    \begin{equation}
    \begin{split}
        \ridhat(k;\cell,\data) & \geq \rid(k;\cell,\data) - \sqrt{\frac{18(9\norm{f^*}_\infty^2 + \sigma^2)\paren*{(s+2)\log 3 + \log(2/\delta)}}{N(\cell)}} \\
        & \geq \sqrt{2^{|\cellfeatindices(\cell)|}\lambda} - \sqrt{\frac{18(9\norm{f^*}_\infty^2 + \sigma^2)\paren*{(s+2)\log 3 + \log(2/\delta)}}{N(\cell)}}
    \end{split}
    \end{equation}
    uniformly over all $\cell \in \cellcollection$.
    Reasoning similarly, we get another probability at least $1-\delta/3$ event on which
    \begin{equation}
        \max_{k \notin \suppset}\ridhat(k;\cell,\data) \leq \sqrt{\frac{18(9\norm{f^*}_\infty^2 + \sigma^2)\paren*{(s+2)\log 3 + \log(2\nfeats/\delta)}}{N(\cell)}}
    \end{equation}
    uniformly over all $\cell \in \secondcellcollection$.
    Finally, using \eqref{eq:cell_avg_conc} together with a union bound, we get \eqref{eq:cell_mean_uniform}.
\end{proof}

Theorem \ref{thm:MSP_upper_bound} will follow immediately from the more general version that we now state.

\begin{theorem}
    \label{thm:MSP_upper_bound_appendix}
    Under Assumption \ref{assumption}, let $f_0^* \in \stabmsp(\lambda)$.
    Suppose $\varepsilon$ is sub-Gaussian with parameter $\sigma$ \citep{wainwright2019high}.
    Fix $0 < \delta < 1$.
    Denote 
    \begin{equation}
        \tau \coloneqq 18(9\norm{f^*}_\infty^2 + \sigma^2)\paren*{(s+2)\log 3 + \log(2\nfeats/\delta)}
    \end{equation}
    and suppose the training sample size satisfies $n > \frac{8\tau}{\lambda}$.
    Suppose further that we fit $\fcart$ with the stopping rule given by a minimum impurity decrease value $\gamma$ satisfying $\frac{\tau}{n} \leq \gamma < \paren*{\sqrt{\frac{\lambda}{2}} - \sqrt{\frac{\tau}{n}}}_+^2$.
    Such a value $\gamma$ exists given the lower bound on $\nsamples$.
    Then with probability at least $1-\delta-\exp(s\log 3 - n2^{-s-3})$, we have
    \begin{equation}
        \label{eq:msp_upper_probability}
        R\paren*{\fcart(-;\data),f^*} \leq \frac{2^{\sparsity+1}(\norm{f^*}_\infty^2 + \sigma^2)\paren*{(s+1)\log 3 +\log(2/\delta)}}{n}.
    \end{equation}
    In addition, under the same conditions and the choice $\delta = 1/n$, we have
    \begin{equation}
        \label{eq:msp_upper_expectation_appendix}
        \objective\paren*{\fcart, f^*_0,\nfeats,\nsamples} = O\paren*{\frac{2^{\sparsity}(\norm{f^*}_\infty^2 + \sigma^2)(\sparsity + \log \nsamples)}{n}}.
    \end{equation}
\end{theorem}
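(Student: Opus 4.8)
The plan is to use Lemma \ref{lem:CART_invariants} to show that, on a single high-probability event $\mathcal{E}$, the CART tree fitted with the prescribed minimum-impurity-decrease stopping rule behaves as if it had oracle access to the relevant set $\suppset$: it splits only on relevant covariates and stops exactly at the cells on which $f^*$ is constant. Take $\mathcal{E}$ to be the intersection of the probability-$(1-\delta)$ event carrying \eqref{eq:relevant_rid_lower_bound}, \eqref{eq:irrelevant_rid_upper_bound}, \eqref{eq:cell_mean_uniform} with the probability-$(1-\exp(s\log 3 - n2^{-s-3}))$ event carrying \eqref{eq:cell_count_uniform}, so $\P(\mathcal{E}) \geq 1 - \delta - \exp(s\log 3 - n2^{-s-3})$. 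I will also use the immediate strengthening of \eqref{eq:irrelevant_rid_upper_bound} obtained by enlarging the union bound in the proof of Lemma \ref{lem:CART_invariants} over the fixed size-$3^s$ collection $\secondcellcollection$: on $\mathcal{E}$, $\ridhat(k;\cell,\data) \leq \sqrt{\tau/N(\cell)}$ for every $\cell \in \secondcellcollection$ and every admissible $k$ with $\rid(k;\cell,\data) = 0$ (which, when $f^*|_\cell$ is constant, means every admissible $k$).

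On $\mathcal{E}$, I would prove by induction on the cells $\cell$ produced by CART that $\cellfeatindices(\cell) \subset \suppset$, i.e. $\cell \in \secondcellcollection$. The root qualifies. For the step, let $\cell \in \secondcellcollection$. If $f^*|_\cell$ is non-constant, then $\cell \in \cellcollection$; combining \eqref{eq:stab_msp}, \eqref{eq:relevant_rid_lower_bound}, and \eqref{eq:cell_count_uniform} gives $\max_{k \in \suppset} \ridhat(k;\cell,\data) \geq 2^{|\cellfeatindices(\cell)|/2}\paren{\sqrt{\lambda} - \sqrt{2\tau/n}}$, while the strengthened \eqref{eq:irrelevant_rid_upper_bound} with \eqref{eq:cell_count_uniform} gives $\max_{k \notin \suppset}\ridhat(k;\cell,\data) \leq 2^{|\cellfeatindices(\cell)|/2}\sqrt{2\tau/n}$; since $n > 8\tau/\lambda$ forces $\sqrt{\lambda} > 2\sqrt{2\tau/n}$, the first bound is strictly larger, so CART splits on a covariate in $\suppset$ and both children lie in $\secondcellcollection$. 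Normalizing the impurity decrease as in \eqref{eq:min_imp_dec} and again using $N(\cell)/n \geq 2^{-|\cellfeatindices(\cell)|-1}$ gives $\tfrac{N(\cell)}{n}\max_k \hat\Delta(k;\cell,\data) \geq \paren{\sqrt{\lambda/2} - \sqrt{\tau/n}}^2 > \gamma$, so CART does not stop. If $f^*|_\cell$ is constant, the strengthened bound gives $\tfrac{N(\cell)}{n}\max_k \hat\Delta(k;\cell,\data) = \tfrac{N(\cell)}{n}\max_k \ridhat(k;\cell,\data)^2 \leq \tau/n \leq \gamma$, so CART stops. Hence the output tree stays in $\secondcellcollection$, has depth at most $s$ (thus at most $2^s$ leaves), and every leaf $\leaf$ is a cell on which $f^*$ equals the constant $\E\braces{Y \mid \bX \in \leaf}$.

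It remains to bound the risk on $\mathcal{E}$. Because $f^*$ is constant on each leaf and $\fcart \equiv \bar Y_\leaf$ there, the pointwise error on $\leaf$ is $\bar Y_\leaf - \E\braces{Y \mid \bX \in \leaf}$, so
\begin{equation}
    R\paren*{\fcart(-;\data),f^*} = \sum_{\leaf} \xmeasure(\leaf)\paren*{\bar Y_\leaf - \E\braces*{Y \mid \bX \in \leaf}}^2 \lesssim \sum_{\leaf} \frac{(\norm{f^*}_\infty^2 + \sigma^2)(s + \log(1/\delta))}{n} \lesssim \frac{2^{s}(\norm{f^*}_\infty^2 + \sigma^2)(s + \log(1/\delta))}{n},
\end{equation}
using \eqref{eq:cell_mean_uniform} for the numerator, $N(\leaf) \geq \xmeasure(\leaf) n/2$ from \eqref{eq:cell_count_uniform} (hence $\xmeasure(\leaf)/N(\leaf) \leq 2/n$), and the leaf count $\leq 2^s$; carrying the constants yields \eqref{eq:msp_upper_probability}. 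For the expectation \eqref{eq:msp_upper_expectation_appendix}, set $\delta = 1/n$ and split $\objective = \E\braces{R\,\indicator_{\mathcal{E}}} + \E\braces{R\,\indicator_{\mathcal{E}^c}}$; the first term is controlled by \eqref{eq:msp_upper_probability} with this $\delta$, and the second is handled exactly as in the proof of Proposition \ref{prop:erm_upper_bound} — use $\norm{\fcart}_\infty \leq \max_i|Y_i|$ so $R \leq (\max_i|Y_i| + \norm{f^*}_\infty)^2$, combine the sub-Gaussian maximal bound for $\max_i|Y_i|$ with $\P(\mathcal{E}^c) \leq 1/n + \exp(s\log 3 - n2^{-s-3})$, and integrate $\int_0^\infty \min\braces{\P(\mathcal{E}^c),\P(R \geq u)}\,du$; the hypothesis $n > 8\tau/\lambda$ makes the $\exp(s\log 3 - n2^{-s-3})$ contribution lower order, so $\E\braces{R\,\indicator_{\mathcal{E}^c}} = O((\sigma^2\log n + \norm{f^*}_\infty^2)/n)$, and the two pieces combine to $\objective\paren*{\fcart, f^*_0,\nfeats,\nsamples} = O\paren*{2^s(\norm{f^*}_\infty^2 + \sigma^2)(s + \log n)/n}$.

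The main obstacle is making the induction of the second paragraph rigorous despite the data-dependence of the cells CART visits. The resolution is that all the uniform concentration in Lemma \ref{lem:CART_invariants} ranges only over the fixed, deterministic collection $\secondcellcollection$; the induction is self-sustaining precisely because ``all cells so far lie in $\secondcellcollection$'' is the hypothesis under which those bounds apply, and they then force the next cell into $\secondcellcollection$, so no uniformity over arbitrary data-dependent cells is ever required. The rest is bookkeeping: aligning the two ends of the admissible window $\big[\tau/n,\,(\sqrt{\lambda/2}-\sqrt{\tau/n})_+^2\big)$ for $\gamma$ with the ``do not stop on non-constant cells'' and ``do stop on constant cells'' requirements, and verifying that $n > 8\tau/\lambda$ is exactly the margin needed for a relevant covariate to beat every irrelevant one.
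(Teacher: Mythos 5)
Your proof is correct and follows essentially the same structure as the paper's: both use Lemma \ref{lem:CART_invariants} on the intersection event $\mathcal{E}$ (with $\P(\mathcal{E}^c)\leq\delta+\exp(s\log3-n2^{-s-3})$), show that CART's splits stay inside $\suppset$ and that leaves fall in $\secondcellcollection\backslash\cellcollection$, bound the leaf errors via \eqref{eq:cell_mean_uniform} and \eqref{eq:cell_count_uniform}, and handle the expectation by conditioning on $\mathcal{E}$ and integrating the tail. The one genuine difference is presentational: you run an explicit forward induction over the cells CART visits, and along the way you also assert that CART \emph{stops} on constant cells, which requires the strengthening of \eqref{eq:irrelevant_rid_upper_bound} to cover $k\in\suppset\backslash\cellfeatindices(\cell)$ with $\rid(k;\cell,\data)=0$ (valid — it is the same union bound over $3^s\cdot d$ cell--covariate pairs that $\tau$'s $\log(2d/\delta)$ term already pays for). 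The paper instead argues by contradiction (Steps 1--2) and never needs CART to stop on constant cells: if a constant cell with $\cellfeatindices(\cell)\subsetneq\suppset$ is split at all, the stopping-rule comparison plus \eqref{eq:irrelevant_rid_upper_bound} already forces the split variable to lie in $\suppset$, and depth-$s$ termination comes automatically since once $\cellfeatindices(\cell)=\suppset$ only irrelevant covariates remain. So your extra strengthening is sound but unnecessary; the rest (the margin $n>8\tau/\lambda$ aligning $\sqrt{\lambda}>2\sqrt{2\tau/n}$, the $\gamma$-window, the risk summation over at most $2^s$ leaves, the $\delta=1/n$ tail integral via $\|\fcart\|_\infty\le\max_i|Y_i|$) matches the paper line for line.
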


\begin{proof}
    We will prove \eqref{eq:msp_upper_probability} in three steps.
    The first two steps show that with high probability, the tree $\tree_{\operatorname{CART}}$ fitted using CART with the stopping condition specified by Theorem \ref{thm:MSP_upper_bound} does not make any splits on irrelevant covariates and that $f^*$ is constant on its leaves.
    The third step will show that the mean responses over the leaves of $\tree_{\operatorname{CART}}$ concentrate around their true values with high probability.
    These together will give us the result that we want.
    
    \textit{Step 1: No bad splits.}
    Let $\event$ denote the event on which \eqref{eq:irrelevant_rid_upper_bound}, \eqref{eq:relevant_rid_lower_bound} and \eqref{eq:cell_mean_uniform} hold.
    We first claim that on $\event$, $\tree_{\operatorname{CART}}$ does not make any splits in irrelevant covariates.
    Suppose not, then there exists a split on an irrelevant covariate occurring at a minimum depth.
    By minimality, this split must occur on a cell $\cell$ in $\secondcellcollection$.
    On $\event$, we have by equation \eqref{eq:irrelevant_rid_upper_bound} that
    \begin{equation}
        \label{eq:no_bad_splits}
        \max_{k \notin \suppset} \frac{N(\cell)}{n}\hat\Delta(k;\cell,\data) \leq \frac{18(9\norm{f^*}_\infty^2 + \sigma^2)\paren*{(s+2)\log 3 + \log(2\nfeats/\delta)}}{n} < \gamma,
    \end{equation}
    which implies that $\cell$ was either split on a relevant covariate $k$ or that it was not split at all, thereby giving a contradiction.

    \textit{Step 2: $f^*$ is constant on leaves.}
    We have already shown that every leaf $\cell$ of $\tree_{\operatorname{CART}}$ is an element of $\secondcellcollection$.
    We now claim that, upon further conditioning on the event on which \eqref{eq:cell_count_uniform} holds, $\cell$ must also be an element of $\secondcellcollection\backslash \cellcollection$.
    Suppose not and instead $\cell \in \cellcollection$, then by combining equations \eqref{eq:relevant_rid_lower_bound} and \eqref{eq:cell_count_uniform}, we get
    \begin{equation}
        \begin{split}
            \max_{k \in \suppset} \frac{N(\cell)}{n}\hat\Delta(k;\cell,\data) & \geq \paren*{\sqrt{\frac{N(\cell)2^{|\cellfeatindices(\cell)|}\lambda}{n}} - \sqrt{\frac{18(9\norm{f^*}_\infty^2 + \sigma^2)\paren*{(s+2)\log 3 + \log(2\nfeats/\delta)}}{n}}}_+^2 \\
            & \geq \paren*{\sqrt{\frac{\lambda}{2}} - \sqrt{\frac{18(9\norm{f^*}_\infty^2 + \sigma^2)\paren*{(s+2)\log 3 + \log(2\nfeats/\delta)}}{n}}}_+^2 \\
            & > \gamma,
        \end{split}
    \end{equation}
    which means that $\cell$ does not satisfy the stopping condition and should not be a leaf, giving a contradiction.
    
    \textit{Step 3: Concentration of function values.}
    Let $\leaves$ denote the set of leaves of $\tree_{\operatorname{CART}}$.
    By the previous two steps, we know that $\leaves \subset \secondcellcollection\backslash \cellcollection$, which implies that $\abs*{\cellfeatindices(\cell)} \leq \sparsity$ and that $f^*(\bx) = \E\braces*{Y~|~\bx \in \cell}$ for all $\cell \in \leaves$.
    As a consequence of \eqref{eq:cell_mean_uniform}, for any $\bx \in \binspace$ and letting $\cell(\bx)$ be the leaf that contains it, we have
    \begin{equation}
        \begin{split}
            \abs*{\fcart(\bx) - f^*(\bx)} & = \abs*{\bar Y_{\cellsampindices(\cell)} - f^*(\bx)} \\
            & = \abs*{\bar Y_{\cellsampindices(\cell)} - \E\braces*{Y~|~\bX \in \cell(\bx)}} \\
            & \leq \sqrt{\frac{2(\norm{f^*}_\infty^2 + \sigma^2)\paren*{(s+1)\log 3 +\log(2/\delta)}}{N(\cell(\bx))}} \\
            & \leq \sqrt{\frac{2^{\sparsity+1}(\norm{f^*}_\infty^2 + \sigma^2)\paren*{(s+1)\log 3 +\log(2/\delta)}}{n}}.
        \end{split}
    \end{equation}
    Since $\bx$ was arbitrary, we may integrate this bound over $\bX \in \binspace$ to get
    \begin{equation}
        R\paren*{\fcart(-;\data),f^*} \leq \frac{2^{\sparsity+1}(\norm{f^*}_\infty^2 + \sigma^2)\paren*{(s+1)\log 3 +\log(2/\delta)}}{n},
    \end{equation}
    as we wanted.

    To prove \eqref{eq:msp_upper_expectation_appendix}, we take $\delta = 1/n$ in \eqref{eq:msp_upper_probability}.
    Let $\mathcal{E}$ be the event on which \eqref{eq:msp_upper_probability} holds.
    We then write
    \begin{equation}
        \objective\paren*{\fcart, f^*_0,\nfeats,\nsamples} = \E\braces*{R\paren*{\fcart(-;\data),f^*}\indicator_\mathcal{E}} + \E\braces*{R\paren*{\fcart(-;\data),f^*}\indicator_{\mathcal{E}^c}}.
    \end{equation}
    The first term can be bounded from above by $\frac{2^{\sparsity+1}(\norm{f^*}_\infty^2 + \sigma^2)\paren*{(s+1)\log 3 +\log(2/\delta)}}{n}$.
    To bound the second term, we note that since $\fcart$ makes predictions by averaging responses, we have 
    \begin{equation}
        \|\fcart(-;\data)\|_{\infty} \leq \max_{1 \leq i \leq \nsamples} \abs*{Y_i}, 
    \end{equation}
    so that
    \begin{equation}
    \label{eq:msp_risk_upper_helper}
        R\paren*{\fcart(-;\data),f^*} \leq \paren*{\max_{1 \leq i \leq \nsamples} \abs*{Y_i} + \norm*{f^*}_\infty}^2.
    \end{equation}
    Using Talagrand's comparison inequality (see Exercise 8.6.5 in \cite{vershynin2018high}), we have
    \begin{equation}
        \max_{1 \leq i \leq \nsamples} \abs*{Y_i} \lesssim \sigma\paren*{\sqrt{\log \nsamples} + t}
    \end{equation}
    with probability at least $1-2e^{-t^2}$, for any $t > 0$.
    Plugging this into \eqref{eq:msp_risk_upper_helper} and applying Cauchy-Schwarz twice gives
    \begin{equation}
        R\paren*{\fcart(-;\data),f^*} \lesssim \norm*{f^*}_\infty^2 + \sigma^2\log\nsamples + \sigma^2u
    \end{equation}
    with probability at least $1 - 2e^{-u}$, or alternatively,
    \begin{equation}
        \P\braces*{R\paren*{\fcart(-;\data),f^*} \geq u } \leq 2e^{-C(u-\norm{f^*}_\infty^2 + \sigma^2\log\nsamples)/\sigma^2}
    \end{equation}
    whenever $u \geq \norm*{f^*}_\infty^2 + \sigma^2\log\nsamples$, where $C$ is a universal constant.
    We then compute
    \begin{equation}
        \begin{split}
            \E\braces*{R\paren*{\fcart(-;\data),f^*}\indicator_{\mathcal{E}^c}} & = \int_0^\infty \P\braces*{R\paren*{\fcart(-;\data),f^*}\indicator_{\mathcal{E}^c} \geq u} du \\
            & \leq \int_0^\infty \min\braces*{\P\braces*{R\paren*{\fcart(-;\data),f^*} \geq u},\P\braces*{\mathcal{E}^c}} du \\
            & \leq \int_0^\infty \min\braces*{\frac{1}{n},2e^{-C(u-\norm{f^*}_\infty^2 + \sigma^2\log\nsamples)/\sigma^2}} du \\
            & \lesssim \frac{\sigma^2\log\nsamples + \norm*{f^*}_\infty^2}{\nsamples}.
        \end{split}
    \end{equation}
    This completes the bound on the second term and hence the proof of \eqref{eq:msp_upper_expectation_appendix}.
\end{proof}

\section{Proof of Theorem \ref{thm:robust_lower_bounds}}
\label{sec:robust_lower_bounds_proofs}

\paragraph{Proof of Theorem \ref{thm:robust_lower_bounds}}

Imitating the proof of Theorem \ref{thm:non_msp_lower_bound}, we combine Lemma \ref{lem:covariate_selection_tree_v2} together with an extension of Lemma \ref{lem:nonmsp_feature_selection_prob} to accommodate a vertex cut and the new coupling argument. \qed

\begin{lemma}
    \label{lem:nonmsp_robst_feature_selection_prob}
    Make the same assumptions and use the same notation as in Theorem \ref{thm:robust_lower_bounds}.
    Then for each fixed query point $\bx \in \binspace$, support set $\suppset$, and random seed $\Theta$, the probability of selecting any covariate in the traversal $\traversal$ along the query path $\querypath(\bx;\data,\Theta)$ is upper bounded as $\P_{\data}\braces*{J(\bx;\data,\Theta)\cap \traversal \neq \emptyset } \leq \frac{\traversalsize(\log_2 \nsamples + 2)}{\nfeats - \sparsity_{-\cutset,\MSP}-\traversalsize +1} + \sqrt{\frac{\nsamples w(\cutset)}{4\sigma^2}}$.
\end{lemma}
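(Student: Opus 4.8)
The plan is to reduce the statement to Lemma~\ref{lem:nonmsp_feature_selection_prob} through a change-of-measure argument that absorbs the vertex cut $\cutset$. I would introduce the de-cut function $f^*_{-\cutset} \coloneqq \sum_{S \in \fcoefs\setminus\cutset}\alpha_S\monomial_S$. As noted just before Theorem~\ref{thm:robust_lower_bounds}, the Fourier graph of $f^*_{-\cutset}$ is the induced subgraph $\mathcal{G}_{f^*}[\fcoefs\setminus\cutset]$, so the connected component of $\emptyset$ in it is $\fcoefs_{-\cutset,\MSP}$ (with vertex support $\suppset_{-\cutset,\MSP}$) and the vertices of $\mathcal{G}_{f^*_{-\cutset}}$ lying outside this component are exactly $\fcoefs_{-\cutset,-\MSP}$; in particular, provided $\fcoefs_{-\cutset,-\MSP}\neq\emptyset$ (otherwise $\traversalsize = 0$, the left-hand side is zero, and the inequality is trivial), $f^*_{-\cutset}$ is non-\MSP. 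I would then let $\modifieddata \coloneqq \braces*{(\bX_i,\check Y_i)}_{i=1}^\nsamples$ be the dataset sharing the covariates of $\data$ but with $\check Y_i \coloneqq f^*_{-\cutset}(\bX_i)+\varepsilon_i$.

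First I would bound the total variation distance between the laws of $\data$ and $\modifieddata$. Since their covariate marginals agree and, conditionally on $\bX_i=\bx$, we have $Y_i\sim\mathcal{N}\paren*{f^*(\bx),\sigma^2}$ while $\check Y_i \sim \mathcal{N}\paren*{f^*_{-\cutset}(\bx),\sigma^2}$, the chain rule for KL divergence together with the Gaussian KL formula gives
\begin{equation}
    \KL(\data\,\|\,\modifieddata) \;=\; \frac{\nsamples}{2\sigma^2}\,\E_{\bX}\braces*{\paren*{f^*(\bX)-f^*_{-\cutset}(\bX)}^2} \;=\; \frac{\nsamples}{2\sigma^2}\sum_{S\in\cutset}\alpha_S^2 \;=\; \frac{\nsamples\, w(\cutset)}{2\sigma^2},
\end{equation}
where the middle equality uses orthonormality of $\braces*{\monomial_S}$ under the uniform measure and the definition $w(S) = \alpha_S^2$. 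Pinsker's inequality then yields $\TV(\data,\modifieddata) \le \sqrt{\KL(\data\,\|\,\modifieddata)/2} = \sqrt{\nsamples\, w(\cutset)/(4\sigma^2)}$.

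Next I would transfer the feature-selection bound across measures. For fixed $\bx$ and $\Theta$, the indicator of the event $\braces*{J(\bx;\cdot,\Theta)\cap\traversal\neq\emptyset}$ is a measurable function of the labeled dataset alone, since the tree structure $\alg_t(\cdot,\Theta)$ is; hence the variational characterization of total variation gives
\begin{equation}
    \P_{\data}\braces*{J(\bx;\data,\Theta)\cap\traversal\neq\emptyset} \;\le\; \P_{\modifieddata}\braces*{J(\bx;\modifieddata,\Theta)\cap\traversal\neq\emptyset} + \TV(\data,\modifieddata).
\end{equation}
It remains to bound the first term on the right. Because $\traversal\subset\coordindices\setminus\suppset_{-\cutset,\MSP}$ is by hypothesis a traversal of $\fcoefs_{-\cutset,-\MSP}$, and $\fcoefs_{-\cutset,\MSP}$, $\fcoefs_{-\cutset,-\MSP}$, $\suppset_{-\cutset,\MSP}$ are precisely the \MSP-component, its complement, and its support when read off the Fourier graph of $f^*_{-\cutset}$, the pair $(\traversal,f^*_{-\cutset})$ meets the hypotheses of Lemma~\ref{lem:nonmsp_feature_selection_prob}, with the quantities ``$\suppset_{\MSP}$'' and ``$\sparsity_{\MSP}$'' there instantiated as $\suppset_{-\cutset,\MSP}$ and $\sparsity_{-\cutset,\MSP}$. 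Applying that lemma to the dataset $\modifieddata$ bounds the first term by $\traversalsize(\log_2\nsamples+2)/(\nfeats-\sparsity_{-\cutset,\MSP}-\traversalsize+1)$, and adding the two contributions gives the claimed inequality.

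The main obstacle is bookkeeping rather than any deep difficulty: one must check carefully that the graph-theoretic quantities ($\suppset_{-\cutset,\MSP}$, $\fcoefs_{-\cutset,-\MSP}$, and the traversal condition) defined relative to $\mathcal{G}_{f^*}$ coincide with what Lemma~\ref{lem:nonmsp_feature_selection_prob} expects when instantiated at $f^*_{-\cutset}$ --- equivalently, that $\mathcal{G}_{f^*_{-\cutset}} = \mathcal{G}_{f^*}[\fcoefs\setminus\cutset]$ and that deleting the cut coefficients does not disturb the conditional-independence structure driving the proof of Lemma~\ref{lem:nonmsp_feature_selection_prob}. A secondary point worth noting is that this change-of-measure step is exactly where the Gaussian-noise hypothesis of Theorem~\ref{thm:robust_lower_bounds} is used; for merely sub-Gaussian noise the clean KL identity is unavailable, which is why this robust bound is stated only under Gaussianity.
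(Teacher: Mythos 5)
Your proposal is correct and follows essentially the same route as the paper's own proof: introduce $\modifieddata$ with responses generated by $f^*_{-\cutset}$, bound $\KL(\data\|\modifieddata)$ via the Gaussian chain rule and orthonormality of the $\monomial_S$, convert to a TV bound with Pinsker, transfer the event across measures, and apply Lemma~\ref{lem:nonmsp_feature_selection_prob} to $\modifieddata$. The only cosmetic difference is that the paper transfers the event via the coupling interpretation of total variation rather than the variational characterization you invoke, which is an equivalent step.
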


\begin{proof}
    Denote $\check f^* = f^*_{-\cutset} = \sum_{S \in \fcoefs\backslash\cutset}\alpha_S\monomial_S$.
    As described in Section \ref{sec:robust_lower_bounds}, we define $\modifieddata \coloneqq \braces*{(\bx_i,\check Y_i)}_{i=1}^\nsamples$ with $\check Y_i = \check f^*(\bx_i) + \varepsilon_i$ for $i=1,2,\ldots,\nsamples$.
    By independence of the data points, we have
    \begin{equation}
    \label{eq:KL_divergence_helper}
        \KL\paren*{\data\|\modifieddata} = \sum_{i=1}^\nsamples \KL\paren*{(\bX_i,Y_i)\|(\bX_i,\check Y_i)}.
    \end{equation}
    Using the chain rule for KL divergence, we get
    \begin{equation}
        \KL\paren*{(\bX_i,Y_i)\|(\bX_i,\check Y_i)} = \KL(\bX_i\|\bX_i) + \E\braces*{\KL\paren*{(Y_i|\bX_i)\|(\check Y_i|\bX_i)}}.
    \end{equation}
    The first term on the right hand side is equal to $0$ since the covariate distributions are the same.
    To compute the second term, notice that $Y_i|\bX_i \sim \mathcal{N}(f^*(\bX_i),\sigma^2)$, and $\check Y_i|\bX_i \sim \mathcal{N}(\check f^*(\bX_i),\sigma^2)$.
    The KL divergence between them is thus
    \begin{equation}
        \KL\paren*{(Y_i|\bX_i)\|(\check Y_i|\bX_i)} = \frac{\paren*{f^*(\bX_i)-\check f^*(\bX_i)}^2}{2\sigma^2}.
    \end{equation}
    Taking expectations with respect to $\bX_i$, we get
    \begin{equation}
        \E\braces*{\KL\paren*{(Y_i|\bX_i)\|(\check Y_i|\bX_i)}} = \frac{\E\braces*{\paren*{\sum_{S \in \cutset} \alpha_S \monomial_S(\bX_i)}}^2}{2\sigma^2} = \frac{w(\cutset)}{2\sigma^2}.
    \end{equation}
    Plugging this back into \eqref{eq:KL_divergence_helper} gives
    \begin{equation}
        \KL\paren*{\data\|\modifieddata} \leq \frac{\nsamples w(\cutset)}{2\sigma^2},
    \end{equation}
    and using Pinsker's inequality, we get
    \begin{equation}
        \TV\paren*{\data,\modifieddata} \leq \sqrt{\frac{\nsamples w(\cutset)}{4\sigma^2}}.
    \end{equation}
    Using the coupling interpretation of total variation distance, we may hence a coupling between $\data$ and $\modifieddata$ such that there is an event with probability at least $1 - \sqrt{\frac{\nsamples w(\cutset)}{4\sigma^2}}$ on which $\modifieddata = \data$.
    We then have
    \begin{equation}
        \begin{split}
            \P_{\data}\braces*{J(\bx;\data,\Theta)\cap \traversal \neq \emptyset } & \leq \P_{\data}\braces*{J(\bx;\modifieddata,\Theta)\cap \traversal \neq \emptyset ~\text{or}~ \modifieddata \neq \data} \\
            & \leq \P_{\data}\braces*{J(\bx;\modifieddata,\Theta)\cap \traversal \neq \emptyset} + \P\braces*{\modifieddata \neq \data} \\
            & \frac{\traversalsize(\log_2 \nsamples + 2)}{\nfeats - \sparsity_{-\cutset,\MSP}-\traversalsize +1} + \sqrt{\frac{\nsamples w(\cutset)}{4\sigma^2}}. \qedhere
        \end{split}
    \end{equation}
\end{proof}

\section{Constant Probability Lower Bounds}
\label{sec:high_prob_lower_bounds}

In this section, we show how to obtain lower bounds for greedy trees and ensembles that hold with constant probability rather than in expectation.

\begin{theorem}
\label{thm:high_prob_lower_bound}
    Under Assumption \ref{assumption}, let $f_0^* = \sum_{S \in \fcoefs} \alpha_S\monomial_S$ be any non-\MSP~function, and let $r_{\MSP} \coloneqq \sum_{S\in \fcoefs\backslash\fcoefs_{\MSP}}\alpha_S\monomial_S$ denote its \MSP~residual.
    Let $\traversalsize$ be the minimum size of a traversal for $\fcoefs\backslash\fcoefs_{\MSP}$ and let $\sparsity_{\MSP} \coloneqq \abs*{\suppset_{\MSP}}$.
    Suppose $\log_2\nsamples \leq \frac{\nfeats-s_{\MSP}-\traversalsize+1}{6s_{\MSP}} - 2$, then the $L^2$ risk of any greedy tree model satisfies
    \begin{equation}
        R\paren*{\fgreedy(-;\data,\Theta),f^*} \geq \frac{\Var\braces*{r_{\MSP}(\bX)}}{6}
    \end{equation}
    with probability at least $1/2$.
    If furthermore $\abs{Y} \leq \ybound$ almost surely and 
    \begin{equation}
        \log_2\nsamples \leq \frac{\Var\braces{r_{\MSP}(\bX)}(\nfeats-s_{\MSP}-\traversalsize+1)}{8M^2s_{\MSP}} - 2, 
    \end{equation}
    then the $L^2$ risk of any greedy tree ensemble model satisfies
    \begin{equation}
        \E_{\rfparam}\braces*{R\paren*{\fgre(-;\data,\rfparam)}} \geq \frac{\Var\braces*{r_{\MSP}(\bx)}}{4}.
    \end{equation}
    with probability at least $1/2$.
\end{theorem}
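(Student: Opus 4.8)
The plan is to prove Theorem~\ref{thm:high_prob_lower_bound} as the constant-probability analogue of Theorem~\ref{thm:non_msp_lower_bound}: I would keep the covariate-selection bound of Lemma~\ref{lem:nonmsp_feature_selection_prob} verbatim, but replace the ``in expectation over $\data$'' passage embodied by Lemma~\ref{lem:covariate_selection_tree_v2} and Lemma~\ref{lem:covariate_selection_ensemble_v2} by a Markov inequality applied to a suitable random ``bad mass'' functional of the training set. Throughout, fix a minimum traversal $\traversal \subset \coordindices\backslash\suppset_{\MSP}$ for $\fcoefs\backslash\fcoefs_{\MSP}$, so $\abs{\traversal}=\traversalsize$ and $\abs{S\cap\traversal}\geq 2$ for every $S\in\fcoefs\backslash\fcoefs_{\MSP}$, and abbreviate $\delta \coloneqq \traversalsize(\log_2\nsamples+2)/(\nfeats-\sparsity_{\MSP}-\traversalsize+1)$. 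Since the traversal elements lie in $\coordindices\backslash\suppset_{\MSP}$ while every $S\in\fcoefs_{\MSP}$ satisfies $S\subseteq\suppset_{\MSP}$, the law of total variance and orthogonality of the $\monomial_S$ give the exact identity $\E_\bX\braces{\Var\braces{f^*(\bX)\mid\bX_{\coordindices\backslash\traversal}}} = \sum_{S\colon S\cap\traversal\neq\emptyset}\alpha_S^2 = \sum_{S\in\fcoefs\backslash\fcoefs_{\MSP}}\alpha_S^2 = \Var\braces{r_{\MSP}(\bX)}=:V$, just as in the proof of Lemma~\ref{lem:covariate_selection_tree_v2}.

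For the single-tree bound, define the random variance-weighted bad mass
\[
 Z(\data,\Theta) \coloneqq \E_\bX\braces*{\Var\braces*{f^*(\bX)\mid \bX_{\coordindices\backslash\traversal}}\,\indicator\braces*{J(\bX;\data,\Theta)\cap\traversal\neq\emptyset}}.
\]
Because both $\indicator\braces{J(\bx;\data,\Theta)\cap\traversal=\emptyset}$ and $\Var\braces{f^*(\bX)\mid\bX_{\coordindices\backslash\traversal}}$ are $\sigma(\bX_{\coordindices\backslash\traversal})$-measurable, Fubini combined with the per-query-point estimate $\P_\data\braces{J(\bx;\data,\Theta)\cap\traversal\neq\emptyset}\leq\delta$ from Lemma~\ref{lem:nonmsp_feature_selection_prob} (which is uniform in $\bx$ and $\Theta$) yields $\E_{\data,\Theta}\braces{Z}\leq\delta V$, hence $\P_{\data,\Theta}\braces{Z<2\delta V}\geq 1/2$ by Markov. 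On that event I would repeat the pointwise computation of Lemma~\ref{lem:covariate_selection_tree_v2} with $\data,\Theta$ held fixed --- restricting the risk to the set where the query path avoids $\traversal$, conditioning on $\bX_{\coordindices\backslash\traversal}$, and lower bounding the inner conditional expectation by $\Var\braces{f^*(\bX)\mid\bX_{\coordindices\backslash\traversal}}$ --- to obtain $R(\fgreedy(-;\data,\Theta),f^*)\geq V - Z(\data,\Theta) \geq (1-2\delta)V = (1-2\delta)\Var\braces{r_{\MSP}(\bX)}$. Under the stated scaling $\delta$ is bounded by a small absolute constant, so $1-2\delta\geq 1/6$ and the first claim follows.

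For the ensemble bound, the probability is over $\data$ with the ensemble seed $\rfparam$ averaged out, so I would instead work with $Q(\data) \coloneqq \P_{\bX,\rfparam}\braces{J(\bX;\data,\rfparam)\cap\traversal\neq\emptyset}$; the same Fubini argument gives $\E_\data\braces{Q}\leq\delta$, hence $\P_\data\braces{Q<2\delta}\geq 1/2$. On that event I would run the Jensen-plus-conditional-variance decomposition of Lemma~\ref{lem:covariate_selection_ensemble_v2} with $\data$ fixed: writing $\E_\rfparam\braces{\fgre(\bx;\data,\rfparam)}=g_\data(\bx)+h_\data(\bx)$ with $h_\data$ the contribution of base learners whose query path hits $\traversal$, noting $g_\data$ is $\sigma(\bx_{\coordindices\backslash\traversal})$-measurable, and using Cauchy--Schwarz on the cross term, one gets $\E_\rfparam\braces{R(\fgre(-;\data,\rfparam),f^*)}\geq V - 2\sqrt{V}\,\E_\bX\braces{h_\data(\bX)^2}^{1/2}$. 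The new point is that, with $\data$ fixed, $h_\data$ is no longer bounded pointwise by $\ybound\delta$, only by $\ybound$, so one instead uses $h_\data(\bx)^2\leq \ybound^2\,\P_\rfparam\braces{J(\bx;\data,\rfparam)\cap\traversal\neq\emptyset}$ and thus $\E_\bX\braces{h_\data(\bX)^2}\leq\ybound^2 Q(\data)\leq 2\ybound^2\delta$. This gives $\E_\rfparam\braces{R}\geq V - 2\ybound\sqrt{2\delta V}$, and the threshold $\log_2\nsamples\leq\Var\braces{r_{\MSP}(\bX)}(\nfeats-\sparsity_{\MSP}-\traversalsize+1)/(8\ybound^2\sparsity_{\MSP})-2$ is exactly what makes $\delta$ small enough that, using $V=\Var\braces{r_{\MSP}(\bX)}$ and monotonicity of $t\mapsto t-2\ybound\sqrt{2\delta t}$, the right-hand side is at least $\Var\braces{r_{\MSP}(\bX)}/4$.

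The routine ingredients are all inherited from Theorem~\ref{thm:non_msp_lower_bound} and its supporting lemmas: the measurability claim that the ``no split on $\traversal$'' event and the estimator restricted to it are functions of $\bX_{\coordindices\backslash\traversal}$, and the total-variance identity for $V$. The genuinely new --- and mildly delicate --- step is the passage from an expectation guarantee to a constant-probability one; for the tree this is a clean Markov step on the variance-weighted functional $Z$. I expect the ensemble case to be the main obstacle: fixing $\data$ forfeits the pointwise $O(\delta)$ control on the averaged base-learner bias $h_\data$, forcing a second-moment/Cauchy--Schwarz estimate that only yields an $O(\sqrt{\delta})$ correction, which is precisely why the sample-size threshold picks up the factor $\Var\braces{r_{\MSP}(\bX)}/\ybound^2$. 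A secondary point to check carefully is that Lemma~\ref{lem:nonmsp_feature_selection_prob} is stated per triple $(\bx,\suppset,\Theta)$, so the Fubini steps that integrate out $\bX$ (and the ensemble seeds) are legitimate only because that bound is uniform in $\bx$ and $\Theta$.
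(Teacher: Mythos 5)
Your overall strategy is the same as the paper's: reduce to the covariate-selection probability via Lemma~\ref{lem:nonmsp_feature_selection_prob}, then replace the in-expectation argument by a Markov bound on a random ``bad-mass'' functional of $\data$. The tree case works, and your variance-weighted $Z(\data,\Theta)$ is a cleaner bookkeeping device than the normalized ratio the paper uses; both are just Markov on the same expectation estimate. (The proportionality constants in the threshold --- $\traversalsize$ versus $\sparsity_{\MSP}$ --- do not match the theorem statement as written, but this inconsistency is already present between Lemma~\ref{lem:nonmsp_feature_selection_prob} and the theorem.)

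The ensemble case has a genuine numerical gap. Writing $a^2 := \Var\{f^*(\bX)\mid\bX_{\coordindices\backslash\traversal}\}$ and $b^2 := \Var\{h_\data(\bX)\mid\bX_{\coordindices\backslash\traversal}\}$, you push through the inequality $(a-b)^2 \geq a^2 - 2ab$, and then Cauchy--Schwarz to obtain $V - 2\sqrt{V}\,\E_\bX\{h_\data^2\}^{1/2} \geq V - 2\ybound\sqrt{2\delta V}$ on the good event. But under the stated threshold this only forces $\delta \leq V/(8\ybound^2)$, at which point $2\ybound\sqrt{2\delta V} \leq V$, so the lower bound collapses to $0$, not $V/4$. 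You would instead need something like $\delta \leq 9V/(128\ybound^2)$. The paper avoids this by using the other elementary quadratic bound $(a-b)^2 \geq a^2/2 - b^2$ (valid since $(a/\sqrt2 - \sqrt2 b)^2 \geq 0$), which gives
\[
\E_{\rfparam}\braces*{R\paren*{\fgre(-;\data,\rfparam),f^*}} \geq \frac{V}{2} - \E_\bX\braces*{h_\data(\bX)^2} \geq \frac{V}{2} - 2\ybound^2\delta,
\]
and $\delta \leq V/(8\ybound^2)$ then yields exactly $V/4$. The rest of your argument --- $\E_\bX\{h_\data^2\} \leq \ybound^2 Q(\data)$, Fubini giving $\E_\data\{Q\} \leq \delta$, and Markov --- is the same as the paper's. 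So the fix is local: switch the quadratic inequality (or tighten your threshold), and your proof matches the paper's.
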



\begin{proof}
    Note that we have
    \begin{equation}
        R\paren*{\fgreedy(-;\data,\Theta),f^*} \geq V(\data,\Theta)\Var\braces*{r_{\MSP}(\bX)},
    \end{equation}
    where
    \begin{equation}
        V(\data,\Theta) \coloneqq \frac{\E_{\bX_{\coordindices\backslash\traversal}}\braces*{\Var\braces*{f^*(\bX)~|~\bX_{\coordindices\backslash\traversal}}\indicator\braces*{J(\bX;\data,\Theta)\cap\traversal = \emptyset}}}{\E_{\bX_{\coordindices\backslash\traversal}}\braces*{\Var\braces*{f^*(\bX)~|~\bX_{\coordindices\backslash\traversal}}}}.
    \end{equation}
    It is clear that $0 \leq V(\data,\Theta) \leq 1$.
    Furthermore, using Lemma \ref{lem:nonmsp_feature_selection_prob}, we see that $\E_{\data}\braces*{V(\data,\Theta)} \geq 1 - \delta$, where $\delta \coloneqq \frac{s_{\MSP}(\log_2\nsamples+2)}{\nfeats - s_{\MSP} - \traversalsize+1}$.
    Next, making use of the simple inequality
    \begin{equation}
        \E_{\data}\braces*{V(\data,\Theta)} \leq \P_{\data}\braces*{V(\data,\Theta) \geq \delta} + \delta,
    \end{equation}
    we get
    \begin{equation}
        \P_{\data}\braces*{V(\data) \geq \delta} \geq 1 - 3\delta.
    \end{equation}
    As such, if $\delta \leq 1/6$, we get a probability at least $1/2$ event over which
    \begin{equation}
        R\paren*{\fgreedy(-;\data,\Theta),f^*} \geq \frac{\Var\braces*{r_{\MSP}(\bX)}}{6}.
    \end{equation}
    This choice of $\delta$ corresponds to $\log_2\nsamples \leq \frac{\nfeats-s_{\MSP}-\traversalsize+1}{6s_{\MSP}} - 2$.
    
    To prove the second statement, write
    \begin{equation}
        h(\bx;\data) \coloneqq \E_{\theta_1}\braces*{\fgreedy(\bx;\data,\theta_1)\indicator\braces*{J(\bx;\data,\theta_1)\cap \traversal \neq \emptyset }}.
    \end{equation}
    A modification of the proof of Lemma \ref{lem:covariate_selection_ensemble_v2} gives
    \begin{equation}
    \begin{split}
        \E_{\rfparam}\braces*{R\paren*{\fgre(-;\data,\rfparam)}} & \geq \E_{\bX}\braces*{\frac{\Var\braces*{f^*(\bX)~|~\bX_{\coordindices\backslash\traversal}}}{2} - \Var\braces*{h(\bX;\data)~|~\bX_{\coordindices\backslash\traversal}}} \\
        & \geq \frac{\Var\braces*{r_{\MSP}(\bX)}}{2} - \E_{\bX}\braces*{\Var\braces*{h(\bX;\data)~|~\bX_{\coordindices\backslash\traversal}}}.
    \end{split}
    \end{equation}
    To bound the second term, we first compute
    \begin{equation}
        \E_{\bX}\braces*{\Var\braces*{h(\bX;\data)~|~\bX_{\coordindices\backslash\traversal}}} \leq \E_\bX\braces*{h(\bX;\data)^2}.
    \end{equation}
    Next observe that
    \begin{equation}
    \begin{split}
        \E_\bX\braces*{h(\bX;\data)^2} & \leq \E_{\bX,\theta_1}\braces*{\fgreedy(\bX;\data,\theta_1)^2\indicator\braces*{J(\bx;\data,\theta_1)\cap \traversal \neq \emptyset }} \\
        & \leq M^2\P_{\bX,\theta_1}\braces*{J(\bX;\data,\theta_1)\cap \traversal \neq \emptyset }.
    \end{split}
    \end{equation}
    Taking expectations with respect to $\data$ gives
    \begin{equation}
        \E_{\data}\braces*{\P_{\bX,\theta_1}\braces*{J(\bX;\data,\theta_1)\cap \traversal \neq \emptyset }} \leq \delta,
    \end{equation}
    which means that there is a probability at least $1/2$ event over which
    \begin{equation}
        \P_{\bX,\theta_1}\braces*{J(\bx;\data,\theta_1)\cap \traversal \neq \emptyset } \leq 2\delta.
    \end{equation}
    If $\delta \leq \frac{\Var\braces*{r_{\MSP}(\bX)}}{8M^2}$, we get a probability at least $1/2$ event over which
    \begin{equation}
        \E_{\rfparam}\braces*{R\paren*{\fgre(-;\data,\rfparam)}} \geq \frac{\Var\braces*{r_{\MSP}(\bX)}}{4}.
    \end{equation}
    This choice of $\delta$ corresponds to $\log_2\nsamples \leq \frac{\Var\braces{r_{\MSP}(\bX)}(\nfeats-s_{\MSP}-\traversalsize+1)}{8M^2s_{\MSP}} - 2$.
\end{proof}

\section{Comparisons with Neural Networks Trained by SGD}
\label{sec:related_work}

In this section, we expand on our discussion in Section \ref{subsec:comparisons_with_NNs} comparing greedy regression trees and ensembles and NNs trained by SGD.
We first remark that the performance of NNs can vary widely according to the choice of architecture, activation function, the exact version of SGD used including the initialization, batch-size, learning rate (or equivalently the time-horizon), and other hyperparameters.
Theoretical analyses of NNs likewise consider a wide variety of design and algorithmic choices, with many taking place under asymptotic limits or eliding the role of optimization altogether.
We focus our comparison to be with respect to practical versions of NNs trained using SGD, for which results have been obtained only very recently.
Specifically, researchers have sought to establish upper and lower bounds on the sample complexity required by these algorithms for “learning” classes of functions from i.i.d. noiseless observations, where “learning” in this context is defined as achieving small $L^2$ estimation error, given a particular covariate distribution, usually chosen to be the uniform distribution on Boolean covariates or the standard normal distribution on continuous covariates.

As mentioned, \cite{abbe2022merged} used the \MSP~to characterize which sparse Boolean functions are learnable by mean-field online SGD.
We now elaborate on this point.
Mean-field online SGD refers to discrete-time online SGD with one sample per iteration with a very small constant step-size, for $O(d)$ iterations, on a very wide two-layer NN.\footnote{The approximation also holds for online batch SGD, under some constraints on the batch size (see \cite{abbe2022merged}).}
In this regime, the behavior of the weights under SGD is well-approximated by a non-linear partial differential equation, referred to as mean-field dynamics \citep{mei2018mean}.
Due to the difficulty in proving uniform-in-time propagation of chaos in this setting, this mean-field approximation holds only for $Cd$ iterations of SGD, where $C$ can be arbitrarily large so long as other parameters are taken to be large (or small) enough.
\cite{abbe2022merged} showed that when $f^*$ does not satisfy \MSP, mean-field dynamics does not converge to any solution with small estimation error, whereas when $f^*$ satisfies \MSP~and is generic, mean-field dynamics does converge to such a solution.
Under the mean-field approximation, the same holds for online SGD, which leads to our claims in Section \ref{subsec:comparisons_with_NNs}.
This creates an interesting analogy with greedy regression trees and random forests, whereby both classes of methods have ``low'' sample complexity for \MSP~functions, but ``high'' sample complexity for non-\MSP~functions.
The limitation of this comparison is that it does not reflect what happens when SGD is run beyond the $O(d)$ iteration horizon when the mean-field approximation no longer holds.

To resolve this limitation with their analysis, \cite{abbe2023sgd} proposed a conjecture phrased in terms of ``leap complexity'', which is also a new concept that they defined.
Given a Boolean function $f^* \coloneqq \sum_{S \in \fcoefs}\alpha_S\monomial_S$, its leap complexity is defined as the minimum value $l$ such that there is an ordering of its coefficient subsets $S_1 \subset S_2 \subset \cdots S_r$ with
\begin{equation}
    \abs*{S_i\backslash\cup_{j =1}^{i-1} S_j} \leq l
\end{equation}
for any $i \in \braces{1,2,\ldots,r}$.
As can easily be seen, \MSP~functions are equivalent to leap-$1$ functions, while non-\MSP~functions are categorized as leap-$2$, leap-$3$, and so on.
A Boolean monomial of degree $k$ (i.e., a $k$-parity) is an example of a leap-$k$ function.
With this definition, \cite{abbe2023sgd} (see Conjecture 1 therein) conjectured that the time and sample complexity required to learn a leap-$k$ function using online SGD on a two-layer NN is $\tilde{\Theta}\paren*{\nfeats^{\max\braces*{k-1,1}}}$.

If this conjecture is true, then combining it with the results in our paper yields rigorous theoretical performance gaps between two-layer NNs trained with online SGD (henceforth denoted as \textsf{2LNNOSGD}) and greedy regression trees and ensembles:
More specifically, for leap-$1$ functions, that is, those satisfying \MSP, CART requires a sample complexity of just $O(2^s\log\nfeats)$ which is much smaller than the $\tilde{\Theta}(\nfeats)$ sample complexity required by \textsf{2LNNOSGD}.
On the other hand, for leap-$k$ functions for any $k > 1$, CART requires a sample complexity of $\exp(\Omega(\nfeats))$, which is much larger than the $\tilde{\Theta}\paren*{\nfeats^{k-1}}$ sample complexity required by \textsf{2LNNOSGD}.
Here we note that our lower bound in Theorem \ref{thm:informal_intro} holds even with noiseless observations and also with constant probability (see Theorem \ref{thm:high_prob_lower_bound}), so that both classes of learning algorithms are put on exactly the same theoretical footing.

\cite{abbe2023sgd} was not able to prove this conjecture and instead proved a version of it for isotropic Gaussian data.
However, several works have produced upper bounds for $k$-parities in support of the conjecture.
Considering the classification setup, \cite{Glasgow2024sgd} showed in particular that a sufficiently finitely polynomial-width ReLU network trained using online minibatch SGD with constant step-size learns the XOR function within $\tilde{O}(\nfeats)$ samples and iterations.
\cite{kou2024matching} showed that for any $k$, online \emph{sign}-SGD with a batch size of $\tilde{O}(\nfeats^{k-1})$ can learn a $k$-parity (i.e., degree $k$ monomial) within $O(\log\nfeats)$ iterations, for a total sample complexity of $\tilde{O}(\nfeats^{k-1})$.
These results thus establish a rigorous performance gap between the two algorithm classes independent of the validity of \cite{abbe2023sgd}'s conjecture.

The lower bounds portion of \cite{abbe2023sgd}'s conjecture is \emph{suggested} by existing Statistical Query (SQ) and Correlational Statistical Query (CSQ) lower bounds.
SQ and CSQ are generic models of constrained computation \citep{kearns1998efficient,feldman2021statistical}.
In the SQ model, the analyst has access to a sequence of queries, each of which returns an approximation of the expectation $\E\braces*{\psi(\textbf{X}, Y)}$ for some real-valued function $\psi$ that may change from query to query. 
The CSQ model further constrains the queries to be of the form $\E\braces*{\phi(\textbf{X})Y}$ for some real-valued function $\phi$.
Under the SQ model, it is known that $\Omega\paren*{\nfeats^k})$ queries are required to learn $k$-parities \citep{barak2022hidden}, while \cite{abbe2023sgd} proved a CSQ lower bound of the same order.
Now, \cite{abbe2021power} showed that SGD is no more powerful than the SQ model if the stochastic gradients have limited precision, but it is unclear to us whether this holds for the SGD regime in \textsf{2LNNOSGD}.
Assuming the reduction holds, then the SQ and CSQ lower bounds translate to a lower sample complexity bound of $\tilde{\Theta}\paren*{\nfeats^{k-1}}$ \citep{kou2024matching}.

We also note an additional performance gap suggested by \cite{abbe2023sgd}'s conjecture.
They in fact consider a generalization of sparse Boolean functions called \emph{multi-index functions}.
These are defined as $f^*(\bx) = g(\bM\bx)$, where $\bM$ is an unknown $\nfeats \times \sparsity$ matrix with orthogonal columns.
In other words, $f^*$ depends only on an $s$-dimensional latent subspace.
The fact that this subspace is not axis-aligned is irrelevant to SGD and neural networks, which are rotation-invariant.
\cite{Glasgow2024sgd}'s upper bound for XOR was furthermore proved in this setting.\footnote{\cite{kou2024matching}'s result required axis alignment, which they exploited using sign SGD.}
On the other hand, having sparsity in the original covariate basis is crucial to the good performance of regression trees and ensembles in general.
This is because every root-to-leaf path can split on at most $O(\log\nfeats)$ covariates, and hence cannot fit to any dense rotation of a latent subspace, no matter how small the latent dimension is \citep{tan2021cautionary}.
This drawback may be overcome using oblique (that is, non-axis-aligned) regression trees and ensembles \citep{cattaneo2022convergence,o2024statistical}, but these have much larger computational overhead. 
Finally, we note that for continuous covariates, the \emph{information exponent} introduced by \cite{arous2024high} and its generalizations by \cite{oko2024learning,ren2024learning} may be more effective in capturing learnability. 
Exploring similar measures of learnability for greedy recursive partitioning estimators presents an interesting direction for future research.

\section{Additional empirical results}
\label{sec:sims_appendix}

\begin{figure}[H]
    \centering
    \includegraphics[width=0.9\linewidth]{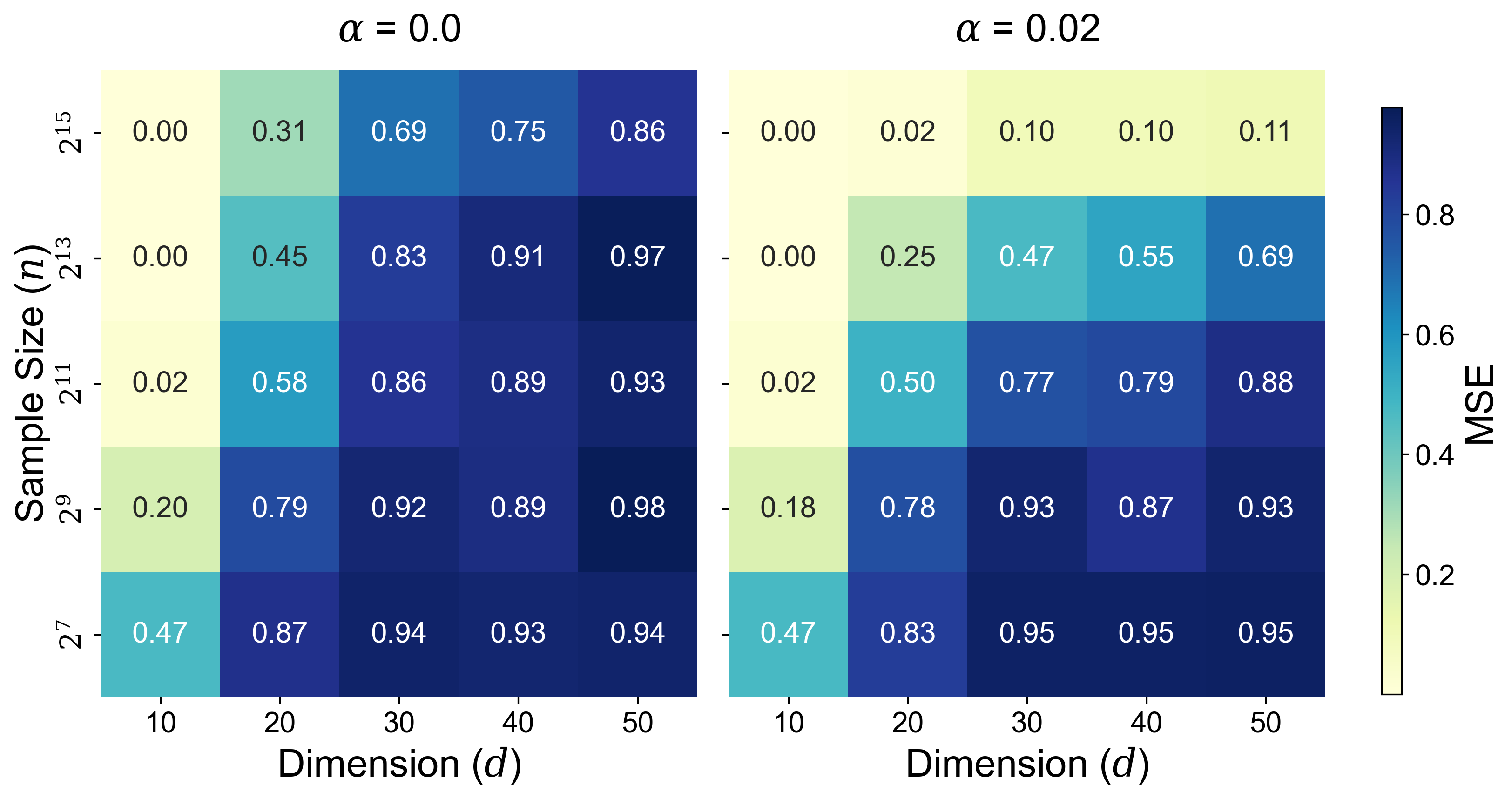}
    \caption{Expected MSE $R(\fcart,f)$ for CART fitted to $Y = X_1X_2 + \alpha X_1 + \epsilon$, with $\alpha = 0$ (left panel) and $\alpha = 0.02$ (right panel) when $\epsilon\sim N(0, 0.01)$. CART is fitted with a minimum impurity decrease stopping criterion.}
    \label{fig:mse_sigma1}
\end{figure}

\begin{figure}[H]
    \centering
    \includegraphics[width=0.9\linewidth]{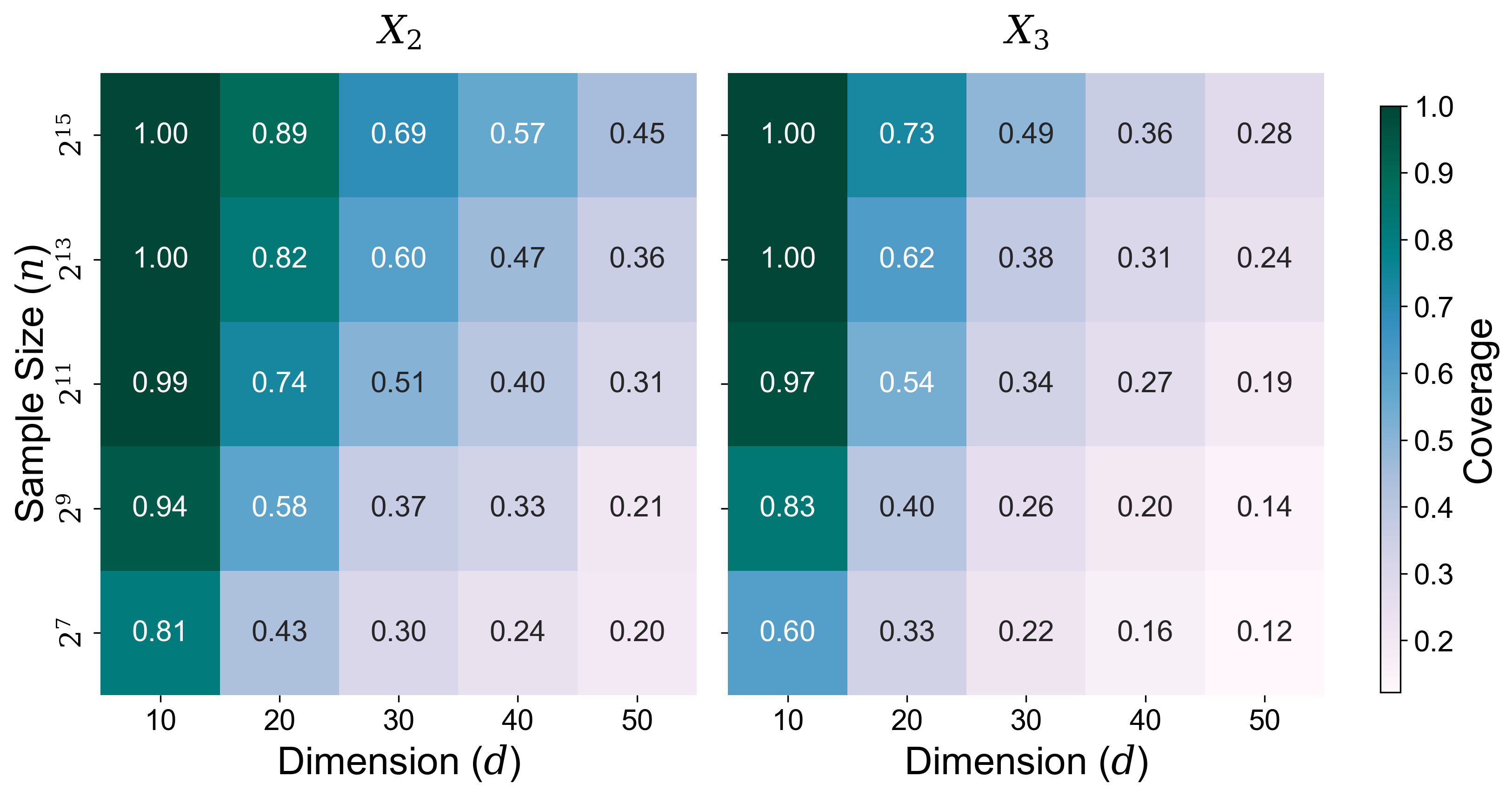}
    \caption{Split coverage for features $X_2$ and $X_3$ for CART fitted to $Y = X_1X_2+\epsilon$ with $\epsilon\sim N(0,0.01)$, without any stopping criterion. Split coverage is defined in \eqref{eq:coverage}.}
    \label{fig:coverage_alpha0_sigma1}
\end{figure}

\begin{figure}[H]
    \centering
    \includegraphics[width=0.9\linewidth]{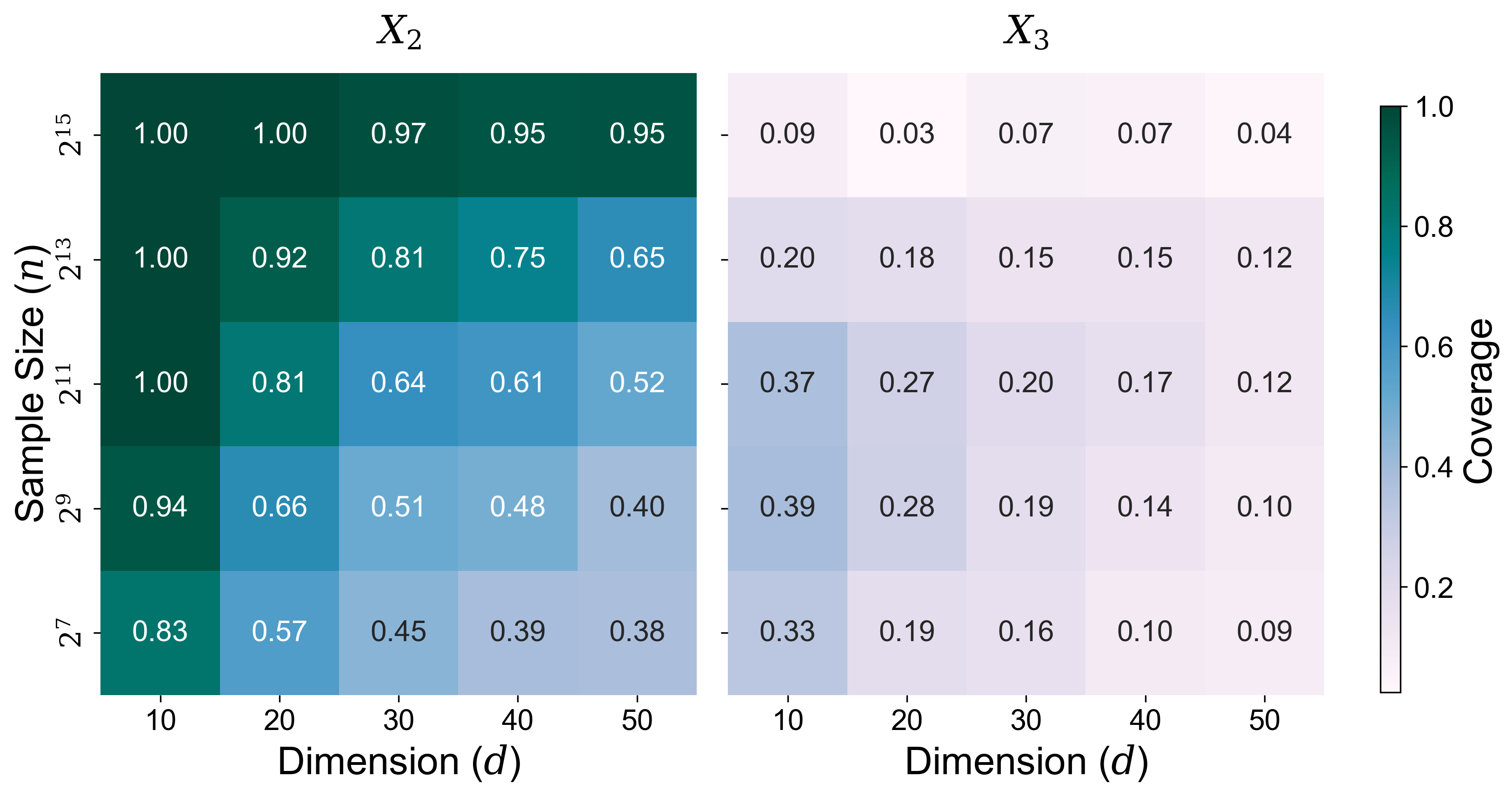}
    \caption{Split coverage for features $X_2$ and $X_3$ for CART fitted to $Y = X_1X_2+\alpha X_1$ with $\alpha=0.02$, without any stopping criterion. Split coverage is defined in \eqref{eq:coverage}.}
    \label{fig:coverage_alpha2_sigma0}
\end{figure}

\begin{figure}[H]
    \centering
    \includegraphics[width=0.9\linewidth]{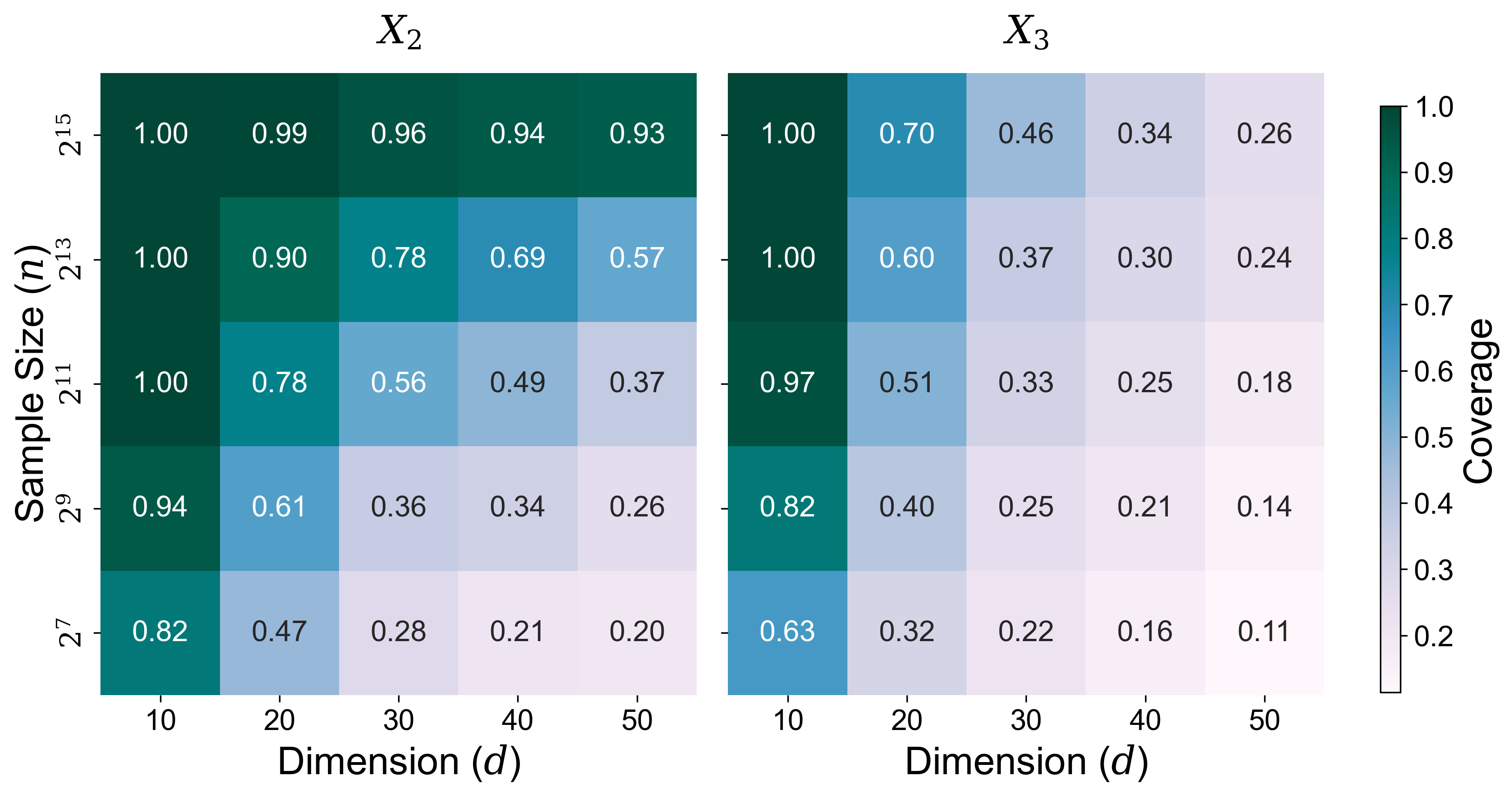}
    \caption{Split coverage for features $X_2$ and $X_3$ for CART fitted to $Y = X_1X_2+\alpha X_1 + \epsilon$ with  $\alpha=0.02$ and $\epsilon\sim N(0,0.01)$, without any stopping criterion. Split coverage is defined in \eqref{eq:coverage}.}
    \label{fig:coverage_alpha2_sigma1}
\end{figure}

\end{document}